\documentclass[10pt]{article} 
\usepackage{tmlr}


\usepackage{amsmath,amsfonts,bm}









\def\eqref#1{equation~\ref{#1}}









\def\1{\bm{1}}

\def\eps{{\epsilon}}










\DeclareMathAlphabet{\mathsfit}{\encodingdefault}{\sfdefault}{m}{sl}
\SetMathAlphabet{\mathsfit}{bold}{\encodingdefault}{\sfdefault}{bx}{n}













\DeclareMathOperator*{\argmax}{arg\,max}
\DeclareMathOperator*{\argmin}{arg\,min}


\usepackage{amsmath,amsfonts,bm}









\def\eqref#1{equation~\ref{#1}}









\def\1{\bm{1}}

\def\eps{{\epsilon}}










\DeclareMathAlphabet{\mathsfit}{\encodingdefault}{\sfdefault}{m}{sl}
\SetMathAlphabet{\mathsfit}{bold}{\encodingdefault}{\sfdefault}{bx}{n}






\newcommand{\norm}[1]{\left\lVert#1\right\rVert}

\newcommand{\vect}[1]{\boldsymbol{\mathbf{#1}}}

\numberwithin{equation}{section}

\newcommand{\abs}[1]{\left|#1\right|}
\newcommand{\esp}[1]{\mathbb{E}\left[#1\right]}

\newcommand{\espn}[1]{\mathbb{E}_{n-1}\left[#1\right]}
\newcommand{\esps}[2]{\mathbb{E}_{#1}\left[#2\right]}
\newcommand{\sqdif}[1]{\left(\nabla#1\right)^2}
\newcommand{\reel}{\mathbb{R}}
\newcommand{\nat}{\mathbb{N}}
\newcommand{\ee}{\mathrm{e}}

\newcommand{\tv}{\tilde{v}}

\newcommand{\dd}{\mathrm{d}}
\newcommand{\sqep}[1]{\sqrt{\epsilon + #1}}

\newcommand{\deq}{=}
\newcommand{\bbratio}{\left(\frac{\beta_1}{\beta_2}\right)}

\usepackage{hyperref}
\usepackage{url}

\usepackage{booktabs}       
\usepackage{amsfonts}       
\usepackage{amsmath}
\usepackage{amssymb}
\usepackage{nicefrac}       
\usepackage{microtype}      
\usepackage{graphicx}       
\usepackage{wrapfig, blindtext}
\usepackage{caption}        
\usepackage{subcaption}     
\usepackage[space]{grffile} 
\usepackage{float}
\usepackage{multirow,array}
\usepackage{comment}
\usepackage{makecell}
\usepackage{enumitem}
\usepackage{utfsym, MnSymbol}

\usepackage[section]{algorithm}
\usepackage[noend]{algpseudocode}

\usepackage{amsmath}
\usepackage{amssymb}
\usepackage{mathtools}
\usepackage{amsthm}
\usepackage{thmtools}
\usepackage{thm-restate}

\usepackage[capitalize,noabbrev]{cleveref}

\usepackage[textsize=tiny]{todonotes}


 
\newtheorem{theorem}{Theorem}
\newtheorem{lemma}[theorem]{Lemma} 
\newtheorem{assumption}[theorem]{Assumption}

\newtheorem{corollary}[theorem]{Corollary}
\newtheorem{definition}[theorem]{Definition}


\title{A Survey of Optimization Methods for Training DL Models: Theoretical Perspective on Convergence and Generalization}


\author{\name Jing Wang \email jw5665@nyu.edu \\
      \addr Department of Electrical and Computer Engineering\\
      New York University
      \AND
      \name Anna Choromanska \email ac5455@nyu.edu \\
      \addr Department of Electrical and Computer Engineering\\
      New York University
      }



\begin{document}

\maketitle

\begin{abstract}
As data sets grow in size and complexity, it is becoming more difficult to pull useful features from them using hand-crafted feature extractors. For this reason, deep learning (DL) frameworks are now widely popular. DL frameworks process input data using multi-layer networks. Importantly, DL approaches, as opposed to traditional machine learning (ML) methods, automatically find high-quality representation of complex data useful for a particular learning task. The Holy Grail of DL and one of the most mysterious challenges in all of modern ML is to develop a fundamental understanding of DL optimization and generalization. While numerous optimization techniques have been introduced in the literature to navigate the exploration of the highly non-convex DL optimization landscape, many survey papers reviewing them primarily focus on summarizing these methodologies, often overlooking the critical theoretical analyses of these methods. In this paper, we provide an extensive summary of the theoretical foundations of optimization methods in DL, including presenting various methodologies, their convergence analyses, and generalization abilities. This paper not only includes theoretical analysis of popular generic gradient-based first-order and second-order methods, but it also covers the analysis of the optimization techniques adapting to the properties of the DL loss landscape and explicitly encouraging the discovery of well-generalizing optimal points. Additionally, we extend our discussion to distributed optimization methods that facilitate parallel computations, including both centralized and decentralized approaches. We provide both convex and non-convex analysis for the optimization algorithms considered in this survey paper. Finally, this paper aims to serve as a comprehensive theoretical handbook on optimization methods for DL, offering insights and understanding to both novice and seasoned researchers in the field.
\end{abstract}

\section{Introduction}
The rapid evolution of DL models~\citep{lecun2015deep,goodfellow2016deep,vaswani2017attention,chatgpt,achiam2023gpt,touvron2023llama}, combined with the exponential growth of available data~\citep{chen2014big}, has pushed the field of machine learning into new frontiers. As these models grow in complexity and scale~\citep{kasneci2023chatgpt,chang2024survey}, optimizing their performance becomes paramount to harnessing their full potential. Consequently, the goal of improving the existing optimization methods for training DL models has attracted significant attention among researchers who aim at developing efficient (fast-converging), accurate (well-generalizing), scalable (applicable to large data sets and models and heavily parallelizable), and consistent (minimizing the effect of non-deterministic calculations in heavily parallelized systems) DL optimization strategies~\citep{sutskever2013importance,kingma2014Adam,you2019large,shoeybi2019megatron,yang2023large}, and furthermore designing DL model architectures that properly condition the optimization problem such that a high-quality solution is easier to find \citep{he2016deep,ioffe2015batch,klambauer2017self,vaswani2017attention,chatgpt}. 

While the existing literature contains plethora of optimization techniques designed for DL applications, there remains a conspicuous gap in the theory of DL optimization. Many survey papers and studies mainly focus on summarizing the methodologies that are employed~\citep{sun2020optimization,ruder2016overview,le2011optimization}, and often overlook their theoretical foundations. This oversight limits the comprehensive understanding of DL optimization techniques, which hinders the progress in the field of DL. This papers fills the existing theoretical gap and provides an extensive summary of the theoretical foundations of optimization methods in DL, including describing methodologies, providing convergence analyses, and showing generalization abilities.

Next we summarize the optimizers that have been used in a DL setting to train DL models. 

Gradient-based optimization methods are widely used for their computational efficiency in enhancing the performance of DL models. These methods can generally be categorized into two categories: i) First-order methods~\citep{robbins1951stochastic,polyak1964some,nesterov1983method,liu2020improved}, which are extensively employed in stochastic versions to reduce the computational burden associated with large datasets and complex model architectures in DL~\citep{bottou2010large,sekhari2021sgd,tian2023recent};
ii) Second-order methods~\citep{broyden1967quasi,fletcher1970new,goldfarb1970family,shanno1970conditioning,liu1989limited,yuan1991modified,yuan2022adaptive,yuan2024adaptive}, which utilize second-order information, such as the Hessian matrix, to inform the search direction during optimization and are applied in DL~\citep{bollapragada2018progressive,wang2020sgb,yousefi2022stochastic,niu2023ml}. 

In recent years, some advancements have been made in first-order optimization algorithms. \citep{jin2021nonconvex, huang2020perturbed,yang2022normalized} focuses on introducing stochastic perturbations to the gradients, which helps the first-order method escape saddle points—flat regions where optimization may stagnate. This is particularly beneficial in non-convex landscapes commonly found in deep learning. By effectively navigating the geometry of the loss surface, pretubed SGD allows for more efficient exploration and exploitation of the parameter space, leading to faster and more reliable convergence to optimal solutions. \citep{raginsky2017non,dalalyan2019user,huang2021stochastic} focuses on the integration of Langevin dynamics into first-order optimization methods. These works investigate the theoretical underpinnings and practical implementations of Langevin dynamics, highlighting its potential to improve convergence rates and robustness in the presence of noise. 

Compared with the first-order methods, second-order methods exhibit faster convergence rates in terms of iterations. Popular Newton’s method achieves a quadratic convergence rate under nonconvex assumptions, which is much faster that the linear convergence rate of first-order gradient descent (GD) method under the same assumptions.However, the requirement of computing the inverse of the Hessain matrix reslts in the time complexity that is cubic in the number of parameters, which makes it quite challenging to use in practical DL settings. In order to decrease the time complexity of second-order methods and at the same time preserve its fast convergence rate, quasi-Newton's methods~\citep{dennis1977quasi} have been proposed. Broyden–Fletcher–Goldfarb–Shannon (BFGS)~\citep{broyden1967quasi,fletcher1970new,goldfarb1970family,shanno1970conditioning} algorithm is the most famous quasi-Newton's method that decreases the time complexity from cubic to quadratic while keeps superlinear convergence rate. However, BFGS still suffers from infavorable memory requirements due to the necessity of storing large dimenssional pseudo-Hessian matrix. In order to reduce the memory storage and the computation load more, \citep{liu1989limited} proposed the limited-memory BFGS (LBFGS) which decreases the time complexity to linear by compressing the pseudo-Hessian matrix. 
These second-order methods are also used as optimizers in training DL networks~\citep{bollapragada2018progressive,wang2020sgb,yousefi2022stochastic,niu2023ml}.
In this paper, we discuss the convergence rate of all the aforementioned second-order methods. 

Except for the (quasi-)Newton's methods we mentioned before, there is another track of second-order algorithm, the Hessian-free~\citep{martens2010deep,martens2011learning} algorithm. While quasi-Newton methods rely on constructing an approximate Hessian matrix to guide parameter updates, Hessian-free optimization avoids this by directly estimating curvature through iterative processes, such as conjugate gradient methods. This results in reduced computational and memory requirements, making it particularly suitable for the high-dimensional landscapes often encountered in deep neural networks. We do not focus on Hessain-free methods in our review, because works on this track come with no theoretical guarantees in the literature, they are purely empirical.

Furthermore, our exploration extends beyond conventional gradient-based first-order and second-order optimization methods. We delve into the analysis of innovative techniques grounded in the understanding of the properties of the DL loss landscape~\citep{li2018visualizing,cooper2018loss,chaudhari2015trivializing, keskar2016largebatch,bisla2022low,orvieto2022anticorrelated}. They aim at identifying optimal points located in the flat valleys in the DL optimization landscape, which, as they argue, correspond to lower generalization errors. The landscape-aware DL optimization methods provides valuable insights into the underlying mechanisms that govern the exploration of non-convex loss surfaces, contributing to the the development of more effective and robust DL optimization strategies. 

The approaches adapting the DL optimization strategy to
the properties of the loss landscape and encouraging the recovery of flat optima could be generally classified as follows: i) regularization-based methods that regularize gradient descent strategy with sharpness measures such as the Minimum Description Length~\citep{doi:10.1162/neco.1997.9.1.1}, local entropy~\citep{DBLP:journals/corr/ChaudhariCSL17}, or variants of $\epsilon$-sharpness~\citep{SAM,keskar2016largebatch}, low-pass-filter-based norm~\citep{bisla2022low} ii) surrogate methods that evolve the objective function according to the diffusion equation~\citep{DBLP:journals/corr/Mobahi16}, iii) averaging strategies that average model weights across training epochs~\citep{61aa9e9cc965421e82d7b7042c61abc8, cha2021swad}, and iv) smoothing strategies that smooth the loss landscape by introducing noise in the model weights and average model parameters across multiple workers run in parallel~\citep{wen2018smoothout, haruki2019gradient, lin2020extrapolation} (such methods focus on distributed training of DL models with an extremely large batch size). 
In this paper, we choose four most representative methods from among the above mentioned ones: Sharpness-Aware Minimization (SAM)~\citep{SAM,li2023sharpness}, Entropy-SGD~\citep{DBLP:journals/corr/ChaudhariCSL17}, Low-pass filter SGD (LPF-SGD)~\citep{bisla2022low} and SmoothOut~\citep{wen2018smoothout}. We discuss principles of these methodologies and theoretically analyze their generalization abilities.

Our investigation of DL optimization further continues to distributed optimization schemes, which facilitate parallel computations. Our survey includes a thorough summary of both centralized and decentralized approaches, highlighting their respective advantages and limitations in optimizing DL models across distributed computing environments. 

\begin{figure}[t!]
    \centering
    \includegraphics[width=\textwidth,trim=0 20 0 20, clip]{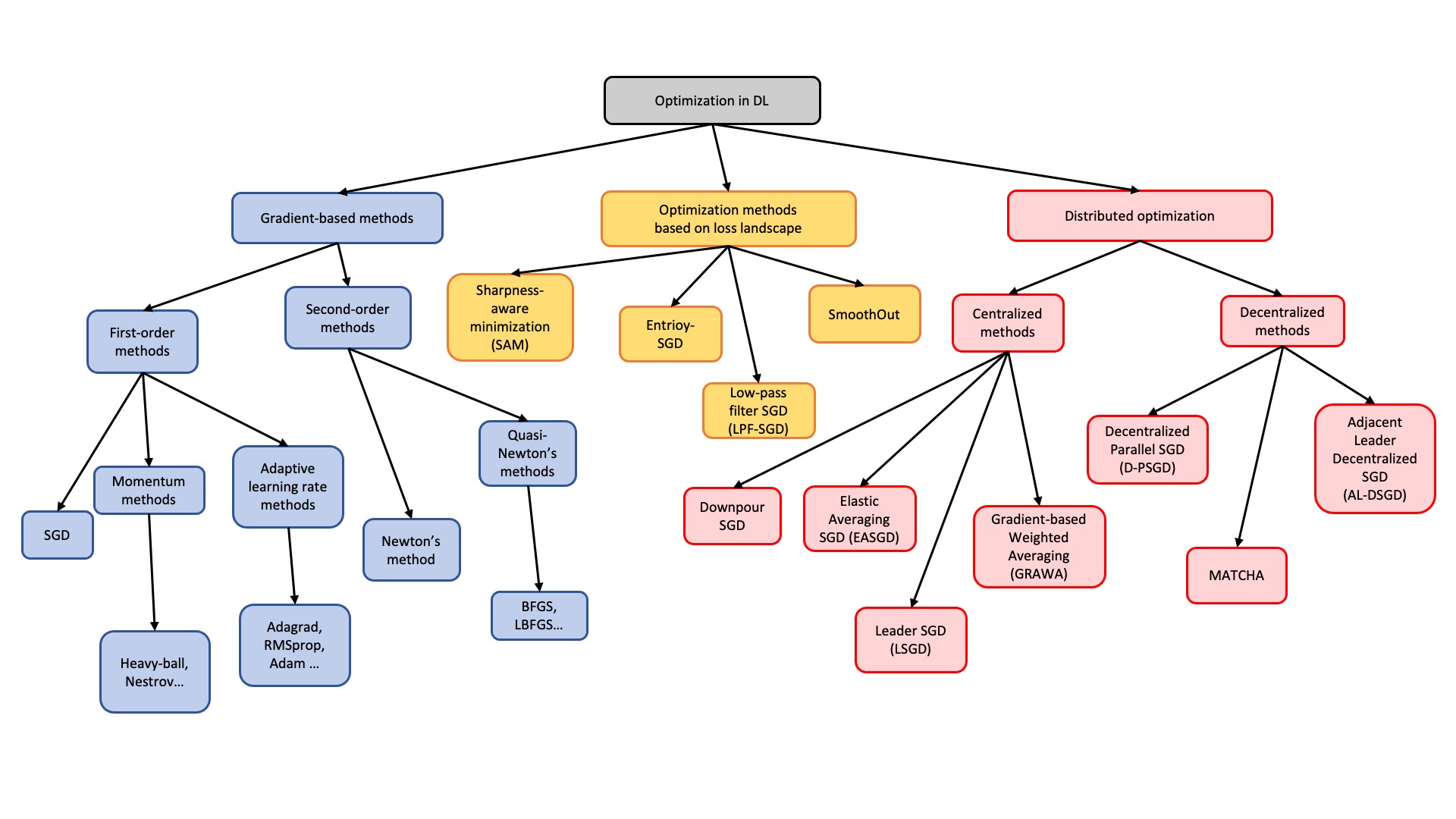}
    \vspace{-0.5in}
    \caption{The DL optimization methods discussed in this survey paper.}
    \label{fig:structure}
\end{figure}

\begin{table}[t!]
    \centering
    {\fontsize{8}{12}\selectfont
    \begin{tabular}{|>{\centering\arraybackslash}m{0.7cm}|>{\centering\arraybackslash}m{1.2cm}|>{\centering\arraybackslash}m{1cm}|>{\centering\arraybackslash}m{0.8cm}|>{\centering\arraybackslash}m{0.8cm}|>{\centering\arraybackslash}m{1cm}|>{\centering\arraybackslash}m{1.2cm}|>{\centering\arraybackslash}m{3cm}|>{\centering\arraybackslash}m{3cm}|}
        \hline 
        &\multirow{2}{*}{Method}&\multicolumn{5}{c|}{Assumptions}&\multirow{2}{*}{Convergence rate }&\multirow{2}{*}{Comment}\\
        \cline{3-7}
         &&Convex&Lip. smooth&Hessian bound&Variance bound&Step size&&\\
        \hline
        \rule{0pt}{2cm}\multirow{12}{*}{\rotatebox{90}{First-order Methods}}&\multirow{3.8}{*}{SGD}&strongly convex&$\scalebox{0.75}{\usym{2613}}$&$\scalebox{0.75}{\usym{2613}}$&$\checkmark$&$\propto\frac{1}{k+1}$&sub-linear: $\mathcal{O}(\frac{1}{K+1})$&\multirow{6}{=}{\vspace{-0.25in}\\The step size of SGD/SGD-M is inversely proportional to the iteration counter while that of Adaptive methods is constant or bounded, which makes it hard theoretically to compare these methods in terms of converegence speed. However, several studies \citep{kingma2014Adam,reddi2019convergence,schmidt2021descending} have empirically shown that Adaptive methods converge faster time- and iteration-wise and achieve lower training loss in fewer iterations than SGD in various applications.}\\
        \cline{3-8}
        \rule{0pt}{2cm}&&$\scalebox{0.75}{\usym{2613}}$&$\checkmark$&$\scalebox{0.75}{\usym{2613}}$&$\checkmark$&$\propto\frac{1}{\sqrt{k}}$&\vphantom{This is a long content.This is a long content.This is a long content.}sub-linear: $\mathcal{O}(\frac{1}{\sqrt{K}})$&\\
        \cline{2-8}

        \rule{0pt}{2cm}&\multirow{3.8}{*}{SGD-M}&convex&$\scalebox{0.75}{\usym{2613}}$&$\scalebox{0.75}{\usym{2613}}$&$\checkmark$&$\propto\frac{1}{\sqrt{k+1}}$&sub-linear: $\mathcal{O}(\frac{1}{\sqrt{K+1}})$&\\
        \cline{3-8}
        \rule{0pt}{2cm}&&$\scalebox{0.75}{\usym{2613}}$&$\checkmark$&$\scalebox{0.75}{\usym{2613}}$&$\checkmark$&$\propto\frac{1}{\sqrt{k}}$&sub-linear: $\mathcal{O}(\frac{1}{\sqrt{K}})$&\\
        \cline{2-8}

        \rule{0pt}{2cm}&Adagrad&$\scalebox{0.75}{\usym{2613}}$&$\checkmark$&$\scalebox{0.75}{\usym{2613}}$&$\checkmark$&Const.&sub-linear: $\mathcal{O}(\frac{\ln K}{\sqrt{K}})$&\\
        \cline{2-8}
        \rule{0pt}{2cm}&Adam&$\scalebox{0.75}{\usym{2613}}$&$\checkmark$&$\scalebox{0.75}{\usym{2613}}$&$\checkmark$&$\!\!\!\!\propto\!\!\sqrt{\frac{1-\beta_2^k}{1-\beta_2}}$&sub-linear: $\mathcal{O}(\frac{\ln K}{\sqrt{K}})$&\\
        \hline 
        
        \rule{0pt}{2cm}\multirow{3}{*}{\rotatebox{90}{\parbox{3cm}{\centering Second-order Method}}}&Newton's Method&$\scalebox{0.75}{\usym{2613}}$&$\scalebox{0.75}{\usym{2613}}$&$\checkmark$&$\scalebox{0.75}{\usym{2613}}$&$\scalebox{0.75}{\usym{2613}}$&quadratic&\multirow{3}{=}{\vspace{-0.35in}\\Although second-order methods converge faster than first-order methods iteration-wise, the computation of the Hessian is expensive, which makes these methods slower and less preferred than first-order techniques, especially in massive data and model settings typical for deep learning.}\\
        \cline{2-8}
        \rule{0pt}{2cm}&BFGS&$\scalebox{0.75}{\usym{2613}}$&$\checkmark$&$\checkmark$&$\scalebox{0.75}{\usym{2613}}$&$\scalebox{0.75}{\usym{2613}}$&super-linear&\\
        \cline{2-8}
        \rule{0pt}{2cm}&LBFGS&convex&$\scalebox{0.75}{\usym{2613}}$&$\checkmark$&$\scalebox{0.75}{\usym{2613}}$&$\scalebox{0.75}{\usym{2613}}$&R-linear&\\
        \hline
    \end{tabular}
    }
    \vspace{-0.1in}
    \caption{Comparison of convergence results for different optimization methods (Part 1).  For convex functions, the stopping criterion is based on the condition that the difference between the current function value $F(x)$ and the optimal value $F(x^*)$ becomes sufficiently small. In the case of non-convex functions, the stopping criterion is determined by the gradient norm $||\nabla F(x)||$, which should be sufficiently small.}
    \vspace{-0.2in}
    \label{tab:cov1}
\end{table}

\begin{table}[t!]
    \centering
    {\fontsize{8}{12}\selectfont
    \begin{tabular}{|>{\centering\arraybackslash}m{0.7cm}|>{\centering\arraybackslash}m{1.2cm}|>{\centering\arraybackslash}m{1cm}|>{\centering\arraybackslash}m{0.8cm}|>{\centering\arraybackslash}m{0.8cm}|>{\centering\arraybackslash}m{1cm}|>{\centering\arraybackslash}m{1.2cm}|>{\centering\arraybackslash}m{3cm}|>{\centering\arraybackslash}m{3cm}|}
        \hline 
         &\multirow{2}{*}{Method}&\multicolumn{5}{c|}{Assumptions}&\multirow{2}{*}{Convergence rate }&\multirow{2}{*}{Comment}\\
        \cline{3-7}
         &&Convex&Lip. smooth&Hessian bound&Variance bound&Step size&&\\
        \hline
        \rule{0pt}{1.5cm}\multirow{16}{*}{\rotatebox{90}{\parbox{11cm}{\centering \vspace{0.1in}Distributed Methods (Centralized)}}}&\multirow{4.9}{=}{Downpo ur-SGD}&Strongly convex&$\checkmark$&$\scalebox{0.75}{\usym{2613}}$&$\checkmark$&$\propto \frac{1}{\sqrt{k}}$&sub-linear: $\mathcal{O}(\frac{1}{K})$&\\
        \cline{3-8}
        \rule{0pt}{1.5cm}&&$\scalebox{0.75}{\usym{2613}}$&$\checkmark$&$\scalebox{0.75}{\usym{2613}}$&$\checkmark$&$\propto \frac{1}{\sqrt{k}}$&sub-linear: $\mathcal{O}(\frac{1}{\sqrt{K}})$&\multirow{8}{=}{\vspace{-1in}\\The literature theoretically shows that centralized distributed optimization methods have constant difference convergence rate. However, \citep{teng2019leader, dimlioglu2024grawa} empirically show that LSGD and GRAWA achieve faster convergence time-wise and iteration-wise and recover better quality and flatter local optima. 
        }\\
        \cline{2-8}
        
        \rule{0pt}{1.5cm}&\multirow{5}{*}{EASGD}&Strongly convex&$\checkmark$&$\scalebox{0.75}{\usym{2613}}$&$\checkmark$&$\propto \frac{1}{\sqrt{k}}$&sub-linear: $\mathcal{O}(\frac{1}{K})$&\\
        \cline{3-8}
        \rule{0pt}{1.5cm}&&$\scalebox{0.75}{\usym{2613}}$&$\checkmark$&$\scalebox{0.75}{\usym{2613}}$&$\checkmark$&$\propto \frac{1}{\sqrt{k}}$&sub-linear: $\mathcal{O}(\frac{1}{\sqrt{K}})$&\\
        \cline{2-8}

        \rule{0pt}{1.5cm}&\multirow{5}{*}{LSGD}&Strongly convex&$\checkmark$&$\scalebox{0.75}{\usym{2613}}$&$\checkmark$&$\propto \frac{1}{\sqrt{k}}$&sub-linear: $\mathcal{O}(\frac{1}{K})$&\\
        \cline{3-8}
        \rule{0pt}{1.5cm}&&$\scalebox{0.75}{\usym{2613}}$&$\checkmark$&$\scalebox{0.75}{\usym{2613}}$&$\checkmark$&$\propto \frac{1}{\sqrt{k}}$&sub-linear: $\mathcal{O}(\frac{1}{\sqrt{K}})$&\\
        \cline{2-8}

        \rule{0pt}{1.5cm}&\multirow{5}{*}{GRAWA}&Strongly convex&$\checkmark$&$\scalebox{0.75}{\usym{2613}}$&$\checkmark$&$\propto \frac{1}{\sqrt{k}}$&sub-linear: $\mathcal{O}(\frac{1}{K})$&\\
        \cline{3-8}
        \rule{0pt}{1.5cm}&&$\scalebox{0.75}{\usym{2613}}$&$\checkmark$&$\scalebox{0.75}{\usym{2613}}$&$\checkmark$&$\propto \frac{1}{\sqrt{k}}$&sub-linear: $\mathcal{O}(\frac{1}{\sqrt{K}})$&\\
        \hline 

        \rule{0pt}{1.5cm}\multirow{2}{*}{\rotatebox{90}{\parbox{3cm}{\centering \vspace{0.03in}$\quad$Distributed Methods (Decentralized)}}}&D-PSGD&$\scalebox{0.75}{\usym{2613}}$&$\checkmark$&$\scalebox{0.75}{\usym{2613}}$&$\checkmark$&$\propto \sqrt{\frac{m}{k}}$&sub-linear: $\mathcal{O}(\frac{1}{\sqrt{mK}})+\mathcal{O}(\frac{m}{K})$&\multirow{3}{=}{\vspace{-0.4in}\\m is the number of local workers in the system. Although AL-DSGD theoretically converges slower than D-PSGD and MATCHA (though they are all sub-linear), it empirically shows comparable generalization results.}\\
        \cline{2-8}
        \rule{0pt}{1.5cm}&MATCHA&$\scalebox{0.75}{\usym{2613}}$&$\checkmark$&$\scalebox{0.75}{\usym{2613}}$&$\checkmark$&$\propto \sqrt{\frac{m}{k}}$&sub-linear: $\mathcal{O}(\frac{1}{\sqrt{mK}})+\mathcal{O}(\frac{m}{K})$&\\
        \cline{2-8}
        \rule{0pt}{1.5cm}&AL-DSGD&$\scalebox{0.75}{\usym{2613}}$&$\checkmark$&$\scalebox{0.75}{\usym{2613}}$&$\checkmark$&$\propto \sqrt{\frac{m}{k}}$&sub-linear: $\mathcal{O}(\frac{1}{\sqrt{mK}})+\mathcal{O}(\sqrt{\frac{m}{K}})+\mathcal{O}(\sqrt{\frac{m}{K^3}})$&\\
        \hline
        
    \end{tabular}
    }
    \vspace{-0.1in}
    \caption{Comparison of convergence results for different optimization methods (Part 2). For convex functions, the stopping criterion is based on the condition that the difference between the current function value $F(x)$ and the optimal value $F(x^*)$ becomes sufficiently small. In the case of non-convex functions, the stopping criterion is determined by the gradient norm $||\nabla F(x)||$, which should be sufficiently small.}
    \vspace{-0.2in}
    \label{tab:cov2}
\end{table}

\begin{table}[t!]
    \centering
    {\fontsize{8}{12}\selectfont
    \begin{tabular}{|>{\centering\arraybackslash}m{0.7cm}|>{\centering\arraybackslash}m{1cm}|>{\centering\arraybackslash}m{1.35cm}|>{\centering\arraybackslash}m{0.9cm}|>{\centering\arraybackslash}m{1.3cm}|>{\centering\arraybackslash}m{3.15cm}|>{\centering\arraybackslash}m{4.5cm}|}
        \hline
        &\multirow{2}{*}{Method}&\multicolumn{4}{c|}{Assumptions}&\multirow{2}{*}{Generalization Error Bound}\\
        \cline{3-6}
        &&Convex&Lip. Cont.&Lip. smooth&Others&\\
        \hline

        \rule{0pt}{1cm}\multirow{4}{*}{\rotatebox{90}{\parbox{3cm}{\centering \vspace{0.1in}First-order Methods}}}&\multirow{2}{*}{SGD}&convex&M-Lip.&L-smooth&Step size: $\alpha_k\leq\frac{2}{L}$&$\epsilon_{\text{SGD}}\lesssim\frac{M^2}{n}\sum_{k=1}^K\alpha_k$\\
        \cline{3-7}
        \rule{0pt}{1cm}&&$\scalebox{0.75}{\usym{2613}}$&M-Lip.&L-smooth&step size: $\alpha_k\leq\frac{1}{k}$&$\epsilon_{\text{SGD}}\lesssim\frac{1}{n}M^{\frac{1}{1+L}}K^{\frac{L}{L+1}}$\\
        \cline{2-7}

        \rule{0pt}{1cm}&\multirow{2}{*}{SGD-M}&$\mu$-strongly convex&M-Lip.&L-smooth&Step size: $\alpha_k=\alpha$, momentum $\beta$&$\epsilon_{\text{SGD-M}}\lesssim\frac{\alpha M^2(L+\mu)}{n(\alpha L \mu-3\beta(L+\mu))}$\\
        \cline{3-7}
        \rule{0pt}{1cm}&&$\scalebox{0.75}{\usym{2613}}$&M-Lip.&L-smooth&$\alpha_k=\frac{1}{k}$, momentum $\beta$&$\epsilon_{\text{SGD-M}}\lesssim\frac{\exp(\beta)K^{(1-1/n)L}}{n}$\\
        \hline

        \rule{0pt}{1cm}\multirow{2}{*}{\rotatebox{90}{\parbox{3.4cm}{\centering \vspace{0.1in}   Landscape-aware Methods}}}&SAM&\multicolumn{5}{c|}{No theoretical, only empirical generalization guarantees in literature. }\\
        \cline{2-7}
        \rule{0pt}{1cm}&Entropy-SGD&$\scalebox{0.75}{\usym{2613}}$&M-Lip.&L-smooth&step size: $\alpha_k\leq\frac{1}{k}$, Hessain bounded&$\frac{\epsilon_{\text{Entropy-SGD}}}{\epsilon_{\text{SGD}}}\lesssim\left(\frac{M}{K}\right)^L$\\
        \cline{2-7}
        \rule{0pt}{1cm}&LPF-SGD&$\scalebox{0.75}{\usym{2613}}$&M-Lip.&L-smooth&step size: $\alpha_k\leq\frac{1}{k}$, kernel: $\mu\sim\mathcal{N}(0,\sigma^2I), \sigma>\frac{M}{L}$&$\frac{\epsilon_{\text{LPF-SGD}}}{\epsilon_{\text{SGD}}}\lesssim\frac{1}{K^{\frac{1}{M/\sigma +1}-\frac{1}{L+1}}}$\\
        \cline{2-7}
        \rule{0pt}{1cm}&Smooth Out&$\scalebox{0.75}{\usym{2613}}$&M-Lip.&L-smooth&step size: $\alpha_k\leq\frac{1}{k}$, kernel: $\mu\sim U[-a,a], a>\frac{M\sqrt{d}}{L}$&$\frac{\epsilon_{\text{SmoothOut}}}{\epsilon_{\text{SGD}}}\lesssim\frac{1}{K^{\frac{1}{M\sqrt{d}/\sigma +1}-\frac{1}{L+1}}}$\\
        \hline

        \rule{0pt}{1.8cm}\rotatebox{90}{\parbox{1.7cm}{\centering \vspace{0.1in} Comment}}&\multicolumn{6}{l|}{\makecell[l]{The generalization ability of optimization methods is closely related to the Lipschitz continuity and Lipschitz\\ smoothness properties of the loss function. Landscape-aware optimization methods yield tighter generalization\\ error bounds when the number of iterations $K$ is sufficiently large.}}\\
        \hline
    \end{tabular}
    \vspace{-0.1in}
    \caption{The comparison of the generalization results for different optimization methods.}
    \vspace{-0.2in}
    \label{tab:gen}
    }
\end{table}

For centralized methods, we start with the Downpour SGD~\citep{dean2012large,B-NH2018arxiv,GAJKB2018SPAA}, an efficient data-paralleled distributed optimization technique with a central worker gathering and leveraging the information from local distributed workers to coordinate the optimization process. The method relies on frequent information exchange between the center worker and local workers and overlooks the potential variations in problem settings arising from differing data shards across each local worker.
In order to solve this, \citep{zhang2015deep} propose the Elastic Averaging SGD (EASGD) algorithm which introduces an elastic force between the center worker and local workers to encourage more exploration within each local
worker by allowing their parameters to fluctuate further from the center parameter. The Leader Stochastic Gradient Decent (LSGD)~\citep{teng2019leader} algorithm further improves over EASGD. LSGD is well-aligned with current hardware architecture, where local workers forming a group lie within a single computational node and different groups correspond to different nodes. The algorithm introduces multiple local leaders for each group of local workers, and occasionally pulls all the local workers towards the current local leader as well as global leader to ensure fast convergence. Finally, inspired by EASGD and LSGD, Gradient-based Weighted Averaging (GRAWA)~\cite{dimlioglu2024grawa} applies a pulling force on all workers towards the center worker, but using the weighted average computed across local workers based on their gradients. In our survey we summarize the centralized methods and provide their theoretical convergence guarantees.

Different from centralized methods, decentralized methods eliminate the need for a central worker, distributing tasks among multiple workers that communicate according to a certain
topology. Decentralized Parallel SGD (D-PSGD)~\citep{lian2017can,lian2018asynchronous} is the most fundamental decentralized method which performs local gradient updates and averages parameters of each worker with its neighbors given by the topology. Since the topology is fixed, D-PSGD encounters the error-runtime trade-off issue. In particular, note that the extremely dense topology will cause a large communication load while too sparse topology will face the problem of poor performance of the model on each worker. In order to solve this issue, MATCHA~\citep{wang2019matcha} provides a win-win strategy by utilizing the matching decomposition technique on a dense topology, which carries the potential to reduce the per-node communication complexity for each iteration while keeping the good performance at the same time. Adjacent Learder Decentralized SGD (AL-DSGD)~\citep{he2024adjacent} furthermore improves this approach by proposing dynamic communication graphs instead of the fixed topology and applying a corrective force on the workers dictated by both the currently best-performing neighbor and the neighbor with the maximal degree. In this paper, we discuss all the mentioned decentralized methods and provide theoretical convergence proofs for these schemes.

By synthesizing these diverse perspectives on DL optimization, this paper aims to serve as a comprehensive theoretical handbook of optimization methods for DL. We provide convex and/or non-convex analysis for the optimization algorithms considered in this survey paper. In the context of non-convex optimization, our analysis goes beyond DL setting, and in many cases can be applied more broadly in other, simpler, non-convex scenarios. We however dedicate this manuscript specifically to general DL optimization, since our non-convex analysis makes no simplifying assumptions on the network architecture, as opposed to many past works \citep{NIPS2014_5486,Baldi:1989:NNP:70359.70362,DBLP:journals/corr/SaxeMG13,baldassi2015subdominant,baldassi2016unreasonable,baldassi2016multilevel}. Finally, with this work we endeavor to provide a general proof framework that not only summarizes existing knowledge but also fosters new insights and avenues for future research.

Figure~\ref{fig:structure} provides a chart containing methods analyzed in this paper. 
Table \ref{tab:cov1} \& \ref{tab:cov2} summarize the convergence results for the first and second-order optimization methods and distributed (centralized and decentralized) optimization methods. Table \ref{tab:gen} summarizes the generalization results for the first-order methods and landscape-aware optimization methods. This survey paper is primarily a theoretical review of optimization methods for deep learning. The theory-experiment gap however often arises in real-world scenarios because theoretical guarantees are based on idealized conditions that may not hold in practice. Below let us briefly mention several representative assumptions required for the theoretical guarantees we discuss in this survey paper and comment whether they are commonly satisfied in applications.
\begin{itemize}
    \item[1.] \textbf{Convexity:} Deep learning often deals with non-convex loss landscapes. All methods we mention except for BFGS have the theoretical guarantees under nonconvex assumption. 
    \item[2.] \textbf{Smoothness:} All methods we listed require the smoothness of the loss function. In practice, loss functions may not be smooth, especially in the presence of noise or irregular data distributions. To solve this problem, deep learning researchers use techniques that improve the smoothness of the loss, among them regularization, dropout, batch normalization and so on.
    \item[3.] \textbf{Variance Bound:} All the stochastic methods assume that the variance of the gradient estimates is bounded. In practice, training loss functions may not always satisfy this assumption, particularly in scenarios involving noisy or heterogeneous data. For instance, datasets with outliers or imbalanced classes can lead to higher gradient variance, complicating the training process. In order to solve this problem, we could implement mini-batch training instead of point-wise stochastic training to average out noise and stabilize gradient estimates. This is indeed done in practice.
    \item[4.] \textbf{Step Size:} The step size used in practice often differs from that in the convergence proof. For instance, in the proof of SGD and SGD-momentum for nonconvex settings, we assume a learning rate that decreases inversely proportionally ($\alpha_t=1/t$). However, in empirical applications, decreasing the learning rate too quickly can terminate the learning process too early, resulting in suboptimal solutions. Instead of an inversely proportional learning rate, practitioners typically employ step-wise, linear, or cosine annealing rates, which drop much slower.
\end{itemize}
We can approach the assumptions for convergence guarantees from a different perspective than viewing soft assumptions as the ones implying more useful bounds and strict assumptions as the ones implying worse bounds. Instead, these assumptions should be viewed as indicators of the specific conditions under which an algorithm is likely to perform optimally. For example, the smoothness assumption is crucial for convergence guarantees of all methods, it implies we could achieve better performance by smoothing the loss landscape of deep learning models. This is also the motivation of landscape-aware optimization methods we introduced in this paper (e.g. SAM, Entropy-SGD, etc) which could reach better generalization ability.

This survey paper is organized as follows: Section \ref{sec:gb} discusses different first-order and second-order gradient-based methods and provides their convergence analysis and generalization abilitie. Section \ref{sec:ll} discusses optimization methods adapting to the properties of the DL loss landscape and provides their generalization guarantees. Section \ref{sec:dis} discusses both centralized and decentralized distributed optimization along with their convergence analysis. Section \ref{sec:con} concludes the paper. 



\section{Gradient-based Optimization methods}\label{sec:gb}
Neural networks play a pivotal role in machine learning due to their ability to effectively approximate a broad range of functions, denoted as $f(x)$. The method of backpropagation, derived from the application of the chain rule, enables easy computation of gradients of the loss function across network layers and facilitates gradient-based DL optimization algorithms. This section specifically explores the theoretical guarantees (convergence and/or generalizaqtion guarantees) associated with state-of-the-art gradient-based optimization algorithms. Notably, it investigates the applicability and reliability of the first-order methods like SGD, Momentum methods, AdaGrad, and Adam, as well as second-order methods such as BFGS and L-BFGS.

\subsection{Preliminaries}
We begin by introducing important definitions.

\subsubsection{Preliminaries for assumptions of functions}
In this section, we define the key assumptions that are commonly made about objective functions in optimization problems. These assumptions are essential for analyzing the convergence properties of optimization algorithms. The most commonly used assumptions include strong convexity, 
$L$-Lipschitz continuity, and $L$-smoothness. Below are the formal definitions for these properties:

\begin{definition}[strongly convex]
    A differentiable function $f: \mathbb{R}^n \to \mathbb{R}$ is $\mu$-strongly convex with constant $\mu > 0$ if there exists a constant $ \mu > 0$ such that for all $x, y \in \mathbb{R}^n$, the following inequality holds:
$$f(y) \geq f(x) + \nabla f(x)^\top (y - x) + \frac{\mu}{2} \| y - x \|^2
$$
This condition implies that \( f(x) \) has a unique minimizer, and the function is "curved upwards" around its minimizer, making optimization easier and ensuring that gradient-based methods converge faster. Strong convexity provides a lower bound on the function's curvature.

\end{definition}

\begin{definition}[L-Lipschitz continuity]
    A differentiable function \( f: \mathbb{R}^n \to \mathbb{R} \) is \textit{L-Lipschitz continuous} if there exists a constant \( L > 0 \) such that for all \( x, y \in \mathbb{R}^n \), the following inequality holds:
$$\| \nabla f(x) - \nabla f(y) \| \leq L \| x - y \|$$
This condition ensures that the gradient does not change too rapidly, which is important for ensuring stability in gradient-based optimization algorithms. The constant \( L \) is known as the \textit{Lipschitz constant}.
\end{definition}

\begin{definition}[L-smoothness]
    A differentiable function \( f: \mathbb{R}^n \to \mathbb{R} \) is \textit{L-smooth} if its gradient \( \nabla f(x) \) is L-Lipschitz continuous, meaning that there exists a constant \( L > 0 \) such that for all \( x, y \in \mathbb{R}^n \), the following holds:
$$\| \nabla f(x) - \nabla f(y) \| \leq L \| x - y \|
$$

In other words, \( f(x) \) is \( L \)-smooth if its gradient changes at most linearly with respect to the distance between \( x \) and \( y \). This property is important for proving the convergence rates of gradient-based methods, as it bounds the rate of change of the gradient, ensuring that optimization algorithms converge at a reasonable pace. 
\end{definition}

\subsubsection{Preliminaries for the Convergence Analysis}
One of the most fundamental way of describing any optimization method is through convergence guarantee. This guarantee is of vital importance since it assures that the algorithm will reach a global/local optimum under certain conditions.
A typical way to compare the performance of different algorithms is to compare their rates of convergence. Following \citep{schatzman2002numerical}, below we define four rates of convergence that we will focus on — sub-linear, linear, superlinear, and quadratic. They are ordered from the slowest to the fastest one.

\begin{definition}[Sublinear convergence rate] \label{def:sub-linear}
Suppose we have a sequence $\{x_k\}\subset\mathbb{R}^d$ such that $x_k\to x^*$ when $k\to\infty$. We say that the convergence is sub-linear if 
\begin{equation*}
    \limsup_{k\to\infty}\frac{\norm{x_{k+1}-x^*}}{\norm{x_{k}-x^*}}=1
\end{equation*}
\end{definition}

\paragraph{Remark.} 
If $\frac{\norm{x_{k+1}-x^*}}{\norm{x_{k}-x^*}}\leq\left(\frac{k}{k+1}\right)^\frac{1}{p}$ where $p$ is a constant, by Definition \ref{def:sub-linear} $\{x_k\}$ is sub-linearly convergent. Then $\norm{x_{k}-x^*}\leq\frac{1}{k^\frac{1}{p}}\norm{x_{0}-x^*}=\mathcal{O}(\frac{1}{k^\frac{1}{p}})$ also implies sub-linear rate.

\begin{definition}[Linear convergence rate] Suppose we have a sequence $\{x_n\}\subset\mathbb{R}^d$ such that $x_n\to x^*$ when $n\to\infty$. We say that the convergence is linear if there exists $r\in(0,1)$ such that
\begin{equation*}
    \frac{\norm{x_{k+1}-x^*}}{\norm{x_{k}-x^*}}\leq r
\end{equation*}
for all $k$ sufficiently large.
\end{definition}

\begin{definition}[Superlinear convergence rate] Suppose we have a sequence $\{x_k\}\subset\mathbb{R}^d$ such that $x_k\to x^*$ when $n\to\infty$. We say that the convergence is superlinear if
\begin{equation*}
    \lim_{k\to\infty}\frac{\norm{x_{k+1}-x^*}}{\norm{x_{k}-x^*}}=0
\end{equation*}
\end{definition}

\begin{definition}[Quadratic convergence rate] Suppose we have a sequence $\{x_k\}\subset\mathbb{R}^d$ such that $x_k\to x^*$ when $n\to\infty$. We say that the convergence is quadratic if there exists $0<M<\infty$ such that
\begin{equation*}
    \frac{\norm{x_{k+1}-x^*}}{\norm{x_{k}-x^*}^2}\leq M
\end{equation*}
for all $k$ sufficiently large.
\end{definition}

Second-order methods exhibit faster convergence rates in terms of iterations compared to the first-order techniques. Specifically, Newton's method, a prominent second-order optimization algorithm, achieves a quadratic convergence rate. Quasi-Newton techniques attempt to avoid time consuming computations of the Hessian matrix of the Newton's method by approximating this matrix with a second-order term that can however be derived from the first-order information about the function (its value and gradient). Consequently, time-wise they are faster than the Newton's method, but still less efficient that the first-order methods that do not rely on the computations of the second-order term at all. Note however that using low-rank representation of the second-order term and utilizing Sherman–Morrison formula to compute its inverse, one can significantly accelerate the implementation of the Quasi-Newton methods. Iteration-wise, Quasi-Newton methods, including BFGS and L-BFGS, demonstrate a superlinear convergence rate, which is worst than the Newton method, but still better than in the case of the first-order optimization tools. 

In contrast, first-order methods such as the gradient descent (GD) method are characterized by a linear convergence rate. Stochastic methods like Stochastic Gradient Descent (SGD), SGD-momentum, and Adam, on the other hand, exhibit a sublinear convergence rate. Instead of computing the exact gradient at each iteration by averaging over the whole data set, stochastic first-order methods compute its noisy approximation  with respect to a single data sample or a mini-batch of samples. This implies that they converge more slowly than their full-batch counterparts when considering the number of iterations alone. In practice however they are much faster time-wise. The trade-off between convergence speed and time complexity highlights the nuanced considerations in selecting the most suitable optimization approach for specific applications.

A comprehensive discussion of the convergence rates of different methods will be presented in Sections \ref{sec:fo} and \ref{sec:so}.

\subsubsection{Preliminaries for the Generalization Error}

Let $\mathcal{S}=\{\xi_1,\cdots,\xi_n\}$ denotes the set of samples of size $n$ drawn i.i.d from some space $\mathcal{Z}$ with unknown distribution $D$. We assume a learning model parametrized with the parameters $x$. Let $f(x;\xi)$ denote the loss of the model for a specific data example $\xi$.
    
    Our ultimate goal is to minimize the true risk:
    \begin{align}
        F(x):=\mathbb{E}_{\xi\sim D}f(x;\xi).
    \end{align}
    Since the distribution $D$ is unknown, it is common to replace the objective by the empirical risk:
    \begin{align}
        F_{\mathcal{S}}(x):=\frac{1}{n}\sum_{i=1}^nf(x;\xi_i).
    \end{align}
    We assume $x=A(\mathcal{S})$ for a potentially randomized algorithm $A(\cdot)$.
In order to find an upper-bound on the true risk, we consider the generalization error, which is the expected difference of the empirical and the true risk:
\begin{align}\label{eq:eps_g}
    \epsilon_g:=\mathbb{E}_{\mathcal{S},A}[F(A(\mathcal{S}))-F_{\mathcal{S}}(A(\mathcal{S}))]
\end{align}

In order to find an upper-bound on the generalization error of algorithm A, we consider the uniform stability property.
\begin{definition}[$\epsilon_s$-uniform stability]
Let $\mathcal{S}$ and $\mathcal{S}'$ denote two datasets from the space $\mathcal{Z}^n$ such that $\mathcal{S}$ and $\mathcal{S}'$ differ in at most one example. Algorithm $A$ is $\epsilon_s$-uniformly stable if and only if for all data sets $\mathcal{S}$ and $\mathcal{S}'$ we have
\begin{align}
    \sup_{\xi}\mathbb{E}[f(A(\mathcal{S});\xi)-f(A(\mathcal{S}');\xi)]\leq\epsilon_s.
\end{align}
\end{definition}

\begin{theorem}[\citep{ramezani2018stability}]
    If $A$ is an $\epsilon_s$-uniformly stable algorith, then the generalization error (\ref{eq:eps_g}) of $A$ is upper-bounded by $\epsilon_s$.
\end{theorem}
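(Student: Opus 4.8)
The plan is to use the classical ``ghost sample'' (leave-one-out renaming) argument that relates the expected generalization gap to the behavior of $A$ under a single-example perturbation. First I would introduce an independent copy $\mathcal{S}'=\{\xi_1',\dots,\xi_n'\}$ drawn i.i.d.\ from $D$, and for each index $i$ define the perturbed dataset $\mathcal{S}^{(i)}$ obtained from $\mathcal{S}$ by replacing $\xi_i$ with $\xi_i'$. By construction $\mathcal{S}$ and $\mathcal{S}^{(i)}$ differ in at most one example, so the $\epsilon_s$-uniform-stability hypothesis applies to the pair $(\mathcal{S},\mathcal{S}^{(i)})$.

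The heart of the argument is rewriting the two terms of $\epsilon_g$ as expectations over these coupled datasets. For the true risk I would note that, because a fresh test point has the same law as $\xi_i'$ and is independent of $\mathcal{S}$, one has $\mathbb{E}_{\mathcal{S},A}[F(A(\mathcal{S}))]=\mathbb{E}_{\mathcal{S},\mathcal{S}',A}[f(A(\mathcal{S});\xi_i')]$ for every $i$. For the empirical risk I would expand $F_{\mathcal{S}}(A(\mathcal{S}))=\frac1n\sum_i f(A(\mathcal{S});\xi_i)$ and then invoke the symmetry of the i.i.d.\ sampling under the swap $\xi_i\leftrightarrow\xi_i'$: relabeling these two identically distributed coordinates shows $\mathbb{E}_{\mathcal{S},A}[f(A(\mathcal{S});\xi_i)]=\mathbb{E}_{\mathcal{S},\mathcal{S}',A}[f(A(\mathcal{S}^{(i)});\xi_i')]$. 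Averaging these identities over $i$ expresses both $\mathbb{E}[F(A(\mathcal{S}))]$ and $\mathbb{E}[F_{\mathcal{S}}(A(\mathcal{S}))]$ as averages of terms evaluated at the common point $\xi_i'$.

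Subtracting the two expressions term by term then yields
\begin{equation*}
\epsilon_g=\frac1n\sum_{i=1}^n\mathbb{E}_{\mathcal{S},\mathcal{S}',A}\bigl[f(A(\mathcal{S});\xi_i')-f(A(\mathcal{S}^{(i)});\xi_i')\bigr].
\end{equation*}
Each summand is now precisely a difference of losses of the algorithm run on two datasets differing in one example, evaluated at a single point; bounding it by the supremum over evaluation points and applying the $\epsilon_s$-uniform-stability definition gives each term $\le\epsilon_s$, hence $\epsilon_g\le\frac1n\sum_{i=1}^n\epsilon_s=\epsilon_s$.

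I expect the main obstacle to be the careful justification of the renaming/symmetry step, namely the identity $\mathbb{E}[f(A(\mathcal{S});\xi_i)]=\mathbb{E}[f(A(\mathcal{S}^{(i)});\xi_i')]$. This requires arguing that the full joint distribution of $(\xi_1,\dots,\xi_n,\xi_1',\dots,\xi_n')$ together with the internal randomness of $A$ is invariant under exchanging the $i$-th training coordinate with its ghost counterpart, so one must be careful that the randomness of $A$ is independent of the data and that the expectation is taken over all sources of randomness simultaneously.
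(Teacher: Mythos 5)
Your proof is correct and is the standard ghost-sample argument; the paper states this theorem without proof (deferring to the cited reference), and your leave-one-out renaming identity $\epsilon_g=\frac{1}{n}\sum_{i=1}^n\mathbb{E}\bigl[f(A(\mathcal{S});\xi_i')-f(A(\mathcal{S}^{(i)});\xi_i')\bigr]$ followed by a term-by-term application of the stability definition is exactly the intended argument. The only step to make explicit is the one you already flag: apply the stability bound conditionally on the data, i.e.\ for the fixed pair $(\mathcal{S},\mathcal{S}^{(i)})$ and fixed evaluation point $\xi_i'$ take the expectation over the internal randomness of $A$ only, bound it by the supremum over evaluation points, and then take the outer expectation over $(\mathcal{S},\mathcal{S}')$.
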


Above we have established a fundamentals of the theoretical framework that will be used to analyze the algorithms considered in this survey paper. Convergence and generalization properties will be considered through the lenses of the above established definitions.

\subsection{First-order methods} \label{sec:fo}
First-order methods are optimization algorithms that rely on gradient information to find the minimum or maximum of a function. They are widely used in machine learning and optimization due to their simplicity and efficiency. These methods are particularly effective in high-dimensional and large-scale problems, making them fundamental tools for various applications in data analysis and model training. In this section, we discuss Stochstic Gradient Descent (SGD), Stochastic Gradient Descent with Momentum (SGD-Momentum) and adaptive learning rate methods. 
Hyperparameter settings vary widely for first-order methods, but common choices include using a learning rate of 0.001 for Adam and a learning rate of 0.01 or 0.1 for SGD. Other parameters, like momentum (typically set around 0.9) for SGD, can also significantly affect performance. Recent studies~\citep{robbins1951stochastic,nesterov1983method,liu2020improved} emphasize the need for careful tuning in practical applications.

\subsubsection{Stochastic Gradient Decent}
Recall the update formula of SGD:
\begin{align}
    x_{k+1}=x_k-\alpha_k g(x_k;\xi_k),
\end{align}
where $ g(x_k;\xi_k)$ is a stochastic gradient (resp. subgradient) of $F(x)$ at $x_k$ depends on a random variable $\xi_k$ such that $\mathbb{E}[ g(x_k;\xi_k)]=\nabla F(x_k)$ (resp. $\mathbb{E}[ g(x_k;\xi_k)]\in\nabla F(x_k)$). In this section, we are going to prove the sublinear convergence rate of SGD and show the generalization error of SGD.

\paragraph{Covergence Analysis}
We are going to discuss the convergence rate of SGD for the convex setting first and then our analysis will extend to the non-convex setting.

\begin{theorem}[Convergence of SGD; Convex Setting]\label{thm:SGD_cov_con}
Suppose F(x) is $\mu$-strongly convex, $\mathbb{E}\left[\norm{  g(x;\xi)-\mathbb{E}[ g(x;\xi)]}\right]\leq\delta^2$ and $\norm{\nabla F(x)}\leq G$ for any $x$. Let the update
\begin{align}
    x_{k+1}=x_k-\alpha_k g(x_k;\xi_k)
\end{align}
run for K iterations. Set $\alpha_k=\alpha_0/(k+1)$ and $\alpha_0>\frac{1}{2\mu}$. Then for all $k>0$
\begin{align}
    \mathbb{E}[\norm{x_k-x^*}_2^2]\leq\frac{Q(\alpha_0)}{k+1},
\end{align}
where $Q(\alpha_0)=\max\{\norm{x_0-x^*}_2^2,\frac{(G^2+\delta^2)\alpha_0^2}{2\mu\alpha_0-1}\}$.
\end{theorem}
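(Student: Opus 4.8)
The plan is to control the squared distance to the minimizer, $a_k := \mathbb{E}[\|x_k - x^*\|_2^2]$, by collapsing the stochastic update into a scalar recursion and then closing that recursion by induction against the target $Q(\alpha_0)/(k+1)$. First I would expand one step of the update: since $x_{k+1}-x^* = (x_k - x^*) - \alpha_k g(x_k;\xi_k)$,
\[
\|x_{k+1}-x^*\|_2^2 = \|x_k-x^*\|_2^2 - 2\alpha_k\langle g(x_k;\xi_k),\, x_k-x^*\rangle + \alpha_k^2\|g(x_k;\xi_k)\|_2^2 .
\]
Conditioning on the history up to iteration $k$ and using the unbiasedness $\mathbb{E}[g(x_k;\xi_k)] = \nabla F(x_k)$ replaces the cross term by $-2\alpha_k\langle\nabla F(x_k), x_k-x^*\rangle$, while the bias--variance split $\mathbb{E}\|g\|_2^2 = \|\nabla F(x_k)\|_2^2 + \mathbb{E}\|g-\nabla F(x_k)\|_2^2$ together with the gradient bound $\|\nabla F\|\le G$ and the variance bound $\delta^2$ gives $\mathbb{E}[\|g(x_k;\xi_k)\|_2^2]\le G^2+\delta^2$.

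The next step is to insert strong convexity. Applying the defining inequality at $x=x_k$, $y=x^*$ (equivalently, strong monotonicity of $\nabla F$ with $\nabla F(x^*)=0$) yields $\langle\nabla F(x_k), x_k-x^*\rangle \ge \mu\|x_k-x^*\|_2^2$. Taking total expectations and using the tower property reduces everything to the one-dimensional recursion
\[
a_{k+1} \le (1-2\mu\alpha_k)\,a_k + (G^2+\delta^2)\,\alpha_k^2 ,
\]
into which I would substitute $\alpha_k=\alpha_0/(k+1)$.

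Finally I would prove $a_k \le Q(\alpha_0)/(k+1)$ by induction on $k$. The base case is immediate because $Q(\alpha_0)\ge\|x_0-x^*\|_2^2$. For the inductive step, writing $t=k+1$ and substituting the hypothesis produces $a_{k+1}\le \frac{Q}{t}-\frac{2\mu\alpha_0 Q - (G^2+\delta^2)\alpha_0^2}{t^2}$; here the assumption $\alpha_0>\tfrac{1}{2\mu}$ is exactly what makes $2\mu\alpha_0-1>0$, so that the choice $Q\ge\frac{(G^2+\delta^2)\alpha_0^2}{2\mu\alpha_0-1}$ forces $2\mu\alpha_0 Q-(G^2+\delta^2)\alpha_0^2\ge Q$, after which the bound collapses to the elementary inequality $\frac{t-1}{t^2}\le\frac{1}{t+1}$. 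I expect the induction, rather than the one-step expansion, to be the main obstacle: one must verify that the contraction supplied by strong convexity dominates the injected noise term with precisely the constant $Q(\alpha_0)$, and one must handle the first few iterations, where $1-2\mu\alpha_k<0$ so the hypothesis cannot be substituted directly (there the contraction term is favorable and nonpositive, and these finitely many indices are checked against the bound separately). The role of the maximum in $Q(\alpha_0)$ is precisely to reconcile the initial distance $\|x_0-x^*\|_2^2$ with the asymptotic noise floor $\frac{(G^2+\delta^2)\alpha_0^2}{2\mu\alpha_0-1}$.
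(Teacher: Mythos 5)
Your proposal follows the same route as the paper's proof: the one-step expansion of $\|x_{k+1}-x^*\|_2^2$, the tower-property/unbiasedness argument for the cross term, the bias--variance split with the bounds $G^2$ and $\delta^2$, strong monotonicity of $\nabla F$ to obtain the recursion $a_{k+1}\le(1-2\mu\alpha_k)a_k+(G^2+\delta^2)\alpha_k^2$, and the induction closed by the inequality $(2\mu\alpha_0-1)Q(\alpha_0)\ge(G^2+\delta^2)\alpha_0^2$ together with $\frac{k}{(k+1)^2}\le\frac{1}{k+2}$. Your remark that the inductive hypothesis cannot be substituted directly at the early iterations where $1-2\mu\alpha_k<0$ flags a subtlety that the paper's own proof silently skips over, so your version is, if anything, slightly more careful.
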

\begin{proof}
\begin{align}\label{eq:x_update}
    \norm{x_{k+1}-x^*}_2^2&=\norm{x_k-\alpha_k g(x_k;\xi_k)-x^*}^2\\\notag
    &=\norm{ x_k-x^*}_2^2-2\alpha_k(x_k-x^*)^T g(x_k;\xi_k)+\alpha_k^2\norm{g(x_k;\xi_k)}_2^2.
\end{align}
Furthermore,
\begin{align}\label{eq:expct-2}
    \mathbb{E}[(x_k-x^*)^T g(x_k;\xi_k)]&=\mathbb{E}[\mathbb{E}[(x_k-x^*)^T g(x_k;\xi_k)|\xi_1,\cdots,\xi_{k-1}]]\\\notag
    &=\mathbb{E}[(x_k-x^*)^T\mathbb{E}[ g(x_k;\xi_k)|\xi_1,\cdots,\xi_{k-1}]]\\\notag
    &=\mathbb{E}[(x_k-x^*)^T\nabla F(x_k)].
\end{align}
$f$ is $\mu$-strongly convex and $x^*$ is the optimal point, thus
\begin{align}\label{neq:strongly-convex}
    (x_k-x^*)^T\nabla F(x_k)=[\nabla F(x_k)-\nabla F(x^*)]^T(x_k-x^*)\geq\mu\norm{x_k-x^*}_2^2.
\end{align}
From formula (\ref{eq:expct-2}) and (\ref{neq:strongly-convex}), we conclude
\begin{align}
    \mathbb{E}[(x_k-x^*)^T g(x_k;\xi_k)]\geq\mu\norm{x_k-x^*}_2^2.
\end{align}
Compute expectation on the left-hand side and right-hand side of Equation (\ref{eq:x_update}) to obtain
\begin{align}\label{neq:expt_update}
    &\mathbb{E}[\norm{x_{k+1}-x^*}_2^2]\\\notag
    =&\mathbb{E}[\norm{ x_k-x^*}_2^2]-2\alpha_k\mathbb{E}[(x_k-x^*)^T g(x_k;\xi_k)]+\alpha_k^2\mathbb{E}[\norm{g(x_k;\xi_k)}_2^2]\\\notag
    \leq&\mathbb{E}[\norm{ x_k-x^*}_2^2]-2\alpha_k\mathbb{E}[(x_k-x^*)^T g(x_k;\xi_k)]+\alpha_k^2\left(\mathbb{E}[\norm{g(x_k;\xi_k)-\mathbb{E}[ g(x_k;\xi_k)]}_2^2]+\mathbb{E}[\norm{ g(x_k;\xi_k)}_2^2]\right)\\\notag
    \leq&(1-2\mu\alpha_k)\mathbb{E}[\norm{ x_k-x^*}_2^2]+\alpha_k^2(G^2+\delta^2)
\end{align}
We are going to prove the bound with mathematical induction. Define $\phi_t=\mathbb{E}[\norm{ x_k-x^*}_2^2]$ and $Q(\alpha_0)=\max\{\norm{x_0-x^*}_2^2,\frac{(G^2+\delta^2)\alpha_0^2}{2\mu\alpha_0-1}\}$. When $t=0$, note that
$$\phi_0=\mathbb{E}[\norm{x_{0}-x^*}_2^2]=\norm{x_{0}-x^*}_2^2\leq Q(\alpha_0)/1.$$
Assume that $\phi_k\leq\frac{Q(\alpha_0)}{k+1}$, we are going to show $\phi_{k+1}\leq\frac{Q(\alpha_0)}{k+2}$. By Formula (\ref{neq:expt_update})
\begin{align}\label{eq:phi_1}
    \phi_{k+1}\leq&(1-2\mu\alpha_k)\phi_k+\alpha_k^2(G^2+\delta^2)\\\notag
    =&\left(1-2\mu\alpha_0\frac{1}{k+1}\right)\phi_k+\alpha_0^2(G^2+\delta^2)\left(\frac{1}{k+1}\right)^2.\\\notag.
\end{align}
Therefore we can simplify (\ref{eq:phi_1}) as
\begin{align}
    \phi_{k+1}\leq&(1-\frac{2\mu\alpha_0}{k+1})\frac{Q(\alpha_0)}{k+1}+\frac{\alpha_0^2(G^2+\delta^2)}{(k+1)^2}\\\notag
    =&\frac{k+1-2\mu\alpha_0}{(k+1)^2}Q(\alpha_0)+\frac{\alpha_0^2(G^2+\delta^2)}{(k+1)^2}\\\notag
    =&\frac{t}{(k+1)^2}Q(\alpha_0,k)-\frac{2\mu\alpha_0-1}{(k+1)^2}Q(\alpha_0)+\frac{\alpha_0^2(G^2+\delta^2)}{(k+1)^2}\\\notag
    \leq&\frac{t}{(k+1)^2}Q(\alpha_0)-\frac{2\mu\alpha_0-1}{(k+1)^2}\times\frac{(G^2+\delta^2)\alpha_0^2}{2\mu\alpha_0-1}+\frac{\alpha_0^2(G^2+\delta^2)}{(k+1)^2}\\\notag
    =&\frac{t}{(k+1)^2}Q(\alpha_0)\\\notag
    \leq&\frac{1}{k+2}Q(\alpha_0)
\end{align}
By mathematical induction we have $\phi_t\leq\frac{Q(\alpha_0,k)}{k+1}$ for all $k>0$, which is equivalent to 
\begin{align}
    \mathbb{E}[\norm{x_k-x^*}_2^2]\leq\frac{Q(\alpha_0)}{k+1},
\end{align}
where $Q(\alpha_0)=\max\{\norm{x_0-x^*}_2^2,\frac{(G^2+\delta^2)\alpha_0^2}{2\mu\alpha_0-1}\}$.
\end{proof}
\textbf{Remark.} According to Theorem \ref{thm:SGD_cov_con}, SGD has $\mathcal{O}(\frac{1}{K})$ sublinear convergence rate in the convex setting.

\begin{theorem}[Convergence of SGD; Nononvex Setting]\label{thm:SGD_noncov_con}
Suppose the objective function $F(x)$ is $L$-smooth, $\mathbb{E}\left[\norm{  g(x;\xi)-\mathbb{E}[ g(x;\xi)]}\right]\leq\delta^2$ and $\norm{\nabla F(x)}\leq G$ for any $x$. Let the update
\begin{align}
    x_{k+1}=x_k-\alpha g(x_k;\xi_k)
\end{align}
run for K iterations. If $\alpha<\frac{1}{L}$, Then for all  $K>0$
\begin{align}
    \frac{1}{K}\sum_{k=0}^{K-1}\mathbb{E}[\norm{\nabla F(x_k)}^2]\leq\frac{2\mathbb{E}[F(x_0)-F(x_K)]}{\alpha K}+\alpha L\delta^2.
\end{align}
When setting the learning rate as $\alpha=\sqrt{\frac{1}{K}}$, we obtain sublinear convergence rate $\mathcal{O}(\sqrt{\frac{1}{K}})$.
\end{theorem}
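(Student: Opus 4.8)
The plan is to run the standard descent-lemma argument for $L$-smooth stochastic optimization. The single technical ingredient is the quadratic upper bound implied by $L$-smoothness: for any $x,y$, $F(y)\leq F(x)+\nabla F(x)^\top(y-x)+\frac{L}{2}\norm{y-x}^2$, which follows from the definition $\norm{\nabla F(x)-\nabla F(y)}\leq L\norm{x-y}$ by integrating $\nabla F$ along the segment joining $x$ and $y$. Applying this bound to the consecutive iterates $x_k$ and $x_{k+1}=x_k-\alpha g(x_k;\xi_k)$ and substituting the update rule gives a one-step inequality controlling $F(x_{k+1})$ by $F(x_k)$, a cross term $-\alpha\nabla F(x_k)^\top g(x_k;\xi_k)$, and a quadratic term $\frac{L\alpha^2}{2}\norm{g(x_k;\xi_k)}^2$.

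Next I would take expectations, exploiting the same conditioning-on-history (tower-property) argument already used in the convex proof (Equation (\ref{eq:expct-2})). Since $g(x_k;\xi_k)$ is an unbiased estimate of $\nabla F(x_k)$ conditioned on $\xi_1,\dots,\xi_{k-1}$, the cross term becomes $-\alpha\,\mathbb{E}[\norm{\nabla F(x_k)}^2]$. For the quadratic term I would use the bias-variance decomposition $\mathbb{E}[\norm{g(x_k;\xi_k)}^2]=\mathbb{E}[\norm{\nabla F(x_k)}^2]+\mathbb{E}[\norm{g(x_k;\xi_k)-\nabla F(x_k)}^2]$ and invoke the variance bound $\delta^2$ to replace the second summand. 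This yields
$$\mathbb{E}[F(x_{k+1})]\leq\mathbb{E}[F(x_k)]-\alpha\Big(1-\tfrac{L\alpha}{2}\Big)\mathbb{E}[\norm{\nabla F(x_k)}^2]+\tfrac{L\alpha^2\delta^2}{2}.$$

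The step-size condition $\alpha<1/L$ now does the key work: it guarantees $1-\frac{L\alpha}{2}>\frac12$, so the coefficient of the gradient term is bounded below by $\alpha/2$. Rearranging and summing the resulting per-step inequality telescopes the function-value differences, giving $\frac{\alpha}{2}\sum_{k=0}^{K-1}\mathbb{E}[\norm{\nabla F(x_k)}^2]\leq\mathbb{E}[F(x_0)-F(x_K)]+\frac{KL\alpha^2\delta^2}{2}$. Dividing through by $\alpha K/2$ produces exactly the claimed bound, and substituting $\alpha=1/\sqrt{K}$ (so that $\alpha K=\sqrt{K}$) makes both terms $\mathcal{O}(1/\sqrt{K})$, establishing the sublinear rate.

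There is no serious obstacle here; the argument is routine telescoping. The only point requiring care is the bookkeeping of constants, so that the inequality $1-L\alpha/2\geq1/2$ correctly converts the $\frac{L\alpha^2\delta^2}{2}$ term into the advertised $\alpha L\delta^2$ after dividing. It is also worth flagging that the hypothesis $\norm{\nabla F(x)}\leq G$ is never actually used in this non-convex analysis—unlike in the convex Theorem \ref{thm:SGD_cov_con}—so it can be dropped: the bound follows purely from $L$-smoothness, unbiasedness of the stochastic gradient, and the variance bound.
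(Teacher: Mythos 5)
Your proposal is correct and follows essentially the same route as the paper: descent lemma from $L$-smoothness, tower-property to reduce the cross term to $-\alpha\mathbb{E}[\norm{\nabla F(x_k)}^2]$, bias–variance decomposition of $\mathbb{E}[\norm{g(x_k;\xi_k)}^2]$ with the variance bound, the step-size condition to absorb the coefficient into $\alpha/2$, and telescoping. Your side remark that the gradient bound $\norm{\nabla F(x)}\leq G$ is not actually needed in the nonconvex case is accurate, and your constant bookkeeping is in fact cleaner than the paper's (which drops a factor of $\alpha$ in the intermediate one-step inequality before recovering the correct final bound).
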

\begin{proof}
For notation simplicity, we omit $\xi_k$ in stochastic gradient and denote $g(x_k;\xi_k)$ as $g(x_k)$. Since function $F(x)$ is $L$-smooth, we have:
\begin{align}
    F(x_{k+1})=&F(x_k-\alpha g(x_k))\notag\\
    \leq& F(x_k)-\alpha\langle\nabla F(x_k),g(x_k)\rangle+\frac{\alpha^2 L}{2}\norm{g(x_k)}^2\notag\\
    \leq& F(x_k)-\alpha\langle\nabla F(x_k),g(x_k)\rangle+\frac{\alpha^2 L}{2}\norm{\nabla F(x_k)}^2+\frac{\alpha^2 L}{2}\norm{g(x_k)-\nabla F(x_k)}^2
\end{align}
Take the expectation on both sides to obtain:
\begin{align}
    \mathbb{E}[F(x_{k+1})]\leq& F(x_k)-\alpha(1-\frac{\alpha L}{2})\mathbb{E}[\norm{\nabla F(x_k)}^2]+\frac{\alpha^2 L\delta^2}{2}.
\end{align}
Since $\alpha<\frac{1}{L}$, we have:
\begin{align}
    \mathbb{E}[F(x_{k+1})- F(x_k)]\leq&-\frac{1}{2}\mathbb{E}[\norm{\nabla F(x_k)}^2]+\frac{\alpha^2 L\delta^2}{2}.
\end{align}
Summing $k=0,...,K-1$ and divide over $K$, we have
\begin{align}
    \frac{1}{K}\sum_{k=1}^{K-1}\mathbb{E}[\norm{\nabla F(x_k)}^2]\leq\frac{2\mathbb{E}[F(x_0)-F(x_K)]}{\alpha K}+\alpha L\delta^2.
\end{align}
\end{proof}
\textbf{Remark.} According to Theorem \ref{thm:SGD_noncov_con}, SGD is of characterized by the $\mathcal{O}(\frac{1}{\sqrt{K}})$ sublinear convergence rate in the nonconvex setting.

\paragraph{Generalization Ability} We are next going to discuss the generalization error of SGD under first the convex and then nonconvex assumptions. We organize the theorems and proofs in \citep{hardt2016train} in this section. 
\begin{theorem}[Generalization Guarantee of SGD; Convex Setting](Theorem 3.7 in \citep{hardt2016train})\label{thm:sgd_gen}
Assume that $f(\cdot;\xi)\in[0,1]$ is a convex, $M$-Lipschitz and $L$-smooth loss function for every example $\xi$. Suppose that we run SGD for $K$ steps with step size $\alpha_k\leq 2/L$. Then SGD has uniform stability with
\begin{align}
  \epsilon_s\leq\frac{2M^2}{n}\sum_{k=1}^K\alpha_k.
\end{align}
\end{theorem}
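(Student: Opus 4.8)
The plan is to bound the uniform stability $\epsilon_s$ by coupling two SGD trajectories run on neighboring datasets and tracking how far apart they drift over the $K$ iterations. Let $\mathcal{S}$ and $\mathcal{S}'$ differ in exactly one example, and let $\{x_k\}$ and $\{x_k'\}$ denote the corresponding SGD iterates started from the same initialization $x_0=x_0'$ and driven by the same random choice of indices. Since $f(\cdot;\xi)$ is $M$-Lipschitz in the parameter, we have $|f(x_K;\xi)-f(x_K';\xi)|\leq M\norm{x_K-x_K'}$ for every $\xi$, so it suffices to bound $\mathbb{E}[\delta_K]$, where $\delta_k:=\norm{x_k-x_k'}$; the stability bound then follows from $\epsilon_s\leq M\,\mathbb{E}[\delta_K]$.

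First I would establish the key expansivity lemma: for convex, $L$-smooth $f$ and step size $\alpha\leq 2/L$, the gradient-descent map $G(x)=x-\alpha\nabla f(x)$ is non-expansive, i.e. $\norm{G(x)-G(y)}\leq\norm{x-y}$. This is the crux of the argument and the main technical obstacle. It follows by expanding $\norm{G(x)-G(y)}^2=\norm{x-y}^2-2\alpha\langle\nabla f(x)-\nabla f(y),x-y\rangle+\alpha^2\norm{\nabla f(x)-\nabla f(y)}^2$ and invoking the co-coercivity of the gradient of a convex $L$-smooth function, $\langle\nabla f(x)-\nabla f(y),x-y\rangle\geq\frac{1}{L}\norm{\nabla f(x)-\nabla f(y)}^2$. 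The resulting coefficient on $\norm{\nabla f(x)-\nabla f(y)}^2$ is $\alpha(\alpha-2/L)$, which is nonpositive exactly when $\alpha\leq 2/L$, giving non-expansiveness.

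Next I would set up the recursion for $\delta_k$ by conditioning on the example index drawn at step $k$. With probability $1-1/n$ the chosen example lies in both datasets, so both iterates apply the same map $G$, and non-expansiveness gives $\delta_{k+1}\leq\delta_k$. With probability $1/n$ the chosen example is the one on which $\mathcal{S}$ and $\mathcal{S}'$ disagree; here the two updates use different gradients, and since each gradient has norm at most $M$ by $M$-Lipschitzness of $f$, the triangle inequality yields $\delta_{k+1}\leq\delta_k+2\alpha_k M$. Taking expectation over the index gives $\mathbb{E}[\delta_{k+1}]\leq\mathbb{E}[\delta_k]+\frac{2\alpha_k M}{n}$.

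Finally, unrolling this recursion from $\delta_0=0$ produces $\mathbb{E}[\delta_K]\leq\frac{2M}{n}\sum_{k=1}^K\alpha_k$, and combining with $\epsilon_s\leq M\,\mathbb{E}[\delta_K]$ delivers the stated bound $\epsilon_s\leq\frac{2M^2}{n}\sum_{k=1}^K\alpha_k$. The only nontrivial ingredient is the expansivity lemma; once it is in hand, the coupling and the one-step recursion are routine.
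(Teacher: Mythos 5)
Your proposal is correct and follows essentially the same route as the paper's proof: couple the two trajectories, use $M$-Lipschitzness to reduce to bounding $\mathbb{E}[\delta_K]$, and derive the one-step recursion $\mathbb{E}[\delta_{k+1}]\leq\mathbb{E}[\delta_k]+\frac{2\alpha_k M}{n}$ by conditioning on whether the sampled index hits the differing example. You are in fact more careful than the paper at one point: you explicitly state and prove (via co-coercivity) the non-expansiveness of the gradient map for $\alpha\leq 2/L$, which the paper invokes only implicitly when it asserts $\delta_{k+1}\leq\delta_k$ in the matching-example case.
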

\begin{proof}
  Let $S$ and $S'$ be two samples of size n that differ on at most one example.  Consider the sequence of gradient updates $g_1,...,g_K$ and $g_1',...,g_K'$ induced by running SGD on sample set $S$ and $S'$, repectively. Let $x_k$ and $x_k'$ denote corresponding outputs of SGD.
  
  We now fix an example $\xi$ and apply the Lipschitz condition on $f(\cdot;\xi)$ to get
  $$\mathbb{E}|f(x_k;\xi)-f(x_k';\xi)|\leq M\mathbb{E}[\delta_K],$$
  where $\delta_k=\norm{x_k-x_k'}$. Observe that at step $t$; with probability $1-\frac{1}{n}$ the example selected by SGD is the same in both $S$ and $S'$, where $g_k=g_k'$. Otherwise, with probability $\frac{1}{n}$ the selected example is different. By the property that $f$ is L-lipschitz continuous we have
  $$\mathbb{E}[\delta_{k+1}]\leq(1-\frac{1}{n})\mathbb{E}[\delta_k]+\frac{1}{n}\mathbb{E}[\delta_k]+\frac{2\alpha_kM}{n}=\mathbb{E}[\delta_k]+\frac{2\alpha_kM}{n}.$$
  Therefore,
  $$\mathbb{E}|f(x_K;\xi)-f(x_K';\xi)|\leq M\mathbb{E}[\delta_K]\leq\frac{2M^2}{n}\sum_{k=1}^K\alpha_k.$$
\end{proof}

\begin{restatable}[Generalization Guarantee of SGD; Nonconvex Setting]{theorem}{sgdgen}
\label{thm:sgd_gen2}
 (Theorem 3.8 in \citep{hardt2016train})Assume that $f(\cdot;\xi)\in[0,1]$ is a $M$-Lipschitz and $L$-smooth loss function for every example $\xi$. Suppose that we run SGD for $K$ steps with monotonically non-increasing step size $\alpha_k\leq c/k$. Then SGD has uniform stability with
\begin{align}
  \epsilon_s\leq\frac{1+1/L c}{n-1}(2cM^2)^{\frac{1}{L c+1}}K^{\frac{L c}{L c+1}}.
\end{align}
\end{restatable}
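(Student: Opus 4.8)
The plan is to bound the uniform stability $\epsilon_s$ and then invoke the stability-implies-generalization theorem stated above. Following the template of the convex case (Theorem~\ref{thm:sgd_gen}), I would run SGD on two samples $S$ and $S'$ differing in a single example, obtain iterates $x_k,x_k'$, and track $\delta_k=\norm{x_k-x_k'}$. The essential new difficulty relative to the convex setting is that the gradient-descent map is no longer non-expansive: for an $L$-smooth (possibly nonconvex) $f(\cdot;\xi)$ the map $x\mapsto x-\alpha\nabla f(x;\xi)$ is only $(1+\alpha L)$-expansive, so $\delta_k$ can grow geometrically and a naive unrolling diverges. The fix is a burn-in argument: introduce a cutoff time $t_0$ and condition on the event that the differing example is not sampled during the first $t_0$ steps, so that $\delta_{t_0}=0$. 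Since $f\in[0,1]$, the complementary event contributes at most its probability, which is at most $t_0/n$ by a union bound over the $t_0$ steps. This yields the master decomposition
$$\epsilon_s\le\frac{t_0}{n}+M\,\mathbb{E}[\delta_K\mid\delta_{t_0}=0],$$
where the Lipschitz constant $M$ converts parameter distance into a loss difference.

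Next I would derive a one-step recursion for $\mathbb{E}[\delta_k\mid\delta_{t_0}=0]$ valid for $k\ge t_0$. Two expansion properties are combined: $(1+\alpha_k L)$-expansiveness when the same example is drawn in both runs, and an additive displacement of at most $2\alpha_k M$ (from $M$-Lipschitzness, bounding each gradient norm by $M$) when the differing example is drawn. Writing the "same example" event (probability $1-1/n$) against the "differing example" event (probability $1/n$), and crucially bounding the differing-example step itself by $(1+\alpha_k L)\delta_k+2\alpha_k M$ so that the multiplicative factor is uniform across cases, gives
$$\mathbb{E}[\delta_{k+1}\mid\delta_{t_0}=0]\le(1+\alpha_k L)\,\mathbb{E}[\delta_k\mid\delta_{t_0}=0]+\frac{2\alpha_k M}{n}.$$

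With the step size $\alpha_k=c/k$, I would unroll this linear recursion from the base case $\delta_{t_0}=0$, using $1+u\le e^u$ to bound each product of growth factors by $\prod_{j=k+1}^{K}(1+\alpha_j L)\le\exp(cL\sum_{j=k+1}^{K}1/j)\le(K/k)^{cL}$, and then bound the resulting sum $\sum_{k=t_0+1}^{K}k^{-cL-1}$ by the integral $\int_{t_0}^{\infty}x^{-cL-1}\,dx=(cL)^{-1}t_0^{-cL}$. This produces
$$\mathbb{E}[\delta_K\mid\delta_{t_0}=0]\le\frac{2M}{Ln}\left(\frac{K}{t_0}\right)^{cL}.$$

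Finally I would optimize the free parameter $t_0$. Substituting into the master decomposition gives $\epsilon_s\le t_0/n+(2M^2/Ln)(K/t_0)^{cL}$, a sum of an increasing and a decreasing power of $t_0$; setting the derivative to zero yields $t_0\propto(2cM^2)^{1/(cL+1)}K^{cL/(cL+1)}$, and using that at the optimum the two terms stand in ratio $1:1/(cL)$ reproduces the claimed bound $\epsilon_s\lesssim\frac{1+1/(Lc)}{n-1}(2cM^2)^{1/(Lc+1)}K^{Lc/(Lc+1)}$ (the mild $n\to n-1$ adjustment arising from integer rounding of $t_0$ and the precise accounting of the selection probability). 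The main obstacle is the second step: forcing the expansive recursion to carry the multiplicative factor exactly $1+\alpha_k L$ rather than something larger. This requires splitting the differing-example update, via the triangle inequality, into a same-function gradient step (to which smoothness expansiveness applies) plus a pure gradient-gap term controlled by Lipschitzness. It is precisely the interaction of this clean recursion with the $t_0$-tradeoff that drives the power of $K$ strictly below one, which is what separates the nonconvex bound from the linear-in-$K$ convex bound.
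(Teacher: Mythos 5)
Your proposal is correct and follows essentially the same route as the paper's proof (itself Hardt et al.'s argument): the burn-in decomposition $\epsilon_s\le t_0/n+M\,\mathbb{E}[\delta_K\mid\delta_{t_0}=0]$, the expansiveness-plus-Lipschitz one-step recursion, unrolling via $1+u\le e^u$ and an integral bound, and optimizing the cutoff. The only cosmetic difference is that you apply the $(1+\alpha_k L)$ factor uniformly to both branches while the paper applies it only on the same-example branch (yielding the exponent $Lc(1-1/n)$ and hence the $n-1$ in the denominator, rather than any integer-rounding effect); this changes constants but not the form or validity of the bound.
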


\begin{proof}
    Let $S$ and $S'$ be two samples of size n that differ on at most one example.  Consider the gradient update $g_1,...,g_K$ and $g_1',...,g_K'$ induced by running SGD on sample set $S$ and $S'$, repectively. Let $x_K$ and $x_K'$ denote corresponding outputs of SGD.
    \begin{align}\label{eq:sgd_gen}
        \mathbb{E}|f(x_K;\xi)-f(x_K';\xi)|\leq\frac{k_0}{n}+ M\mathbb{E}[\delta_K|\delta_{k_0}=0],
    \end{align}
    where $\delta_K=\norm{x_K-x_K'}$. To simplify the notation, let $\Delta_k=\mathbb{E}[\delta_k|\delta_{k_0}=0]$. We will bound $\Delta_k$ as function of $k_0$ and then minimize for $k_0$.
    
    Observe that at step $k$; with probability $1-\frac{1}{n}$ the example selected by SGD is the same in both $S$ and $S'$, where $g_t=g_t'$. Otherwise, with probability $\frac{1}{n}$ the selected example is different. By the property that $f$ is L-lipschitz continuous we have
    \begin{align*}
        \Delta_{k+1}\leq&(1-\frac{1}{n})(1+\alpha_kL)\Delta_k+\frac{1}{n}\Delta_k+\frac{2\alpha_kM}{n}\\
        \leq&\left(\frac{1}{n}+(1-\frac{1}{n})(1+cL/k)\right)\Delta_k+\frac{2cM}{kn}\\
        =&\left(1+(1-\frac{1}{n})cL/k\right)\Delta_k+\frac{2cM}{kn}\\
        \leq&\exp\left((1-\frac{1}{n})\frac{cL}{k}\right)\Delta_k+\frac{2cM}{kn},
    \end{align*}
    Therefore,
    \begin{align*}
        \Delta_K\leq&\sum_{k=k_0+1}^K\left\{\prod_{m=k+1}^T\exp\left((1-\frac{1}{n})\frac{cL}{m}\right)\right\}\frac{2cM}{kn}\\
        =&\sum_{k=k_0+1}^K\exp\left((1-\frac{1}{n})cL\sum_{m=k+1}^T\frac{1}{m}\right)\frac{2cM}{kn}\\
        \leq&\sum_{k=k_0+1}^K\exp\left((1-\frac{1}{n})cL\log(\frac{K}{k})\right)\frac{2cM}{kn}\\
        =&\frac{2cM}{n}K^{L c(1-1/n)}\sum_{k=k_0+1}^Kk^{-L c(1-1/n)-1}\\
        \leq&\frac{1}{(1-1/n)L c}\frac{2cM}{n}\left(\frac{K}{k_0}\right)^{L c(1-1/n)}\\
        \leq&\frac{2M}{L(n-1)}\left(\frac{K}{k_0}\right)^{L c}
    \end{align*}
    Plugging the pobtained Inequlity into the Equation (\ref{eq:sgd_gen}), we have
    $$ \mathbb{E}|f(x_K;\xi)-f(x_K';\xi)|\leq\frac{k_0}{n}+ \frac{2M^2}{L(n-1)}\left(\frac{K}{k_0}\right)^{L c}.$$
    The right-hand side is approximately minimized by
    $k_0=(2cM^2)^{\frac{1}{L c+1}}T^{\frac{L c}{L c+1}}$, and then we have
    $$ \mathbb{E}|f(x_K;\xi)-f(x_K';\xi)|\leq\frac{1+1/L c}{n-1}(2cM^2)^{\frac{1}{L c+1}}K^{\frac{L c}{L c+1}}.$$
\end{proof}

\subsubsection{Stochastic Gradient Descent with Momentum}
SGD faces difficulties when moving through valleys in the optimization landscape~\citep{ruder2016overview}, which are the areas where the curvature sharply inclines in one direction, often near the local optimal points. In these scenarios, SGD tends to oscillate along the steep slopes of the valley, resulting in slow progress towards the local optimum at the bottom, as depicted in Figure (\ref{fig:without_m}).

The introduction of the momentum term successfully addresses this issue by accelerating SGD in the pertinent direction and mitigating oscillations, as depicted in Figure (\ref{fig:with_m}). Momentum method follows the gradient direction augmented by the term that corresponds to the fraction of the direction from the previous time step.
\begin{figure}[H]
\centering
    \begin{subfigure}[b]{.45\textwidth}
    \centering
    \includegraphics[width=.49\textwidth]{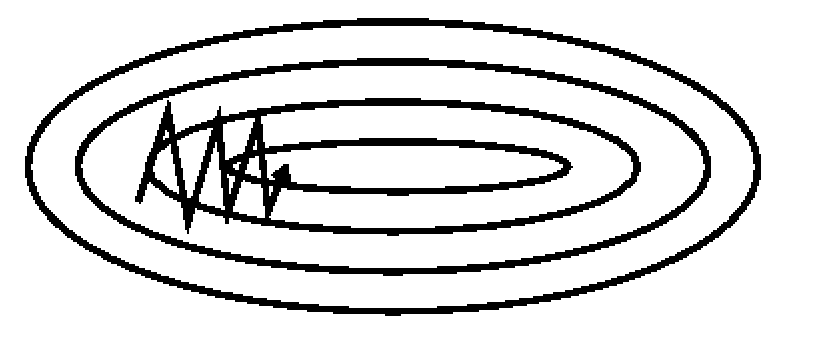}
    \caption{SGD without momentum}
    \label{fig:without_m}
    \end{subfigure}
    \begin{subfigure}[b]{.45\textwidth}
    \centering
    \includegraphics[width=.49\textwidth]{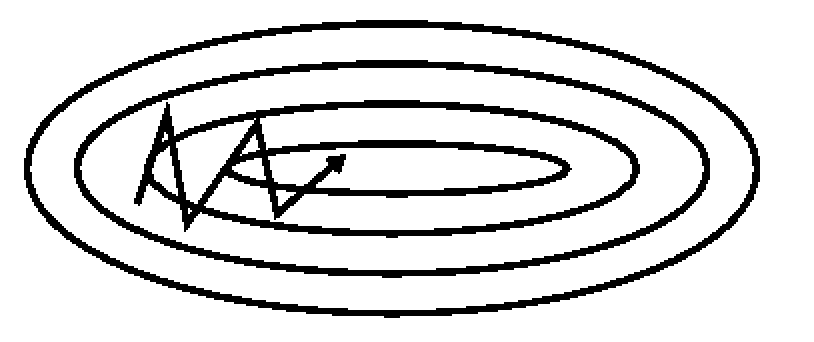}
    \caption{SGD with momentum}
    \label{fig:with_m}
    \end{subfigure}
\caption{Figures for SGD with and without momentum. This Figure is taken from \cite{ruder2016overview} (Figure 2).}
\end{figure}
In this section, we consider the two most popular types of momentum methods: Heavy-Ball (HB) method and Nestrov accelerated gradient (NAG) method. The difference between these methods lies in how the momentum term $\beta m_{k}$ is used to move the parameter $x_k$ when computing the gradient. In this paper, we only consider the stochastic versions of Momentum methods, stochastic Heavey-Ball (SHB) and stochastic Nestrov accelerated gradient (SNAG)
As depicted in \citep{yang2016unified}, the update rules for SHB and SNAG can be represented as:
\begin{eqnarray}
    \text{SHB:}\left\{
    \begin{aligned}  
        m_{k+1} &= \beta m_k+\alpha \nabla g(x_k;\xi_k)\\\notag
        x_{k+1} &= x_k - m_{k+1},
    \end{aligned}
    \right.\quad\quad
    \text{SNAG:}\left\{
    \begin{aligned}  
          m_{k+1} &= \beta m_k+\alpha \nabla g(x_k-\beta m_{k};\xi_k)\\\notag
    x_{k+1} &= x_k - m_{k+1},
    \end{aligned}
    \right.
\end{eqnarray}
which can be rewritten as
\begin{eqnarray}
&\quad\quad\quad\text{SHB:}\quad x_{k+1}=x_k-\alpha g(x_k;\xi_k)+\beta(x_k-x_{k-1})\\
&\text{SNAG:}\quad\left\{
    \begin{aligned}  
          y_{k+1}&=x_k-\alpha g(x_k)\\
          x_{k+1}&=y_{k+1}+\beta(y_{k+1}-y_k).
    \end{aligned}
    \right.
\end{eqnarray}
Both formulas can be unified in the form of the stochastic unified momentum (SUM), as suggested by~\citep{yang2016unified}, as follows:
\begin{eqnarray}\label{eq:sgdm}
\text{SUM}:\quad\left\{
\begin{aligned}  
      y_{k+1}&=x_k-\alpha g(x_k)\\
      y_{k+1}^s&=x_k-s\alpha g(x_k)\\
      x_{k+1}&=y_{k+1}+\beta(y_{k+1}^s-y_k^s).
\end{aligned}
\right.
\end{eqnarray}
Note that when $s=0$ SUM method becomes SHB and when $s=1$ it reduces to SNAG.

\paragraph{Covergence Analysis}
As was done for SGD, the convergence analysis of stochastic Momentum methods will first consider convex assumptions, and will then be generalized to the nonconvex setting.

\begin{assumption}[Convex Setting]\label{asm:1}
The following properties holds for objective function $F$:
    \begin{itemize}
        \item $F(x)$ is convex function.
        \item (Variance bounded) $\mathbb{E}\left[\norm{ g(x;\xi)-\mathbb{E}[ g(x;\xi)]^2}\right]\leq\delta^2$.
        \item (Subgradient Bounded) $\norm{\nabla F(x)}\leq G$.
    \end{itemize}
\end{assumption}
\begin{theorem}[Convergence of Unified Momentum; Convex Setting]\label{thm:sgd_m_con}
    (Theorem 1 in \citep{yang2016unified}) Suppose function $F$ satisfies Assumption \ref{asm:1}. Let the update of the UM method be run for $K$ iterations with stochastic gradients $ g(x_k;\xi_k)$. By setting $\alpha\!=\!\frac{C}{\sqrt{K+1}}$ one can obtain
    \begin{align}
        \mathbb{E}[F(\hat{x}_K)-F(x^*)]\leq&\frac{\beta(F(\hat{x}_0)-F(x^*))}{(1-\beta)(K+1)}+\frac{(1-\beta)\norm{x_0-x^*}^2}{2C\sqrt{K+1}}+\frac{C(1+2s\beta)(G^2+\delta^2)}{2(1-\beta)\sqrt{K}+1},
    \end{align}
    where $C$ is a positive constant, $\hat{x}_K=\sum_{k=0}^Kx_k/(K+1)$, and $x^*$ is optimal solution.
\end{theorem}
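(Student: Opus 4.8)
The plan is to follow the auxiliary-sequence technique of \citep{yang2016unified}: convert the momentum dynamics into an effective plain stochastic gradient step on a cleverly chosen surrogate iterate, and then run a standard convex SGD-style analysis on that surrogate. Concretely, I would define
\begin{align*}
    p_k = \frac{1}{1-\beta}x_k - \frac{\beta}{1-\beta}y_k^s,
\end{align*}
and first verify the key identity $p_{k+1}-p_k = -\frac{\alpha}{1-\beta}g(x_k)$ by substituting the SUM update $x_{k+1}-x_k = -\alpha g(x_k)+\beta(y_{k+1}^s - y_k^s)$ into the difference $p_{k+1}-p_k$; the $\beta(y_{k+1}^s-y_k^s)$ terms cancel, leaving the clean recursion. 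This identity is what makes the whole proof go through, so establishing it cleanly is the first priority.

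With the recursion in hand, I would expand
\begin{align*}
    \norm{p_{k+1}-x^*}^2 = \norm{p_k-x^*}^2 - \frac{2\alpha}{1-\beta}\langle g(x_k), p_k-x^*\rangle + \frac{\alpha^2}{(1-\beta)^2}\norm{g(x_k)}^2,
\end{align*}
take conditional expectation to replace $g(x_k)$ by $\nabla F(x_k)$ in the inner-product term, and split $p_k-x^* = (x_k-x^*)+(p_k-x_k)$. Convexity of $F$ handles the first piece via $\langle \nabla F(x_k), x_k-x^*\rangle \geq F(x_k)-F(x^*)$. The second piece, $p_k-x_k = \frac{\beta}{1-\beta}(x_k-y_k^s) = \frac{\beta}{1-\beta}\big((x_k-x_{k-1})+s\alpha g(x_{k-1})\big)$, is the momentum contribution: its $x_k-x_{k-1}$ part is again bounded by convexity as $\langle\nabla F(x_k), x_k-x_{k-1}\rangle \geq F(x_k)-F(x_{k-1})$, producing a telescoping chain of function-value differences, while its $s\alpha g(x_{k-1})$ part is controlled by Cauchy--Schwarz together with the subgradient bound $\norm{\nabla F(x)}\leq G$ and the variance bound of Assumption~\ref{asm:1}.

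Next I would bound the stochastic-gradient second moment as $\mathbb{E}\norm{g(x_k)}^2 \leq G^2+\delta^2$ (decomposing into mean plus fluctuation and applying Assumption~\ref{asm:1}), then sum the per-step inequality over $k=0,\dots,K$. The $\norm{p_k-x^*}^2$ terms telescope to $\norm{p_0-x^*}^2=\norm{x_0-x^*}^2$, the function-value differences telescope to a boundary term of order $F(\hat{x}_0)-F(x^*)$, and Jensen's inequality applied to the average iterate gives $F(\hat{x}_K)\leq \frac{1}{K+1}\sum_{k=0}^K F(x_k)$. Collecting terms and substituting $\alpha = C/\sqrt{K+1}$ yields the three summands in the stated bound, with the $s$-dependence entering through the factor $(1+2s\beta)$.

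I expect the main obstacle to be the careful bookkeeping of the momentum cross term $\langle\nabla F(x_k), p_k-x_k\rangle$ and, in particular, extracting the exact constant $(1+2s\beta)$: this requires simultaneously telescoping the function-value differences coming from the $x_k-x_{k-1}$ part and correctly accounting, in expectation, for the $s\alpha g(x_{k-1})$ part so that the $s$-dependent noise is absorbed into the $(G^2+\delta^2)$ term rather than degrading the rate. Handling the boundary and initialization terms (the $k=0$ case, where $y_0^s$ and $x_{-1}$ must be defined consistently) is a secondary nuisance that must be dealt with to make the telescoping exact.
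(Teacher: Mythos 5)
Your proposal is correct and follows essentially the same route as the paper: your surrogate $p_k=\frac{1}{1-\beta}x_k-\frac{\beta}{1-\beta}y_k^s$ is exactly the paper's $x_k+p_k$ with $p_k=\frac{\beta}{1-\beta}(x_k-x_{k-1}+s\alpha g(x_{k-1}))$, and the subsequent steps (the clean recursion $-\frac{\alpha}{1-\beta}g(x_k)$, the split of the inner product into a convexity term, a telescoping $F(x_k)-F(x_{k-1})$ term, and an $s\alpha g(x_{k-1})$ cross term absorbed into $(G^2+\delta^2)$, followed by summation and Jensen) match the paper's argument, including the convention $x_{-1}=x_0$ for the $k=0$ boundary.
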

\begin{proof}
For notation simplicity, we denote $ g(x_k;\xi_k)= g(x_k)= g_k$. The update Formula (\ref{eq:sgdm}) implies the following recursions:
\begin{align}
    x_{k+1}+p_{k+1}=&x_k+p_k-\frac{\alpha}{1-\beta} g(x_k)\\
    v_{k+1}=&\beta v_k+((1-\beta)s-1)\alpha g(x_k),\label{eq:sgdm_v}
\end{align}
where $v_k=\frac{1-\beta}{\beta}p_k$ and $p_k$ is given by
\begin{equation}
p_k=\left\{
\begin{aligned}\label{eq:sgdm2}
      &\frac{\beta}{1-\beta}(x_k-x_{k-1}+s\alpha g(x_{k-1})), \quad k\geq1\\
      &0, \quad k=0.
\end{aligned}
\right.    
\end{equation}
Define $\delta_k= g_k-\nabla F(x_k)$ and recall that $x^*$ is the optimal point. From the above recursions we have
\begin{align}
    &\norm{x_{k+1}+p_{k+1}-x^*}^2\\\notag
    =&\norm{x_k+p_k-x^*}^2-\frac{2\alpha}{1-\beta}(x_k+p_k-x^*)^T g_k+\left(\frac{\alpha}{1-\beta}\right)^2\norm{ g_k}^2\notag\\
    =&\norm{x_k+p_k-x^*}^2-\frac{2\alpha}{1-\beta}(x_k-x^*)^T g_k-\frac{2\alpha\beta}{(1-\beta)^2}( x_k-x_{k-1})^T g_k\notag\\
    &-\frac{2s\alpha^2\beta}{(1-\beta)^2} g_{k-1}^T g_k+\left(\frac{\alpha}{1-\beta}\right)^2\norm{ g_k}^2\notag\\
    =&\norm{x_k+p_k-x^*}^2-\frac{2\alpha}{1-\beta}(x_k-x^*)^T(\delta_k+\nabla F(x_k))-\frac{2\alpha\beta}{(1-\beta)^2}( x_k-x_{k-1})^(\delta_k+\nabla F(x_k))\notag\\
    &-\frac{2s\alpha^2\beta}{(1-\beta)^2}(\delta_{k-1}+\nabla F(x_{k-1}))^T(\delta_k+\nabla F(x_k))+\left(\frac{\alpha}{1-\beta}\right)^2\norm{\delta_k+\nabla F(x_k)}^2\notag.\\
\end{align}
Note that
\begin{align*}
    &\mathbb{E}[(x_k-x^*)^T(\delta_k+\nabla F(x_k))]=\mathbb{E}[(x_k-x^*)^T\nabla F(x_k)]\\
    &\mathbb{E}[( x_k-x_{k-1})^T(\delta_k+\nabla F(x_k))]=\mathbb{E}[( x_k-x_{k-1})^\nabla F(x_k)]\\
    &\mathbb{E}[(\delta_{k-1}+\nabla F(x_{k-1}))^T(\delta_k+\nabla F(x_k))]=\mathbb{E}[(\delta_{k-1}+\nabla F(x_{k-1}))^T\nabla F(x_k)]=\mathbb{E}[ g_{k-1}^T\nabla F(x_k)]\\
    &\mathbb{E}[\norm{\delta_k+\nabla F(x_k)}^2]=\mathbb{E}[\norm{\delta_k}^2]+\mathbb{E}[\norm{\nabla F(x_k)}^2].
\end{align*}
Taking expectation on both sides gives
\begin{align}\label{eq:norm_exp}
    &\mathbb{E}[\norm{x_{k+1}+p_{k+1}-x^*}^2]\notag\\
    =&\mathbb{E}[\norm{x_k+p_k-x^*}^2]-\frac{2\alpha}{1-\beta}\mathbb{E}[(x_k-x^*)^T\nabla F(x_k)]-\frac{2\alpha\beta}{(1-\beta)^2}\mathbb{E}[( x_k-x_{k-1})^T\nabla F(x_k)]\notag\\
    &-\frac{2s\alpha^2\beta}{(1-\beta)^2}\mathbb{E}[ g_{k-1}^T\nabla F(x_k)]+\left(\frac{\alpha}{1-\beta}\right)^2(\mathbb{E}[\norm{\delta_k}^2]+\mathbb{E}[\norm{\nabla F(x_k)}^2]).
\end{align}
Moreover, since F is convex and $\mathbb{E}\left[\norm{  g(x;\xi)-\mathbb{E}[ g(x;\xi)]}\right]\leq\delta^2$ and $\norm{\nabla F(x)}\leq G$ for any $x$, we have that
\begin{align*}
    &F(x_k)-F(x^*)\leq(x_k-x^*)^T\nabla F(x_k)\\
    &F(x_k)-F(x_{k-1})\leq(x_k-x_{k-1})^T\nabla F(x_k)\\
    &-\mathbb{E}[ g_{k-1}^T\nabla F(x_k)]\leq\frac{\mathbb{E}[\norm{ g_{k-1}}^2+\norm{\nabla F(x_k)}^2]}{2}\leq\delta^2/2+G^2\leq\delta^2+G^2\\
    &\mathbb{E}[\norm{\delta_k}^2]\leq\delta^2,\quad\mathbb{E}[\norm{\nabla F(x_k)}^2]\leq G^2.
\end{align*}
Therefore, (\ref{eq:norm_exp}) can be rewritten as
\begin{align}\label{eq:exp_update}
    \mathbb{E}[\norm{x_{k+1}+p_{k+1}-x^*}^2]\leq&\mathbb{E}[\norm{ x_k+p_{t}-x^*}^2]-\frac{2\alpha}{1-\beta}\mathbb{E}[F(x_k)-F(x^*)]\\\notag
    &-\frac{2\alpha\beta}{(1-\beta)^2}\mathbb{E}[F(x_k)-F(x_{k-1})]+\left(\frac{\alpha}{1-\beta}\right)^2(2s\beta+1)(G^2+\delta^2)
\end{align}
Let $x_{-1}=x_0$, then Inequality (\ref{eq:exp_update}) also holds for $k=0$. In all, (\ref{eq:exp_update}) holds for all $k\geq0$.
Summing the Inequality (\ref{eq:exp_update}) for $k=0,\cdots,K$, we obtain
\begin{align}
    &\frac{2\alpha}{1-\beta}\sum_{k=0}^K\mathbb{E}[F(x_k)-F(x^*)]\notag\\
    \leq&\frac{2\alpha\beta}{(1-\beta)^2}(F(x_0)-F(x_K))+\norm{x_0-x^*}^2+\left(\frac{\alpha}{1-\beta}\right)^2(2s\beta+1)(G^2+\delta^2).
\end{align}
Define $\hat{x}_K=\sum_{k=0}^Kx_k$, by convexity of $L$ we have $\sum_{k=0}^K\mathbb{E}[F(x_k)]\leq \mathbb{E}[F(\hat{x}_K)]$. Therefore
\begin{align}
    \mathbb{E}[F(\hat{x}_K)-F(x^*)]\leq\frac{\beta(F(x_0)-F(x^*))}{(1-\beta)(K+1)}+\frac{(1-\beta)\norm{x_0-x^*}^2}{2\alpha(K+1)}+\frac{\alpha(2s\beta+1)(G^2+\delta^2)}{2(1-\beta)}.
\end{align}\label{ieq:bound1}
Since $\alpha=\frac{C}{\sqrt{K+1}}$,
\begin{align}
        \mathbb{E}[F(\hat{x}_K)-F(x^*)]\leq&\frac{\beta(F(\hat{x}_0)-F(x^*))}{(1-\beta)(K+1)}+\frac{(1-\beta)\norm{x_0-x^*}^2}{2C\sqrt{K+1}}+\frac{C(1+2s\beta)(G^2+\delta^2)}{2(1-\beta)\sqrt{K+1}}.
    \end{align}
\end{proof}
\textbf{Remark.} The convergence rate for UM methods is $\mathcal{O}(1/\sqrt{K})$. This is a sublinear convergence rate.

\begin{assumption}[Nonconvex Setting]\label{asm:3}
The following properties holds for objective function $F$:
     \begin{itemize}
         \item (Lipschitz gradient) Loss function F(x) is L-smooth. 
             $$\norm{\nabla F(x)-\nabla F(y)}\leq L\norm{x-y}.$$
         \item (Variance bounded) $\mathbb{E}\left[\norm{ g(x;\xi)-\mathbb{E}[ g(x;\xi)]^2}\right]\leq\delta^2$.
         \item (Gradient Bound) $\norm{\nabla F(x)}\leq G$.
     \end{itemize}
\end{assumption}

\begin{theorem}[Convergence of SUM; Nonconvex Setting]\label{thm:sgd_m_con2}
    (Theorem 3 in \citep{yang2016unified}) Suppose $L$ satisfies Assumption \ref{asm:3}. Let the UM update run for $K$ iterations with stochastic gradients $ g(x_k;\xi_k)$. By setting $\alpha=\min{\{\frac{1-\beta}{2L},\frac{C}{\sqrt{K}}\}}$ we have that
    \begin{align}
        \frac{1}{K}\sum_{k=0}^{K-1}\mathbb{E}[\norm{\nabla F(x_k)}^2]\leq\frac{2(F(x_0)\!-\!L^*)(1\!-\!\beta)}{K}\max{\{\frac{2L}{1\!-\!\beta},\!\frac{\sqrt{K}}{C}\}}\notag\\
        +\frac{C}{\sqrt{K}}\frac{L\beta^2((1\!-\!\beta)s\!-\!1)^2(G^2\!+\!\delta^2)\!+\!L\delta^2(1\!-\!\beta)^2}{(1-\beta)^3},
    \end{align}
    where $L^*$ is the minimal value of the loss function. 
\end{theorem}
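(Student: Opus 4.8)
The plan is to mirror the convex argument of Theorem~\ref{thm:sgd_m_con}, but to replace the distance-to-optimum potential $\norm{x_k+p_k-x^*}^2$ by the function-value descent of $F(x_k+p_k)$, since in the nonconvex regime there is no meaningful $x^*$ to measure against. First I would reuse the auxiliary recursions already established for the convex case: writing $z_k:=x_k+p_k$ with $p_k$ and $v_k=\tfrac{1-\beta}{\beta}p_k$ as in \eqref{eq:sgdm2} and \eqref{eq:sgdm_v}, the SUM update collapses to the plain stochastic recursion $z_{k+1}=z_k-\frac{\alpha}{1-\beta}g(x_k)$, with $z_0=x_0$ because $p_0=0$. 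I would then apply the $L$-smoothness descent inequality to $F(z_{k+1})$, substitute $z_{k+1}-z_k=-\frac{\alpha}{1-\beta}g(x_k)$, and take expectation, using unbiasedness $\mathbb{E}[g(x_k)]=\nabla F(x_k)$ together with the variance bound to replace $\mathbb{E}\norm{g(x_k)}^2$ by $\mathbb{E}\norm{\nabla F(x_k)}^2+\delta^2$.

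The first genuine difficulty is that the descent lemma produces the inner product $\langle\nabla F(z_k),\nabla F(x_k)\rangle$ rather than $\norm{\nabla F(x_k)}^2$, because the stochastic gradient is evaluated at $x_k$ while the smoothness expansion is centered at $z_k$. I would resolve this with the polarization identity $2\langle a,b\rangle=\norm{a}^2+\norm{b}^2-\norm{a-b}^2$ combined with $\norm{\nabla F(z_k)-\nabla F(x_k)}\leq L\norm{z_k-x_k}=L\norm{p_k}$, giving the clean lower bound $\langle\nabla F(z_k),\nabla F(x_k)\rangle\geq\tfrac12\norm{\nabla F(x_k)}^2-\tfrac{L^2}{2}\norm{p_k}^2$. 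This isolates the desired gradient-norm term and pushes the entire mismatch into a penalty proportional to $\norm{p_k}^2$.

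The second and more delicate step is to control $\mathbb{E}\norm{p_k}^2$. Unrolling the geometric recursion \eqref{eq:sgdm_v} yields $v_k=((1-\beta)s-1)\alpha\sum_{j=0}^{k-1}\beta^{k-1-j}g(x_j)$, so by a weighted Cauchy--Schwarz bound over the $\beta$-discounted sum and the estimate $\mathbb{E}\norm{g(x_j)}^2\leq G^2+\delta^2$ I would obtain $\mathbb{E}\norm{p_k}^2\lesssim\frac{\beta^2((1-\beta)s-1)^2\alpha^2(G^2+\delta^2)}{(1-\beta)^4}$, which already displays the $((1-\beta)s-1)^2(G^2+\delta^2)$ signature of the stated error term.

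Finally I would telescope the per-step inequality over $k=0,\dots,K-1$, using $z_0=x_0$ and $F(z_K)\geq L^*$ to cap the telescoped function values by $F(x_0)-L^*$, then divide by $K$ and by the coefficient of $\norm{\nabla F(x_k)}^2$. Substituting $\alpha=\min\{\frac{1-\beta}{2L},\frac{C}{\sqrt{K}}\}$, so that $1/\alpha=\max\{\frac{2L}{1-\beta},\frac{\sqrt{K}}{C}\}$, produces the first (optimization) term, while the $\norm{p_k}^2$ penalty and the residual $\delta^2$ variance contribution produce the second term. I expect the main obstacle to lie in the constant bookkeeping at this stage: the step-size constraint $\alpha\leq\frac{1-\beta}{2L}$ must be invoked exactly once on the product $L^2\alpha^2$ arising from the $\norm{p_k}^2$ penalty, leaving a single factor $L\alpha$, which is precisely what converts the crude $\alpha^2/(1-\beta)^4$ dependence into the stated $L\alpha/(1-\beta)^3$ scaling and keeps the $G^2+\delta^2$ and the pure-$\delta^2$ terms with their correct separate coefficients.
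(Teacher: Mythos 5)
Your proposal follows essentially the same route as the paper: the paper's proof invokes two auxiliary lemmas (its Lemma~\ref{lma:sgmd1} and Lemma~\ref{lma:sgmd2}) whose content is precisely your descent inequality on $z_k=x_k+p_k$ with the gradient-mismatch term $\norm{\nabla F(z_k)-\nabla F(x_k)}^2\leq L^2\norm{p_k}^2$, and your geometric unrolling of $v_k$ giving $\mathbb{E}\norm{p_k}^2\lesssim\beta^2((1-\beta)s-1)^2\alpha^2(G^2+\delta^2)/(1-\beta)^4$; the telescoping, the use of $F(z_K)\geq L^*$, and the case split $1/\alpha=\max\{\tfrac{2L}{1-\beta},\tfrac{\sqrt{K}}{C}\}$ are identical. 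The only caveat is that dropping $-\tfrac12\norm{\nabla F(z_k)}^2$ in your polarization step leaves a coefficient of $\tfrac{\alpha}{2(1-\beta)}$ on $\norm{\nabla F(x_k)}^2$ before subtracting the smoothness quadratic, which after invoking $\alpha\leq\tfrac{1-\beta}{2L}$ yields a leading constant a factor of $2$ larger than stated, so the constant bookkeeping you flag does need to be tightened to match the theorem exactly.
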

\begin{proof}
 The following two useful lemmas will be needed to conduct the proof. Proofs for Lemma \ref{lma:sgmd1} and Lemma \ref{lma:sgmd2} could be found in the Appendix of \citep{yang2016unified}.
  \begin{lemma}\label{lma:sgmd1}
      Let $z_k=x_k+p_k$. For SUM, we have that for any $k\geq 0$:
      \begin{align*}
          \mathbb{E}[F(z_{k+1})-F(z_k)]\leq&\frac{1}{2L}\mathbb{E}[\norm{\nabla F(z_{k+1}-\nabla F(z_k)}^2]\notag\\
          &+\left(\frac{L\alpha^2}{(1-\beta)^2}-\frac{\alpha}{\beta}\right)\mathbb{E}[\norm{\nabla F(x_k)}^2]+\frac{L^2\alpha^2}{2(1-\beta)^2}\sigma^2.
      \end{align*}
  \end{lemma}
  \begin{lemma}\label{lma:sgmd2}
      For SUM, we have that for any $k \geq 0$,
      \begin{align*}
          \mathbb{E}[\norm{\nabla F(z_{k})-\nabla F(x_k)}^2]\leq \frac{L^2\beta^2((1-\beta)s-1)^2\alpha^2(G^2+\sigma^2)}{(1-\beta)^4}
      \end{align*}
  \end{lemma}
  Then continue the proof for Theorem \ref{thm:sgd_m_con2}. Let $B$ and $B'$  be defined as
  $$B=\frac{\alpha}{1-\beta}-\frac{L\alpha^2}{(1-\beta)^2},$$
  $$B'=\frac{L\beta^2((1-\beta)s-1)^2\alpha^2(G^2+\sigma^2)}{2(1-\beta)^4}+\frac{L\alpha^2\sigma^2}{2(1-\beta)^2}.$$
  Lemma \ref{lma:sgmd1} and \ref{lma:sgmd2} imply that
  $$\mathbb{E}[F(z_{k+1})-F(z_k)]\leq-B\mathbb{E}[\norm{\nabla F(x_k)}^2]+B'.$$
  By summing the above inequalities for $k=0,...,K$ and noting that $\alpha<\frac{1-\beta}{L}$, one can obtain
  \begin{align*}
      B\sum_{k=0}^{K-1}\mathbb{E}[\norm{\nabla F(x_k)}^2]\leq\mathbb{E}[F(x_0)-F(x_{K})]+KB'\leq \mathbb{E}[F(x_0)-F(x^*)]+KB'.
  \end{align*}
  Then, rearranging terms yields
  $$\frac{1}{K}\sum_{k=0}^{K-1}\mathbb{E}[\norm{\nabla F(x_k)}^2]\leq \frac{F(x_0)-F(x^*)}{BK}+\frac{B'}{B}.$$
  Assuming $\alpha\leq\frac{1-\beta}{2L}$ gives
  $$B=\frac{\alpha}{1-\beta}-\frac{L\alpha^2}{(1-\beta)^2}=\frac{\alpha}{1-\beta}\left(1-\frac{\alpha L}{1-\beta}\right)\geq\frac{\alpha}{2(1-\beta)}.$$
  Thus:
  $$\frac{1}{K}\sum_{k=0}^{K-1}\mathbb{E}[\norm{\nabla F(x_k)}^2]\leq \frac{2(F(z_0)-L^*)(1-\beta)}{\alpha K}+\frac{2(1-\beta)}{\alpha}B'.$$ 

  Assume $\alpha=\min{\{\frac{1-\beta}{2L},\frac{C}{\sqrt{K}}\}}$ and let $z_0=x_0$. We obtain:
  \begin{align}
        \frac{1}{K}\sum_{k=0}^{K-1}\mathbb{E}[\norm{\nabla F(x_k)}^2]\leq\frac{2(F(x_0)\!-\!F(x^*))(1\!-\!\beta)}{K}\max{\{\frac{2L}{1\!-\!\beta},\!\frac{\sqrt{K}}{C}\}}\notag\\
        +\frac{C}{\sqrt{K}}\frac{L\beta^2((1\!-\!\beta)s\!-\!1)^2(G^2\!+\!\delta^2)\!+\!L\delta^2(1\!-\!\beta)^2}{(1-\beta)^3}.
    \end{align}
  
\end{proof}
\textbf{Remark.} We could draw the following two conclusions from Theorem \ref{thm:sgd_m_con2}: (i) Convergence rate $\mathcal{O}(1/\sqrt{K})$ implies stochastic momentum methods maintains the sublinear convergence rate even if objective function $f$ is non-convex.(ii) SHB and SNAG have different variants regarding the term $L\beta^2((1\!-\!\beta)s\!-\!1)^2(G^2\!+\!\delta^2)$ in the convergence bound of SUM because of the different choices of parameter $s$ for the update rule (\ref{eq:sgdm}) of SUM: $L\beta^2$ for SHB ($s=0$), $L\beta^4$ for SNAG ($s=1$) and $0$ for SGD ($s=1/(1-\beta)$). 

\paragraph{Generalization guarantees.}
For the generalization guarantees of SGD with momentum methods, we focus on the Stochastic Heavy-Ball (SHB) method since we only find generalization guarantee for SHB in the literature. Recall the SHB update rule:
\begin{align}
    x_{k+1}=x_k-\alpha g(x_k;\xi_k)+\beta(x_k-x_{k-1}),
\end{align}
where $\alpha$ is the learning rate and $\beta$ is the momentum value. We are going to show the generalization ability of HB method first under convex and then under nonconvex setting.

\begin{assumption}[Convex Setting]\label{asm:sgd_m_gen_con}
The following properties holds for the  objective function $f$: 
  \begin{itemize}
      \item $f(x;\xi)$ is $\mu$-strongly convex.
      \item $L$-smooth: $|\nabla f(x;\xi)-\nabla f(x;\xi)|\leq L\norm{x-y}$.
      \item $M$-Lipchitz continuous: $\|\nabla f(x;\xi)-\nabla f(y;\xi)\|\leq M\norm{x-y}$.
  \end{itemize}
\end{assumption}

\begin{theorem}[Generalization Guarantee of SHB; Convex Setting]\label{thm:sgd_m_gen_con}
   (Theorem 18 in~\citep{ramezani2021generalization}) Let the loss function $f(x;\xi)$ satisfies the Assumption \ref{asm:sgd_m_gen_con}. Suppose that the HB momentum methods is run for $K$ steps with constant learning rate $\alpha$ 
   and momentum $\beta$. Provided that $\frac{\alpha L\mu}{L+\mu}-\frac{1}{2}\leq\beta<\frac{\alpha L\mu}{3( L+\mu)}$ and $\alpha\leq\frac{2}{ L+\mu}$, the HB momentum method satisfies $\epsilon_s$-uniformly stability with
   \begin{align}
       \epsilon_s\leq&\frac{2\alpha M^2( L+\mu)}{n(\alpha L\mu-3\beta( L+\mu))}.
   \end{align}
\end{theorem}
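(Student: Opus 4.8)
The plan is to follow the algorithmic-stability route of \citep{hardt2016train} already used for SGD in Theorem~\ref{thm:sgd_gen}, adapting it to the two-step momentum recursion. I would couple two runs of SHB on datasets $S$ and $S'$ that differ in a single example, driven by the same sequence of sampled indices and the same initialization $x_0=x_0'$ (so that $\delta_0=0$ and, treating $x_{-1}=x_0$, also $\delta_{-1}=0$). Writing $\delta_k=\norm{x_k-x_k'}$ and invoking the $M$-Lipschitz continuity of $f(\cdot;\xi)$, the uniform-stability quantity is controlled by $\sup_\xi\mathbb{E}|f(x_K;\xi)-f(x_K';\xi)|\leq M\,\mathbb{E}[\delta_K]$, so the whole problem reduces to bounding $\mathbb{E}[\delta_K]$.

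Next I would derive a recursion for $\delta_k$. Subtracting the two SHB updates gives $x_{k+1}-x_{k+1}'=(x_k-x_k')-\alpha\big(g(x_k)-g(x_k')\big)+\beta\big((x_k-x_k')-(x_{k-1}-x_{k-1}')\big)$. At each step the sampled index agrees on $S$ and $S'$ with probability $1-1/n$ and differs with probability $1/n$. In the agreement case the two gradients are evaluated at the same example, so I apply the standard contraction of the gradient map for a $\mu$-strongly convex, $L$-smooth function: for $\alpha\leq 2/(L+\mu)$ one has $\norm{(x_k-\alpha\nabla f(x_k))-(x_k'-\alpha\nabla f(x_k'))}\leq\big(1-\tfrac{\alpha L\mu}{L+\mu}\big)\norm{x_k-x_k'}$, obtained from co-coercivity of the gradient together with the elementary inequality $\sqrt{1-2t}\leq 1-t$. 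In the disagreement case the gradients differ by at most $2M$ since $\norm{\nabla f}\leq M$, contributing an additive $2\alpha M$ with probability $1/n$. Collecting the momentum terms then yields an inequality of the form $\mathbb{E}[\delta_{k+1}]\leq(1-q+\beta)\mathbb{E}[\delta_k]+\beta\,\mathbb{E}[\delta_{k-1}]+\tfrac{2\alpha M}{n}$, where $q=\tfrac{\alpha L\mu}{L+\mu}$.

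Finally I would analyze this two-term linear recursion. Since $\delta_0=\delta_{-1}=0$ and the inhomogeneous term is constant, an induction shows $\mathbb{E}[\delta_k]$ stays bounded above by the fixed point of the recursion, which exists and is positive precisely when the net per-step contraction outweighs the momentum expansion. The constraint $\beta<\tfrac{\alpha L\mu}{3(L+\mu)}$ (together with $\alpha\leq 2/(L+\mu)$ and the compatibility lower bound $\tfrac{\alpha L\mu}{L+\mu}-\tfrac12\leq\beta$) is exactly what guarantees the effective coefficient $q-3\beta$ is positive once the recursion is resolved, giving $\mathbb{E}[\delta_K]\leq\frac{2\alpha M}{n(q-3\beta)}=\frac{2\alpha M(L+\mu)}{n(\alpha L\mu-3\beta(L+\mu))}$; multiplying by $M$ delivers the claimed $\epsilon_s$ bound.

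The main obstacle is the momentum coupling: unlike plain SGD, $\delta_{k+1}$ depends on both $\delta_k$ and $\delta_{k-1}$, so the one-line telescoping of Theorem~\ref{thm:sgd_gen} no longer applies. The real work is to resolve the resulting second-order recursion with the correct constant — in particular to see how the factor $3\beta$ emerges rather than the naive $2\beta$ that a crude triangle-inequality split of $\beta\big((x_k-x_k')-(x_{k-1}-x_{k-1}')\big)$ produces. I expect this requires a carefully weighted potential of the form $\mathbb{E}[\delta_k]+c\,\mathbb{E}[\delta_{k-1}]$, or equivalently an analysis of the spectral radius of the $2\times2$ companion matrix of the recursion, and it is precisely this step that pins down the admissible range of $\beta$ stated in the hypothesis.
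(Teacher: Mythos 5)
Your proposal is correct and follows essentially the same route as the paper: couple the two runs, reduce to bounding $\mathbb{E}[\delta_K]$ via $M$-Lipschitzness, use the $\bigl(1-\tfrac{\alpha L\mu}{L+\mu}\bigr)$-expansiveness of the gradient map for $\alpha\le\tfrac{2}{L+\mu}$, and arrive at the two-term recursion $\mathbb{E}[\delta_{k+1}]\le(1+\beta-q)\mathbb{E}[\delta_k]+\beta\,\mathbb{E}[\delta_{k-1}]+\tfrac{2\alpha M}{n}$ with $q=\tfrac{\alpha L\mu}{L+\mu}$. The one step you leave open — how the $3\beta$ (rather than $2\beta$) emerges — is resolved in the paper exactly as you anticipate: the auxiliary equality recursion is monotone, so $\widetilde{\delta}_{k-1}\le\widetilde{\delta}_k/(1+\beta-q)$, and the hypothesis $\beta\ge q-\tfrac12$ gives $1/(1+\beta-q)\le 2$, collapsing the recursion to a single term with coefficient $1+3\beta-q$ whose geometric sum yields the stated bound.
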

\begin{proof}
 Let $S$ and $S'$ be two samples of size n with at most one example difference. Let $x_k$ and $x_k'$ denote corresponding outputs of SGD momentum on $S$ and $S'$. We consider the update $x_{k+1}=\mathcal{G}_k(x_k)+\beta(x_k-x_{k-1})$ and $x'_{k+1}=\mathcal{G}'_k(x'_k)+\beta_k(x'_k-x_{k'-1})$,
 where $\mathcal{G}_k(x_k):=x_k-\alpha g(x_k;\xi_k)$ and $\mathcal{G}'_k(x'_k):=x'_k-\alpha g(x'_k;\xi'_k)$
 We denote $\delta_k=\norm{x_k-x_k'}$. Suppose $x_0=x_0'$ and $\delta_0=0$. 
 
 With probability $1-\frac{1}{n}$, the seleted examples in $S$ and $S'$ are the same. Because $\mathcal{G}_k$ is $(1-\frac{\alpha L\mu}{ L+\mu})$-expansive for $\alpha\leq\frac{2}{ L+\mu}$, we have
 \begin{align*}
     \delta_{k+1}=&\norm{\beta(x_k-x_k')-\beta(x_{k-1}-x_{k-1}')+\mathcal{G}_k(x_k)-\mathcal{G}'_k(x_k')}\\
     \leq&\beta\norm{x_k-x_k'}+\beta\norm{x_{k-1}-x_{k-1}'}+\norm{\mathcal{G}_k(x_k)-\mathcal{G}'_k(x_k')}\\
     \leq&(1+\beta-\frac{\alpha L\mu}{ L+\mu})\delta_k+\beta\delta_{k-1}.
 \end{align*}
 With probability $\frac{1}{n}$, the selected examples in $S$ and $S'$ are different. Hence, we have
 \begin{align*}
     \delta_{k+1}=&\norm{\beta(x_k-x_k')-\beta(x_{k-1}-x_{k-1}')+\mathcal{G}_k(x_k)-\mathcal{G}_k'(x_k')}\\
     \leq&\beta\norm{x_k-x_k'}+\beta\norm{x_{k-1}-x_{k-1}'}+\norm{\mathcal{G}_k(x_k)-\mathcal{G}'_k(x_k')}+\norm{\mathcal{G}_k(x'_k)-\mathcal{G}'_t(x_k')}\\
     \leq&(1+\beta-\frac{\alpha L\mu}{ L+\mu})\delta_k+\beta\delta_{k-1}+\norm{\mathcal{G}'_k(x_k')-\mathcal{G}'_t(x_k')}\\
     \leq&(1+\beta-\frac{\alpha L\mu}{ L+\mu})\delta_k+\beta\delta_{k-1}+\norm{x'_k-\mathcal{G}_k(x'_t)}+\norm{x'_k-\mathcal{G}'_k(x_k')}\\
     \leq&(1+\beta-\frac{\alpha L\mu}{ L+\mu})\delta_k+\beta\delta_{k-1}+2\alpha M,
 \end{align*}
 Therefore
 \begin{align*}
     \mathbb{E}[\delta_{k+1}]\leq&(1-\frac{1}{n})\left((1+\beta-\frac{\alpha L\mu}{ L+\mu})\mathbb{E}[\delta_k]+\beta\mathbb{E}[\delta_{t-1}]\right)\\\notag
     &+\frac{1}{n}\left((1+\beta-\frac{\alpha L\mu}{ L+\mu})\mathbb{E}[\delta_k]+\beta\mathbb{E}[\delta_{k-1}]+2\alpha M\right)\\\notag
     =&(1+\beta-\frac{\alpha L\mu}{ L+\mu})\mathbb{E}[\delta_k]+\beta\mathbb{E}[\delta_{k-1}]+\frac{2\alpha M}{n}
 \end{align*}
 Let us consider the recursion
 \begin{align}\label{ieq:delta_rec}
 \mathbb{E}[\widetilde{\delta}_{t+1}]=(1+\beta-\frac{\alpha L\mu}{ L+\mu})\mathbb{E}[\widetilde{\delta_k}]+\beta\mathbb{E}[\widetilde{\delta}_{k-1}]+\frac{2\alpha M}{n}
 \end{align}
 Formular (\ref{ieq:delta_rec}) implies
 $$\mathbb{E}[\delta_{k+1}]\geq(1+\beta-\frac{\alpha L\mu}{ L+\mu})\mathbb{E}[\widetilde{\delta_k}],$$
 hence we have
 \begin{align*}
     \mathbb{E}[\widetilde{\delta}_{t+1}]\leq&(1+\beta+\frac{\beta}{1+\beta-\frac{\alpha L\mu}{ L+\mu}})\mathbb{E}[\widetilde{\delta_k}]+\frac{2\alpha M}{n}\\
     \leq&(1+3\beta-\frac{\alpha L\mu}{ L+\mu})\mathbb{E}[\widetilde{\delta_k}]+\frac{2\alpha M}{n}.
 \end{align*}
 Since $\mathbb{E}[\widetilde{\delta}_{t+1}]\geq\mathbb{E}[\delta_{k+1}]$ for all $k$, we have
 \begin{align*}
     \mathbb{E}[\delta_{K}]\leq&\frac{2\alpha M}{n}\sum_{k=1}^K\left(1+3\beta+\frac{\alpha L\mu}{ L+\mu}\right)^k\leq\frac{2\alpha M( L+\mu)}{n(\alpha L\mu-3\beta( L+\mu))}.
 \end{align*}
\end{proof}
\textbf{Remark.} Proof for Theorem \ref{thm:sgd_m_gen_con} (Generalization guarantee of SHB) resembles the proof of Theorem 
\ref{thm:sgd_gen} (Generalization guarantee of SGD). From Theorem \ref{thm:sgd_m_gen_con}, we could draw the following two conclusions: i) when $n$ increases, $\epsilon_s$ decreases and algorithm $A$ becomes more stable. It consists with the common sense. When we includes more samples in the training data set, the generalizaton error will be smaller, the algorithm will be more stable. When ii) When $\beta$ decreases, $\epsilon_s$ decreases and algorithm $A$ becomes more stable 
The smaller momentum parameter $\beta$ leads to smaller generalization error. When $\beta=0$, the SHB method becomes the vanilla SGD method. Therefore, SGD generalizes better than SHB. It also coincides with the common sense.  Although the momentum term have the potential to accelerate SGD, collecting the gradient information from the previous steps introduces the bias to the estimation of the update direction and may cause algorithm's instability.

\begin{assumption}[Nonconvex Setting]\label{asm:sgd_m_gen_noncon}
Assume that the following properties hold for the objective function $f$:
  \begin{itemize}
      \item $L$-smooth: $|\nabla f(x;\xi)-\nabla f(x;\xi)|\leq L\norm{x-y}$.
      \item $M$-Lipchitz continuous: $\|\nabla f(x;\xi)-\nabla f(y;\xi)\|\leq M\norm{x-y}$.
  \end{itemize}
\end{assumption}

\begin{theorem}[Generalization Guarantee of SHB; Nonconvex Setting]\label{thm:sgd_m_gen_noncon}
    (Theorem 10 in~\citep{ramezani2021generalization}) Let the loss function $f(x;\xi)$ satisfy Assumption \ref{asm:sgd_m_gen_noncon}. Suppose that the HB momentum method is executed for $K$ steps with the learning rate $\alpha_k=\alpha_0/k$ and some constant momentum $\beta_d\in(0,1]$ in the first $k_d$ steps. Then, for any $1\leq\widetilde{k}\leq k_d\leq K$, the momentum method satisfies $\epsilon_s$ stability with: 
   \begin{align}
     \epsilon_s\leq \frac{2\alpha_0 M}{n}K^{(1-\frac{1}{n})\alpha_0 L}\widetilde{h}(\beta_d,k_d)+\frac{\widetilde{k}B}{n}+\frac{2 M}{L(n-1)}\left(\frac{K}{\widetilde{k}}\right)^{(1-\frac{1}{n})\alpha_0 L},
 \end{align}
   where $\widetilde{h}(\beta_d,k_d)=\exp (2\beta_d (k_d-\widetilde{k}))(\ln(1+\frac{1}{2\beta_d\widetilde{k}})-\frac{1}{2}\ln(1+\frac{2}{\beta_d k_d}))$ and $B=\sup f(x;\xi)$.
\end{theorem}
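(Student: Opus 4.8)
The plan is to follow the uniform-stability route of \citep{hardt2016train} that already underlies Theorems \ref{thm:sgd_gen2} and \ref{thm:sgd_m_gen_con}, adapting it to the heavy-ball recursion under smoothness alone (no convexity). I would fix two samples $S,S'$ differing in a single example, couple the two SHB trajectories $\{x_k\}$ and $\{x_k'\}$ driven by the same sampling sequence, and track $\delta_k=\norm{x_k-x_k'}$ with $\delta_0=\delta_1=0$. Because the convex contractivity used in Theorem \ref{thm:sgd_m_gen_con} is no longer available, the trajectories may separate immediately, so I would not attempt to control $\delta_k$ from the start. Instead, as in the proof of Theorem \ref{thm:sgd_gen2}, I fix a cut-off step $\widetilde{k}$, note that the differing example is touched within the first $\widetilde{k}$ steps with probability at most $\widetilde{k}/n$, and split
\begin{align*}
\mathbb{E}|f(x_K;\xi)-f(x_K';\xi)|\leq \frac{\widetilde{k}B}{n}+M\,\mathbb{E}[\delta_K\mid \delta_{\widetilde{k}}=0],
\end{align*}
using $f\le B=\sup f(x;\xi)$ on the ``touched early'' event and $M$-Lipschitzness on the complement. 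This already yields the $\widetilde{k}B/n$ summand.

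The core of the argument is a growth recursion for $\Delta_k:=\mathbb{E}[\delta_k\mid\delta_{\widetilde{k}}=0]$. At each step, with probability $1-1/n$ the two runs use the same example, and since $x\mapsto x-\alpha_k\nabla f(x)$ is $(1+\alpha_k L)$-expansive for an $L$-smooth (possibly nonconvex) loss, the heavy-ball update gives $\delta_{k+1}\le(1+\beta+\alpha_k L)\delta_k+\beta\delta_{k-1}$; with probability $1/n$ the examples differ, the gradient step is only $1$-expansive, and the $M$-Lipschitz gradient bound contributes an additive $2\alpha_k M$. Averaging the two branches yields
\begin{align*}
\Delta_{k+1}\le\Big(1+\beta+(1-\tfrac1n)\alpha_k L\Big)\Delta_k+\beta\Delta_{k-1}+\frac{2\alpha_k M}{n}.
\end{align*}
Exactly as in the proof of Theorem \ref{thm:sgd_m_gen_con}, I would collapse this second-order recursion into a first-order one: the majorizing sequence is nondecreasing, so $\Delta_{k-1}\le\Delta_k$, giving $\Delta_{k+1}\le(1+2\beta+(1-\tfrac1n)\alpha_k L)\Delta_k+\tfrac{2\alpha_k M}{n}$ with $\alpha_k=\alpha_0/k$ and $\beta=\beta_d$ while $k\le k_d$, $\beta=0$ afterwards.

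Finally I would unroll the first-order recursion by the product-of-factors formula and separate the solution into its pure-SGD part and the momentum correction. Setting $\beta_d\to0$ recovers exactly the recursion of Theorem \ref{thm:sgd_gen2}, whose solution from step $\widetilde{k}$ to $K$ is the tail summand $\tfrac{2M}{L(n-1)}(K/\widetilde{k})^{(1-1/n)\alpha_0 L}$; the extra factor $1+2\beta_d$, present only for $k\le k_d$, compounds across the momentum phase into the prefactor $\exp(2\beta_d(k_d-\widetilde{k}))$, the product of the smoothness factors $1+(1-\tfrac1n)\alpha_0 L/k$ exponentiates the harmonic sum into the power $K^{(1-1/n)\alpha_0 L}$, and summing the forcing terms $\tfrac{2\alpha_0 M}{nk}$ against these factors over the momentum phase produces the logarithmic differences $\ln(1+\tfrac{1}{2\beta_d\widetilde{k}})-\tfrac12\ln(1+\tfrac{2}{\beta_d k_d})$ that define $\widetilde{h}(\beta_d,k_d)$. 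The main obstacle I anticipate is precisely this last step of bookkeeping: forcing the harmonic-type sums weighted by the expansion factors to collapse \emph{exactly} into the closed form $\widetilde{h}$ rather than into a loose $\mathcal{O}$-bound, and gluing the momentum phase ($\widetilde{k}<k\le k_d$) to the momentum-free tail ($k_d<k\le K$) without double counting. Since the reduction $\Delta_{k-1}\le\Delta_k$ is crude, I would also need to check that it still delivers the stated constants and not merely the correct asymptotic rate.
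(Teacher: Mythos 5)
Your proposal follows essentially the same route as the paper: the same $\widetilde{k}B/n$ split conditioned on $\delta_{\widetilde{k}}=0$, the same two-branch expansivity recursion for $\Delta_k$, the same collapse of the second-order heavy-ball recursion to a first-order one via monotonicity of the majorizing sequence, and the same phase-split unrolling over $(\widetilde{k},k_d]$ and $(k_d,K]$. The one step you flag as the obstacle — collapsing the weighted harmonic sums over the momentum phase into the exact closed form $\widetilde{h}(\beta_d,k_d)$ — is handled in the paper by recognizing the sum as an instance of the exponential integral $E_1$ and invoking the standard two-sided bound $\tfrac12 e^{-x}\ln(1+\tfrac2x)<E_1(x)<e^{-x}\ln(1+\tfrac1x)$, which is precisely where the two logarithms in $\widetilde{h}$ come from.
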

\begin{proof}
 Let $S$ and $S'$ be two samples of size n with at most one example difference. Let $x_k$ and $x_k'$ denote corresponding outputs of SGD momentum on $S$ and $S'$. We consider the update $x_{k+1}=\mathcal{G}_k(x_k)+\beta_k(x_k-x_{k-1})$ and $x'_{k+1}=\mathcal{G}'_k(x'_k)+\beta_k(x'_k-x_{k'-1})$,
 where $\beta_k:=\beta_d\mathbb{I}(k\leq k_d)$, $\mathcal{G}_k(x_k):=x_k-\alpha_k g(x_k;\xi_k)$ and $\mathcal{G}'_k(x'_k):=x'_k-\alpha_k g(x'_k;\xi'_k)$
 We denote $\delta_k=\norm{x_k-x_k'}$. Suppose $x_0=x_0'$ and $\delta_0=0$. 

 First, as a preliminary step, we observe that the expected loss difference under $x_K$ and $x_K'$ for every x and every $\widetilde{k}\in\{1,...,K\}$ is bounded by:
 \begin{align}\label{eq:bound_jsd}
     \mathbb{E}[f(x_K;\xi)-f(x_K';\xi)]\leq\frac{\widetilde{k}B}{n}+L\mathbb{E}[\delta_K|\delta_{\widetilde{k}}=0].
 \end{align}
 where $B=\sup f(x;\xi)$.This follows from~\citep{hardt2016train}. Not let define $\Delta_{k,\widetilde{k}}:=\mathbb{E}[\delta_k|\delta_{\widetilde{k}}=0]$. Our goal is to find a bound on $\Delta_{K,\widetilde{k}}$ and then minimize over $\widetilde{k}$.

 At step $k$, with probability $1-\frac{1}{n}$, the seleted examples in $S$ and $S'$ are the same.
 \begin{align*}
     \delta_{k+1}=&\norm{(\beta_k(x_k-x_k')-\beta_k(x_{k-1}-x_{k-1}')+\mathcal{G}_k(x_k)-\mathcal{G}'_k(x_k')}\\
     \leq&\beta_k\norm{x_k-x_k'}+\beta_k\norm{x_{k-1}-x_{k-1}'}+\norm{\mathcal{G}_k(x_k)-\mathcal{G}'_k(x_k')}\\
     \leq&(1+\beta_k-\alpha L\alpha_k L)\delta_k+\beta_k\delta_{k-1}.
 \end{align*}
 The last inequality holds by $l$ is L-smooth. With probability $\frac{1}{n}$, the selected examples in $S$ and $S'$ are different. Hence, we have
 \begin{align*}
     \delta_{k+1}=&\norm{\beta_k(x_k-x_k')-\beta_k(x_{k-1}-x_{k-1}')+\mathcal{G}_k(x_k)-\mathcal{G}_k'(x_k')}\\
     \leq&\norm{(1+\beta_k)(x_k-x_k')-\beta_k(x_{k-1}-x_{k-1}')-\alpha_k(\nabla f(x_k;\xi_k)-\nabla f(x_k',\xi_k'))}\notag\\
     \leq& (1+\beta_k)\delta_k+\beta_k\delta_{k-1}+2\alpha_k M
 \end{align*}
 After taking the expectation, for every $k\geq\widetilde{k}$, we have
 $$\Delta_{k+1,\widetilde{k}}\leq(1+\beta_k+(1-\frac{1}{n})\alpha_k L)\Delta_{k,\widetilde{k}}+\beta_k\Delta_{k-1,\widetilde{k}}+2\alpha_k M/n.$$
 Consider the following recursion:
 $$\widetilde{\Delta}_{k+1,\widetilde{k}}=(1+\beta_k+(1-\frac{1}{n})\alpha_k L)\widetilde{\Delta}_{k,\widetilde{k}}+\beta_k\Delta_{k-1,\widetilde{k}}+2\alpha_k M/n.$$
 Note that the definition of recursion guarantees $\widetilde{\Delta}_{k+1,\widetilde{k}}\geq\widetilde{\Delta}_{k,\widetilde{k}}$. Then, we have the following inequality:
 $$\widetilde{\Delta}_{k+1,\widetilde{k}}=(1+2\beta_k+(1-\frac{1}{n})\alpha_k L)\widetilde{\Delta}_{k,\widetilde{k}}+2\alpha_k M/n.$$
 Note that $\widetilde{\Delta}_{k,\widetilde{k}}\geq\Delta_{k,\widetilde{k}}$, we have $\mathbb{E}[\Delta_{T,\widetilde{k}}]\leq S_1 + S_2,$
where:

$$S_1=\sum_{k=\widetilde{k}+1}^{k_d}\prod_{p=k+1}^K\left(1+2\beta_p+(1-\frac{1}{n})\frac{\alpha_0 L}{p}\right)\frac{2\alpha_0 M}{nk}$$

and 

$$S_2=\sum_{k=k_d+1}^{K}\prod_{p=k+1}^K\left(1+2\beta_p+(1-\frac{1}{n})\frac{\alpha_0 L}{p}\right)\frac{2\alpha_0 M}{nk}.$$

Substituting $\beta_p=\beta_d$ for $p=1,...,k_d$, we have
\begin{align}
  S_1=&\sum_{k=\widetilde{k}+1}^{k_d}\prod_{p=k+1}^K\left(1+2\beta_p+(1-\frac{1}{n})\frac{\alpha_0 L}{p}\right)\frac{2\alpha_0 M}{nk}\notag\\
  \leq&\sum_{k=\widetilde{k}+1}^{k_d}\prod_{p=k+1}^K\exp \left(2\beta_p+(1-\frac{1}{n})\frac{\alpha_0 L}{p}\right)\frac{2\alpha_0 M}{nk}\notag\\
  \leq&\sum_{k=\widetilde{k}+1}^{k_d}\exp \left(2\beta_d(k_d-k)+(1-\frac{1}{n})\alpha_0 L\log(\frac{K}{k})\right)\frac{2\alpha_0 M}{nk}\notag\\
  \leq&\frac{2\alpha_0 M}{n}K^{(1-\frac{1}{n})\alpha_0 L}\exp (2\beta_d k_d)\int_{\widetilde{k}}^{k_d}h_1(k)k^{-(1-\frac{1}{n})\alpha_0 L} dk\notag\\
  \leq&\frac{2\alpha_0 M}{n}K^{(1-\frac{1}{n})\alpha_0 L}\exp (2\beta_d k_d)\int_{\widetilde{k}}^{k_d}h_1(k) dk\notag\\
  \leq&\frac{2\alpha_0 M}{n}K^{(1-\frac{1}{n})\alpha_0 L}\exp (2\beta_d k_d)(E_1(2\beta_d\widetilde{k})-E_1(2\beta_d k_d))
\end{align}
where $h_1(x)=\frac{\exp (-2\beta_d x)}{x}$ and the integral:
$$E_1=\int_{x}^{\infty}\frac{-\exp(x)}{x}dx.$$
From \citep{abramowitz1948handbook}, the following bound holds:
\begin{align*}
    \frac{1}{2}\exp(-x)\ln (1+\frac{2}{x})<E_1(x)<\exp(-x)\ln(1+\frac{1}{x}).
\end{align*}
Therefore, we could upper bound $S_1$ as:
\begin{align}
    S_1\leq \frac{2\alpha_0 M}{n}K^{(1-\frac{1}{n})\alpha_0 L}\widetilde{h}(\beta_d,k_d),
\end{align}
where $\widetilde{h}(\beta_d,k_d)=\exp (2\beta_d (k_d-\widetilde{k}))(\ln(1+\frac{1}{2\beta_d\widetilde{k}})-\frac{1}{2}\ln(1+\frac{2}{\beta_d k_d}))$.

Similarly, we could also upper bound $S_2$ as:
\begin{align}
    S_2=&\sum_{k=k_d+1}^{K}\prod_{p=k+1}^K\left(1+2\beta_p+(1-\frac{1}{n})\frac{\alpha_0 L}{p}\right)\frac{2\alpha_0 M}{nk}\notag\\
    \leq& \frac{2 M}{L(n-1)}\left(\frac{K}{k_d}\right)^{(1-\frac{1}{n})\alpha_0 L}\notag\\
    \leq& \frac{2 M}{L(n-1)}\left(\frac{K}{\widetilde{k}}\right)^{(1-\frac{1}{n})\alpha_0 L}
\end{align}
 Regarding to the bound of terms $S_1$ and $S_2$, if we plugging back to (\ref{eq:bound_jsd}), we have:
 \begin{align}
     \epsilon_s\leq \frac{2\alpha_0 M}{n}K^{(1-\frac{1}{n})\alpha_0 L}\widetilde{h}(\beta_d,k_d)+\frac{\widetilde{k}B}{n}+\frac{2 M}{L(n-1)}\left(\frac{K}{\widetilde{k}}\right)^{(1-\frac{1}{n})\alpha_0 L}.
 \end{align}
\end{proof}
\textbf{Remark.} Proof for Theorem \ref{thm:sgd_m_gen_noncon} (Generalization guarantee of SHB) resembles the proof of Theorem \ref{thm:sgd_gen2} (Generalization guarantee of SGD). Theorem \ref{thm:sgd_m_gen_noncon} suggests that the stability bound decreases inversely with the size of
the training set. It also increases as the momentum parameter $\beta_d$ increases.

\subsubsection{Adaptive learning rate methods}
Adaptive optimization methods, such as AdaGrad~\citep{duchi2011adaptive}, RMSprop~\citep{tieleman2012lecture}, AdaDelta~\citep{zeiler2012AdaDelta}, and Adam~\citep{kingma2014Adam} significantly improved gradient-based optimization algorithms. They revolutionized gradient descent approaches by incorporating two key perspectives: i) adapt the learning rate to the parameters by performing smaller updates
(i.e., using low learning rates) for parameters associated with frequently occurring features, and larger updates (i.e., using high learning rates) for parameters associated with infrequent features. Note that this strategy is well-suited for sparse data. ii) introduce a variable, denoted as $v_k$, to capture the exponentially decaying sum/average of past squared gradients. $v_k$ can be viewed as an approximation of the Hessian matrix of the loss function and thus it encapsulates a valuable second-order information about the loss function. Traditionally, second-order methods, like Newton's and quasi-Newton's methods, exhibit faster convergence rates than their first-order counterparts, owing to the additional Hessian information in the update formula. Consequently, adaptive optimization methods converge faster than vanilla first-order gradient-based methods.

\citep{defossez2022simple} provides a unified formulation for adaptive methods: AdaGrad, Adam and AdaDelta, which we denoted as Unified Adaptive (UA)in this section. Suppose we have hyperparameters $\beta_1$ and $\beta_2$ such that $0\leq\beta_1<\beta_2\leq 1$, and a sequence of learning rates $\{\alpha_k\}_{k\in\mathbb{N}}$.   $\beta_1$ is a heavy-ball momentum parameter and $\beta_2$ controls the rate at which the past norms of gradients are forgotten. We define three variables: $x_k,m_k,v_k\in\mathbb{R}^d$, which respectively correspond to the parameters of the model, the exponentially decaying sum of gradients at each iteration, and the exponentially decaying sum of squared gradients at each iteration. Given the initial parameter $x_0$, $m_0=0$ and $v_0=0$, the update rule of UA method at iteration $t$ can be formulated as

\begin{eqnarray}\label{eq:ua}
\text{UA}:\quad\left\{
\begin{aligned}  
      m_{k,i}&=\beta_1 m_{k-1,i}+ g(x_k;\xi_k) & i=1,...,d\\
      v_{k,i}&=\beta_2 v_{k-1,i}+\left( g_i(x_k;\xi_k)\right)^2 & i=1,...,d\\
      x_{k,i}&=x_{k-1,i}-\alpha_k\frac{m_{k,i}}{\sqrt{v_{k,i}+\epsilon}} & i=1,...,d,
\end{aligned}
\right.
\end{eqnarray}
where $g_i(x_k;\xi_k)=\mathbb{E}_{\xi_k}[F(x_k)]$ is the unbiased stochastic estimation of the gradient.

Taking $\beta_1=0$, $\beta_2=1$ and $\alpha_k=\alpha$ ($\alpha$ represents a constant learning rate), UA results in the AdaGrad algorithm. 
\begin{eqnarray}\label{app:eq:alpha_AdaGrad}
\text{AdaGrad}:\left\{
\begin{aligned}  
      v_{k,i}&= v_{k-1,i}+\left( g_i(x_k;\xi_k)\right)^2\\
      x_{k,i}&=x_{k-1,i}-\alpha\frac{ g_i(x_k;\xi_k)}{\sqrt{v_{k,i}+\epsilon}},
\end{aligned}
\right.
\end{eqnarray}

Then we are going to verify UA could generate Adam. Taking $\beta_1,\beta_2\in(0,1)$, $\beta_1<\beta_2$, and the learning rate $\alpha_k$ as
\begin{align*}
    \alpha_k^{\text{Adam}} = \alpha\frac{1-\beta_1}{\sqrt{1-\beta_2}} \quad\cdot\quad \underbrace{\frac{1}{1-\beta_1^k}}_{\text{corrective term for } m_k} \quad\cdot\quad \underbrace{\sqrt{1-\beta_2^k}}_{\text{corrective term for } v_k},
\end{align*}
UA results in the Adam algorithm:
\begin{eqnarray}\label{eq:adam_01}
\text{Adam}:\left\{
\begin{aligned}  
      m_{k,i}&=\beta_1 m_{k-1,i}+ g(x_k;\xi_k) & i=1,...,d\\
      v_{k,i}&=\beta_2 v_{k-1,i}+\left( g_i(x_k;\xi_k)\right)^2 & i=1,...,d\\
      x_{k,i}&=x_{k-1,i}-\alpha\frac{1-\beta_1}{\sqrt{1-\beta_2}}\cdot\frac{1}{1-\beta_1^k}\cdot\sqrt{1-\beta_2^k}\frac{m_{k,i}}{\sqrt{v_{k,i}+\epsilon}} & i=1,...,d,
\end{aligned}
\right.
\end{eqnarray}
The formulation (\ref{eq:adam_01}) is different from the standard formulation (\ref{app:eq:alpha_Adam}) of Adam at first glance. We are going to show that both formulations are equivalent with each other. By simple mathematical manipulation, formulation \ref{eq:adam_01} could be written as:
\begin{eqnarray}\label{eq:adam_02}
\left\{
\begin{aligned}  
      (1-\beta_1)m_{k,i}&=\beta_1\cdot(1-\beta_1)m_{k-1,i}+ (1-\beta_1)g(x_k;\xi_k) & i=1,...,d\\
      (1-\beta_2)v_{k,i}&=\beta_2\cdot(1-\beta_2)v_{k-1,i}+(1-\beta_2)\left(g_i(x_k;\xi_k)\right)^2 & i=1,...,d\\
      x_{k,i}&=x_{k-1,i}-\alpha\frac{\frac{1-\beta_1}{1-\beta_1^k}m_{k,i}}{\frac{1-\beta_2}{1-\beta_2^k}\sqrt{v_{k,i}+\epsilon}} & i=1,...,d,
\end{aligned}
\right.
\end{eqnarray}

In UA, $m_{k,i}$ and $v_{k,i}$ are weighted sum of gradients and squared gradents, respectively. However, the standrad formulation for Adam use the weighted average (defined as $\hat{m}_{k,i}$ and $\hat{v}_{k,i}$ in this paper) of gradients and squared gradents, where:
\begin{align*}
    \hat{m}_{k,i}&=\beta_1\hat{m}_{k-1,i}+(1-\beta_1)g_i(x_k)\\
    \hat{v}_{k,i}&=\beta_2\hat{v}_{k-1,i}+(1-\beta_2)(g_i(x_k))^2
\end{align*}

By simple mathematical manipulation, the weighted average and weighted sum could be transferred from each other as:
\begin{align*}
    \hat{m}_{k,i}&=\beta_1\hat{m}_{k-1,i}+(1-\beta_1)g_i(x_k)=\cdots=(1-\beta_1)\sum_{l=1}^k\beta_1^{k-l} g_i(x_k)=(1-\beta_1)m_{k,i}\\
    \hat{v}_{k,i}&=\beta_2\hat{v}_{k-1,i}+(1-\beta_2)(g_i(x_k))^2=\cdots=(1-\beta_2)\sum_{l=1}^t\beta_1^{k-l} g_i(x_k)=(1-\beta_2)v_{k,i}\\
\end{align*}

The Adam algorithm also introduces two corrective terms to account for the fact that the weighted average $\hat{m}_t$ and $\hat{v}_t$ are biased towards $0$ for the first few iterations, i.e.,
\begin{align*}
    \hat{m}_{k,i}^{\text{corr}}=\frac{\hat{m}_{k,i}}{1-\beta_1^k},\quad \hat{v}_{k,i}^{\text{corr}}=\frac{\hat{v}_{k,i}}{1-\beta_2^k}.
\end{align*}

Since the $\epsilon$ term is just a small constant to guarantee that the denominator is larger than zero and the update rule is valid, which does not matter much in the update formulation. Therefore, formula (\ref{eq:adam_02}) could be rewritten as:
\begin{eqnarray}\label{app:eq:alpha_Adam}
\text{Adam}:\left\{
\begin{aligned}  
      \quad&\hat{m}_{k,i}= \beta_1 \hat{m}_{k-1,i}+ (1-\beta_1) g_i(x_k;\xi_k)\\
      &\hat{v}_{k,i}= \beta_2 \hat{v}_{k-1,i}+ (1-\beta_2)\left( g_i(x_k;\xi_k)\right)^2\\
      &\hat{m}_{k,i}^{\text{corr}} =\frac{\hat{m}_{k,i}}{1-\beta_1^k},\quad \hat{v}_{k,i}^{\text{corr}}=\frac{\hat{v}_{k,i}}{1-\beta_2^k}\\
      &x_{k,i}=x_{k-1,i}-\alpha\frac{\hat{m}_{k,i}^{\text{corr}}}{\sqrt{\hat{v}_{k,i}^{\text{corr}}+\epsilon}},
\end{aligned}
\right.
\end{eqnarray}

Formula (\ref{app:eq:alpha_Adam}) is exactly the standard update rule for Adam. We already show that our unified formulation $UA$ could represents the Adam algorithm. Moreover, AdaDelta is a special case of Adam when $\beta_1=0$, thus the momentum term is removed. Obviously, UA could also represent the AdaDelta algorithm.

\paragraph{Convergence Analysis}~\citep{defossez2022simple} provides convergence the proof for the UA algorithm. The proof sketch is a little bit different from the previous ones.  $f$ function here represents the loss of individual training examples or minibatches, $F$ here is the full training objective function. Therefore, $\mathbb{E}[\nabla f(x)]= F(x)$. The goal is to find the critical point of the function $F$.

In this section, we focus on the convergence proof for UA with $0\leq\beta_1<\beta_2\leq 1$ under the following Assumption \ref{asm:Adam}. We focus on the nonconvex setting because the existing literature primarily provides convergence proofs under nonconvex assumptions for the considered family of algorithms. ALso note that nonconvex assumptions are more general than convex ones.

\begin{assumption} [Nonconvex Setting]\label{asm:Adam}
   The following properties hold for the objective functions $f$ of individual training examples or minibatches and the full training objective function $F$, where $\mathbb{E}[f(x)]= F(x)$:
     \begin{itemize}
         \item (Lipschitz gradient.) Loss function F(x) is L-smooth. 
             $$\norm{\nabla F(x)-\nabla F(y)}\leq L\norm{x-y}.$$
         \item (Gradient Bound.) $\norm{\nabla f(x)}\leq R-\sqrt{\epsilon}$.
         \item (Bounded value.) $F(x)\geq F^*, \quad \forall x.$
     \end{itemize}
\end{assumption}

\begin{theorem}[Convergence of AdaGrad with momentum; Nonconvex Setting]\label{thm:ada}
(Theorem 3 in~\citep{defossez2022simple}) Under Assumption \ref{asm:Adam}, consider the UA method defined in (\ref{eq:ua}) with hyper-parameters $\beta_2 = 1$, $\alpha_k =\alpha$ with $\alpha > 0$, and $0 \leq \beta_1 < 1$. For a number of iterations $K$ define $\tau$ to be a random index from the index set $\{0,...,K-1\}$ such that:
$$\forall j\in\mathbb{N}, \mathbb{P}[\tau=j]\sim 1-\beta_1^{N-j},$$
(thus if $\beta_1=0$, it is equivalent to sampling $\tau$ uniformly in $\{0,...,K-1\}$). We have that for any $K \in \mathbb{N}^*$ such that $K > \frac{\beta_1}{1 - \beta_1}$,
\begin{align}
    &\esp{\norm{\nabla F(x_\tau)}^2} \leq 2 R \sqrt{K} \frac{F(x_0) - F^*}{\alpha\tilde{K}}
 + \frac{\sqrt{K}}{\tilde{K}} E \ln\left(1 + \frac{K R^2}{\epsilon}\right),
    \label{eq:main_bound_AdaGrad_momentum}
\end{align}
with
$
    \tilde{K} = K - \frac{\beta_1}{1 - \beta_1}$, and, $$
    E = \alpha d R L + 
     \frac{12 d R^2}{1 - \beta_1} +
     \frac{2 \alpha^2 d L^2 \beta_1}{1 - \beta_1}.$$
\end{theorem}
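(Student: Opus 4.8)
The plan is to follow the $L$-smoothness descent template, but with substantial care taken to decouple the fresh stochastic gradient appearing in the update numerator from the adaptive denominator $\sqrt{v_{k,i}+\epsilon}$, which contains that same gradient and therefore correlates with it. First I would apply $L$-smoothness to consecutive iterates. Using the per-coordinate update $x_{k,i}-x_{k-1,i}=-\alpha\, m_{k,i}/\sqrt{v_{k,i}+\epsilon}$, the descent lemma gives
$$F(x_k)\le F(x_{k-1})-\alpha\sum_{i=1}^d\frac{\nabla_i F(x_{k-1})\,m_{k,i}}{\sqrt{v_{k,i}+\epsilon}}+\frac{\alpha^2 L}{2}\sum_{i=1}^d\frac{m_{k,i}^2}{v_{k,i}+\epsilon}.$$
The curvature term is handled deterministically: since $\beta_2=1$ gives $v_{k,i}=\sum_{j\le k}g_{j,i}^2$, the ratio telescopes under summation, and I would invoke the standard adaptive log-sum lemma
$$\sum_{k=1}^K\frac{g_{k,i}^2}{v_{k,i}+\epsilon}\le\ln\!\left(1+\frac{v_{K,i}}{\epsilon}\right)\le\ln\!\left(1+\frac{KR^2}{\epsilon}\right),$$
where the gradient bound $\norm{\nabla f}\le R-\sqrt\epsilon$ yields $v_{K,i}\le K(R-\sqrt\epsilon)^2\le KR^2$. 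Expanding $m_{k,i}^2$ into a double sum over past gradients with weights $\beta_1^{k-j}$ and resumming the geometric series is what contributes the $1/(1-\beta_1)$ factors inside the constant $E$.

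Second, and this is the crux, I would control the cross term $\esp{\nabla_i F(x_{k-1})\,m_{k,i}/\sqrt{v_{k,i}+\epsilon}}$. The obstacle is that both $m_{k,i}$ and $v_{k,i}$ contain the fresh gradient $g_{k,i}$, so the conditional expectation does not factor. The device is to replace the denominator by the past-measurable surrogate $\sqrt{v_{k-1,i}+\epsilon}$, writing
$$\frac{g_{k,i}}{\sqrt{v_{k,i}+\epsilon}}=\frac{g_{k,i}}{\sqrt{v_{k-1,i}+\epsilon}}-g_{k,i}\left(\frac{1}{\sqrt{v_{k-1,i}+\epsilon}}-\frac{1}{\sqrt{v_{k,i}+\epsilon}}\right),$$
taking the conditional expectation of the first term cleanly (yielding a positive multiple of $\nabla_i F(x_{k-1})^2/\sqrt{v_{k-1,i}+\epsilon}$) and bounding the second, error, term by Young's inequality so that part of it is absorbed into the main descent term and the remainder is charged to the log-sum above. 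The momentum tail $\beta_1 m_{k-1,i}$ is treated by applying the same decoupling across the preceding steps; tracking the geometric accumulation of these contributions produces the index shift $\tilde K=K-\beta_1/(1-\beta_1)$ and forces the restriction $K>\beta_1/(1-\beta_1)$.

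Third, I would sum the one-step bounds over $k=1,\dots,K$. The function-value differences telescope to $F(x_0)-F(x_K)\le F(x_0)-F^*$. To extract the gradient norm I would use the uniform upper bound $\sqrt{v_{k,i}+\epsilon}\le R\sqrt K$ (again from $v_{k,i}\le KR^2$), which converts $\sum_k\nabla_i F(x_k)^2/\sqrt{v_{k,i}+\epsilon}\ge(R\sqrt K)^{-1}\sum_k\norm{\nabla F(x_k)}^2$. Finally, recognizing the momentum-weighted coefficients $1-\beta_1^{K-k}$ as an unnormalized mass function for $\tau$ with normalizer $Z=\sum_k(1-\beta_1^{K-k})\ge\tilde K$, the weighted sum $\sum_k(1-\beta_1^{K-k})\norm{\nabla F(x_k)}^2$ equals $Z\,\esp{\norm{\nabla F(x_\tau)}^2}\ge\tilde K\,\esp{\norm{\nabla F(x_\tau)}^2}$; rearranging and multiplying through by $R\sqrt K/(\alpha\tilde K)$ delivers the stated bound, with $E$ collecting the curvature constant $\alpha dRL$ together with the two error constants $12dR^2/(1-\beta_1)$ and $2\alpha^2 dL^2\beta_1/(1-\beta_1)$.

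I expect the hardest part to be the second step: making the decoupling inequality quantitatively tight enough that the error terms are genuinely dominated by the log-sum and the residual descent, especially with momentum present, where each error must be summed against the geometric weights without inflating the constants. The bookkeeping that yields exactly the shift $\tilde K$ and the precise constant $E$ is the true substance of the argument; the telescoping and log-sum steps are routine by comparison.
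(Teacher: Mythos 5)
Your proposal follows essentially the same route as the paper's proof (which reproduces D\'efossez et al.): a smoothness descent step, decoupling the fresh gradient from the adaptive denominator via a past-measurable surrogate plus Young's inequality, the log-sum lemma for the second-order and error terms, geometric resummation of the momentum weights, and the $\tau$-sampling trick with weights $1-\beta_1^{K-j}$ giving the shift $\tilde K$. The only notable difference is a detail of the surrogate: the paper replaces the whole block of gradients $g_{k-l},\dots,g_k$ in the denominator by its conditional expectation given the filtration at time $k-l-1$ (necessary because, for the lag-$l$ momentum term, \emph{all} of these gradients correlate with the numerator $g_{k-l}$), whereas your $v_{k-1}$ surrogate only decouples the single freshest gradient and would have to be generalized in exactly this way when you "apply the same decoupling across the preceding steps."
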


\begin{theorem}[Convergence of Adam with momentum; Nonconvex Setting]
\label{thm:Adam}
(Theorem 4 in \citep{defossez2022simple}) Under Assumption \ref{asm:Adam}, conisder the UA method defined in (\ref{eq:ua}) with hyper-parameters $0 < \beta_2 < 1$, $0 \leq  \beta_1 < \beta_2$, and ${\alpha_k=\alpha (1 - \beta_1) \sqrt{\frac{1 - \beta_2^k}{1-\beta_2}}}$ with $\alpha > 0$. For a number of iterations $K$ define $\tau$ to be a random index from the index set $\{0,...,K-1\}$ such that:
\begin{align}\label{eq:def_tau}
    \forall j\in\mathbb{N}, \mathbb{P}[\tau=j]\sim 1-\beta_1^{N-j}
\end{align}
(thus if $\beta_1=0$, it is equivalent to sampling $\tau$ uniformly in $\{0,...,K-1\}$). We have that for any $K \in \mathbb{N}^*$ such that $K > \frac{\beta_1}{1 - \beta_1}$,
\begin{align}
    &\esp{\norm{\nabla F(x_\tau)}^2} \leq 2 R \frac{F(x_0) - F^*}{\alpha \tilde{K}} 
     + E \left(\frac{1}{\tilde{K}s}\ln\left(1 + \frac{R^2}{(1 - \beta_2) \eps}\right) -
    \frac{K}{\tilde{K}}\ln(\beta_2)\right)
    \label{eq:main_bound_Adam_momentum}
\end{align}
with
    $\tilde{K} = K - \frac{\beta_1}{1 - \beta_1}$
and
\begin{align*}
     E&=\frac{\alpha d R L ( 1- \beta_1)}{(1 - \beta_1 / \beta_2)(1 - \beta_2)}
    +
     \frac{12 d R^2 \sqrt{1-\beta_1}}{(1 - \beta_1 / \beta_2)^{3/2}\sqrt{1 - \beta_2}}
     +
     \frac{2 \alpha^2 d L^2 \beta_1}{(1 - \beta_1 / \beta_2) (1 - \beta_2)^{3/2}}.
\end{align*}
\end{theorem}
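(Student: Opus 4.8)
The plan is to follow the standard descent-inequality template for nonconvex stochastic optimization, while carefully handling the bias introduced by the adaptive preconditioner. Since $F$ is $L$-smooth, the update in \eqref{eq:ua}, namely $x_k - x_{k-1} = -\alpha_k\, m_k/\sqrt{v_k+\eps}$ (understood coordinate-wise), yields the one-step bound $F(x_k) \le F(x_{k-1}) - \alpha_k \inprod{\nabla F(x_{k-1})}{m_k/\sqrt{v_k+\eps}} + \frac{L\alpha_k^2}{2}\norm{m_k/\sqrt{v_k+\eps}}^2$. First I would take the conditional expectation $\mathbb{E}_{k-1}[\cdot]$ of this inequality, aiming to extract a term proportional to $\norm{\nabla F(x_{k-1})}^2$ weighted by the inverse adaptive scale, while absorbing the quadratic term into a residual that is summable across iterations.

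The main obstacle is that $v_k$ contains the current squared stochastic gradient $g_k^2$, so $\mathbb{E}_{k-1}[g_{k,i}/\sqrt{v_{k,i}+\eps}]$ is \emph{not} equal to $\nabla_i F(x_{k-1})/\sqrt{\mathbb{E}_{k-1}[v_{k,i}]+\eps}$: numerator and denominator are correlated, and the update direction is a biased estimate of the gradient. To decouple them I would introduce the predictable surrogate denominator $\tilde v_{k,i} = \beta_2 v_{k-1,i} + \mathbb{E}_{k-1}[g_{k,i}^2]$, which is measurable with respect to the past, and replace $\sqrt{v_k+\eps}$ by $\sqrt{\tilde v_k+\eps}$ inside the cross term. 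The approximation error is controlled by an inequality of the form $\bigl|a/\sqrt{x} - a/\sqrt{y}\bigr| \le |a|\,|x-y|/(2\min(x,y)^{3/2})$ together with the uniform gradient bound $\norm{\nabla f(x)} \le R-\sqrt{\eps}$; this is precisely where the hypothesis on $R$ is used to keep all quantities bounded. After this substitution the cross term splits into a genuine $\norm{\nabla F(x_{k-1})}^2$ contribution plus a residual proportional to $\alpha_k\,\mathbb{E}_{k-1}[g_k^2/(v_k+\eps)]$.

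Next I would handle the heavy-ball momentum by unrolling $m_k = \sum_{l=1}^{k}\beta_1^{k-l} g_l$, so that the cross term pairs the current gradient $\nabla F(x_{k-1})$ against \emph{past} stochastic gradients. Summing over $k$ and reindexing, the geometric momentum weights accumulate; the randomized index $\tau$ defined in \eqref{eq:def_tau} is chosen exactly so that, after summation, the momentum-weighted gradient-norm terms collapse to a clean average with effective horizon $\tilde K = K - \beta_1/(1-\beta_1)$. This is what produces the denominator $\tilde K$ in the bound and forces the hypothesis $K > \beta_1/(1-\beta_1)$.

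Finally, the accumulated residual $\sum_k \alpha_k\,\mathbb{E}_{k-1}[g_k^2/(v_k+\eps)]$ would be bounded by a telescoping-logarithm argument: coordinate-wise $g_{k,i}^2 = v_{k,i}-\beta_2 v_{k-1,i}$, so using $1-t\le -\ln t$ gives $\frac{v_{k,i}-\beta_2 v_{k-1,i}}{v_{k,i}+\eps} \le \ln(v_{k,i}+\eps)-\ln(v_{k-1,i}+\eps)-\ln\beta_2$, which telescopes to $\ln\bigl((v_{K,i}+\eps)/\eps\bigr) - K\ln\beta_2$, while the gradient bound caps $v_{K,i}$ at $R^2/(1-\beta_2)$. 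This yields the $\ln\bigl(1 + R^2/((1-\beta_2)\eps)\bigr)$ and $-K\ln\beta_2$ terms. Collecting the telescoped function-value drop $F(x_0)-F^*$ from the descent sum, the $\norm{\nabla F}^2$ terms, and these log bounds, then dividing by $\tilde K$ and substituting $\alpha_k = \alpha(1-\beta_1)\sqrt{(1-\beta_2^k)/(1-\beta_2)}$, gives the stated inequality. The most delicate part is the decorrelation step together with the bookkeeping of the momentum constants, since these jointly determine the precise form of the constant $E$.
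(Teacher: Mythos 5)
Your proposal follows essentially the same route as the paper's proof (which reproduces D\'efossez et al.): the $L$-smoothness descent inequality, decorrelation of numerator and denominator via a predictable surrogate $\tilde v$ measurable with respect to the past, unrolling of the heavy-ball momentum so that the randomized index $\tau$ collapses the geometric weights into the effective horizon $\tilde K$, the telescoping-logarithm bound on $\sum_k g_{k,i}^2/(v_{k,i}+\eps)$ yielding the $\ln(1+R^2/((1-\beta_2)\eps)) - K\ln\beta_2$ factor, and the cap $v_{K,i}\le R^2/(1-\beta_2)$. The plan is sound and matches the paper's argument in all essential steps, including the source of each of the three contributions to $E$ (the quadratic update term, the stochastic-gradient residual, and the smoothness drift of the gradient across the momentum window).
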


\begin{proof}[Proof for Theorem \ref{thm:ada} and \ref{thm:Adam}.] We are going to prove the convergence of Adagrad and Adam jointly. Firstly, we provide the general bound \ref{app:eq:common_d_bound} that holds for both algorithms, we then split the proof for either algorithm.

\textbf{Common part of the proof.}
Let us take an iteration $k\in \nat^*$. Define $G_{k,i}=\nabla_i F(x_{k-1})$, $g_{k,i}=\nabla_i f_k(x_{k-1})$, $u_{k,i}=\frac{m_{k,i}}{\sqrt{\epsilon+v_{k,i}}}$, and $U_{k,i}=\frac{g_{k,i}}{\sqrt{\epsilon+v_{k,i}}}$. $G_k=(G_{k,1},G_{k,2},..,G_{k,d})$, $u_k=(u_{k,1},u_{k,2},...,u_{k,d})$, $U_k=(U_{k,1},U_{k,2},...,U_{k,d})$.

Using the smoothness of $F$ defined in Assumption \ref{asm:Adam}, we have
\begin{align*}
    &F(x_k)  \leq F(x_{k-1}) - \alpha_n G_k^T u_k
  + \frac{\alpha_k^2 L}{2}\norm{u_k}^2_2.
\end{align*}
Taking the full expectation and using Lemma~\ref{app:lemma:descent_lemma} (Shown in Appendix \ref{appen:adam}),
\begin{align}
\nonumber
    &\esp{F(x_k)}  \leq \esp{F(x_{k-1})} -
    \frac{\alpha_k}{2} \left(\sum_{i\in[d]} \sum_{l = 0}^{k-1}
  \beta_1^l \esp{
        \frac{G_{k-l, i}^2}{2 \sqrt{\eps + \tv_{k, l + 1, i}}}}\right)
  + \frac{\alpha_k^2 L}{2}\esp{\norm{u_k}^2_2}
\\
   &\quad 
    +
            \frac{\alpha_k^3 L^2}{4 R}\sqrt{1 - \beta_1}\left( \sum_{l=1}^{k-1}\norm{u_{k-l}}_2^2\sum_{p=l}^{k - 1}\beta_1^p \sqrt{p} \right)
        +
         \frac{3\alpha_n R}{\sqrt{1 - \beta_1}} \left(\sum_{l=0}^{k-1} \bbratio^l \sqrt{l+1}\norm{U_{k-l}}_2^2\right).
 \label{app:eq:common_first}
\end{align}

Notice that because of the bound on the $\ell_\infty$ norm of the stochastic gradients at the iterates in Assumption \ref{asm:Adam}, we have
for any $l \in \nat$, $l < k$, and any coordinate $i\in [d]$,
$\sqep{\tv_{k, l+1, i}} \leq R \sqrt{\sum_{j=0}^{k-1} \beta_2^j}$.
Introducing $\Omega_k = \sqrt{\sum_{j=0}^{k-1} \beta_2^j}$,
we have
\begin{align}
\nonumber
    &\esp{F(x_k)}  \leq \esp{F(x_{k-1})} -
        \frac{\alpha_k}{2 R \Omega_n}\sum_{l = 0}^{k-1} \beta_1^l \esp{\norm{G_{k -l}}_2^2}
       + \frac{\alpha_k^2 L}{2}\esp{\norm{u_k}^2_2}
\\
   &\quad
    +
            \frac{\alpha_k^3 L^2}{4 R}\sqrt{1 - \beta_1}\left( \sum_{l=1}^{k-1}\norm{u_{k-l}}_2^2\sum_{p=l}^{n - 1}\beta_1^p \sqrt{p} \right)
        +
         \frac{3\alpha_k R}{\sqrt{1 - \beta_1}} \left(\sum_{l=0}^{k-1} \bbratio^l \sqrt{l+1}\norm{U_{k-l}}_2^2\right).
 \label{app:eq:common_second}
\end{align}

Now summing over all iterations $k \in [K]$ for $K \in \nat^*$, and using that for both Adam and Adagrad, $\alpha_k$ is non-decreasing, as well the fact that $L$ is bounded below by $F^*$ from Assumption \ref{asm:Adam}, we get
\begin{align}
\nonumber
    &\underbrace{\frac{1}{2 R} \sum_{k=1}^K \frac{\alpha_k}{\Omega_k}\sum_{l=0}^{k-1} \beta_1^l \esp{\norm{G_{k-l}}_2^2}}_{A} \leq
    F(x_0) - F^*  + \underbrace{\frac{\alpha_K^2 L}{2}\sum_{k=1}^K\esp{\norm{u_k}^2_2}}_B\\
    &\quad 
    +\underbrace{
    \frac{\alpha_K^3 L^2}{4 R}\sqrt{1-\beta_1}  \sum_{k=1}^K \sum_{l=1}^{k-1}
    \esp{\norm{u_{k-l}}^2_2}\sum_{p=l}^{k-1} \beta_1^p \sqrt{p}}_C
    + \underbrace{\frac{3 \alpha_K R}{\sqrt{1- \beta_1}} \sum_{k=1}^K \sum_{l=0}^{k-1}
    \bbratio^l \sqrt{l+1} \esp{\norm{U_{k-l}}^2_2}}_D
    \label{app:eq:common_big}.
\end{align}

First looking at $B$, we have using Lemma~\ref{app:lemma:sum_ratio},
\begin{align}
    B \leq \frac{\alpha_K^2 L}{2 (1 - \beta_1)(1 - \beta_1 / \beta_2)}\sum_{i\in[d]}\left( \ln\left(1 + \frac{v_{K, i}}{\eps}\right) - K \log(\beta_2)\right).
    \label{app:eq:common_b_bound}
\end{align}
Then looking at $C$ and introducing the change of index $j=k -l$,
\begin{align}
\nonumber
    C &= \frac{\alpha_K^3 L^2}{4 R}\sqrt{1-\beta_1}  \sum_{k=1}^K \sum_{j=1}^{k}
    \esp{\norm{u_{j}}^2_2}\sum_{l=k - j}^{k-1} \beta_1^l \sqrt{l} \\
\nonumber
    &= \frac{\alpha_K^3 L^2}{4 R}\sqrt{1-\beta_1}  \sum_{j=1}^K \esp{\norm{u_{j}}^2_2}\sum_{k=j}^{K}
    \sum_{l=k - j}^{k-1} \beta_1^l \sqrt{l}\\
\nonumber
    &= \frac{\alpha_K^3 L^2}{4 R}\sqrt{1-\beta_1}  \sum_{j=1}^K \esp{\norm{u_{j}}^2_2}
    \sum_{l=0}^{K-1} \beta_1^l \sqrt{l}\sum_{k=j}^{j + l} 1\\
\nonumber
    &= \frac{\alpha_K^3 L^2}{4 R}\sqrt{1-\beta_1} \sum_{j=1}^K \esp{\norm{u_{j}}^2_2}
    \sum_{k=0}^{K-1} \beta_1^k \sqrt{k} (k + 1)\\
    &\leq \frac{\alpha_K^3 L^2}{R}  \sum_{j=1}^K \esp{\norm{u_{j}}^2_2} \frac{\beta_1}{(1 - \beta_1)^{2}},
\end{align}
using Lemma~\ref{app:lemma:sum_geom_32}. Finally, using Lemma~\ref{app:lemma:sum_ratio}, we get
\begin{equation}
    C \leq  \frac{\alpha_K^3 L^2 \beta_1}{R (1 - \beta_1)^{3}(1 - \beta_1/\beta_2)}\sum_{i\in[d]} \left( \ln\left(1 + \frac{v_{K, i}}{\eps}\right) - K \log(\beta_2)\right).
    \label{app:eq:common_c_bound}
\end{equation}

Finally, introducing the same change of index $j = n-k$ for $D$, we get
\begin{align}
\nonumber
    D &= \frac{3 \alpha_K R}{\sqrt{1-\beta_1}} \sum_{k=1}^K \sum_{j=1}^k \bbratio^{k -j } \sqrt{1 + k - j} \esp{\norm{U_j}_2^2}\\
\nonumber
        &= \frac{3 \alpha_K R}{\sqrt{1-\beta_1}} \sum_{k=1}^K \esp{\norm{U_j}_2^2} \sum_{k=j}^K \bbratio^{k -j } \sqrt{1 + k - j}\\
        &\leq\frac{6 \alpha_K R}{\sqrt{1-\beta_1}}\sum_{k=1}^K \esp{\norm{U_j}_2^2} \frac{1}{(1 - \beta_1 / \beta_2)^{3/2}}, 
\end{align}
using Lemma~\ref{app:lemma:sum_geom_sqrt}. Finally, using Lemma~\ref{app:lemma:sum_ratio}, we get 
\begin{equation}
    D \leq  \frac{6 \alpha_K R }{\sqrt{1 - \beta_1}(1 - \beta_1 / \beta_2)^{3/2}} \sum_{i\in[d]}\left( \ln\left(1 + \frac{v_{K, i}}{\eps}\right) - K \ln(\beta_2)\right).
    \label{app:eq:common_d_bound}
\end{equation}

This is as far as we can get without having to use the specific form of $\alpha_K$ given by either \eqref{app:eq:alpha_Adam} for Adam or \eqref{app:eq:alpha_AdaGrad} for Adagrad. We will now split the proof for either algorithm.

\textbf{Adagrad.} For Adagrad, we have $\alpha_n = (1 - \beta_1) \alpha$, $\beta_2 = 1$ and $\Omega_n \leq \sqrt{N}$ so that,
\begin{align}
\nonumber
    A &= \frac{1}{2 R} \sum_{k=1}^K \frac{\alpha_n}{\Omega_n}\sum_{j=1}^{n} \beta_1^{n -j} \esp{\norm{G_{j}}_2^2}\\
\nonumber
    &\geq \frac{\alpha(1 - \beta_1)}{2 R \sqrt{N}} \sum_{j=1}^N \esp{\norm{G_{j}}_2^2} \sum_{n=j}^{N} \beta_1^{n - j} \\
\nonumber
    &= \frac{\alpha}{2 R \sqrt{N}} \sum_{j=1}^N (1 - \beta_1^{N - j + 1}) \esp{\norm{G_{j}}_2^2}\\
\nonumber
    &= \frac{\alpha}{2 R \sqrt{N}} \sum_{j=1}^{N} (1 - \beta_1^{N - j +1}) \esp{\norm{\nabla F(x_{j-1})}_2^2}\\
    &= \frac{\alpha}{2 R\sqrt{N}} \sum_{j=0}^{N-1} (1 - \beta_1^{N - j}) \esp{\norm{\nabla F(x_{j})}_2^2}.
    \label{app:eq:adagrad_tau_appear}
\end{align}

Reusing \eqref{app:eq:bound_sum_prob} and \eqref{app:eq:deftn} from the Adam proof, and introducing $\tau$ as in \eqref{eq:def_tau}, we immediately have
\begin{align}
\label{app:eq:bound_a_adagrad}
    A \geq \frac{\alpha \tilde{K}}{2 R \sqrt{N}}\esp{\norm{\nabla F(x_{\tau})}_2^2}.
\end{align}
Further notice that for any coordinate $i\in[d]$, we have $v_N \leq N R^2$, besides $\alpha_N = (1 -\beta_1) \alpha$, so that putting together \eqref{app:eq:common_big}, \eqref{app:eq:bound_a_adagrad}, \eqref{app:eq:common_b_bound}, \eqref{app:eq:common_c_bound} and \eqref{app:eq:common_d_bound}
with $\beta_2 = 1$, we get
\begin{align}
     \esp{\norm{\nabla F(x_{\tau})}_2^2} &\leq
    2 R \sqrt{K} \frac{F_0 - F_*}{\alpha \tilde{K}}  + \frac{\sqrt{K}}{\tilde{K}} E \ln\left(1 + \frac{K R^2}{\eps}\right),
\end{align}
with
\begin{equation}
    E = \alpha d R L + 
     \frac{2 \alpha^2 d L^2 \beta_1}{1 - \beta_1} + 
     \frac{12 d R^2}{1 - \beta_1}.
\end{equation}
This conclude the proof of Theorem \ref{thm:ada}.

\textbf{Adam.}
For Adam, using \eqref{app:eq:alpha_Adam}, we have $\alpha_k = (1 - \beta_1) \Omega_k \alpha$. Thus, we can simplify the $A$ term from \eqref{app:eq:common_big}, also using the usual change of index $j = k - l$, to get
\begin{align}
\nonumber
    A &= \frac{1}{2 R} \sum_{k=1}^K \frac{\alpha_k}{\Omega_n}\sum_{j=1}^{k} \beta_1^{k -j} \esp{\norm{G_{j}}_2^2}\\
\nonumber
    &= \frac{\alpha(1 - \beta_1)}{2 R} \sum_{j=1}^K \esp{\norm{G_{j}}_2^2} \sum_{k=j}^{K} \beta_1^{k - j} \\
\nonumber
    &= \frac{\alpha}{2 R} \sum_{j=1}^K (1 - \beta_1^{K - j + 1}) \esp{\norm{G_{j}}_2^2}\\
\nonumber
    &= \frac{\alpha}{2 R} \sum_{j=1}^{K} (1 - \beta_1^{K - j +1}) \esp{\norm{\nabla F(x_{j-1})}_2^2}\\
    &= \frac{\alpha}{2 R} \sum_{j=0}^{K-1} (1 - \beta_1^{K - j}) \esp{\norm{\nabla F(x_{j})}_2^2}.
\end{align}

If we now introduce $\tau$ as in the theorem, we can first notice that
\begin{align}
\label{app:eq:bound_sum_prob}
 \sum_{j=0}^{K-1} (1 - \beta_1^{K - j}) = K - \beta_1 \frac{1 - \beta_1^{K}}{1 - \beta_1} \geq K - \frac{\beta_1}{1 - \beta_1}.
\end{align}
Introducing
\begin{equation}
\label{app:eq:deftn}
    \tilde{K} = K - \frac{\beta_1}{1 - \beta_1},
\end{equation}
we then have
\begin{align}
\label{app:eq:bound_a_adam}
    A \geq \frac{\alpha \tilde{K}}{2 R}\esp{\norm{\nabla F(x_{\tau})}_2^2}.
\end{align}
Further notice that for any coordinate $i\in[d]$, we have $v_{K, i} \leq \frac{R^2}{1 - \beta_2}$, besides $\alpha_K \leq \alpha \frac{1- \beta_1}{\sqrt{1 - \beta_2}}$, so that putting together \eqref{app:eq:common_big}, \eqref{app:eq:bound_a_adam}, \eqref{app:eq:common_b_bound}, \eqref{app:eq:common_c_bound} and \eqref{app:eq:common_d_bound} we get
\begin{align}
     \esp{\norm{\nabla F(x_{\tau})}_2^2} &\leq
    2 R \frac{F(x_0) - F^*}{\alpha \tilde{K}}  + \frac{E}{\tilde{K}} \left(\ln\left(1 + \frac{R^2}{\eps ( 1- \beta_2)}\right) - N \log(\beta_2)\right),
\end{align}
with
\begin{equation}
    E = \frac{\alpha d R L(1- \beta_1)}{(1 - \beta_1 / \beta_2)(1 - \beta_2)} + 
     \frac{2 \alpha^2 d L^2 \beta_1}{(1 - \beta_1 / \beta_2) (1 - \beta_2)^{3/2}}+
     \frac{12 d R^2 \sqrt{1-\beta_1}}{(1 - \beta_1 / \beta_2)^{3/2}\sqrt{1 - \beta_2}}.
\end{equation}
This conclude the proof of Theorem \ref{thm:Adam}.
\end{proof}

\textbf{Remark.} The AdaGrad and Adam is of $\mathcal{O}(\ln K / \sqrt{K})$ sublinear convergence rate. Increasing $\beta_1$ (adding momentum) always deteriorates the bounds. 
Table \ref{tab:cov_first} concludes the theoretical convergence results of first-order methods. 
We understand that the convergence order conclusion might not fully capture the intuitive advantages of adaptive algorithms over SGD, especially with respect to their empirical performance. Note however that this theoretical order is consistent with the literature. For example, Robbins and Monro \citep{robbins1951stochastic} established an $\mathcal{O}(1/\sqrt{K})$ convergence rate for stochastic gradient descent (SGD) under certain conditions. Momentum-based SGD methods also achieve a similar $\mathcal{O}(1/\sqrt{K})$ rate, as shown in \citep{yang2016unified}. Adaptive learning rate algorithms, such as Adam and Adagrad, have gained popularity due to their empirical acceleration over vanilla SGD. Theoretical analyses, including \citep{defossez2022simple}, prove an $\mathcal{O}(\ln K / \sqrt{K})$ convergence rate for Adam and Adagrad, while AMSGrad achieves the same rate as demonstrated in \citep{reddi2019convergence}. 
To clarify, while the convergence rate $\mathcal{O}(\ln K / \sqrt{K})$ of Adaptive methods does indeed have a slower asymptotic rate compared to $\mathcal{O}(1 / \sqrt{K})$ of SGD-momentum, the presence of the logarithmic term becomes significant primarily for larger values of K. As I stated before, since the assumptions used for SGD and Adaptive methods are different, we could not conclude which algorithm converges faster based on the theoretical results. Importantly, the real-world performance of adaptive algorithms often transcends theoretical convergence rates due to their ability to adjust learning rates based on the data's geometry and the gradient history.
In terms of practical performance, adaptive algorithms (such as Adagrad, Adam, etc.) have been shown to perform better than SGD in several scenarios due to the following key reasons:
\begin{itemize}
    \item[i)] \textbf{Individualized Learning Rates:} Adaptive methods assign different learning rates to each parameter based on its historical gradients. This enables the method to give larger updates to infrequent parameters (which often carry useful information) and smaller updates to frequent parameters, improving overall efficiency.
    \item[ii)] \textbf{Efficient Use of Gradient Information:} Adaptive methods often utilize gradient scaling to control the impact of noisy or sparse gradients, which allows them to avoid the slow progress SGD might experience in such scenarios, especially when dealing with sparse data or noisy gradients.
    \item[iii)] \textbf{Robustness to Hyperparameter Choices:} Adaptive algorithms tend to be more robust to the choice of initial learning rate, as the learning rates are adapted throughout the training process. This contrasts with SGD, where choosing the right learning rate can be crucial for convergence and performance, requiring more manual tuning.
    \item[iv)] \textbf{Empirical Evidence:} Several studies (e.g., \cite{kingma2014Adam, reddi2019convergence, schmidt2021descending}) have empirically shown that adaptive methods converge faster and require fewer iterations to achieve lower loss compared to SGD, particularly for complex, non-convex optimization problems. These results are often seen in practical deep learning applications, where adaptive methods have become the default choice due to their superior performance in terms of training efficiency.
\end{itemize}

\begin{table}[t!]
    \centering
    {\fontsize{8}{12}\selectfont
    \begin{tabular}{|>{\centering\arraybackslash}m{0.7cm}|>{\centering\arraybackslash}m{1.2cm}|>{\centering\arraybackslash}m{1cm}|>{\centering\arraybackslash}m{0.8cm}|>{\centering\arraybackslash}m{0.8cm}|>{\centering\arraybackslash}m{1cm}|>{\centering\arraybackslash}m{1.2cm}|>{\centering\arraybackslash}m{3cm}|>{\centering\arraybackslash}m{3cm}|}
        \hline 
        &\multirow{2}{*}{Method}&\multicolumn{5}{c|}{Assumptions}&\multirow{2}{*}{Convergence rate}&\multirow{2}{*}{Comment}\\
        \cline{3-7}
         &&Convex&Lip. smooth&Hessian bound&Variance bound&Step size&&\\
        \hline
        \rule{0pt}{2cm}\multirow{12}{*}{\rotatebox{90}{First-order Methods}}&\multirow{3.8}{*}{SGD}&strongly convex&$\scalebox{0.75}{\usym{2613}}$&$\scalebox{0.75}{\usym{2613}}$&$\checkmark$&$\propto\frac{1}{k+1}$&sub-linear: $\mathcal{O}(\frac{1}{K+1})$&\multirow{6}{=}{\vspace{-0.25in}\\The step size of SGD/SGD-M is inversely proportional to the iteration counter while that of Adaptive methods is constant or bounded, which makes it hard theoretically to compare these methods in terms of converegence speed. However, several studies \citep{kingma2014Adam,reddi2019convergence,schmidt2021descending} have empirically shown that Adaptive methods converge faster time- and iteration-wise and achieve lower training loss in fewer iterations than SGD in various applications.}\\
        \cline{3-8}
        \rule{0pt}{2cm}&&$\scalebox{0.75}{\usym{2613}}$&$\checkmark$&$\scalebox{0.75}{\usym{2613}}$&$\checkmark$&$\propto\frac{1}{\sqrt{k}}$&\vphantom{This is a long content.This is a long content.This is a long content.}sub-linear: $\mathcal{O}(\frac{1}{\sqrt{K}})$&\\
        \cline{2-8}

        \rule{0pt}{2cm}&\multirow{3.8}{*}{SGD-M}&convex&$\scalebox{0.75}{\usym{2613}}$&$\scalebox{0.75}{\usym{2613}}$&$\checkmark$&$\propto\frac{1}{\sqrt{k+1}}$&sub-linear: $\mathcal{O}(\frac{1}{\sqrt{K+1}})$&\\
        \cline{3-8}
        \rule{0pt}{2cm}&&$\scalebox{0.75}{\usym{2613}}$&$\checkmark$&$\scalebox{0.75}{\usym{2613}}$&$\checkmark$&$\propto\frac{1}{\sqrt{k}}$&sub-linear: $\mathcal{O}(\frac{1}{\sqrt{K}})$&\\
        \cline{2-8}

        \rule{0pt}{2cm}&Adagrad&$\scalebox{0.75}{\usym{2613}}$&$\checkmark$&$\scalebox{0.75}{\usym{2613}}$&$\checkmark$&Const.&sub-linear: $\mathcal{O}(\frac{\ln K}{\sqrt{K}})$&\\
        \cline{2-8}
        \rule{0pt}{2cm}&Adam&$\scalebox{0.75}{\usym{2613}}$&$\checkmark$&$\scalebox{0.75}{\usym{2613}}$&$\checkmark$&$\!\!\!\!\propto\!\!\sqrt{\frac{1-\beta_2^k}{1-\beta_2}}$&sub-linear: $\mathcal{O}(\frac{\ln K}{\sqrt{K}})$&\\
        \hline 
    \end{tabular}
    }
    \vspace{-0.1in}
    \caption{Comparison of convergence results for different first-order optimization methods. For convex functions, the stopping criterion is based on the condition that the difference between the current function value $F(x)$ and the optimal value $F(x^*)$ becomes sufficiently small. In the case of non-convex functions, the stopping criterion is determined by the gradient norm $||\nabla F(x)||$, which should be sufficiently small.}
    \vspace{-0.2in}
    \label{tab:cov_first}
\end{table}

\paragraph{Generalization Ability.}It should be noted that theoretical guarantees for generalization are not commonly found in the literature concerning adaptive learning rate methods. Some empirical studies \citep{wilson2017marginal,keskar2016largebatch,luo2019adaptive} indicate that adaptive learning rate methods can generalize worse than standard SGD. While \citep{zhou2020towards,zou2021understanding} offer some theoretical insights into why Adam may generalize less effectively than SGD, these works do not provide detailed stability bounds for adaptive methods. Given that our survey focuses primarily on theoretical proofs of generalization bounds for optimization methods, we will not explore this issue in depth.

\subsection{Second-order methods} \label{sec:so}
For the theoretical analysis of the second-order methods, differently than was the case for the previously analysed first-order methods, we focus on the full-batch version instead of the stochastic version. Let $F:\mathbb{R}^n\rightarrow\mathbb{R}$ be the continuously differentiable objective function. Consider an unconstrained optimization problem:
\begin{equation*}
    \min{F(x)},\quad x\in\mathbb{R}^n.
\end{equation*}

In this section, we discuss the covergence guarantee of Newton's method and BFGS under nonconvex setting and L-BFGS under convex setting.

\subsubsection{Newton's method}

In the Newton's method (\ref{eq:newton}), we compute the inverse of Hessian matrix $\nabla^2F(x_k)$ directly to formulate the update rule:
\begin{equation}\label{eq:newton}
      x_{k+1}=x_k-\nabla^2F(x_k)^{-1}\nabla F(x_k). 
  \end{equation}

\begin{theorem}[Convergence of Newton's Method; Nonconvex Setting; Section 3.3.2 in~\citep{wright2006numerical}]\label{thm:New_conv}
  Let $F\in\mathcal{C}^3$ and Hessian matrix statisfies
  $$\norm{\nabla^2F(x_k)^{-1}}\leq h.$$
  $x^*$ is the optimal point such that $\nabla F(x^*)=0$. Given a proper initial point $x_0\in\mathbb{R}^d$, there exists $\alpha$ such that the Newton's method iterates
  \begin{equation}\label{eq:newton}
      x_{k+1}=x_k-\nabla^2F(x_k)^{-1}\nabla F(x_k),
  \end{equation}
  converge according to
  $$\norm{x_{k+1}-x^*}\leq\alpha\norm{x_{k}-x^*}^2.$$
\end{theorem}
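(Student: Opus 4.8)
The plan is to reduce everything to a single integral identity and then control it with the two quantitative hypotheses: the bound $h$ on the inverse Hessian and the Lipschitz continuity of the Hessian supplied by the $\mathcal{C}^3$ assumption. First I would subtract $x^*$ from the Newton update and factor out the inverse Hessian, writing
\begin{align*}
    x_{k+1}-x^*=x_k-x^*-\nabla^2 F(x_k)^{-1}\nabla F(x_k)=\nabla^2 F(x_k)^{-1}\left[\nabla^2 F(x_k)(x_k-x^*)-\nabla F(x_k)\right].
\end{align*}
The key step is to rewrite the bracket. Using $\nabla F(x^*)=0$ together with the fundamental theorem of calculus along the segment joining $x_k$ and $x^*$, I would write $\nabla F(x_k)=\int_0^1 \nabla^2 F(x_k+t(x^*-x_k))(x_k-x^*)\,dt$, so that the bracket collapses to $\int_0^1\left[\nabla^2 F(x_k)-\nabla^2 F(x_k+t(x^*-x_k))\right](x_k-x^*)\,dt$.

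Next I would take norms and apply the two bounds. The factor $\nabla^2 F(x_k)^{-1}$ contributes $h$. Since $F\in\mathcal{C}^3$, its third derivatives are bounded on a neighborhood of $x^*$, which makes the Hessian locally Lipschitz with some constant $L$; hence $\norm{\nabla^2 F(x_k)-\nabla^2 F(x_k+t(x^*-x_k))}\leq Lt\norm{x_k-x^*}$. Pulling these through the integral and evaluating $\int_0^1 t\,dt=\frac{1}{2}$ yields
\begin{align*}
    \norm{x_{k+1}-x^*}\leq h\int_0^1 Lt\norm{x_k-x^*}^2\,dt=\frac{hL}{2}\norm{x_k-x^*}^2,
\end{align*}
which is exactly the claimed bound with $\alpha=hL/2$.

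The main obstacle is making the phrase ``proper initial point'' precise: the constants $h$ and $L$ are only valid on a neighborhood of $x^*$, so I must guarantee that the whole trajectory stays there. The plan is to fix a closed ball $B$ around $x^*$ on which both $\norm{\nabla^2 F(\cdot)^{-1}}\leq h$ and the Lipschitz estimate hold, and then require the initial radius to satisfy $\alpha\norm{x_0-x^*}<1$. An induction then shows that the one-step estimate $\norm{x_{k+1}-x^*}\leq \alpha\norm{x_k-x^*}^2$ forces $\norm{x_{k+1}-x^*}<\norm{x_k-x^*}$, so the iterates never leave $B$ and the constants remain applicable at every step; the same contraction gives $x_k\to x^*$. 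I would also note that the integral identity requires the entire segment from $x_k$ to $x^*$ to lie in $B$, which holds because $B$ is convex. The only genuinely delicate point is extracting the local Lipschitz constant for the Hessian from the $\mathcal{C}^3$ hypothesis and confirming that the inverse-Hessian bound is stable on the neighborhood, both of which follow from continuity and compactness on the closed ball.
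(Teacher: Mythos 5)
Your proposal is correct and follows essentially the same route as the paper: both subtract $x^*$ from the Newton update, factor out $\nabla^2 F(x_k)^{-1}$ (contributing the bound $h$), and bound the remaining term as the quadratic Taylor remainder of $\nabla F$ about $x_k$ evaluated at $x^*$. The only difference is that the paper compresses the remainder estimate into an $\mathcal{O}(\norm{x_k-x^*}^2)$ statement with an unspecified constant $c$, whereas you make it explicit via the integral form and the local Lipschitz constant of the Hessian, yielding $\alpha = hL/2$, and you also supply the induction keeping the iterates in the neighborhood --- details the paper's proof leaves implicit.
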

\begin{proof}
For the $k$-th iteration, perform second-order Taylor expansion of the function $F$ as:
\begin{equation}\label{eq:bfgs_taylor}
    F(x)\!\approx\!F(x_k)\!+\!\nabla F(x_k)^T(x\!-\!x_k)\!+\!\frac{1}{2}(x\!-\!x_k)^T\nabla^2F(x_k)(x\!-\!x_k).
\end{equation}

  By Talyor expansion
  $$\nabla F(x^*)-\nabla F(x_k)-\nabla^2F(x_k)^{-1}(x^*-x_t)=\mathcal{O}(\norm{x^*-x_t}^2),$$
  then there exits $\epsilon>0$, $c>0$ such that if $\norm{x_0-x^*}<\epsilon$,
  \begin{equation*}\label{eq:taylor_newton}
      \nabla F(x^*)-\nabla F(x_k)-\nabla^2F(x_k)^{-1}(x^*-x_t)\leq c\norm{x^*-x_k}^2.
  \end{equation*}
  \begin{align*}
      \norm{x_{k+1}\!-\!x^*}=&\norm{x_{k}\!-\!x^*\!-\!\nabla^2F(x_k)^{-1}\nabla F(x_{k})}\\
      \leq&\norm{\nabla^2F(x_k)^{-1}}\|\nabla F(x^*)-\nabla F(x_k)-\\
      &\nabla^2F(x_k)^{-1}(x^*-x_k)\|\\
      \leq& ch\norm{x^*-x_k}^2
  \end{align*}
\end{proof}

Although Newton's method demonstrates a quadratic convergence rate (Theorem \ref{thm:New_conv}), surpassing all first-order methods, the computation of the inverse of the Hessian matrix, denoted as $\nabla^2F(x_t)^{-1}$, incurs a cubic time complexity of $\mathcal{O}(d^3)$. This results in significant computational burdens, particularly when dealing with large model sizes. The heavy computation load associated with the cubic time complexity of computing the inverse Hessian compromises efficiency and scalability of the method.

\subsubsection{Broyden–Fletcher–Goldfarb–Shanno algorithm (BFGS)}
To decrease the time complexity of the Newton's method, but at the same time benefit from the second-order information, Broyden, Fletcher, Goldfarb and Shanno approximate Hessian matrix with pseudo-Hessian matrix $\vect{B}_k$, to reduce the time complexity of the method from cubic to quadratic in the dimension $d$ (the time complexity is dictated by the operation of matrix inversion). In this section, we are going to show the motivation and mathematical proof of convergence for BFGS. 

Let $F:\mathbb{R}^n\rightarrow\mathbb{R}$ be continuously differentiable. Consider an unconstrained optimization problem:
\begin{equation*}
    \min{F(x)},\quad x\in\mathbb{R}^n.
\end{equation*}
For the $k$-th iteration, perform second-order Taylor expansion of the function $F$:
\begin{equation}\label{eq:bfgs_taylor}
    F(x)\!=\!F(x_k)\!+\!\nabla F(x_k)^T(x\!-\!x_k)\!+\!\frac{1}{2}(x\!-\!x_k)^T\nabla^2F(x_k)(x\!-\!x_k).
\end{equation}

Define the update of parameter as $\vect{p}=x-x_k$ and construct a quadratic function $m_k(\vect{p})$ for the $k$-th iteration based on the Taylor expansion (\ref{eq:bfgs_taylor}):
\begin{equation}\label{eq:m_k}
    m_k(\vect{p})=F(x_{k})+\nabla F(x_k)^T\vect{p}+\frac{1}{2}\vect{p}^T\vect{B}_k\vect{p}.
\end{equation}
We can get the update direction by minimizing (\ref{eq:m_k}):
\begin{equation*}
    \vect{p}_k=-\vect{B}_k^{-1}\nabla F(x_k).
\end{equation*}
Then
\begin{equation*}
    x_{k+1}=x_k+\alpha_k\vect{p}_k,
\end{equation*}
where step size $\alpha_k$ comes from backtracking line search. 

Now we have $x_{k+1}$ and we can construct the quadratic function for ($k$+1)-th iteration:
\begin{equation}\label{eq:m_k+1}
    m_{k+1}(\vect{p})=F(x_{k+1})+\nabla F(x_{k+1})^T\vect{p}+\frac{1}{2}\vect{p}^T\vect{B}_{k+1}\vect{p}.
\end{equation}

Because $m_{k+1}(\vect{p})$ should have the same gradient as function $F$ at point $x_k$,
\begin{align*}
    \nabla m_{k+1}(-\alpha_k\vect{p}_k)&=\nabla F(x_{k+1})-\alpha_k\vect{B}_{k+1}\vect{p}_k=\nabla F(x_k).
\end{align*}
Let $ s_k\!=\!x_{k+1}\!-\!x_{k}\!=\!-\!\alpha_k\vect{p}_k$, $ y_k\!=\!\nabla F(x_{k+1})\!-\!\nabla F(x_{k})$,
\begin{equation}\label{eq:B_eq}
    \vect{B}_{k+1} s_k= y_k.
\end{equation}

Lets express the relation between $\vect{B}_{k+1}$ and $\vect{B}_k$ as follows
\begin{equation*}
 \vect{B}_{k+1}=\vect{B}_k+\vect{E}_k,
\end{equation*}
where $\vect{E}_k$ is a rank 2 matrix. Define
\begin{equation}\label{eq:E1}
   \vect{E}_k=\alpha u_ku_k^T+\beta v_kv_k^T, 
\end{equation}
and combine it with (\ref{eq:B_eq}) to obtain:
\begin{align}\label{eq:B_eq2}
    (\vect{B}_k+\alpha u_ku_k^T+\beta v_kv_k^T) s_k&= y_k\notag\\
    \alpha(u_k^T s_k)u_k+\beta(v_k^T y_k)\vect{v_k}&= y_k-\vect{B}_k s_k.
\end{align}
Equation (\ref{eq:B_eq2}) implies that $u_k$ and $v_k$ are not unique. Assume $u_k$ and $v_k$ are parallel with $\vect{B}_k s_k$ and $ y_k$, respectively, and denote $u_k=\gamma\vect{B}_k s_k$ and $v_k=\theta y_k$.
Substitute them into (\ref{eq:E1}) and (\ref{eq:B_eq2}):

\begin{align}
&\vect{E}_k=\alpha\gamma^2\vect{B}_k s_k s_k^T\vect{B}_k+\beta\theta^2 y_k y_k^T,\label{eq:E2}\\ 
    &(\alpha\gamma^2 s_k^T\vect{B}_k s_k+1)\vect{B}_k s_k+(\beta\theta^2 y_k^T s_k-1) y_k=\vect{0}.
\end{align}

Therefore, 

\begin{align*}
    \alpha\gamma^2 s_k^T\vect{B}_k s_k+1=0&\Longrightarrow \alpha\gamma^2=-\frac{1}{ s_k^T\vect{B}_k s_k},\\
    \beta\theta^2 y_k^T s_k-1=0&\Longrightarrow
    \beta\theta^2=\frac{1}{ y_k y_k^T}.
\end{align*}

Further substitute this into (\ref{eq:E2}) to obtain

\begin{align}\label{eq:BFGS_update}
    \vect{B}_{k+1}=\vect{B}_k-\frac{\vect{B}_k s_k s_k^T\vect{B}_k}{ s_k^T\vect{B}_k s_k}+\frac{ y_k y_k^T}{ y_k^T s_k}.
\end{align}

Inverse $\vect{B}_{k\!+\!1}$ with Sherman Morrison formula $(\vect{A}\!+u\vect{C}v)^{\!-\!1}\!=\!\vect{A}^{\!-\!1}\!-\!(\vect{A}^{\!-\!1}uv\vect{A}^{\!-\!1})/(\vect{C}^{\!-\!1}+v\vect{A}^{\!-\!1}u)$ (below we remove subscripts to simplify the notation):

\begin{align}\label{eq:BFGS_inv1}
    &\left(\vect{B}\!-\!\frac{\vect{B} s s^T\vect{B}}{ s^T\vect{B} s}\!+\!\frac{ y y^T}{ y^T s}\right)^{\!-\!1}\notag\\
    \!=\!&\left(\vect{B}\!+\!\frac{ y y^T}{ y^T s}\right)^{\!-\!1}\!+\!\frac{\left(\vect{B}\!+\!\frac{ y y^T}{ y^T s}\right)^{\!-\!1}\vect{B} s s^T\vect{B}\left(\vect{B}\!+\!\frac{ y y^T}{ y^T s}\right)^{\!-\!1}}{ s^T\vect{B} s\!-\! s^T\vect{B}(\vect{B}+ y^T\vect{B}^{\!-\!1} y)\vect{B} s}.
\end{align}
Since 
\begin{align}\label{eq:BFGS_inv2}
    &\left(\vect{B}\!+\!\frac{ y y^T}{ y^T s}\right)^{\!-\!1}\vect{B} s s^T\vect{B}\left(\vect{B}\!+\!\frac{ y y^T}{ y^T s}\right)^{\!-\!1}\notag\\
    \!=\!&\left(\vect{B}^{\!-\!1}\!-\!\frac{\vect{B}^{\!-\!1} y y^T\vect{B}^{\!-\!1}}{ y^Ts\!+\! y^T\vect{B}^{\!-\!1} y}\right)\vect{B} s s^T\vect{B}\left(\vect{B}^{\!-\!1}\!-\!\frac{\vect{B}^{\!-\!1} y y^T\vect{B}^{\!-\!1}}{ y^Ts\!+\! y^T\vect{B}^{\!-\!1} y}\right)\notag\\
    \!=\!& s s^T\!-\!\frac{( y^T s)( s y^T\vect{B}^{\!-\!1}\!+\!\vect{B}^{\!-\!1} y s^T)}{ y^Ts\!+\! y^T\vect{B}^{\!-\!1} y}\!+\!\frac{( y^T s)^2\vect{B}^{\!-\!1} y y^T\vect{B}^{\!-\!1}}{( y^Ts\!+\! y^T\vect{B}^{\!-\!1} y)^2},
\end{align}
and
\begin{align}\label{eq:BFGS_inv3}
    & s^T\vect{B} s\!-\! s^T\vect{B}(\vect{B}+ y^T\vect{B}^{\!-\!1} y)\vect{B} s\notag\\
    =& s^T\vect{B} s\!-\! s^T\vect{B}\left(\vect{B}^{\!-\!1}\!-\!\frac{\vect{B}^{\!-\!1} y y^T\vect{B}^{\!-\!1}}{ y^Ts\!+\! y^T\vect{B}^{\!-\!1} y}\right)\vect{B} s\notag\\
    =&\frac{( y^T s)^2}{ y^Ts\!+\! y^T\vect{B}^{\!-\!1} y},
\end{align}
we can now combine (\ref{eq:BFGS_inv1}), (\ref{eq:BFGS_inv2}) and (\ref{eq:BFGS_inv3}) to obtain
\begin{align}
    &\left(\vect{B}\!-\!\frac{\vect{B} s s^T\vect{B}}{ s^T\vect{B} s}\!+\!\frac{ y y^T}{ y^T s}\right)^{\!-\!1}\notag\\
    =&\vect{B}^{\!-\!1}\!+\!\frac{ s s^T}{ y^T s}\!+\!\frac{ s s^T y^T\vect{B}^{\!-\!1} y}{( y^T s)^2}\!-\!\frac{ s y^T\vect{B}^{\!-\!1}}{ y^T s}\!-\!\frac{\vect{B}^{\!-\!1} y s^T}{ y^T s}\notag\\
    =&\vect{B}^{\!-\!1}\!\left(\vect{I}\!-\!\frac{ y s^T}{ y^T s}\right)\!-\!\frac{ s y^T\vect{B}^{\!-\!1}}{ y^T s}\left(\vect{I}\!-\!\frac{ y s^T}{ y^T s}\right)\!+\!\frac{ s s^T}{ y^T s}\notag\\
    =&\left(\vect{B}^{\!-\!1}-\frac{ s y^T\vect{B}^{\!-\!1}}{ y^T s}\right)\left(\vect{I}\!-\!\frac{ y s^T}{ y^T s}\right)\!+\!\frac{ s s^T}{ y^T s}\notag\\
    =&\left(\vect{I}\!-\!\frac{ s y^T}{ y^T s}\right)\vect{B}^{\!-\!1}\left(\vect{I}\!-\!\frac{ y s^T}{ y^T s}\right)\!+\!\frac{ s s^T}{ y^T s}.
\end{align}
Therefore, 
\begin{align}
    \vect{B}_{k+1}^{-1}=\left(\vect{I}\!-\!\frac{ s_k y_k^T}{ y_k^T s_k}\right)\vect{B}_k^{-1}\left(\vect{I}\!-\!\frac{ y_k s_k^T}{ y_k^T s_k}\right)\!+\!\frac{ s_k s_k^T}{ y_k^T s_k}.
\end{align}
Let $\vect{H}_k=\vect{B}_k^{-1}$ and $\rho=\frac{1}{ y_k^T s_k}$. We finally conclude the update formula for BFGS to be
\begin{align}
    x_{k+1}&=x_k-\alpha_k\vect{H}_k\nabla F(x_k)\label{eq:x_update}\\
    \vect{H}_{k+1}&=\left(\vect{I}-\rho s_k y_k^T\right)\vect{H}_k\left(\vect{I}-\rho y_k s_k^T\right)+\rho s_k s_k^T.\label{eq:H_update}
\end{align}

\paragraph{Convergence Analysis} We are going to show that BFGS is superlinear. The proof in this section is based on the work~\citep{broyden1973local} and is reorganized for easy reading.

The following Dennis-Mor\'{e} theorem is widely used to prove superlinear convergence rate because it is sufficient and necessary. We show it below but omit the proof.
\begin{theorem}[Dennis-Mor\'{e}\citep{dennis1974characterization}]\label{thm:dennis-more}
  Suppose $F$ is strictly twice differentiable at $x^*$, a zero of $F$, the Jacobian mapping $\nabla^2 F(x^*)$ is nonsigular. Let $\{\vect{B}_k\}$ be a sequence of matrices. The sequence $\{\vect{x_k}\}$ updating parameters as
  $$x_{k+1}=x_k-\vect{B}_k^{-1} \nabla F(x_k)$$
  converges to $x^*$ superlinearly if and only if 
  $$\lim_{k\to\infty}\frac{\norm{(\vect{B}_k-\nabla^2 F(x^*))(x_k-x^*)}}{\norm{x_k-x^*}}.$$
\end{theorem}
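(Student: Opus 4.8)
The plan is to compress the whole argument into one error recursion. Write $J:=\nabla^2F(x^*)$ (nonsingular by hypothesis), $e_k:=x_k-x^*$, and let $s_k:=x_{k+1}-x_k=-\vect{B}_k^{-1}\nabla F(x_k)$ be the quasi-Newton step, so that $s_k=e_{k+1}-e_k$ and $\vect{B}_k s_k=-\nabla F(x_k)$. Strict twice differentiability of $F$ at $x^*$ together with $\nabla F(x^*)=0$ gives the expansion $\nabla F(x_k)=Je_k+r_k$ with $\norm{r_k}=o(\norm{e_k})$ as $x_k\to x^*$. Substituting this and $s_k=e_{k+1}-e_k$ into $\vect{B}_k s_k=-\nabla F(x_k)$ and adding and subtracting $Js_k$ yields the master identity
\[
    Je_{k+1}=-(\vect{B}_k-J)s_k-r_k ,
\]
equivalently $e_{k+1}=-J^{-1}(\vect{B}_k-J)s_k-J^{-1}r_k$. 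The crucial feature is that only the \emph{fixed} nonsingular matrix $J^{-1}$ appears, so no control over $\norm{\vect{B}_k^{-1}}$ is needed.

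\textbf{Superlinear $\Rightarrow$ the limit condition.} Assuming $\norm{e_{k+1}}=o(\norm{e_k})$, I would first observe that $s_k=e_{k+1}-e_k$ forces $\norm{s_k}=\norm{e_k}(1+o(1))$, so $\norm{s_k}\asymp\norm{e_k}$ and $s_k=-e_k(1+o(1))$. Reading the master identity as $(\vect{B}_k-J)s_k=-Je_{k+1}-r_k$ then gives $\norm{(\vect{B}_k-J)s_k}\le\norm{J}\,\norm{e_{k+1}}+\norm{r_k}=o(\norm{e_k})=o(\norm{s_k})$, which is the Dennis--Mor\'e limit.

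\textbf{The limit condition $\Rightarrow$ superlinear.} Setting $\epsilon_k:=\norm{(\vect{B}_k-J)s_k}/\norm{s_k}\to0$, the identity bounds $\norm{e_{k+1}}\le\norm{J^{-1}}(\epsilon_k\norm{s_k}+o(\norm{e_k}))$, and inserting $\norm{s_k}\le\norm{e_{k+1}}+\norm{e_k}$ feeds $\norm{e_{k+1}}$ back into the right-hand side. For $k$ large enough that $\norm{J^{-1}}\epsilon_k<\tfrac12$ I would absorb the $\norm{e_{k+1}}$ term to the left, obtaining $\norm{e_{k+1}}\le c_k\norm{e_k}$ with $c_k\to0$, i.e.\ superlinear convergence.

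\textbf{Main obstacle.} Two points need care. First, the theorem is cleanest in terms of the step $s_k$, whereas the statement is phrased with $x_k-x^*=e_k$; the passage between the two quotients rests on the comparability $\norm{s_k}\asymp\norm{e_k}$ and the relation $s_k=-e_k(1+o(1))$, and I must derive these from the standing convergence hypothesis rather than assume the superlinearity I am trying to prove. Second, the absorption step in the reverse direction is only valid once $\epsilon_k$ is small, so it has to be stated for all sufficiently large $k$; the finitely many earlier indices are harmless since they do not affect the limit. The remaining content is just the strict-differentiability expansion and linear algebra with $J^{-1}$.
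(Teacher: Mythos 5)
First, note that the paper itself does not prove this theorem: it states the Dennis--Mor\'{e} characterization, explicitly omits the proof, and cites \citep{dennis1974characterization}, so there is no in-paper argument to compare yours against. Judged on its own, your argument is essentially the classical Dennis--Mor\'{e} proof and it is correct for the form of the theorem in which the limit condition is stated with the step $s_k=x_{k+1}-x_k$ in both numerator and denominator. The master identity $\nabla^2F(x^*)\,e_{k+1}=-(\vect{B}_k-\nabla^2F(x^*))s_k-r_k$ is right; the forward bound $\norm{(\vect{B}_k-\nabla^2F(x^*))s_k}\le\norm{\nabla^2F(x^*)}\,\norm{e_{k+1}}+\norm{r_k}=o(\norm{e_k})=o(\norm{s_k})$ is right (using $\norm{s_k}/\norm{e_k}\to1$); and the absorption step in the converse, valid for all sufficiently large $k$ under the standing hypotheses $x_k\to x^*$ and $x_k\neq x^*$, is right.

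The genuine gap is exactly the point you flag but then dismiss too quickly: the theorem as transcribed here puts $x_k-x^*$ in the numerator, and the comparability of norms $\norm{s_k}\asymp\norm{e_k}$ together with $s_k=-e_k(1+o(1))$ does \emph{not} let you trade $(\vect{B}_k-\nabla^2F(x^*))s_k$ for $(\vect{B}_k-\nabla^2F(x^*))e_k$. The two differ by $(\vect{B}_k-\nabla^2F(x^*))e_{k+1}$, and although $\norm{e_{k+1}}=o(\norm{e_k})$, nothing in the hypotheses bounds $\norm{\vect{B}_k}$, so this difference need not be $o(\norm{e_k})$; the same obstruction blocks the converse if one starts from the $e_k$-form of the condition. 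To close this you must either add a boundedness assumption on $\{\vect{B}_k\}$ or, better, prove the theorem in its original form with $s_k$ --- which your argument already does. The $e_k$-form printed in the statement is a mis-transcription of \citep{dennis1974characterization} (note also that the displayed limit is missing ``$=0$''). With the statement corrected to the $s_k$-form, your proof is complete modulo writing out the absorption inequality explicitly.
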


The following Corollary is equivalent to Dennis-Mor\'{e} Theorem.

\begin{corollary}\label{cor:dennis-more}
  Suppose $F$ is strictly twice differentiable at $x^*$, a zero of $F$, and the Jacobian mapping $\nabla^2 F(x^*)$ is nonsigular. Let $\{\vect{H}_k\}$ be a sequence of matrices. The sequence $\{\vect{x_k}\}$ updating parameters as
  $$x_{k+1}=x_k-\vect{H}_k \nabla F(x_k)$$
  converges to $x^*$ superlinearly if and only if 
  \begin{align}\label{eq:dennis-more}
      \frac{\norm{\left[\vect{H}_k\!-\!\nabla^2F(x^*)^{\!-\!1}\right]\left(\nabla F(x_{k+1})\!-\!\nabla F(x_{k})\right)}}{\norm{\nabla F(x_{k+1})\!-\!\nabla F(x_{k})}}=0.
  \end{align}
\end{corollary}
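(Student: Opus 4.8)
The plan is to deduce the corollary from the Dennis--Mor\'e theorem (Theorem~\ref{thm:dennis-more}) by passing from the pseudo-Hessians $\vect{B}_k$ to their inverses $\vect{H}_k=\vect{B}_k^{-1}$, and from the step $s_k:=x_{k+1}-x_k$ to the gradient difference $y_k:=\nabla F(x_{k+1})-\nabla F(x_k)$. Since $\vect{H}_k=\vect{B}_k^{-1}$, the iteration $x_{k+1}=x_k-\vect{H}_k\nabla F(x_k)$ is \emph{identical} to $x_{k+1}=x_k-\vect{B}_k^{-1}\nabla F(x_k)$, so ``converges superlinearly'' denotes the same event in both statements; by Theorem~\ref{thm:dennis-more} this event is already equivalent to the $\vect{B}_k$-condition. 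Hence it suffices to prove that \eqref{eq:dennis-more} is equivalent to the Dennis--Mor\'e condition, which I will do at the level of the step $s_k$.

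Two elementary facts furnish the bridge. The first is the resolvent identity $\vect{H}_k-\nabla^2F(x^*)^{-1}=-\vect{H}_k\bigl(\vect{B}_k-\nabla^2F(x^*)\bigr)\nabla^2F(x^*)^{-1}$, together with its mirror $\vect{B}_k-\nabla^2F(x^*)=-\vect{B}_k\bigl(\vect{H}_k-\nabla^2F(x^*)^{-1}\bigr)\nabla^2F(x^*)$; each converts one error operator into the other up to multiplication by bounded matrices. The second is the mean-value relation $y_k=\bar G_k s_k$ with $\bar G_k=\int_0^1\nabla^2F(x_k+t s_k)\,dt$. Because $x_k\to x^*$ and the Hessian is continuous, $\bar G_k\to\nabla^2F(x^*)$; since $\nabla^2F(x^*)$ is nonsingular, there exist constants $0<c_1\le c_2$ with $c_1\norm{s_k}\le\norm{y_k}\le c_2\norm{s_k}$ for large $k$, and moreover $\nabla^2F(x^*)s_k=y_k+e_k$ with $\norm{e_k}=o(\norm{s_k})$.

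For the direction ``\eqref{eq:dennis-more}$\Rightarrow$ Dennis--Mor\'e'' I would expand, using the mirror identity, $\bigl(\vect{B}_k-\nabla^2F(x^*)\bigr)s_k=-\vect{B}_k\bigl(\vect{H}_k-\nabla^2F(x^*)^{-1}\bigr)\nabla^2F(x^*)s_k$ and substitute $\nabla^2F(x^*)s_k=y_k+e_k$. The $y_k$-term is controlled by the hypothesis $\norm{(\vect{H}_k-\nabla^2F(x^*)^{-1})y_k}=o(\norm{y_k})=o(\norm{s_k})$, and the $e_k$-term by $\norm{e_k}=o(\norm{s_k})$; after pulling the bounded factors $\vect{B}_k$ and $\vect{H}_k$ out in norm, both are $o(\norm{s_k})$, giving $\norm{(\vect{B}_k-\nabla^2F(x^*))s_k}=o(\norm{s_k})$. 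The converse is the same computation with $\vect{B}_k\leftrightarrow\vect{H}_k$, $\nabla^2F(x^*)\leftrightarrow\nabla^2F(x^*)^{-1}$, and $s_k\leftrightarrow y_k$. Finally I would reconcile the step-form with the $x_k-x^*$ form appearing in Theorem~\ref{thm:dennis-more}: writing $\nabla F(x_k)=\tilde G_k(x_k-x^*)$ with $\tilde G_k\to\nabla^2F(x^*)$, and using that superlinear convergence forces $\norm{x_{k+1}-x^*}=o(\norm{x_k-x^*})$ and hence $\norm{s_k}\sim\norm{x_k-x^*}$, shows the two quotients vanish simultaneously.

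The conceptual content is light; the real work is bookkeeping. The main obstacle is securing the uniform boundedness of $\{\vect{B}_k\}$ and $\{\vect{H}_k\}$ near $x^*$ (equivalently, the eigenvalues of $\vect{B}_k$ bounded away from $0$ and $\infty$), since the $o(\cdot)$ estimates must survive left and right multiplication by these matrices; in the BFGS setting this is exactly the spectral control established separately in the convergence analysis. A secondary care is to avoid circularity when comparing $\norm{s_k}$, $\norm{y_k}$, and $\norm{x_k-x^*}$, which I sidestep by proving the equivalence entirely in terms of $s_k$ and invoking Theorem~\ref{thm:dennis-more} only after the step-form of the $\vect{B}_k$-condition is in hand.
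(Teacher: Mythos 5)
The paper itself supplies no proof of this corollary: it is simply asserted to be ``equivalent to'' Theorem~\ref{thm:dennis-more}, so there is no argument of record to compare yours against. Your route --- transferring the Dennis--Mor\'e condition from $(\vect{B}_k, s_k)$ to $(\vect{H}_k, y_k)$ via the resolvent identity $\vect{H}_k-\nabla^2F(x^*)^{-1}=-\vect{H}_k\bigl(\vect{B}_k-\nabla^2F(x^*)\bigr)\nabla^2F(x^*)^{-1}$ and its mirror, together with $y_k=\bar G_k s_k$ and $\bar G_k\to\nabla^2F(x^*)$ --- is the standard one, and the individual computations you outline are correct.

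Two gaps remain, and you have only half-acknowledged the first. The transfer in both directions multiplies the $o(\cdot)$ quantities on the left by $\vect{B}_k$ or $\vect{H}_k$, so it genuinely requires $\sup_k\norm{\vect{B}_k}<\infty$ and $\sup_k\norm{\vect{H}_k}<\infty$. Neither is a hypothesis of the corollary, which quantifies over an arbitrary matrix sequence $\{\vect{H}_k\}$ (not even assumed nonsingular, though you need nonsingularity to define $\vect{B}_k=\vect{H}_k^{-1}$ at all), and neither follows from \eqref{eq:dennis-more} alone, which controls $\vect{H}_k$ only along the single direction $y_k$. So as stated the corollary is not a formal consequence of the theorem; you must either add boundedness and nonsingularity as hypotheses or import them from the specific quasi-Newton context in which the corollary is later applied, and you should say which. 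Second, Theorem~\ref{thm:dennis-more} as printed is phrased with $x_k-x^*$ in place of the step $s_k$, and your final reconciliation uses $\norm{s_k}\sim\norm{x_k-x^*}$, which you derive from superlinear convergence. That is legitimate for the ``only if'' direction but circular for the ``if'' direction, where superlinear convergence is precisely the conclusion: with mere convergence, $(\vect{B}_k-\nabla^2F(x^*))s_k=o(\norm{s_k})$ does not yield $(\vect{B}_k-\nabla^2F(x^*))(x_k-x^*)=o(\norm{x_k-x^*})$. To close this you should either argue against the classical step-form of the Dennis--Mor\'e characterization (which is what the cited reference actually proves) or re-derive the implication ``step-form condition plus convergence implies superlinear convergence'' directly, rather than invoking Theorem~\ref{thm:dennis-more} as a black box in the form printed here.
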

In order to prove BFGS satisfies Corollary \ref{cor:dennis-more}, we need the following assumption.

\begin{assumption}[Nonconvex Setting]\label{asm:bfgs}
  Assume $F:\mathbb{R}^n\to\mathbb{R}$ is twice differentiable in an open set $\mathcal{D}$ and suppose that for some $x^*\in\mathcal{D}$ and $p>0$
\begin{align}
    \norm{\nabla^2F(x)-\nabla^2F(x^*)}\leq K\norm{x-x^*}^p.\notag
\end{align}
  Then for each $u,v\!\in\!\mathcal{D}$, define $\sigma\!=\!\max{\{\norm{v\!-\!x^*}^p,\norm{u\!-\!x^*}\}}$ and assume
  \begin{align}
      &\norm{\nabla F(v)-\nabla F(u)\!-\!\nabla^2F(x^*)(v\!-\!u)}\!\leq\! K\sigma\norm{v\!-\!u}.\notag
  \end{align}
  Moreover, if $\nabla^2F(x^*)$ is invertible, there exists $\epsilon>0$ and $\rho>0$ such that $\max{\{\norm{v\!-\!x^*}^p,\norm{u\!-\!x^*}\}}\leq\epsilon$ implies that 
  \begin{align}\label{eq:uv}
      (1/\rho)\norm{v\!-\!u}\leq\norm{\nabla F(v)-\nabla F(u)}\leq\rho\norm{v\!-\!u}.
  \end{align}
\end{assumption}

\begin{theorem}[Convergence of BFGS; Nonconvex Setting]\label{thm:bfgs_con}
Let the loss function $F$ satisfies Assumption \ref{asm:bfgs}. Then for the update rule of BFGS:
\begin{align}
    x_{k+1}&=x_k-\alpha_k\vect{H}_k\nabla F(x_k)\label{eq:x_update}\\
    \vect{H}_{k+1}&=\left(\vect{I}-\rho s_k y_k^T\right)\vect{H}_k\left(\vect{I}-\rho y_k s_k^T\right)+\rho s_k s_k^T\label{eq:H_update},
\end{align}
the approximate Hessian matrix satisfies
\begin{align}
      \frac{\norm{\left[\vect{H}_k\!-\!\nabla^2F(x^*)^{\!-\!1}\right]\left(\nabla F(x_{k+1})\!-\!\nabla F(x_{k})\right)}}{\norm{\nabla F(x_{k+1})\!-\!\nabla F(x_{k})}}=0.
  \end{align}
Therefore, the parameter $x_k$ generated by BFGS converges superlinearly to the optimal point $x^*$.
\end{theorem}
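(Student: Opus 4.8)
The plan is to verify the Dennis--Moré condition of Corollary \ref{cor:dennis-more}, which by that corollary is both necessary and sufficient for superlinear convergence, so establishing it finishes the theorem. Writing $G_*=\nabla^2F(x^*)$, $s_k=x_{k+1}-x_k$ and $y_k=\nabla F(x_{k+1})-\nabla F(x_k)$, the secant relation \eqref{eq:H_update} gives $\vect{B}_{k+1}s_k=y_k$. Passing between $\vect{H}_k=\vect{B}_k^{-1}$ and $\vect{B}_k$ via the two-sided bound \eqref{eq:uv}, it suffices to show the pseudo-Hessians obey
$$\lim_{k\to\infty}\frac{\norm{(\vect{B}_k-G_*)s_k}}{\norm{s_k}}=0.$$

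First I would move to the $G_*$-weighted coordinates $\tilde{\vect{B}}_k=G_*^{-1/2}\vect{B}_kG_*^{-1/2}$, $\tilde{s}_k=G_*^{1/2}s_k$, $\tilde{y}_k=G_*^{-1/2}y_k$, in which the secant equation reads $\tilde{\vect{B}}_{k+1}\tilde{s}_k=\tilde{y}_k$ and the target $G_*$ becomes the identity. The Lipschitz-type estimate in Assumption \ref{asm:bfgs} yields $\norm{\tilde{y}_k-\tilde{s}_k}\le c\,\sigma_k\norm{\tilde{s}_k}$, so the scalars $M_k=\norm{\tilde{y}_k}^2/(\tilde{y}_k^T\tilde{s}_k)$ and $m_k=(\tilde{y}_k^T\tilde{s}_k)/\norm{\tilde{s}_k}^2$ are both $1+O(\sigma_k)$. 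I would then introduce the Byrd--Nocedal potential $\psi(\tilde{\vect{B}})=\operatorname{tr}(\tilde{\vect{B}})-\ln\det(\tilde{\vect{B}})\ge0$ and compute its propagation under \eqref{eq:H_update} using the trace and determinant update formulas. Writing $\cos\theta_k$ for the angle between $\tilde{s}_k$ and $\tilde{\vect{B}}_k\tilde{s}_k$ and $q_k=(\tilde{s}_k^T\tilde{\vect{B}}_k\tilde{s}_k)/\norm{\tilde{s}_k}^2$, this gives the recursion
$$\psi(\tilde{\vect{B}}_{k+1})=\psi(\tilde{\vect{B}}_k)+\big(M_k-1-\ln m_k\big)+\Big(1-\tfrac{q_k}{\cos^2\theta_k}+\ln\tfrac{q_k}{\cos^2\theta_k}\Big)+\ln\cos^2\theta_k.$$
Since $1-t+\ln t\le0$ with equality only at $t=1$, the middle group is nonpositive and $\ln\cos^2\theta_k\le0$, while the first group is $O(\sigma_k)$; telescoping and using $\psi\ge0$ yields
$$\sum_k\Big[\ln\tfrac{1}{\cos^2\theta_k}+\big(\tfrac{q_k}{\cos^2\theta_k}-1-\ln\tfrac{q_k}{\cos^2\theta_k}\big)\Big]<\infty,$$
a sum of two nonnegative terms, forcing $\cos\theta_k\to1$ and $q_k\to1$. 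These two limits give $\norm{(\tilde{\vect{B}}_k-I)\tilde{s}_k}^2/\norm{\tilde{s}_k}^2=q_k^2/\cos^2\theta_k-2q_k+1\to0$, i.e. the weighted Dennis--Moré condition, which unweights to the displayed limit above.

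The main obstacle is that the recursion is only useful once the error terms are summable, i.e. once $\sum_k\sigma_k<\infty$, and $\sigma_k$ is governed by $\norm{x_k-x^*}$. This requires a preliminary bootstrapping stage establishing at least R-linear convergence of $\{x_k\}$, together with uniform bounds on $\norm{\vect{B}_k}$ and $\norm{\vect{B}_k^{-1}}$, obtained from the bounded-deterioration property of the BFGS update under the hypothesis that $x_0$ is near $x^*$ and $\vect{B}_0$ near $G_*$. Thus the argument is genuinely two-stage: linear convergence first, then superlinear via Dennis--Moré, with the careful tracking of $\psi$ through the update and the verification that the linear rate makes $\sum_k\sigma_k$ finite being where the real work lies. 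A final routine step converts the weighted $s_k$-statement into the $\vect{H}_k$-and-$y_k$ form demanded by Corollary \ref{cor:dennis-more} using the norm equivalence \eqref{eq:uv} and the secant identity $\vect{H}_{k+1}y_k=s_k$.
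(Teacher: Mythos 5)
Your proposal is correct, but it takes a genuinely different route from the paper. The paper follows the classical Broyden--Dennis--Mor\'{e} bounded-deterioration argument: it works with the \emph{inverse} approximation $\vect{H}_k$, measures the error in the weighted Frobenius norm $\norm{\vect{E}}_F=\norm{\vect{M}(\vect{H}-\nabla^2F(x^*)^{-1})\vect{M}}_F$ with $\vect{M}=\nabla^2F(x^*)^{1/2}$, and uses Lemmas~\ref{lma:E_update} and~\ref{lma:E_norm} to derive the one-step contraction $\norm{\vect{E}_{k+1}}_F\le(\sqrt{1-\alpha\theta_k^2}+O(\sigma_k))\norm{\vect{E}_k}_F$; telescoping then forces $\sum_k\theta_k^2\norm{\vect{E}_k}_F<\infty$, hence $\theta_k\to0$, which is exactly the Dennis--Mor\'{e} condition of Corollary~\ref{cor:dennis-more} in the $\vect{H}_k$-and-$y_k$ form the theorem states. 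You instead work with the direct approximation $\vect{B}_k$ and the Byrd--Nocedal potential $\psi(\tilde{\vect{B}})=\operatorname{tr}(\tilde{\vect{B}})-\ln\det(\tilde{\vect{B}})$, extracting $\cos\theta_k\to1$ and $q_k\to1$ from the telescoped trace/determinant recursion; this buys a self-calibrating argument that needs no Frobenius-norm manipulation of the rank-two update and is the one that generalizes to global convergence under Wolfe line searches, at the cost of a final conversion from the $(\vect{B}_k,s_k)$ form of the Dennis--Mor\'{e} condition back to the $(\vect{H}_k,y_k)$ form demanded by the statement (routine via the secant identity and \eqref{eq:uv}, as you note). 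Both arguments hinge on the same summability $\sum_k\sigma_k<\infty$ obtained from a preliminary linear-convergence/bounded-deterioration stage; you flag this bootstrapping explicitly, whereas the paper disposes of it with the brief remark that $\max\{\norm{x_{k+1}-x^*},\norm{x_k-x^*}\}^p\to0$ before summing, so on this point your write-up is if anything the more careful of the two.
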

\begin{proof}
We first introduce two lemmas that will be used in the final proof. The proof for the following two lemmas could be found in Appendix \ref{appen:bfgs}.

\begin{restatable}{lemma}{eupdate}
\label{lma:E_update}
  If $\vect{\overline{H}},\vect{H}\in L(\mathbb{R}^n)$, $ s, y, d\in\mathbb{R}^n$ satisfy
  \begin{align}\label{eq:common_inv}
      \vect{\overline{H}}=\vect{H}+\frac{( s-\vect{H} y) d^T+ d( s-\vect{H} y)^T}{ d^T y}-\frac{ y^T( s-\vect{H} y) d d^T}{( d^T y)^2}
  \end{align}
  and $\vect{M}\in\L(\mathbb{R}^n)$ is a non-singular symmetric matrix, then for all $\vect{A}\in L(\mathbb{R}^n)$ it follows that
  \begin{align*}
      \vect{\overline{E}}&=\vect{P}^T\vect{E}\vect{P}+\frac{\vect{M}( s-\vect{A} y)(\vect{M} d)^T}{ d^T y}+\frac{\vect{M} d( s-\vect{A} y)^T\vect{M}\vect{P}}{ d^T y},\\
      \vect{P}&=\vect{I}-\frac{(\vect{M}^{-1} y)(\vect{M} d)^T}{ d^T y},
  \end{align*}
  where $\vect{E}=\vect{M}(\vect{H}-\vect{A})\vect{M}$, $\vect{\overline{E}}=\vect{M}(\vect{\overline{H}}-\vect{A})\vect{M}$.
\end{restatable}

  

\begin{restatable}{lemma}{enorm}
\label{lma:E_norm}
   Let $\vect{M}\in\L(\mathbb{R}^n)$ be a non-singular symmetric matrix such that 
  \begin{equation}\label{ieq:M1}
      \norm{\vect{M} d-\vect{M}^{-1} y}\leq\beta\norm{\vect{M}^{-1} y}
  \end{equation}
  for some $\beta\in[0,\frac{1}{3}]$ and $ d, y\in\mathbb{R}^n$ with $ y\neq0$. Then 
  \begin{enumerate}
      \item[a)] $(1-\beta)\norm{\vect{M}^{-1} y}^2\leq d^T y\leq(1+\beta)\norm{\vect{M}^{-1} y}^2$,
  \end{enumerate}
  for non-zero $\vect{E}\in L(\mathbb{R}^n)$
  \begin{enumerate}
      \item[b)] $\norm{\vect{E}\left(\vect{I}-\frac{(\vect{M}^{-1} y)(\vect{M}^{-1} y)^T}{ d^T y}\right)}_F\leq\sqrt{1-\alpha\theta^2}\norm{\vect{E}}_F$,
      \item[c)] $\norm{\vect{E}\vect{P}}_F\!\leq\!\left[\sqrt{1\!-\!\alpha\theta^2}\!+\!(1\!-\!\beta)^{\!-\!1}\frac{\norm{\vect{M} d\!-\!\vect{M}^{\!-\!1} y}}{\norm{\vect{M}^{\!-\!1} y}}\right]\norm{\vect{E}}_F$,
  \end{enumerate}
  where 
  \begin{align*}
      &\vect{P}\!=\!\vect{I}-\frac{(\vect{M}^{-1} y)(\vect{M} d)^T}{ d^T y}, \alpha\!=\!\frac{1\!-\!2\beta}{1\!-\!\beta^2},\quad
  \theta\!=\!\frac{\norm{\vect{E}\vect{M}^{\!-\!1}} y}{\norm{\vect{E}}_F\norm{\vect{M}^{-1} y}},
  \end{align*}
  
  moreover, for $\forall s\in\mathbb{R}^n$ and $\forall\vect{A}\in L(\mathbb{R}^n)$
  \begin{enumerate}
      \item[d)]$\norm{\frac{( s-\vect{A} y)(\vect{M} d)^T}{ d^T y}}_F\leq2\frac{\norm{ s-\vect{A} y}}{\norm{\vect{M}^{-1} y}}$.
  \end{enumerate}
\end{restatable}

After introducing Lemma \ref{lma:E_update} and \ref{lma:E_norm}, we start with the detailed proof. For easy notation, we remove subscripts in approximate Hessian update (\ref{eq:x_update}) and (\ref{eq:H_update}) and denote $\vect{H}_{k+1}$ and $x_{k+1}$ as $\vect{\overline{H}}$ and $\vect{\overline{x}}$. Then update formulas can be rewritten as
\begin{align*}
    \vect{\overline{x}}&=x-\alpha_k\vect{H}\nabla  F(x)\label{eq:x_update}\\
    \vect{\overline{H}}&=\left(\vect{I}-\frac{ s y^T}{ y^T s}\right)\vect{H}\left(\vect{I}-\frac{ y s^T}{ y^T s}\right)+\frac{ s s^T}{ y^T s},
\end{align*}
which is equivalent to
\begin{align}
    \vect{\overline{H}}=\vect{H}+\frac{( s-\vect{H} y) s^T+ s( s-\vect{H} y)^T}{ s^T y}-\frac{ y^T( s-\vect{H} y) s s^T}{( s^T y)^2}.
\end{align}
Obviously, BFGS is a special case of Lemma \ref{lma:E_update} with vector $ d= s$. From Lemma \ref{lma:E_norm}, we know that 
\begin{align}\label{eq:pep}
    &\norm{\vect{P}^T\vect{E}\vect{P}}_F\leq\norm{\vect{P}}_F\norm{\vect{E}\vect{P}}_F\notag\\
    \leq&(1+\frac{\norm{\vect{M} d-\vect{M}^{-1} y}}{(1-\beta)\norm{\vect{M}^{-1} y}})(\sqrt{1-\alpha\theta^2}+\frac{\norm{\vect{M} d-\vect{M}^{-1} y}}{(1-\beta)\norm{\vect{M}^{-1} y}})\norm{\vect{E}}_F\notag\\
    \leq&(\sqrt{1-\alpha\theta^2}+\frac{5}{2}(1-\beta)^{-1}\frac{\norm{\vect{M} d-\vect{M}^{-1} y}}{\norm{\vect{M}^{-1} y}})\norm{\vect{E}}_F,
\end{align}
\begin{flalign}\label{eq:ms}
    \norm{\frac{\vect{M}( s-\vect{A} y)(\vect{M} d)^T}{ d^T y}}\leq2\norm{\vect{M}}_F\frac{\norm{ s-\vect{A} y}}{\vect{M}^{-1} y},
\end{flalign}
\begin{align}\label{eq:msp}
    \norm{\frac{\vect{M} d( s-\vect{A} y)^T\vect{M}\vect{P}}{ d^T y}}\leq&\norm{\vect{P}}_F 2\norm{\vect{M}}_F\frac{\norm{ s-\vect{A} y}}{\vect{M}^{-1} y}&\notag\\
    \leq&(1+\frac{\beta}{1-\beta})\sqrt{n}2\norm{\vect{M}_F\frac{\norm{ s-\vect{A} y}}{\vect{M}^{-1} y}}\notag\\
    \leq&4\sqrt{n}\norm{\vect{M}}_F\frac{\norm{ s-\vect{A} y}}{\vect{M}^{-1} y}.&.
\end{align}
Substitute (\ref{eq:pep}-\ref{eq:msp}) into Lemma \ref{lma:E_update} (note that $\norm{\vect{E}}_F=\norm{\vect{H}-\vect{A}}_M$) and letting $A=\nabla^2 F(x^*)^{\!-\!1}$, we get
\begin{align}\label{eq:H1}
    &\norm{\vect{\overline{H}}-\nabla^2 F(x^*)^{\!-\!1}}_M\notag\\
    \!\leq\!&\!\left[\!\sqrt{1\!-\!\alpha\theta^2}\!+\!\frac{5}{2}(1\!-\!\beta)^{\!-\!1}\frac{\norm{\vect{M} d\!-\!\vect{M}^{\!-\!1} y}}{\norm{\vect{M}^{\!-\!1} y}}\right]\norm{\vect{H}\!-\!\nabla^2 F(x^*)^{\!-\!1}}_M\notag\\
    &+2(2\sqrt{n}+1)\norm{\vect{M}}_F\frac{\norm{ s-\nabla^2 F(x^*)^{\!-\!1} y}}{\norm{\vect{M^{\!-\!1} y}}}.
\end{align}
For BFGS, let $\vect{M}=\nabla^2 F(x^*)^{\frac{1}{2}}$, $ d= s$. Assumption \ref{asm:bfgs} implies
\begin{align}\label{eq:s1}
    \frac{\norm{\vect{M} d\!-\!\vect{M}^{\!-\!1} y}}{\norm{\vect{M}^{\!-\!1} y}}\leq\mu_1\norm{ s}^p
\end{align}
for some constant $\mu_1$. Moreover, since
\begin{align}
    &\norm{ s\!-\!\nabla^2 F(x^*)^{\!-\!1} y}\notag\\
    \leq&K\norm{\nabla^2 F(x^*)^{\!-\!1}}\max{\{\norm{\vect{\overline{x}}\!-\!x^*}^p\!,\norm{x\!-\!x^*}\}}\norm{ s},
\end{align}
by (\ref{eq:uv}) in Assumption \ref{asm:bfgs}, there exists some constant $\mu_2$ such that
\begin{align}\label{eq:s2}
    \frac{\norm{ s-\nabla^2 F(x^*)^{\!-\!1} y}}{\norm{\vect{M^{\!-\!1} y}}}\leq\mu_2\norm{ s}^p.
\end{align}
Since $\norm{ s}\leq2\max{\{\norm{\vect{\overline{x}}\!-\!x^*}\!,\norm{x\!-\!x^*}\}}$, (\ref{eq:H1}), (\ref{eq:s1}) and (\ref{eq:s2}) implies
\begin{align}\label{ieq:H}
    &\norm{\vect{\overline{H}}-\nabla^2 F(x^*)^{\!-\!1}}_M\notag\\
    \leq&\sqrt{1\!-\!\alpha\theta^2}\norm{\vect{H}\!-\!\nabla^2 F(x^*)^{\!-\!1}}_M\notag\\
    &+\!\max\!{\{\norm{\vect{\overline{x}}\!-\!x^*}^p\!\!,\norm{x\!-\!x^*}\}^p}(\alpha_1\!\!\norm{\vect{H}\!-\!\nabla^2 F(x^*)^{\!-\!1}}_M\!+\!\!\alpha_2),
\end{align}
where $\alpha_1=5(1-\beta)^{\!-\!1}$, $\alpha_2=2\mu_2$. After retrieving subscripts in formula (\ref{ieq:H}), when $k\to\infty$ it is easy to see that $\max{\{\norm{\vect{\overline{x}}\!-\!x^*}^p\!,\norm{x\!-\!x^*}\}^p}\to 0$. Therefore,
\begin{align}
    \norm{\vect{H}_{k+1}-\nabla^2 F(x^*)^{\!-\!1}}_M&\lesssim \sqrt{1-\alpha\theta_k^2}\norm{\vect{H}_k\!-\!\nabla^2 F(x^*)^{\!-\!1}}_M\notag\\
    &\leq (1-\frac{\alpha}{2}\theta_k^2)\norm{\vect{H}_k\!-\!\nabla^2 F(x^*)^{\!-\!1}}_M,\notag
\end{align}
and then summing over $k$ from $0$ to $\infty$ gives
\begin{align}
    &\frac{\alpha}{2}\sum_{k=1}^{\infty}\theta_k^2\norm{\vect{H}_k\!-\!\nabla^2 F(x^*)^{\!-\!1}}_M\notag\\
    \lesssim&\norm{\vect{H}_0\!-\!\nabla^2 F(x^*)^{\!-\!1}}_M-\norm{\vect{H}_{\infty}\!-\!\nabla^2 F(x^*)^{\!-\!1}}_M<\infty.\notag
\end{align}
Therefore,
\begin{align}
    \lim_{k\to\infty}\frac{\norm{\left[\vect{H}_k\!-\!\nabla^2 F(x^*)^{\!-\!1}\right] y_k}}{ y_k}=0
\end{align}
By Dennis-Mor\'{e} Theorem \ref{thm:dennis-more}, we conclude that BFGS converges superlinearly.
\end{proof}

\subsubsection{Limited memory BFGS (LBFGS)}
Previously we introduced BFGS which speeds up the second-order methods from cubic to quadratic time complexity in the dimensionality $d$. However, we still need to store the whole inverted approximate Hessian matrix $\vect{H}_k$. BFGS is therefore still both space and time consuming for large-scale optimization problems.  It is desired to modify BFGS to make it suitable for large-scale optimization problems. This can be done by storing the most important part of the information in matrix $\vect{H}_k$. The resulting LBFGS method\citep{liu1989limited,zhu1997algorithm} does not require knowledge of the sparsity structure of the Hessian and the following derivations show that is is simple to program. Let us derive the LBFGS method below.
\begin{align}\label{eq:lbfgs_o}
    \vect{H}_{k+1}=&v_k^T\vect{H}_kv_k+\rho s_k s_k^T\quad\quad\quad\text{where }v_k=\vect{I}-\rho y_k s_k^T\notag\\
    \vect{H}_1\quad=&v_0^T\vect{H}_0v_0+\rho_0 s_0 s_0^T\notag\\
    \vect{H}_2\quad=&v_1^Tv_0^T\vect{H}_0v_0v_1+v_1^T\rho_0 s_0 s_0^Tv_1+\rho_1 s_1 s_1^T\notag\\
    \cdots&\notag\\
    \vect{H}_{k+1}=&(v_k^Tv_{k\!-\!1}^T\cdots v_0^T)\vect{H}_0(v_0\cdot sv_{k\!-\!1}v_k)\notag\\
    &+(v_k^Tv_{k\!-\!1}^T\cdots v_0^T)\rho_0 s_0 s_0^T(v_0\cdot sv_{k\!-\!1}v_k)\notag\\
    &\cdots\notag\\
    &+(v_k^Tv_{k\!-\!1}^T)\rho_{k\!-\!2} s_{k\!-\!2} s_{k\!-\!2}^T(v_{k\!-\!1}v_k)\notag\\
    &+v_k^T\rho_{k\!-\!1} s_{k\!-\!1} s_{k\!-\!1}^Tv_k\notag\\
    &+\rho_k s_k s_k^T,
\end{align}
$\vect{H}_{k+1}$ requires the whole sequence $\{ s_i, y_i\}_{i\!=\!0}^k$. We construct the following approximate formula with only $m$ data points.
\begin{align}
    \vect{H}_{k}=&(v_{k\!-\!1}^T\cdot sv_{k\!-\!m}^T)\vect{H}_k^0(v_{k\!-\!m}\cdot sv_{k\!-\!1})\notag\\
    &+\rho_{k\!-\!m}(v_{k\!-\!1}^T\cdot sv_{k\!-\!m+1}^T) s_{k\!-\!m} s_{k\!-\!m}^T(v_{k\!-\!m+1}\cdot sv_{k\!-\!1})\notag\\
    &\cdots\notag\\
    &+\rho_{k\!-\!2}v_{k\!-\!1}^T s_{k\!-\!2} s_{k\!-\!2}^Tv_{k\!-\!1}\notag\\
    &+\rho_{k\!-\!1} s_{k\!-\!1} s_{k\!-\!1}^T.
\end{align}
The resulting pseudocode for LBFGS is provided below.

In Numerical Optimization \citep{nocedal2006numerical} typically LBFGS is presented as a two-loop recursion algorithm (see Algorithm \ref{alg:LBFGS_2loop}).


\subsection{Convergence Analysis for LBFGS}
In this section, we are going to prove that LBFGS is globally $\mathbb{R}$-linearly convergent (Definition \ref{def:rlinear})
to the optimal point $x^*$ \citep{liu1989limited}. In convergence analysis, we update the learning rate $\eta_k$ with line search. Moreover, we prefer to rewrite LBFGS algorithm \ref{alg:LBFGS_2loop} to directly update the approximate Hessian matrix $\vect{B}_k$ instead of $\vect{H}_k$, as shown is Algorithm \ref{alg:LBFGS_direct}. 
\begin{definition}[R-linearly convergent]\label{def:rlinear}
    Given an objective function $F$ and the sequence $\{x_k\}$, there exists a constant $0\leq R<1$ such that $F(x_k)-F(x_*)\leq R^k(F(x_0)-F(x_*))$.
\end{definition}
\begin{algorithm}
\caption{LBFGS (Pseudocode 1: two-loop recursion)}
\label{alg:LBFGS_2loop}
\begin{algorithmic}[1]
\State \textbf{Input:} Init $x_0\in\mathbb{R}^N$, $F\!:\!\mathbb{R}^N\!\rightarrow\!\mathbb{R}$, gradient $\nabla F\!:\!\mathbb{R}^N\!\rightarrow\!\mathbb{R}^N$, limited-memory size $m$, learning rate $\eta_k$

\State \textbf{Init:} $\vect{H}_0^0=\vect{I}$, $r_0=\nabla  F(x_0)$, $k=0$
\While{not converged}
    \State $x_{k+1}=x_k-\eta_k r_k$
    \If{$k>m$} drop $( s_{k-m}, y_{k-m})$\EndIf
    \State $ s_{k+1}=x_{k+1}-x_{k}$, $ y_{k+1}=\nabla F(x_{k+1})-\nabla F(x_{k})$
    \State $\vect{q}\!=\!\nabla F(x_{k\!+\!1})$, $\vect{H}_{k+1}^{0}\!=\!\frac{ s_{k\!+\!1}^T y_{k\!+\!1}}{ y_{k\!+\!1}^T y_{k\!+\!1}}\vect{I}$, $\rho_{k+1}\!=\!\frac{1}{ s_{k\!+\!1}^T y_{k\!+\!1}}$
    \For{i=1,...,m}
        \State $\alpha_i=\rho_{k+1-i} s_{k+1-i}^T\vect{q}$
        \State $\vect{q}=\vect{q}-\alpha_i y_{k+1-i}$
    \EndFor
    \For{i=1,...,m}
        \State $\beta_i=\rho_{k+1-i} y_{k+1-i}^Tr$
        \State $r_{k+1}=r_{k}+(\alpha_i-\beta_i) s_{k+1-i}$
    \EndFor
    \State $k=k+1$
\EndWhile
\State \Return $x_{k}$
\end{algorithmic}
\end{algorithm}

\begin{algorithm}
\caption{LBFGS (Pseudocode 2 - line search learning rate)}
\label{alg:LBFGS_direct}
\begin{algorithmic}[1]
\State \textbf{Input:}Init $x_0\in\mathbb{R}^N$, $F\!:\!\mathbb{R}^N\!\rightarrow\!\mathbb{R}$, gradient $\nabla F\!:\!\mathbb{R}^N\!\rightarrow\!\mathbb{R}^N$, limited-memory size $m$, learning rate $\eta_k$ computed from line search ($0<\beta'<\frac{1}{2}, \beta'<\beta<1$)

\State \textbf{Init:} $\vect{B}_0^0=\vect{I}$, $k=0$
\While{not converged}
    \State $ d_k=-\vect{B}_k^{\!-\!1}\nabla f(x_k)$
    \State $x_{k+1}=x_k+\eta_k d_k$
    \State Where $\eta_k$ satisfies the Wolfe conditions:
    \State\begin{align}
        F(x_k+\eta_k d_k)\leq F(x_k)+\beta'\eta_k\nabla f(x_k)d_k\label{eq:ls1}\\
        \nabla F(x_k+\eta_kd_k)\geq \beta \nabla F(x_k)d_k\label{eq:ls2}
    \end{align}
    \If{$k>m$} drop $( s_{k-m}, y_{k-m})$\EndIf
    \State $ s_{k\!+\!1}\!=\!x_{k\!+\!1}-x_{k}$, $ y_{k\!+\!1}\!=\!\nabla f(x_{k\!+\!1})-\nabla f(x_{k})$
    \State Denote $\widetilde{s}_{kl}= s_{k+1-l}$, $\widetilde{y}_{kl}= y_{k+1-l}$
    \For{l=1,...,m-1}
        \begin{align}\label{eq:LBFGS_B}
            \vect{B}_k^{l\!+\!1}=\vect{B}_k^{l}-\frac{\vect{B}_k^{l}\widetilde{s}_{kl}\widetilde{s}_{kl}^T\vect{B}_k^{l}}{\widetilde{s}_{kl}^T\vect{B}_k^{l}\widetilde{s}_{kl}}+\frac{\widetilde{y}_{kl}\widetilde{y}_{kl}^T}{\widetilde{y}_{kl}^T\widetilde{s}_{kl}}
        \end{align}
    \EndFor
    \State $\vect{B}_{k+1}=\vect{B}_k^m$, $k=k+1$
\EndWhile
\State \Return $x_{k}$
\end{algorithmic}
\end{algorithm}

We make the following assumption for the objective function $F$ to guarantee LBFGS is R-linear.
\begin{assumption}[Convex Setting]\label{asm:2}
  The objective function $F$ is twice continuously differentiable and level set $\mathcal{D}=\{x\in\mathbb{R}^n:F(x)\leq F(x_0)\}$ is convex. The Hessian matrix of $F$ is denoted as $\vect{G}$. There exist positive constant $M_1$ and $M_2$ such that
  \begin{align}\label{ieq:G_bd}
      M_1\norm{z}^2\leq z^T\vect{G}(x)z\leq M_2\norm{z}^2
  \end{align}
  for all $z\in\mathbb{R}^n$ and all $x\in\mathcal{D}$.
\end{assumption}

\begin{theorem}[Convergence of LBFGS; Convex Setting,  Theorem 6.1 in \citep{liu1989limited}]\label{thm:lbfgs}
  Let $x_0$ be a starting point for LBFGS updates. Let $F$ satisfies Assumption $\ref{asm:2}$ and assume that $\vect{B}_k^0$s are chosen so that $\norm{\vect{B}_k^0}$ and $\norm{{\vect{B}_k^0}^{\!-\!1}}$ are bounded. Then for all positive definite $\vect{B}_k^0$, Algorithm \ref{alg:LBFGS_direct} generates a sequence $\{x_k\}$ that converges to $x^*$. Moreover there is a constant $R\in(0,1)$ such that
  \begin{align}\label{ieq:f}
      F(x_k)-F(x^*)\leq R^k[F(x_0)-F(x^*)],
  \end{align}
  which implies that $\{x_k\}$ converges to $x^*$ R-linearly.
\end{theorem}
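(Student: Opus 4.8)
The plan is to show that Assumption \ref{asm:2} forces $F$ to be $M_1$-strongly convex and $M_2$-smooth, and then to combine a uniform lower bound on the cosine of the angle between the LBFGS search direction and the steepest-descent direction with the Wolfe line search (\ref{eq:ls1})--(\ref{eq:ls2}) to extract a constant per-step reduction in the function gap, which is exactly the R-linear rate of Definition \ref{def:rlinear}.

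First I would record the consequences of the spectral bound (\ref{ieq:G_bd}). Writing $y_k=\nabla F(x_{k+1})-\nabla F(x_k)=\overline{\vect{G}}_k s_k$ with $\overline{\vect{G}}_k=\int_0^1\nabla^2 F(x_k+t s_k)\,dt$, the bounds give $M_1\norm{s_k}^2\leq y_k^T s_k\leq M_2\norm{s_k}^2$ and $\norm{y_k}^2/(y_k^T s_k)\leq M_2$. In particular the curvature condition $y_k^T s_k>0$ holds, so every inner BFGS update (\ref{eq:LBFGS_B}) is well defined and keeps $\vect{B}_k^l$ positive definite, making $d_k=-\vect{B}_k^{-1}\nabla F(x_k)$ a genuine descent direction. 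Strong convexity and smoothness also supply the estimates $\tfrac{M_1}{2}\norm{x-x^*}^2\leq F(x)-F(x^*)\leq\tfrac{1}{2M_1}\norm{\nabla F(x)}^2$, which I will use to convert a gradient decrease into a function-gap decrease.

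The crux is a uniform lower bound on $\cos\theta_k:=-\nabla F(x_k)^T d_k/(\norm{\nabla F(x_k)}\,\norm{d_k})$. Here I would follow Powell's trace--determinant technique. Each inner update satisfies the exact identities $\mathrm{tr}(\vect{B}_k^{l+1})=\mathrm{tr}(\vect{B}_k^{l})-\norm{\vect{B}_k^{l}\widetilde{s}}^2/(\widetilde{s}^T\vect{B}_k^{l}\widetilde{s})+\norm{\widetilde{y}}^2/(\widetilde{y}^T\widetilde{s})$ and $\det(\vect{B}_k^{l+1})=\det(\vect{B}_k^{l})\cdot(\widetilde{y}^T\widetilde{s})/(\widetilde{s}^T\vect{B}_k^{l}\widetilde{s})$. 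Because the loop performs at most $m$ updates starting from $\vect{B}_k^0$, and because $\norm{\vect{B}_k^0}$ and $\norm{(\vect{B}_k^0)^{-1}}$ are assumed bounded while $\norm{\widetilde{y}}^2/(\widetilde{y}^T\widetilde{s})\leq M_2$ and $\widetilde{y}^T\widetilde{s}\geq M_1\norm{\widetilde{s}}^2$, the trace of $\vect{B}_k=\vect{B}_k^m$ stays bounded above and its determinant bounded below away from zero, uniformly in $k$. Since the number of updates is fixed at $m$, these bounds do not depend on the iteration counter; this is precisely where the limited-memory structure is essential, and it is the step I expect to be the main obstacle, since the bookkeeping must hold uniformly in $k$ rather than accumulating over the whole history as in full BFGS. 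A bounded trace together with a bounded-below determinant confines the eigenvalues of $\vect{B}_k$ to a fixed interval $[\lambda_{\min},\lambda_{\max}]\subset(0,\infty)$, yielding $\cos\theta_k\geq\lambda_{\min}/\lambda_{\max}=:\delta>0$ for all $k$.

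Finally I would close the loop with a Zoutendijk/Armijo-type estimate. The Wolfe conditions (\ref{eq:ls1})--(\ref{eq:ls2}) together with $M_2$-smoothness give a sufficient-decrease inequality of the form $F(x_{k+1})\leq F(x_k)-c\,\cos^2\theta_k\,\norm{\nabla F(x_k)}^2$, where $c$ depends only on $\beta',\beta,M_2$. Invoking the Polyak--{\L}ojasiewicz inequality $\norm{\nabla F(x_k)}^2\geq 2M_1\bigl(F(x_k)-F(x^*)\bigr)$ implied by strong convexity, and the uniform bound $\cos\theta_k\geq\delta$, I obtain $F(x_{k+1})-F(x^*)\leq\bigl(1-2cM_1\delta^2\bigr)\bigl(F(x_k)-F(x^*)\bigr)$. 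Setting $R=1-2cM_1\delta^2$ and checking that $R\in(0,1)$, a straightforward induction on $k$ yields (\ref{ieq:f}), establishing the R-linear convergence of $\{x_k\}$ to $x^*$.
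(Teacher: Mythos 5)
Your proposal is correct and follows essentially the same route as the paper's proof: both derive $M_1\norm{s_k}^2\leq y_k^Ts_k\leq M_2\norm{s_k}^2$ and $\norm{y_k}^2/(y_k^Ts_k)\leq M_2$ from the spectral bound, bound the trace of $\vect{B}_k$ above by $\mathrm{tr}(\vect{B}_k^0)+mM_2$ and the determinant below using the fixed memory length $m$, deduce a uniform lower bound $\cos\theta_k\geq\delta$, and combine the Wolfe conditions with strong convexity to obtain $F(x_{k+1})-F(x^*)\leq(1-c\cos^2\theta_k)(F(x_k)-F(x^*))$. The only difference is that you spell out the Zoutendijk/PL step that the paper asserts in one line; the substance is identical.
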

\begin{proof}
  If we define 
  \begin{align}
      \vect{\overline{G}}_k=\int_0^1\vect{G}(x_k+\tau s_k)d\tau,
  \end{align}
  then 
  \begin{align}\label{eq:ys}
      y_k=\vect{\overline{G}}_k s_k.
  \end{align}
  Thus (\ref{ieq:G_bd}) and (\ref{eq:ys}) implies
  \begin{align}
      M_1\norm{ s_k}^2\leq y_k^T s_k\leq M_2\norm{ s_k}^2,
  \end{align}
  and 
  \begin{align}\label{eq:y_norm}
      \frac{\norm{ y_k}^2}{ y_k^T s_k}=\frac{ s_k\vect{\overline{G}}_k^2 s_k}{ s_k^T\vect{\overline{G}}_k s_k}\leq M_2.
  \end{align}
  Then from (\ref{eq:LBFGS_B}), (\ref{eq:y_norm}) and boundness of $\norm{\vect{B}_k^0}$, the trace satisfies
  \begin{align}\label{ieq:trace}
      tr(\vect{B}_{k\!+\!1})\!\leq\! tr(\vect{B}_k^0)+\sum_{l=0}^{m\!-\!1}\frac{\norm{ \widetilde{y}_{kl}}^2}{ \widetilde{y}_{kl}^T \widetilde{s}_{kl}}\!\leq\!tr(\vect{B}_k^0)+mM_2\leq M_3
  \end{align}
  for some constant $M_3$. Similarly, the determinant satisfies
  \begin{align}\label{ieq:det}
      det(\vect{B}_{k\!+\!1})=& det(\vect{B}_k^0)\prod_{l=0}^{m\!-\!1}\frac{ \widetilde{y}_{kl}^T \widetilde{s}_{kl}}{ \widetilde{s}_{kl}^T\vect{B}_k^l \widetilde{s}_{kl}}\notag\\
      =&det(\vect{B}_k^0)\prod_{l=0}^{m\!-\!1}\frac{ \widetilde{y}_{kl}^T \widetilde{s}_{kl}}{ \widetilde{s}_{kl}^T \widetilde{s}_{kl}}\frac{ \widetilde{s}_{kl}^T \widetilde{s}_{kl}}{ \widetilde{s}_{kl}^T\vect{B}_k^l \widetilde{s}_{kl}}.
  \end{align}
  By (\ref{ieq:trace}), the largest eigenvalue of $\vect{B}_k^{l}$ is less than $M_3$ and together with (\ref{ieq:det}) we get
  \begin{align}\label{ieq:det2}
      det(\vect{B}_{k\!+\!1})\geq det(\vect{B}_{k}^0)(M_1/M_3)^m\geq M_4,
  \end{align}
  for some positive constant $M_4$. Therefore, by combining (\ref{ieq:trace}) and (\ref{ieq:det2}) we conclude that there is a constant $\delta>0$ such that
  \begin{align}
      \cos\theta_k=\frac{ s_k^T\vect{B}_k s_k}{\norm{ s_k}\norm{\vect{B}_k s_k}}\geq\delta.
  \end{align}
  Then from the line search condition for the learning rate $\eta_k$ ((\ref{eq:ls1})-(\ref{eq:ls2}) in Algorithm \ref{alg:LBFGS_direct}) and Assumption \ref{asm:2} we conclude that there exists $c>0$ such that
  \begin{align}
      F(x_{k+1})-F(x^*)\leq(1-c\cos^2\theta_k)(F(x_{k})-F(x^*)).
  \end{align}
  We have already proven (\ref{ieq:f}), where $R=1-c\cos^2\theta_k$. From (\ref{ieq:G_bd}) we obtain
  $$\frac{1}{2}M_1\norm{x_k-x^*}\leq F(x_{k})-F(x_k).$$
  We combine this with (\ref{ieq:f}) to get $\norm{x_k-x^*}\leq R^{\frac{k}{2}}[2(F(x_{0})-F(x^*))/M_1]^{\frac{1}{1}}$, so that the sequence $\{x_k\}$ is R-linearly convergent.
\end{proof}

\section{Landscape-Aware Deep Learning Optimizers}\label{sec:ll}

This section first justifies the importance of studying the geometric properties of the landscape of the loss function of deep neural networks. Then we move to the theoretical analysis of specific DL optimization methods that adapt to the properties of the underlying loss landscape.

\textbf{Loss function in DL} Training a deep network requires finding network parameters that minimize the loss function $F(x)$ ($x$ denotes the set of parameters) that is defined as the sum of discrepancies between target data labels and their estimates obtained by the network. This sum is computed over the entire training data set. Due to the multi-layer structure of the network and non-linear nature of the network activation functions, $F(x)$ is a non-convex function of network parameters. Increasing the number of training data points complicates this function since it increases the number of its summands. Increasing the number of parameters (by adding more layers or expanding the existing ones) increases the complexity of each summand and results in the growth, which can be exponential~\citep{WNonConv}, of the number of critical points of $F(x)$. A typical use case for DL involves massive (i.e., high-dimensional and/or large) data sets and very large networks (i.e., with millions of parameters), which results in an optimization problem that is heavily non-convex and difficult to analyze.

\begin{wrapfigure}{r}{0.39\textwidth} 
\vspace{-0.25in}
\begin{center}
\includegraphics[width=0.35\textwidth]{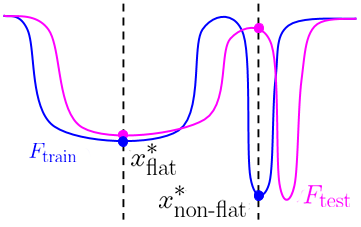}
\end{center}
\vspace{-0.15in}
\caption{Consider the train and test loss (resp. $F_{\text{train}}$ and $F_{\text{test}}$), which have similar shape but are shifted with respect to each other. The local minimum that lies in the flat region of the landscape ($x^{*}_{\text{flat}}$) admits a similar value of the train and test loss, despite the shift between them, and thus generalizes well (in fact this property holds for network parameters that lie close enough to $x^{*}_{\text{flat}}$ as well). The local minimum that lies in the narrow region of the landscape (denoted as $x^{*}_{\text{non-flat}}$) admits a significantly larger value of the test loss compared to the train loss, due to the shift between them, and thus generalizes very poorly.}
\label{fig:flat}
\vspace{-0.3in}
\end{wrapfigure}
The shape of the DL loss function is poorly understood. Most of what we know is either based on unrealistic assumptions and/or holds only for shallow (two-layer) networks. The existing literature emphasizes i) the proliferation of saddle points~\citep{NIPS2014_5486,Baldi:1989:NNP:70359.70362,DBLP:journals/corr/SaxeMG13} including degenerate or hard to escape ones~\citep{pmlr-v40-Ge15,anandkumar2016efficient,NIPS2014_5486}, ii) the equivalency of local minima to the global minimum~\citep{chaudhari2015trivializing,haeffele2015global,janzamin2015beating,NIPS2016_6112,soudry2016no,JoanTopology,Nguyen2017TheLS,DBLP:journals/corr/abs-1712-04741,pmlr-v80-du18a,Safran:2016:QIB:3045390.3045473,DBLP:journals/corr/abs-1712-08968,DBLP:journals/corr/HardtM16,pmlr-v70-ge17a,DBLP:journals/corr/YunSJ17,pmlr-v80-draxler18a,JoanPS}, 
and iii) the existence of a large number of isolated minima and few dense regions with lots of minima close to each other\citep{baldassi2015subdominant,baldassi2016unreasonable,baldassi2016multilevel}
\noindent (shown for shallow networks with discrete weights). \citep{DBLP:conf/aistats/ChoromanskaHMAL15, DBLP:conf/colt/ChoromanskaLA15} has examined properties of the DL loss landscape using tools from statistical physics and show that as the network size increases the variance (and the mean) of the train and test loss values decreases and thus: i) the recovered minima become equivalent and ii) becoming stuck in poor minima is a major problem only for smaller networks (later confirmed in~\citep{DBLP:journals/corr/GoodfellowV14}). 

In order to design efficient optimization algorithms for DL we need to understand which regions of the DL loss landscape lead to good generalization and how to reach them. Existing work~\citep{feng2020neural,DBLP:journals/corr/ChaudhariCSL17,DBLP:journals/corr/ChaudhariCSL17journal,doi:10.1162/neco.1997.9.1.1,JKABFBS2018ICLRW,wen2018smoothout,DBLP:journals/corr/KeskarMNST16,DBLP:journals/corr/SagunEGDB17,DBLP:journals/corr/abs-1901-06053,jiang2019fantastic} shows that properly regularized SGD recovers solutions that generalize well and provides only a weak evidence that these solutions lie in wide valleys of the landscape. Prior studies~\citep{DBLP:journals/corr/ChaudhariCSL17,DBLP:journals/corr/ChaudhariCSL17journal} demonstrated that the spectrum of the Hessian 
at a well-generalizing local minimum of the loss function is often almost flat, i.e., the majority of eigenvalues are close to zero. An intuitive illustration of why flatness potentially leads to better generalization is presented in Figure~\ref{fig:flat}. \citep{bisla2022low} provides a comprehensive study analyzing that the good generalization abilities of the DL model correlate with the properties of the loss landscape around recovered solutions.

There exists some approaches that aim at adapting the DL optimization strategy to the properties of the loss landscape and encouraging the recovery of flat optima. They can be summarized as i) regularization methods that regularize gradient descent strategy with sharpness measures such as the Minimum Description Length~\citep{doi:10.1162/neco.1997.9.1.1}, local entropy~\citep{DBLP:journals/corr/ChaudhariCSL17}, or variants of $\epsilon$-sharpness~\citep{SAM,keskar2016largebatch}, and low-pass-filtering strategies~\citep{bisla2022low} ii) surrogate methods that evolve the objective function according to the diffusion equation~\citep{DBLP:journals/corr/Mobahi16}, iii) averaging strategies that average model weights across training epochs~\citep{61aa9e9cc965421e82d7b7042c61abc8, cha2021swad}, and iv) smoothing strategies that smoothens the loss landscape by introducing noise in the model weights and average model parameters across multiple workers run in parallel~\citep{wen2018smoothout, haruki2019gradient, lin2020extrapolation} (such methods focus on distributed training of DL models with an extremely large batch size - a setting where SGD and its variants struggle). 

The next section discusses in detail several popular optimization methods targeted at finding the flat minima in the DL optimization landscape. Let $\mathcal{S}=\{\xi_1,\cdots,\xi_n\}$ denote the set of $n$ samples drawn i.i.d. from an unknown distribution $\mathcal{D}$. Let $f(x;\xi)$ denote the loss of the model parametrized by $x$ for a specific example $\xi$. Denote the true loss as $F(x):=\mathbb{E}_{\xi\sim D}f(x;\xi)$ and the empirical loss as $F_{\mathcal{S}}(x):= \frac{1}{n}\sum_{i=1}^nf(x;\xi)$. We use the above notations for Section \ref{sec:ll}.

\subsection{Sharpness-Aware Minimization (SAM)}
Instead of seeking the optimal point for the original empirical training loss function $F_\mathcal{S}$, Sharpness-Aware Minimization (SAM) \citep{SAM} targets parameters within a neighborhood where the loss is uniformly low. This formulation leads to a minmax optimization problem that underlines the SAM method and that allows for efficient implementation of gradient descent:

\begin{align}\label{opt:sam}
    \min_x\quad &F_{\mathcal{S}}^{SAM}(x)+\lambda\norm{x}_2^2\\
    s.t.\quad &F_{\mathcal{S}}^{SAM}(x)=\max_{\norm{\epsilon}_p\leq\rho} F_{\mathcal{S}}(x+\epsilon)\notag
\end{align}

where $\rho>0$ is a hyperparameter and $p\in[1,\infty)$. \citep{SAM} proposed default value 0.05 for the selection of hyperparameter $\rho$ and empirically showed that $p=2$ (L-2 norm) is typically optimal among different norms.

SAM problem (\ref{opt:sam}) could be restated as
\begin{align}\label{opt:sam2}
    \min_x F_{\mathcal{S}}(x)+\max_{\norm{\epsilon}_p\leq\rho}\left[F_{\mathcal{S}}(x+\epsilon)-F_{\mathcal{S}}(x)\right]+\lambda\norm{x}_2^2.
\end{align}

The second term in square brackets captures the sharpness of the empirical loss $F_{\mathcal{S}}$, and we could treat it as a regularization term based on the sharpness measure. Therefore, SAM is indeed a regularization method that regularize gradient descent strategy with a sharpness measure.

The difficulty in solving the minimax problem \ref{opt:sam} with gradient-base method lies in the computation of the derivative w.r.t parameter $x$ throughout the bilevel framework. \citep{SAM} first approximate the inner-level maximization problem via first-order Taylar expansion of $F_{\mathcal{S}}(x+\epsilon)$ w.r.t $\epsilon$ around 0:
\begin{align}
    \epsilon^*(x):=&\argmax_{\norm{\epsilon}_p\leq\rho}F_{\mathcal{S}}(x+\epsilon)\notag\\
    \approx&\argmax_{\norm{\epsilon}_p\leq\rho} F_{\mathcal{S}}(x)+\epsilon^\intercal \nabla_{x} F_{\mathcal{S}}(x)\notag\\
    =&\argmax_{\norm{\epsilon}_p\leq\rho} \epsilon^\intercal \nabla_{x} F_{\mathcal{S}}(x).\label{prob:dual_sam}
\end{align}

The inner-level maximization problem can reformulated as a classical dual norm problem of the form:
\begin{align}\label{prob:dual}
    \norm{u}_*=\sup\{u^\intercal v| \norm{u}_p\leq 1\},
\end{align}
with $u=\epsilon/\rho$, $v=\rho \nabla_{x} F_{\mathcal{S}}(x)$. Since the dual norm of $l_p$ norm is $l_q$ norm, where  $1/p+1/q=1$. we have $\norm{u}_*= \norm{u}_q$, and the solution for dual norm problem (\ref{prob:dual}) is
$$
u^*=sign(v)\frac{|v|^{q-1}}{(\norm{v}_q^q)^{1/p}}$$
where $1/p+1/q=1$. Therefor the Solution for Problem (\ref{prob:dual_sam}) is:
\begin{align}\label{eq:epsilon}
    \hat{\epsilon}(x)=\rho sign(\nabla_x F_{\mathcal{S}}(x))\frac{|\nabla_x F_{\mathcal{S}}(x)|^{q-1}}{\left(\norm{\nabla_x F_{\mathcal{S}}(x)}_q^q\right)^{1/p}},
\end{align}

where $1/p+1/q=1$. When $p=2$, formula (\ref{eq:epsilon}) is simpificated as
\begin{align}\label{eq:epsilon2}
    \hat{\epsilon}(x)=\rho\frac{\nabla_x F_{\mathcal{S}}(x)}{\norm{\nabla_x F_{\mathcal{S}}(x)}_2}.
\end{align}

After substituting (\ref{eq:epsilon}) into (\ref{opt:sam}), we compute the gradient $\nabla_x F_{\mathcal{S}}^{SAM}(x)$ as follows
\begin{align}\label{eq:grad_L}
    \nabla_x F_{\mathcal{S}}^{SAM}(x)\approx&\nabla_x F_{\mathcal{S}}(x+\hat{\epsilon}(x))=\frac{d(x+\hat{\epsilon}(x))}{dx} \nabla_x F_{\mathcal{S}}(x)|_{x+\hat{\epsilon}(x))}\notag\\
    =&\nabla_x F_{\mathcal{S}}(x)|_{x+\hat{\epsilon}(x))}+\frac{d\hat{\epsilon}(x)}{dx}\nabla_x F_{\mathcal{S}}(x)|_{x+\hat{\epsilon}(x))}
\end{align}

This approximation could be directly computed via chain rule. However, $\hat{\epsilon}(x)$ is the function of the first-order gradient $\nabla_x F_{\mathcal{S}}(x)$ and the computation of $\frac{d\hat{\epsilon}(x)}{dx}$ requires Hessian of $F_{\mathcal{S}}(x)$, which computation is time-comsuming. To further accelerate the gradient computation, \citep{SAM} drops the second-order terms to obtain the final gradient approximation:
 \begin{align}\label{eq:grad_L_final}
     \nabla_x F_{\mathcal{S}}^{SAM}(x)\approx\nabla_x F_{\mathcal{S}}(x)|_{x+\hat{\epsilon}}.
 \end{align}
 The approximation (\ref{eq:grad_L_final}) without second-order information yields the following SAM algorithm (Algorithm \ref{alg:sam}).~\citep{SAM} empirically proves that this approximation is of good quality in a variety of settings.

 \begin{algorithm}[H]
    \centering
    \caption{SAM}\label{alg:sam}
    \begin{algorithmic}
    \State \textbf{Inputs:} $ x_k$: params \textbf{Hyperpar:} $\rho$: neighbourhood size, $\eta$: step size
    \State \textbf{Init} param $x_0$
    \While{not converged}
        \State Sample data batch $\mathcal{B} = (\eta_1 \cdots \eta_m)$
        \State Compute gradient $\nabla_x F_\mathcal{\mathcal{B}}(x)$ of batch training loss 
        \State Compute $\hat{\epsilon}(x)$ w.r.t formula (\ref{eq:epsilon})
        \State Compute gradient approximation for the SAM objective $ g=\nabla_x F_{\mathcal{S}}(x)|_{x+\hat{\epsilon}}$ 
        \State $x_{k+1} =  x_k - \eta* g$ // Update params
    \EndWhile
    \end{algorithmic}
\end{algorithm}

\paragraph{Generalization Ability.} It should be noted that no theoretical generalization guarantees have been proven in the literature for the SAM method. However, the empirical results in~\citep{SAM,kwon2021asam} verify that SAM really generalizes better than vanilla SGD.

\subsection{Entropy-SGD}
Motivated by the local geometry of the  energy landscape (also interpreted as loss function) shown in Figure \ref{fig:localentropy}, \citep{DBLP:journals/corr/ChaudhariCSL17} proposed the Entropy-SGD algorithm that reshapes the problem of optimizing the loss function into the problem of optimizing the local entropy of the loss, arguing that it prioritizes the discovery of well-generalizing solutions within broad or flat regions of the energy landscape (also interpreted as loss function) and steers the optimizer away from poorly-generalizing solutions found in steep valleys of the loss landscape. 

\begin{wrapfigure}{r}{0.41\textwidth} 
\vspace{-0.35in}
\begin{center}
\includegraphics[width=0.4\textwidth]{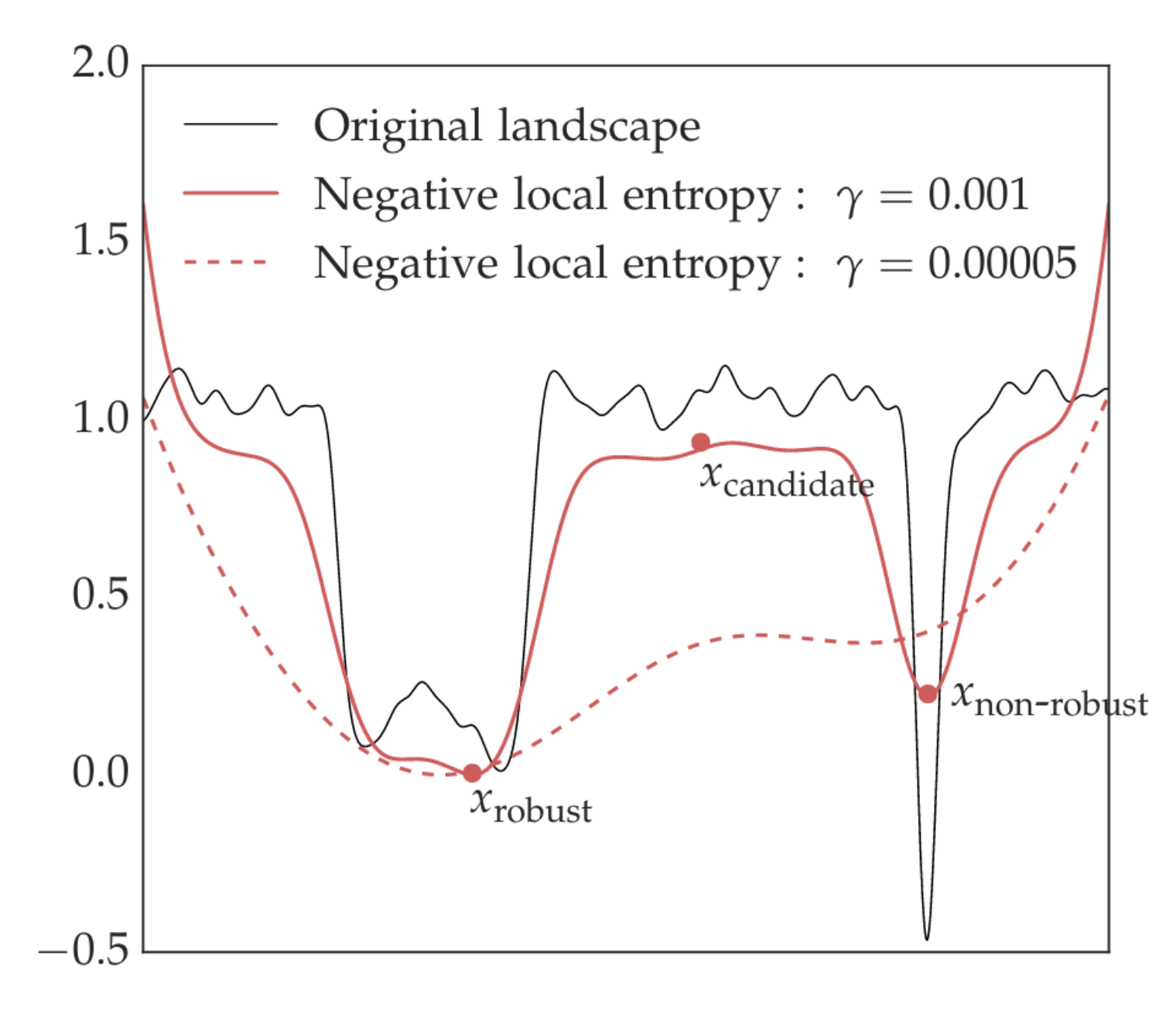}
\end{center}
\vspace{-0.25in}
\caption{Local entropy concentrates on wide valleys in the energy landscape (also interpreted as loss function). This figure is taken from \citep{DBLP:journals/corr/ChaudhariCSL17} (Figure 2).}
\label{fig:localentropy}
\vspace{-0.2in}
\end{wrapfigure}
\subsubsection{Local Entropy}\label{sec:localentropy}
The dicusson of local entropy builds upon \citep{baldassi2016unreasonable} and \citep{DBLP:journals/corr/ChaudhariCSL17}. Formally, for a parameter vector $x$, consider a Gibbs distribution corresponding to a given energy landscape (also interpreted as loss function) $F(x)$:

 \begin{align}
     P(x;\beta)=Z_\beta^{-1}\exp (-\beta F(x))
 \end{align}

where $\beta$ is the inverse temperature and $Z_\beta$ is a normalizing constant, also called as partition function. When $\beta\to\infty$, the distribution concentrates on the global minimum of energy function (also interpreted as loss function) $F(x)$. However, the global minimum may in fact generalize very poorly. Using the toy energy landscape (also interpreted as loss function) illustrated in Figure \ref{fig:localentropy} as an example, we identify $x_{\text{robust}}$ and $x_{\text{non-robust}}$ as two local minima in the energy landscape (also interpreted as loss function), situated in a flat and sharp valley, respectively. Despite the energy at $x_{\text{non-robust}}$ being significantly lower than that at $x_{\text{robust}}$, we prefer $x_{\text{robust}}$ due to its superior generalization. Consequently, this necessitates the introduction of a modified Gibbs distribution to emphasize the discovery of the flat regions in the loss landscape:

\begin{align}\label{eq:m_Gibbs}
    P(x';x,\beta,\gamma)=Z_{x,\beta,\gamma}^{-1}\exp (-\beta F(x')-\beta\frac{\gamma}{2}\norm{x-x'}_2^2),
\end{align}
where $\beta$ is still the inverse temperature and $\gamma$ is the "scope" hyperparameter that controls the bias of modified distribution. 
When $\gamma\to 0$, the modified Gibbs distribution converges to the original Gibbs distribution. By increasing $\gamma$, the modified distribution concentrates on the neighbourhood around location $x$. When $\gamma\to \infty$, the modified Gibbs distribution converges to the uniform distribution.
Without loss of generality, we could set the inverse temperature $\beta$ to 1 because $\gamma$ affords us similar control over the Gibbs distribution. \citep{DBLP:journals/corr/ChaudhariCSL17} proceeds by reshaping the original loss function to augment flat regions in the landscape that lie low and de-emphasize the sharp valleys. It is done through the local entropy defined with respect to the modified Gibbs distribution as:
\begin{definition}[Local Entropy]\label{def:localentropy}
    Given the energy function (also interpreted as loss function) $F(x)$, the local entropy is defined as the log-partition function of modified Gibbs distribution ( \ref{eq:m_Gibbs}), i.e
    \begin{align}
        F(x,\gamma)=\log \int_{x'} \exp (- F(x')-\frac{\gamma}{2}\norm{x-x'}_2^2)dx',
    \end{align}
    where $\gamma$ is the "scope" hyperparameter that regulates the extent to which the neighborhood influences the location x.
\end{definition}
 \textbf{Remark.} Figure \ref{fig:localentropy} shows the negative local entropy $-F(x,\gamma)$ for two different values of $\gamma$. The lower the $\gamma$ is, the smoother the modified energy landscape (also interpreted as loss function) is compared to the original landscape.

 \subsubsection{Algorithm}
 Instead of minimizing the original loss function, Entropy-SGD minimizes the negative local entropy defined in Section \ref{sec:localentropy}. This section discusses how the algorithm is constructed via Langevin dynamics and several implementation details.
 
 The optimization problem solved by Entropy-SGD is given as:
 \begin{align}\label{prob:entropysgd}
     \min_{x}-F(x,\gamma;\mathcal{S}),
 \end{align}
 where the local entropy $F$ is dependent on the sampled dataset $\mathcal{S}$ explicitly:
 \begin{align}
     F(x,\gamma;\mathcal{S})&=\log \int_{x'} \exp (- F_{\mathcal{S}}(x')-\frac{\gamma}{2}\norm{x-x'}_2^2) dx'\notag\\
     &=\log \int_{x'} \exp \left(-\frac{1}{n}\sum_{i=1}^nf(x';\xi_i) -\frac{\gamma}{2}\norm{x-x'}_2^2\right) dx'.
 \end{align}
 The gradient for the negative local entropy is:
 \begin{align}\label{eq:grad_entropy}
     -\nabla F(x,\gamma;\mathcal{S})=\gamma(x-\mathbb{E}_{x'\sim P}[x']),
 \end{align}
 where the latent variable $x'$ is satisfies the modified Gibbs Distribution:
 \begin{align}\label{eq:prob_gibbs}
     P(x';x,\gamma,\mathcal{S})\propto \exp \left(-\frac{1}{n}\sum_{i=1}^nf(x';\xi_{i}) -\frac{\gamma}{2}\norm{x-x'}_2^2\right).
 \end{align}
 The gradient requires the expectation of modified Gibbs distribution, which is hard to compute. Entropy-SGD however approximate its gradient using the stochastic gradient Langevin dynamics (SGLD) method, a Markov chain Monte-Carlo (MCMC) technique. 
 
 \begin{definition}[Stochastic Gradient Langevin Dynamics (SGLD)] \label{thm:sgld}
 Given some parameter $x$, its prior distribution $p(x)$, and a set of datapoints $\mathcal{S}=\{\xi_1,...,\xi_n\}$, the Langevin dynamics samples from the posterior distribution $p(x|\mathcal{S})\propto p(x)\prod_{i=1}^np(\xi_i|x)$ by updating the chain:
 \begin{align*}
     \Delta x_t=\frac{\eta_t}{2}\left(\nabla\log p(x_t)+\sum_{i=1}^n\nabla \log p(\xi_i|x_t)\right)+\sqrt{\eta_t}\epsilon_t,
 \end{align*}
 where $\epsilon_t\sim N(0,\epsilon^2)$ is the Gaussian noise and $\eta_t$ is the learning rate. The parameter $\epsilon$ in SGLD is the thermal noise and was fixed this to $\epsilon\in[10^{-4}, 10^{-3}]$ in entropy-SGD. 
 For a sample mini-batch $\mathcal{B}=\{\xi_{k_1},...,\xi_{k_m}\}$ from $\mathcal{S}$, the SGLD is given as:
 \begin{align*}
     \Delta x_t=\frac{\eta_t}{2}\left(\nabla\log p(x_t)+\frac{m}{n}\sum_{i=1}^m\nabla \log p(\xi_{k_i}|x_t)\right)+\sqrt{\eta_t}\epsilon_t.
 \end{align*}
\end{definition}
  
 The modified Gibbs distribution defined in (\ref{eq:prob_gibbs}) could be decomposed as:
 \begin{align}\label{eq:prob_gibbs2}
     P(x';x,\gamma,\mathcal{S})\propto& \exp \left(-\frac{1}{m}\sum_{i=1}^mf(x;\xi_{k_i}) -\frac{\gamma}{2}\norm{x-x'}_2^2\right)\notag\\
     =&\exp(-\frac{\gamma}{2}\norm{x-x'}_2^2)\prod_{i=1}^m\exp\left(-\frac{1}{m}f(x;\xi_{k_i})\right).
 \end{align}
 One can combine (\ref{eq:prob_gibbs2}) with Definition \ref{thm:sgld} and observe that given the sampled mini-batch $\mathcal{B}=\{\xi_{k_1},...,\xi_{k_m}\}\subset\mathcal{S}$ of size $m$, the update rule for sampling $x'$ with SGLD algorithm is:
 \begin{align}\label{eq:updatex'}
     x'=x' -\frac{\eta'_t}{2}\left(\frac{1}{m}\sum_{i=1}^m\nabla_{x'} f(x',\xi_{k_i})-\gamma(x-x')\right)+\sqrt{\eta'_t}\epsilon_t.
 \end{align}
 Entropy-SGD uses moving average to compute the expectation (denoted as $\mu$) for latent variable $x'$:
 \begin{align*}
     \mu = (1-\alpha)\mu + \alpha x'.
 \end{align*}
 This combined with (\ref{eq:grad_entropy}) gives the approximated gradient update of the negative local entropy:
 \begin{align*}
     x=x-\eta\gamma(x-\mu).
 \end{align*}
 The pseudocode for Entropy-SGD is shown in Algorithm \ref{alg:entropySGD}.

 \begin{algorithm}[H]
    \centering
    \caption{Entropy-SGD}\label{alg:entropySGD}
    \begin{algorithmic}
    \State \textbf{Inputs:} $x$: params, $K$: number of Langevin iterations
    \State\textbf{Hyperpar:} $\gamma$: scope, $\eta$: step sizes, $\eta'$: step size for SGLD
    \State//SGLD iterations
    \State $x'=x$, $\mu=x$
    \For{$\#$ of SGLD iterations < K}
        \State Sample mini-batch $\mathcal{B} = (\xi_{k_1} \cdots \xi_{k_m})$
        \State Update latent variable $x'$ with equation (\ref{eq:updatex'})
        \State Compute the moving average $\mu = (1-\alpha)\mu+\alpha x'$ ($\alpha=0.75$ suggested)
    \EndFor
    \State
    \State//Update param $x$
    \State$x = x-\eta\gamma(x-\mu)$
    \end{algorithmic}
\end{algorithm}
     
 \subsubsection{Generalization Ability}
 This section discuss the generalization ability of Entropy-SGD. It shows that Entropy-SGD results in a smoother loss function and obtains better generalization error than when optimizing the original objective.
 \begin{lemma}\label{lma:entropy} (Lemma 2 in~\citep{DBLP:journals/corr/ChaudhariCSL17})
     Assume the differentiable original loss function $f(x;\xi):\mathbb{R}^d\rightarrow \mathbb{R}$ is $M$-Lipschitz continuous and $L$-smooth with respect to $l_2$-norm. 
     Additionally assume that no eigenvalue of the Hessian $\nabla^2_{xx} f(x;\xi)$ lies in the set $[- 2 \gamma - c, c]$ for some small $c > 0$.
     Then the local entropy $F(x,\gamma;\mathcal{S})$ defined in Definition \ref{def:localentropy} is $\frac{M}{1+\gamma^{-1}c}$-Lipschitz continuous and $\frac{L}{1+\gamma^{-1}c}$-smooth.
 \end{lemma}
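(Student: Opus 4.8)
The plan is to reduce everything to two closed-form expressions for the derivatives of the local entropy and then control them through the Gibbs measure $P(x';x,\gamma,\mathcal{S})\propto\exp(-F_{\mathcal{S}}(x')-\frac{\gamma}{2}\norm{x-x'}^2)$ that already appears in \eqref{eq:prob_gibbs}. First I would note that $M$-Lipschitzness and $L$-smoothness pass from each $f(\cdot;\xi)$ to the empirical average $F_{\mathcal{S}}=\frac1n\sum_i f(\cdot;\xi_i)$ by the triangle inequality, and that the spectral-gap hypothesis on $\nabla^2_{xx}f$ gives $\norm{(\nabla^2 F_{\mathcal{S}}(x')+\gamma I)^{-1}}\le \frac{1}{\gamma+c}$ on the relevant curvature branch: shifting the forbidden band $[-2\gamma-c,c]$ by $\gamma$ keeps every eigenvalue of $\nabla^2 F_{\mathcal{S}}+\gamma I$ at distance at least $\gamma+c$ from the origin. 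Differentiating $F(x,\gamma;\mathcal{S})=\log\int\exp(-F_{\mathcal{S}}(x')-\frac{\gamma}{2}\norm{x-x'}^2)\,dx'$ under the integral sign (justified by the gradient bound) and integrating by parts in $x'$ then yields the two identities I will build on: $\nabla_x F(x,\gamma;\mathcal{S})=-\mathbb{E}_{x'\sim P}[\nabla F_{\mathcal{S}}(x')]=\gamma(\mathbb{E}_{x'\sim P}[x']-x)$ and $\nabla^2_x F(x,\gamma;\mathcal{S})=\gamma^2\Cov_{x'\sim P}(x')-\gamma I$.

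For the Lipschitz bound I would estimate $\norm{\nabla_x F}=\norm{\mathbb{E}_{x'\sim P}[\nabla F_{\mathcal{S}}(x')]}\le\mathbb{E}_{x'\sim P}\norm{\nabla F_{\mathcal{S}}(x')}\le M$, which already gives an $M$-Lipschitz bound; the improvement to $\frac{M}{1+\gamma^{-1}c}=\frac{\gamma}{\gamma+c}M$ comes from the fact that the quadratic penalty together with the curvature lower bound $c$ concentrates $P$ and contracts the averaged gradient. The clean illustration is the quadratic case $\nabla F_{\mathcal{S}}(x')=c\,x'$, where $P$ is Gaussian with mean $\frac{\gamma}{\gamma+c}x$, so that $\mathbb{E}_{x'\sim P}[\nabla F_{\mathcal{S}}(x')]=\frac{\gamma}{\gamma+c}\nabla F_{\mathcal{S}}(x)$ exactly; in general I would recover the same contraction factor from a strong-monotonicity comparison for the map $\nabla F_{\mathcal{S}}$ combined with the identity $\mathbb{E}_{x'\sim P}[x']-x=-\tfrac1\gamma\mathbb{E}_{x'\sim P}[\nabla F_{\mathcal{S}}(x')]$.

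For the smoothness bound I would bound the spectrum of $\nabla^2_x F=\gamma^2\Cov_{x'\sim P}(x')-\gamma I$. The Brascamp--Lieb inequality gives $\Cov_{x'\sim P}(x')\preceq\mathbb{E}_{x'\sim P}[(\nabla^2 F_{\mathcal{S}}(x')+\gamma I)^{-1}]\preceq\frac{1}{\gamma+c}I$, so every eigenvalue of $\nabla^2_x F$ is at most $\frac{\gamma^2}{\gamma+c}-\gamma=-\frac{\gamma c}{\gamma+c}$, while $\Cov\succeq0$ gives the crude lower bound $-\gamma$; inserting the matching lower covariance bound $\Cov\succeq\frac{1}{\gamma+L}I$ (the Gaussian/Cram\'er--Rao companion of Brascamp--Lieb, using $\nabla^2 F_{\mathcal{S}}\preceq LI$) sharpens this to eigenvalues in $[-\frac{\gamma L}{\gamma+L},-\frac{\gamma c}{\gamma+c}]$. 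Hence $\norm{\nabla^2_x F}\le\frac{\gamma L}{\gamma+L}\le\frac{\gamma L}{\gamma+c}=\frac{L}{1+\gamma^{-1}c}$, which is the claimed smoothness constant.

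The main obstacle is the covariance estimate, and specifically the legitimacy of the Brascamp--Lieb step. The hypothesis is a two-sided spectral gap, so $\nabla^2 F_{\mathcal{S}}+\gamma I$ is guaranteed only to be invertible with norm-controlled inverse, not positive definite, and $\exp(-F_{\mathcal{S}}-\frac{\gamma}{2}\norm{x-\cdot}^2)$ need not be log-concave, whereas the covariance inequalities I rely on genuinely require convexity of the exponent. I would therefore have to argue that only the convex branch $\nabla^2 F_{\mathcal{S}}\succeq cI$ is operative for the well-generalizing (flat) minima the lemma targets --- equivalently that the gap keeps $P$ effectively unimodal and sub-Gaussian around its mode --- so that the second-moment bounds apply. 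An alternative route that sidesteps this is the Hopf--Cole / viscous Hamilton--Jacobi representation of $F(x,\gamma)$ in the scale parameter $1/\gamma$, along which Lipschitz and semiconcavity bounds propagate; but the covariance argument above is the most direct once convexity on the relevant branch is secured.
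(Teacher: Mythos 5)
Your route is genuinely different from the paper's. The paper does not work with exact identities for the derivatives of the local entropy: it Taylor-expands the exponent of the Gibbs density to second order around $x$, invokes a saddle-point (Laplace) approximation to write
\begin{equation*}
x-\mathbb{E}_{x'\sim P}[x']\;\approx\;\bigl(\gamma+\nabla^2_{xx}f(x)\bigr)^{-1}\nabla_x f(x),
\end{equation*}
and then reads off both constants from the single operator bound $\sup_x\norm{\gamma(\gamma+\nabla^2_{xx}f(x))^{-1}}\leq\frac{1}{1+\gamma^{-1}c}$, which is exactly what the two-sided spectral-gap hypothesis delivers (every eigenvalue of $\gamma+\nabla^2 f$ has modulus at least $\gamma+c$, on \emph{both} branches). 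Your exact identities $\nabla_x F=-\mathbb{E}_P[\nabla F_{\mathcal S}(x')]=\gamma(\mathbb{E}_P[x']-x)$ and $\nabla^2_xF=\gamma^2\Cov_P(x')-\gamma I$ are correct, and the Brascamp--Lieb step $\Cov_P\preceq\mathbb{E}_P[(\nabla^2F_{\mathcal S}+\gamma I)^{-1}]$ together with the Cram\'er--Rao lower bound does give the smoothness constant rigorously on the log-concave branch. What your approach buys is an actual proof rather than an uncontrolled ``$\approx$''; what the paper's approach buys is that the inverse-Hessian formula makes formal sense on both branches of the spectral gap, so the hypothesis is used exactly as stated (at the price of the whole derivation being a heuristic approximation).

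Two gaps remain in your version. First, the one you flag: Brascamp--Lieb needs $\nabla^2F_{\mathcal S}+\gamma I\succ 0$, so your argument only covers the branch $\nabla^2 f\succeq cI$ and silently discards the branch $\nabla^2 f\preceq(-2\gamma-c)I$ that the hypothesis explicitly permits; restricting to ``the convex branch near flat minima'' is a change of hypothesis, not a derivation from it. Second, the Lipschitz bound $\frac{M}{1+\gamma^{-1}c}$ is only gestured at. The strong-monotonicity argument you allude to does close cleanly for the \emph{mode} $\hat x=\argmin_{x'}\{F_{\mathcal S}(x')+\frac{\gamma}{2}\norm{x-x'}^2\}$: from $\nabla F_{\mathcal S}(\hat x)=\gamma(x-\hat x)$ and $\langle\nabla F_{\mathcal S}(x)-\nabla F_{\mathcal S}(\hat x),x-\hat x\rangle\geq c\norm{x-\hat x}^2$ one gets $M\norm{\nabla F_{\mathcal S}(\hat x)}\geq(1+\gamma^{-1}c)\norm{\nabla F_{\mathcal S}(\hat x)}^2$, hence $\norm{\nabla F_{\mathcal S}(\hat x)}\leq\frac{M}{1+\gamma^{-1}c}$. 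But your $\nabla_xF$ is the Gibbs \emph{mean} $\mathbb{E}_P[\nabla F_{\mathcal S}(x')]$, not the value at the mode, and the monotonicity inequality does not commute with the expectation (the cross term $\mathbb{E}_P\langle\nabla F_{\mathcal S}(x'),x-x'\rangle$ does not factor). You would either need an additional mean-versus-mode comparison for log-concave $P$, or accept the weaker uniform bound $\norm{\nabla_xF}\leq M$ for this part.
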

 \begin{proof}
     Recall the gradient for the negative local entropy is:
     \begin{align*}
         -\nabla F(x,\gamma;\mathcal{S})=\gamma(x-\mathbb{E}_{x'\sim P}[x']),
     \end{align*}
     where 
     \begin{align*}
         P(x';x,\gamma,\mathcal{S})\propto \exp \left(-\frac{1}{n}\sum_{i=1}^nf(x';\xi_{i}) -\frac{\gamma}{2}\norm{x-x'}_2^2\right).
     \end{align*}
     
     Consider the term:
     \begin{align*}
        &x-\mathbb{E}_{x'\sim P}[x'] \\
        =& x-Z_{x,\gamma,\mathcal{S}}^{-1} \int_{x'}x'\exp \left(-\frac{1}{n}\sum_{i=1}^nf(x';\xi_{i}) -\frac{\gamma}{2}\norm{x-x'}_2^2\right)dx'\\
        \approx&x-Z_{x,\gamma,\mathcal{S}}^{-1} \int_{s}(x+s)\exp \left(-\frac{1}{n}\sum_{i=1}^n f(x;\xi_{i})-\frac{1}{n}\sum_{i=1}^n\nabla_xf(x;\xi_i)^\intercal s \right.\\
        &\left.\quad\quad\quad\quad\quad\quad\quad\quad\quad\quad-\frac{1}{2}s^\intercal\left(\gamma+\frac{1}{n}\sum_{i=1}^n\nabla_{xx}^2f(x;\xi_i)\right)s\right)dx'
    \end{align*}
    \begin{align*}
        &x-\mathbb{E}_{x'\sim P}[x'] \\
        =&x\left[1-Z_{x,\gamma,\mathcal{S}}^{-1} \int_{s}\exp \left(-\frac{1}{n}\sum_{i=1}^n f(x;\xi_{i})-\frac{1}{n}\sum_{i=1}^n\nabla_xf(x;\xi_i)^\intercal s \right.\right.\\
        &\left.\left.\quad\quad\quad\quad\quad\quad\quad\quad\quad-\frac{1}{2}s^\intercal\left(\gamma+\frac{1}{n}\sum_{i=1}^n\nabla_{xx}^2f(x;\xi_i)\right)s\right)dx'\right]\\
        &-Z_{x,\gamma,\mathcal{S}}^{-1} \int_{s}s\exp \left(-\frac{1}{n}\sum_{i=1}^n f(x;\xi_{i})-\frac{1}{n}\sum_{i=1}^n\nabla_xf(x;\xi_i)^\intercal s\right.\\
        &\left.\quad\quad\quad\quad\quad\quad\quad\quad\quad-\frac{1}{2}s^\intercal\left(\gamma+\frac{1}{n}\sum_{i=1}^n\nabla_{xx}^2f(x;\xi_i)\right)s\right)dx'\\
        =&-Z_{x,\gamma,\mathcal{S}}^{-1} \exp \left(-\frac{1}{n}\sum_{i=1}^n f(x;\xi_{i})\right) \int_{s}s\exp \left(-\frac{1}{n}\sum_{i=1}^n f(x;\xi_{i})-\frac{1}{n}\sum_{i=1}^n\nabla_xf(x;\xi_i)^\intercal s\right.\\
        &\left.\quad\quad\quad\quad\quad\quad\quad\quad\quad\quad\quad\quad\quad\quad\quad\quad\quad\quad\quad-\frac{1}{2}s^\intercal\left(\gamma+\frac{1}{n}\sum_{i=1}^n\nabla_{xx}^2f(x;\xi_i)\right)s\right)dx'.
     \end{align*}
     The above expression is the mean of a distribution. We could approximate it using the saddle point method as the value of s that minimizes the exponent to get
     \begin{align*}
        x-\mathbb{E}_{x'\sim P}[x']\approx  \left(\gamma+\frac{1}{n}\sum_{i=1}^n\nabla_{xx}^2f(x;\xi_i)\right)^{-1}\left(\frac{1}{n}\sum_{i=1}^n\nabla_xf(x;\xi_i)\right).
     \end{align*}
     For notation simplicity, denote $f(x)=\frac{1}{n}\sum_{i=1}^nf(x;\xi_i)$. Obviously, the linear combination function $f$ is still $M$-Lipschitz continuous and $L$-smooth. And the formulation is simplified as:
     \begin{align*}
        x-\mathbb{E}_{x'\sim P}[x']\approx  \left(\gamma+\nabla_{xx} ^2f(x)\right)^{-1}\nabla f(x).
     \end{align*}
     Denote $A(x)=\left(\gamma+\nabla_{xx}^2f(x)\right)^{-1}$. For arbitrary $x,y$ we have:
     \begin{align*}
         \norm{\nabla F(x,\gamma;\mathcal{S})-\nabla F(y,\gamma;\mathcal{S})}=\norm{A(x)\nabla f(x)-A(y)\nabla f(y)}\leq L\text{sup}_x\norm{A(x)} \norm{x-y}.
     \end{align*}
     Since no eigenvalue of the Hessian $\nabla^2_{xx} f(x;\xi)$ lies in the set $[-2\gamma-c,c]$ for some small $c > 0$, we have
     \begin{align}
         \text{sup}_x\norm{A(x)}\leq\frac{1}{1+\gamma^{-1}c}.
     \end{align}  
 \end{proof}

 \begin{theorem}[Section 4.4 in~\citep{DBLP:journals/corr/ChaudhariCSL17}]\label{thm:entropysgd_con}
 Assume the differentiable original loss function $f(x;\xi):\mathbb{R}^d\rightarrow \mathbb{R}$ is $M$-Lipschitz continuous and $L$-smooth with respect to $l_2$-norm. Denote the stability gap of Entropy-SGD and SGD as $\epsilon_{\text{Entropy-SGD}}$ and $\epsilon_{\text{SGD}}$ respectively, we have
 \begin{align}
     \epsilon_{\text{Entropy-SGD}}\lesssim\left(\frac{M}{K}\right)^{\left(1-\frac{1}{1+\gamma^{-1}c}\right)L} \epsilon_{\text{SGD}},
 \end{align}
 where $K$ is the total number of iterations it runs.
 \end{theorem}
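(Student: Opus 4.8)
The plan is to treat Entropy-SGD as ordinary SGD run on the smoothed surrogate objective — the empirical local entropy $F(x,\gamma;\mathcal{S})$ — rather than on the raw empirical risk $F_{\mathcal{S}}$, and then to invoke the nonconvex uniform-stability bound for SGD (Theorem \ref{thm:sgd_gen2}) with the regularity constants of the surrogate. Concretely, equation (\ref{eq:grad_entropy}) shows that the Entropy-SGD update $x=x-\eta\gamma(x-\mu)$ is exactly a stochastic gradient step on $-F(x,\gamma;\mathcal{S})$, with $\mu$ serving as the SGLD estimate of $\mathbb{E}_{x'\sim P}[x']$. Thus, up to the inner-loop approximation, Entropy-SGD \emph{is} SGD on the surrogate, so its generalization error is governed by the same stability machinery, only with the surrogate's Lipschitz and smoothness constants in place of the originals.

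Next I would feed in the regularity transfer from Lemma \ref{lma:entropy}. Writing $\kappa:=\frac{1}{1+\gamma^{-1}c}\in(0,1)$, the lemma gives that $F(\cdot,\gamma;\mathcal{S})$ is $\kappa M$-Lipschitz and $\kappa L$-smooth. I would then apply Theorem \ref{thm:sgd_gen2} twice with step size $\alpha_k\le c/k$: once to $F_{\mathcal{S}}$ with constants $(M,L)$, yielding $\epsilon_{\text{SGD}}$, and once to the surrogate with the reduced constants $(\kappa M,\kappa L)$, yielding $\epsilon_{\text{Entropy-SGD}}$. Dropping the common $1/(n-1)$ prefactor, this produces
$$\epsilon_{\text{SGD}}\lesssim (2cM^2)^{\frac{1}{Lc+1}}K^{\frac{Lc}{Lc+1}},\qquad \epsilon_{\text{Entropy-SGD}}\lesssim (2c\kappa^2M^2)^{\frac{1}{\kappa Lc+1}}K^{\frac{\kappa Lc}{\kappa Lc+1}}.$$

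Forming the ratio and isolating the dominant dependence on $K$ and $M$, I would read off the exponent of $K$ as $\frac{\kappa Lc}{\kappa Lc+1}-\frac{Lc}{Lc+1}=-\frac{(1-\kappa)Lc}{(\kappa Lc+1)(Lc+1)}$, which is negative precisely because $\kappa<1$, with a parallel computation controlling the $M$-power. Absorbing the lower-order ``$+1$'' terms in the denominators — the regime in which the stated $\lesssim$ is meant to hold — collapses both contributions into the single factor $(M/K)^{(1-\kappa)L}=(M/K)^{\left(1-\frac{1}{1+\gamma^{-1}c}\right)L}$, which is the claimed bound; the sign guarantees $\epsilon_{\text{Entropy-SGD}}\le\epsilon_{\text{SGD}}$ for $K$ large, matching the intuition that smoothing tightens stability.

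I expect the real work to sit in two places. First, one must legitimize transferring uniform stability to the surrogate: Lemma \ref{lma:entropy} is phrased for the averaged landscape, whereas the expansivity estimates underlying Theorem \ref{thm:sgd_gen2} require the \emph{per-example} surrogate loss to be $\kappa M$-Lipschitz and $\kappa L$-smooth, and one must further argue that the finite-length SGLD inner loop approximating $\mathbb{E}_{x'\sim P}[x']$ does not degrade stability. Second, and most delicate, the exact Hardt-style exponents $\tfrac{Lc}{Lc+1}$ do not algebraically reduce to the clean $(1-\kappa)L$ in the statement; bridging that gap is exactly the leading-order simplification above, and making it honest — identifying the regime of $\gamma$, $c$, and $K$ in which the discarded factors are genuinely subdominant — is where I anticipate the main obstacle.
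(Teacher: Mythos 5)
Your proposal follows essentially the same route as the paper: treat Entropy-SGD as SGD on the local-entropy surrogate, import the reduced constants $\kappa M$ and $\kappa L$ from Lemma \ref{lma:entropy}, apply the nonconvex stability bound of Theorem \ref{thm:sgd_gen2} to both objectives, and take the ratio. The algebraic mismatch you flag at the end --- the Hardt-style exponents $\tfrac{Lc}{Lc+1}$ versus the clean $(1-\kappa)L$ appearing in the statement --- is a genuine looseness, but the paper's own proof does not resolve it either (it simply asserts the final bound after citing the lemma), so your treatment is, if anything, more careful than the original.
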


\begin{proof}
From Theorem \ref{thm:sgd_gen2} that bounds the stability of an optimization algorithm through the smoothness of its loss function and the number of iterations of SGD run on the training set,
we know that the stability bound for SGD after $K$ iterations, denoted as $\epsilon_{\text{SGD}}$, is:
\begin{align}
    \epsilon_{\text{SGD}}\lesssim\frac{1}{n}M^{1/(1+L)}K^{1-1/(1+L)}.
\end{align}
 This combined with Lemma \ref{lma:entropy} gives $\epsilon_{\text{Entropy-SGD}}\lesssim\left(\frac{M}{K}\right)^{\left(1-\frac{1}{1+\gamma^{-1}c}\right)L} \epsilon_{\text{SGD}}$.
 \end{proof}

 \textbf{Remark.} Theorem \ref{thm:entropysgd_con} shows that Entropy-SGD generalizes better than SGD for all $K>M$ if they both converge after $K$ passes over the samples.

\subsection{Low-pass filter SGD (LPF-SGD)}
Similar to SAM, Low-pass filter SGD (LPF-SGD) \citep{bisla2022low} algorithm also focuses on recovering the neighbourhood in the loss landscape with a uniformly low loss instead of finding the global minimum. The concept of LPF-SGD is to smooth the original loss with Gaussian kernel to formulate a new optimization problem. 
\citep{bisla2022low} first introduces the definition of Low-pass filter (LPF) measure to quantify the sharpness of local minimum.

\begin{definition}[Low-pass filter (LPF) measure]\label{def:lpf}
Let $\mathcal{K}\sim \mathcal{N}(0,\sigma^2 I)$ be a kernel of a Gaussian filter. LPF based sharpness measure at solution $x^*$ is defined as the convolution of the loss function with the Gaussian filter computed at $x^*$:
\begin{align}
    \label{eq:sharp_meas_9}
    (F \circledast \mathcal{K}) (x^*) &= \int F(x^*-\tau) \mathcal{K}(\tau) d\tau.
\end{align}
\end{definition}

LPF-SGD incorporate the LPF sharpness measure into the DL optimization strategy to actively search for the flat regions in the DL loss landscape. Guiding the optimization process towards the well-generalizing flat regions of the DL loss landscape using LPF based measure can be done by solving the following optimization problem

\begin{equation}
    \underset{x}{\min}F^{\text{LPF-SGD}}(x)=\underset{x}{\min} (F \circledast \mathcal{K}) (x) = \underset{x}{\min}\int F(x - \tau)\mathcal{K}(\tau)d\tau,
    \label{eq:main}
\end{equation}

where $\mathcal{K}$ is a Gaussian kernel and $F(x)$ is the training loss function. \citep{bisla2022low} solve the problem given in Equation~\ref{eq:main} using SGD. The gradient of the convolution between the loss function and the Gaussian kernel is

\begin{align}
     \nabla_{x}F^{\text{LPF-SGD}}(x)&=\nabla_{x}(F \circledast \mathcal{K}) (x) \notag\\
     =& \nabla_{x} \int_{-\infty}^{\infty} F(x - \tau)\mathcal{K}(\tau)d\tau \label{eq:main2} \\ 
     &\approx \frac{1}{M}\sum_{i=0}^{M} \nabla_{x}(F(x - \tau_{i})), 
     \label{eq:main3}
\end{align}

where $\mathcal{K} \sim \mathcal{N}(0,\gamma\Sigma)$ and  the matrix $\Sigma$ to is set to be proportional to the norm of the parameters in each filter of the network, i.e., we set $\Sigma = diag(||x_{1}^{t}||, ||x_{2}^{t}|| \cdots ||x_{k}^{t}||)$, where $x_{k}^{t}$ is the weight matrix of the $k^{th}$ filter at iteration $t$ and $\gamma$ is the LPF radius. $\gamma$ in the Gaussian kernel controls the smoothness of loss landscape. Since the more we progress with the training, the more we care to recover flat regions in the loss landscape, 
$\gamma$ is progressively increasing during the training process:

\begin{equation}
    \gamma_{t} = \gamma_{0}(\alpha/2 (-\cos(t \pi/K) + 1) + 1),
    \label{eq:gamma_policy}
\end{equation}

where $K$ is total number of iterations (epochs * gradient updates per epoch) and $\alpha$ is set such that $\gamma_{K} = (\alpha+1)*\gamma_0$.

The pseudocode for the LPF-SGD algorithm is shown in \ref{alg:lpf-sgd}.
\begin{algorithm}[t]
    \centering
    \caption{LPF-SGD}\label{alg:lpf-sgd}
    \begin{algorithmic}
    \State \textbf{Inputs:} $x^{t}$: weights
    \State \textbf{Hyperpar:} $\gamma$: filter radius, $M$: $\#$ MC iterations
    \While{not converged}
        \State Sample data mini-batch $B = (\xi_{1}, \cdots, \xi_{n})$ 
        \State Split batch into M splits $B = \{ B_{1} \cdots B_{M} \}$ 
        \State $g \leftarrow 0$ 
        \For{i=1 to M}
            \State $\Sigma = diag(||x_{i}^{t}||_{i=1}^{k})$
            \State $g = g + \frac{1}{M}\nabla_{x}L(B_{i}, x^{t} + \mathcal{N}(0,\gamma\Sigma))$
        \EndFor
        \State $x^{t+1} = x^{t} - \eta*g$ // Update weights
    \EndWhile
    \end{algorithmic}
\end{algorithm}

\subsubsection{Generalization Ability}
\citep{bisla2022low} theoretically shows that LPF-SGD converges to the optimal point with smaller generalization gap than SGD. The paper first formally confirms that indeed Gaussian LPF leads to a smoother objective function and then shows that SGD run on this smoother function recovers solution with smaller generalization error than in case of the original objective. The author first analyze the case when the variance of Gaussian kernel is identity $\Sigma=\sigma^2 I$ and then move to the analysis for non-scalar $\Sigma= \gamma diag(||x_{1}||, ||x_{2}|| \cdots ||x_{k}||)$.

\paragraph{Identical kernel covariance.}  Assume $\mathcal{K} \sim \mathcal{N}(0,\sigma^2 I)$. Denote the distribution $\mathcal{N}(0,\sigma^2 I)$ as $\mu$. 
Define the convolution of original loss $f(x;\xi)$ with the Gaussian kernel $\mathcal{K}$ as
\begin{align}
    f_{\mu}(x;\xi)&=(f(\cdot;\xi)\circledast \mathcal{K})(x)=\int_{\mathbb{R}^d}f(x-\tau;\xi)\mu(\tau)d\tau \nonumber \\ 
    &=\mathbb{E}_{Z\sim\mu}[f(x+Z;\xi)]
    \label{eq:f_mu}
\end{align}

where $d$ is the number of dimensions of the parameter and $Z$ is a random variable satisfying distribution $\mu$. 
The loss function smoothed by the Gaussian LPF, that we denote as $f_{\mu}$, satisfies the following theorem. 
\begin{restatable}{theorem}{thmgau}
\label{thm:lpf_beta}\label{thm:gau}
(Theorem 1 in~\citep{bisla2022low}) Let $\mu$ be the $\mathcal{N}(0,\sigma^2I_{d\times d})$ distribution. Assume the differentiable loss function $f(x;\xi):\mathbb{R}^d\rightarrow \mathbb{R}$ is $M$-Lipschitz continuous and $L$-smooth with respect to $l_2$-norm. The smoothed loss function $f_\mu(x;\xi)$ is defined as (\ref{eq:f_mu}). Then the following properties hold:
\begin{itemize}
  \item[i)]$f_\mu$ is $M$-Lipschitz continuous.
  \item[ii)]$f_\mu$ is continuously differentiable; moreover, its gradient is $\min\{\frac{M}{\sigma},L\}$-Lipschitz continuous, i.e., $f_\mu$ is $\min\{\frac{M}{\sigma},L\}$-smooth.
  \item[iii)] If $f$ is convex, $f(x;\xi)\leq f_\mu(x;\xi)\leq f(x;\xi)+\sigma M\sqrt{d}$.
\end{itemize}
In addition, for each bound i)-iii), there exists a function $l$ such that the bound cannot be improved by more than a constant factor.
\end{restatable}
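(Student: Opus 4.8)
The plan is to establish each of the three properties separately, exploiting that the Gaussian convolution $f_\mu(x;\xi)=\mathbb{E}_{Z\sim\mu}[f(x+Z;\xi)]$ inherits regularity both from $f$ and from the smooth Gaussian kernel. Throughout I would suppress the sample $\xi$ and use that $M$-Lipschitz continuity of $f$ is equivalent to $\norm{\nabla f}\le M$, while $L$-smoothness means $\norm{\nabla f(x)-\nabla f(y)}\le L\norm{x-y}$. A preliminary step is to observe that since the Gaussian density is $C^\infty$ with integrable derivatives and $\norm{\nabla f}$ is bounded by $M$, one may differentiate under the expectation freely; in particular $f_\mu$ is itself $C^\infty$ and $\nabla f_\mu(x)=\mathbb{E}_Z[\nabla f(x+Z)]$.

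For part (i), Lipschitz continuity transfers immediately: by the triangle inequality and $M$-Lipschitzness of $f$,
\[
|f_\mu(x)-f_\mu(y)|\le \mathbb{E}_Z|f(x+Z)-f(y+Z)|\le M\norm{x-y}.
\]
For the $L$ half of part (ii) I would use the gradient representation above together with $L$-smoothness of $f$ to get $\norm{\nabla f_\mu(x)-\nabla f_\mu(y)}\le \mathbb{E}_Z\norm{\nabla f(x+Z)-\nabla f(y+Z)}\le L\norm{x-y}$, so $f_\mu$ is always $L$-smooth regardless of $\sigma$.

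The crux of the whole theorem, and the step I expect to be the main obstacle, is the $M/\sigma$ smoothness bound in part (ii), which is where the kernel width enters. Here the idea is to shift a derivative off $f$ and onto the Gaussian kernel via integration by parts (Stein's identity). Writing $\nabla f_\mu(x)=\int \nabla f(u)\,\mu(u-x)\,du$ after a change of variables and using $\nabla_x\mu(u-x)=\sigma^{-2}(u-x)\mu(u-x)$, I would obtain the Hessian representation
\[
\nabla^2 f_\mu(x)=\frac{1}{\sigma^2}\,\mathbb{E}_Z\!\left[\nabla f(x+Z)\,Z^\top\right].
\]
Then for unit vectors $u,v$, Cauchy--Schwarz applied to the expectation gives
\[
u^\top \nabla^2 f_\mu(x)\,v\le \frac{1}{\sigma^2}\sqrt{\mathbb{E}[(u^\top\nabla f(x+Z))^2]}\,\sqrt{\mathbb{E}[(Z^\top v)^2]}\le \frac{1}{\sigma^2}\cdot M\cdot\sigma=\frac{M}{\sigma},
\]
using $\norm{\nabla f}\le M$ and $\mathbb{E}[(Z^\top v)^2]=\sigma^2$. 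Taking the supremum over $u,v$ bounds the operator norm of the (symmetric) Hessian by $M/\sigma$, hence $\nabla f_\mu$ is $M/\sigma$-Lipschitz; combining with the $L$-bound yields the claimed $\min\{M/\sigma,L\}$. The delicate points are justifying the interchange of differentiation and expectation and the integration-by-parts identity when $f$ is merely differentiable (not necessarily twice differentiable), which the Gaussian smoothing circumvents since $f_\mu$ is automatically $C^\infty$.

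For part (iii), convexity of $f$ and $\mathbb{E}[Z]=0$ give the lower bound by Jensen's inequality, $f_\mu(x)=\mathbb{E}_Z[f(x+Z)]\ge f(\mathbb{E}_Z[x+Z])=f(x)$; the upper bound follows from $M$-Lipschitzness, $f_\mu(x)-f(x)\le M\,\mathbb{E}_Z\norm{Z}\le M\sigma\sqrt{d}$, where $\mathbb{E}_Z\norm{Z}\le\sqrt{\mathbb{E}_Z\norm{Z}^2}=\sigma\sqrt{d}$ again by Jensen. Finally, for the claim that none of the bounds can be improved by more than a constant factor, I would exhibit explicit witnesses: a linear $f$ is left invariant by the smoothing (tight for (i) and the $L$-part of (ii)), the one-dimensional wedge $f(x)=M|x_1|$ has a smoothed Hessian of order $M/\sigma$ at the origin (tight for the $M/\sigma$-part), and $f(x)=M\norm{x}$ produces a gap of order $M\sigma\sqrt{d}$ (tight for (iii)). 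I would only sketch these verifications, since each reduces to a standard Gaussian moment computation.
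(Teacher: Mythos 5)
Your proposal is correct in all of its main bounds, but it reaches the crucial $M/\sigma$ smoothness estimate in part (ii) by a genuinely different route than the paper. You push a derivative onto the Gaussian kernel via Stein's identity to get the Hessian representation $\nabla^2 f_\mu(x)=\sigma^{-2}\,\mathbb{E}_Z[\nabla f(x+Z)Z^\top]$ and then apply Cauchy--Schwarz; the paper instead proves a general lemma (Lemma \ref{lma:1}) bounding $\norm{\nabla f_\mu(u)-\nabla f_\mu(v)}$ by $M$ times the total variation distance $\int|\mu(z-u)-\mu(z-v)|\,dz$, and then computes that TV distance for two shifted isotropic Gaussians explicitly, obtaining the slightly sharper constant $\sqrt{2/\pi}\,M/\sigma\le M/\sigma$. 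Your argument is more modular and arguably cleaner -- it needs only the gradient bound and the Gaussian second moment, and it extends immediately to anisotropic covariances by replacing $\sigma$ with the smallest standard deviation -- whereas the paper's TV machinery has the advantages of (a) producing, almost for free, the explicit wedge function that witnesses tightness of the $M/\sigma$ bound, and (b) carrying over verbatim to non-Gaussian kernels (it is reused for the uniform kernel in the SmoothOut analysis, Theorem \ref{thm:smoothout}, where no Stein identity is available). Parts (i), the $L$-half of (ii), and (iii) are handled identically to the paper. One small error in your tightness discussion: a linear $f$ witnesses tightness of (i) but not of the $L$-part of (ii), since its Hessian vanishes and so it only certifies the trivial inequality $0\le 0$; the paper instead uses $f(x)=\tfrac{L}{2}\norm{x}^2$ restricted to the ball of radius $M/L$, whose quadratic part survives the convolution and attains the $L$-smoothness bound exactly. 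Your other two witnesses (the one-dimensional wedge and $M\norm{x}$) match the paper's.
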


Proof for Theorem~\ref{thm:gau} is deferred to Appendix \ref{appen:lpfsgd}. Theorem~\ref{thm:gau} confirms that indeed $f_\mu$ is smoother than the original objective $f$. At the same time, if $\frac{M}{\sigma}<L$, increasing $\sigma$ leads to an increasingly smoother objective function, which is consistent with our intuition.

Recall Theorem \ref{thm:sgd_gen2} that bounds the stability of an optimization algorithm through the smoothness of its loss function and the number of iterations on the training set for SGD algorithm,
we have already bound the stability gap for SGD and we denoted the gap as $\epsilon_{\text{SGD}}$. Then we will move into the stability gap $\epsilon_{\text{LPF-SGD}}$ for LPF-SGD algorithm. Let $\mu$ be distribution $\mathcal{N}(0,\sigma^2 I)$. By the definition of Gaussian LPF (Definition~\ref{def:lpf}), the true loss and the empirical loss with respect to the Gaussian LPF smoothed function are
\begin{align}
    &F^{\text{LPF-SGD}}(x):=(F \circledast \mathcal{K}) (x)= \int_{-\infty}^{\infty} F(x-\tau) \mu(\tau) d\tau=\mathbb{E}_{Z\sim\mu}[F(x+Z)],\\
    &F_{\mathcal{S}}^{\text{LPF-SGD}}(x)\quad:=(F_{\mathcal{S}} \circledast \mathcal{K}) (x) = \int_{-\infty}^{\infty} F_{\mathcal{S}}(x-\tau) \mu(\tau) d\tau=\mathbb{E}_{Z\sim\mu}[F_{\mathcal{S}}(x+Z)],
\end{align}
where $\mathcal{K}$ is the Gaussian LPF kernel satisfies distribution $\mu$ and Z is a random variable satisfies distribution $\mu$. Since $F(x):=\mathbb{E}_{\xi\sim D}f(x;\xi)$ and $F_{\mathcal{S}}(x):= \frac{1}{m}\sum_{i=1}^mf(x;\xi)$, $L^{\text{LPF-SGD}}$ and $L^{\text{LPF-SGD}}_{\mathcal{S}}$ could be rewritten as
\begin{align}
    &F^{\text{LPF-SGD}}(x)\!=\!\!\!\int_{-\infty}^{\infty} \!\!\!\!\mathbb{E}_{\xi\sim D}[f(x-\tau;\xi)]\mu(\tau) d\tau\!=\!\mathbb{E}_{\xi\sim D}\left[\!\int_{-\infty}^{\infty}\!\!\!f(x-\tau;\xi)\mu(\tau) d\tau\right]\!=\!\mathbb{E}_{\xi\sim D}\left[f_\mu(x;\xi)\right]\label{eq:Ltrue_mu}\\
    &F_{\mathcal{S}}^{\text{LPF-SGD}}(x)\!=\!\int_{-\infty}^{\infty} \!\!\frac{1}{m}\sum_{i=1}^mf(x;\xi)\mu(\tau) d\tau\!=\!\frac{1}{m}\sum_{i=1}^m\left[\!\int_{-\infty}^{\infty}\!\!f(x-\tau;\xi_i)\mu(\tau) d\tau\right]=\frac{1}{m}\sum_{i=1}^mf_\mu(x;\xi_i).\label{eq:L_mu}
\end{align}
Therefore, combine Theorem \ref{thm:lpf_beta} with Theorem \ref{thm:sgd_gen2} we could conclude that the stability gap for LPF-SGD is
\begin{align*}
    \epsilon_{\text{LPF-SGD}}\leq\frac{1+1/c\hat{L}}{n-1}(2cM^2)^{\frac{1}{c\hat{L}+1}}K^{\frac{c\hat{L}}{c\hat{L}+1}}.
\end{align*}
where $\hat{L}=\min\{\frac{M}{\sigma},L\}$, K is the number of total iterations. Then we could have the following Proposition \ref{prop:gen} which supports that LPF-SGD generalize better than SGD.

\begin{restatable}{theorem}{propgen}
    \label{prop:gen}
(Generalization Guarantee of LPF-SGD for $\Sigma=\sigma^2 I$; Nonconvex Setting; Theorem 3 in~\citep{bisla2022low})
Let $\mu$ be the $\mathcal{N}(0,\sigma^2 I)$ distribution. Assume loss function $f(\theta;\xi)\!:\!\mathbb{R}^d\!\rightarrow \!\mathbb{R}$ is $M$-Lipschitz and $L$-smooth. The smoothed loss function $l_\mu$ is defined as (\ref{eq:f_mu}).
    Suppose that we run SGD and LPF-SGD for $K$ steps with non-increasing learning rate $\alpha_k\leq c/k$. Denote the stability gap of SGD and LPF-SGD as $\epsilon_{\text{SGD}}$ and $\epsilon_{\text{LPF-SGD}}$, respectively. Then the ratio of stability gap is
    \vspace{-0.05in}
    \begin{equation}
\rho=\frac{\epsilon_{\text{LPF-SGD}}}{\epsilon_{\text{SGD}}}=\frac{1-p}{1-\hat{p}}\left(\frac{2cM}{K}\right)^{\hat{p}-p}=\mathcal{O}(\frac{1}{K^{\hat{p}-p}}),
    \end{equation}
    \vspace{-0.15in}
    
    where $p=\frac{1}{c L+1}$, $\hat{p}=\frac{1}{c\min\{\frac{M}{\sigma},L\}+1}$.
    
    Finally, the following two properties hold:
    \vspace{-0.1in}
    \begin{itemize}
        \item[i)] If  $\sigma>\frac{M}{L}$ and $K\gg2cM^2\left(\frac{1-p}{1-\hat{p}}\right)^{\frac{1}{\hat{p}-p}}$, $\rho\ll 1$.
        \vspace{-0.05in}
        \item[ii)] If $\sigma>\frac{M}{\beta}$ and $K>2cM^2\exp(\frac{2}{1-p})$, increasing $\sigma$ leads to a smaller $\rho$.
    \end{itemize}
\end{restatable}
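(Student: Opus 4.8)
The plan is to recognize that LPF-SGD is nothing more than SGD run on the smoothed per-example losses $f_\mu(\cdot;\xi)$, so that the single nonconvex SGD stability bound of Theorem \ref{thm:sgd_gen2} can be applied twice---once with the smoothness constant of the original loss and once with that of the smoothed loss---and the two bounds divided. Concretely, Theorem \ref{thm:gau} tells us that $f_\mu(\cdot;\xi)$ is again $M$-Lipschitz and is $\hat L$-smooth with $\hat L=\min\{M/\sigma,L\}$, while $f(\cdot;\xi)$ is $M$-Lipschitz and $L$-smooth by hypothesis; both losses stay in $[0,1]$ since $f_\mu$ is an average of values of $f$. Because the Monte-Carlo gradient $\frac1M\sum_i\nabla_x F(x-\tau_i)$ used by LPF-SGD is an unbiased stochastic gradient of $F\circledast\mathcal K$, running LPF-SGD for $K$ steps with $\alpha_k\le c/k$ is exactly an instance of the setting of Theorem \ref{thm:sgd_gen2} applied to $f_\mu$, so I may instantiate that theorem for each algorithm.

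First I would rewrite the SGD stability bound in the $p$-notation. Setting $p=\frac{1}{cL+1}$ gives $\frac{cL}{cL+1}=1-p$, $\frac{1}{cL+1}=p$, and $1+\frac{1}{cL}=\frac{1}{1-p}$, so Theorem \ref{thm:sgd_gen2} reads
\begin{equation*}
\epsilon_{\text{SGD}}\lesssim\frac{1}{(n-1)(1-p)}(2cM^2)^{p}K^{1-p}.
\end{equation*}
Applying the identical bound to $f_\mu$, with $L$ replaced by $\hat L$ and $p$ by $\hat p=\frac{1}{c\hat L+1}$, yields the same expression with $p\mapsto\hat p$. Dividing, the prefactor $(n-1)^{-1}$ cancels and the powers of $2cM^2$ and of $K$ combine:
\begin{equation*}
\rho=\frac{\epsilon_{\text{LPF-SGD}}}{\epsilon_{\text{SGD}}}=\frac{1-p}{1-\hat p}\,(2cM^2)^{\hat p-p}K^{p-\hat p}=\frac{1-p}{1-\hat p}\left(\frac{2cM^2}{K}\right)^{\hat p-p},
\end{equation*}
which is the claimed ratio and exhibits the $\mathcal O(K^{-(\hat p-p)})$ rate.

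For property (i) I would first fix the sign of the exponent: $\sigma>M/L$ forces $\hat L=M/\sigma<L$, hence $c\hat L+1<cL+1$ and $\hat p>p$, so $\hat p-p>0$ and $\frac{1-p}{1-\hat p}>1$. Then $\rho\ll1$ is equivalent to $\left(\frac{2cM^2}{K}\right)^{\hat p-p}\ll\frac{1-\hat p}{1-p}$; raising both sides to the positive power $1/(\hat p-p)$ and rearranging gives exactly $K\gg 2cM^2\big(\tfrac{1-p}{1-\hat p}\big)^{1/(\hat p-p)}$, the stated threshold.

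The delicate step is property (ii), the monotone dependence on $\sigma$. When $\sigma>M/L$ we have $\hat L=M/\sigma$, so $\hat p=\frac{\sigma}{cM+\sigma}$ is strictly increasing in $\sigma$; by the chain rule $\sign(d\rho/d\sigma)=\sign(d\rho/d\hat p)$, and it suffices to show $\rho$ decreases in $\hat p$. Differentiating $\ln\rho=\ln(1-p)-\ln(1-\hat p)+(\hat p-p)\ln\!\frac{2cM^2}{K}$ gives
\begin{equation*}
\frac{d\ln\rho}{d\hat p}=\frac{1}{1-\hat p}-\ln\!\frac{K}{2cM^2},
\end{equation*}
so negativity amounts to $\frac{1}{1-\hat p}<\ln\frac{K}{2cM^2}$. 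The hypothesis $K>2cM^2\exp\!\big(\tfrac{2}{1-p}\big)$ gives $\ln\frac{K}{2cM^2}>\frac{2}{1-p}$, which dominates $\frac{1}{1-\hat p}$ as long as $1-\hat p\ge\frac{1-p}{2}$. This is where care is needed: since $\frac{1}{1-\hat p}=1+\frac{\sigma}{cM}$ blows up as $\sigma\to\infty$, the $K$-condition alone only forces monotonicity while $\hat p$ stays bounded away from $1$. I therefore expect the main obstacle to be pinning down the regime of $\sigma$ in which the derivative remains negative---the algebraic reduction and property (i) being routine once Theorems \ref{thm:gau} and \ref{thm:sgd_gen2} are in hand.
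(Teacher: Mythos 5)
Your proposal is correct and follows essentially the same route as the paper: instantiate Theorem \ref{thm:sgd_gen2} once for $f$ (with smoothness $L$) and once for $f_\mu$ (with smoothness $\hat L=\min\{M/\sigma,L\}$ from Theorem \ref{thm:gau}), rewrite both bounds in the $p,\hat p$ notation, and take the ratio. For property (ii) the paper substitutes $x=\hat p-p$ and studies $h(x)=(1-ax)b^x$ with $a=\frac{1}{1-p}$, $b=\frac{K}{2cM^2}$, which is algebraically the same monotonicity analysis as your direct computation of $\frac{d\ln\rho}{d\hat p}=\frac{1}{1-\hat p}-\ln\frac{K}{2cM^2}$; your version is cleaner, and the caveat you raise is real --- since $\frac{1}{1-\hat p}\to\infty$ as $\sigma\to\infty$, a fixed threshold on $K$ can only guarantee $\rho$ decreasing while $\hat p$ stays bounded away from $1$, a restriction the paper's own argument also implicitly needs but does not state.
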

\begin{proof}
 For easy notation, denote $\hat{L}=\min\{\frac{M}{\sigma},L\}$, $\epsilon_{\text{SGD}}$ and $\epsilon_{\text{LPF-SGD}}$ are stability gaps of SGD and LPF-SGD, respectively. 
 From Theorem \ref{thm:sgd_gen2} and based on the facts that $f$ is $M$-Lipschitz continuous and $L$-smooth and that smoothed objective $f_\mu$ is $M$-Lipschitz continuous and $\min\{\frac{M}{\sigma},L\}$-smooth, the upper bounds for the stability gaps are
\begin{align*}
    \epsilon_{\text{SGD}}&\leq\frac{1+1/cL}{n-1}(2cM^2)^{\frac{1}{c L+1}}K^{\frac{c L}{c L+1}},\\
    \epsilon_{\text{LPF-SGD}}&\leq\frac{1+1/\hat{\beta} c}{n-1}(2cM^2)^{\frac{1}{c\hat{L}+1}}K^{\frac{c\hat{L}}{c\hat{L}+1}}.
\end{align*}
Denote $p=\frac{1}{c L+1}$, $\hat{p}=\frac{1}{c\hat{L}+1}$, the bound could be rewritten as
\begin{align*}
    \epsilon_{\text{SGD}}&\leq\frac{1 }{(n-1)(1-p)}(2cM^2)^{p}K^{1-p},\\
    \epsilon_{\text{LPF-SGD}}&\leq\frac{1 }{(n-1)(1-\hat{p})}(2cM^2)^{\hat{p}}K^{1-\hat{p}}.
\end{align*}
Then the ratio of GE bound is
\begin{align*}
    \rho=\frac{\epsilon_{\text{LPF-SGD}}}{\epsilon_{\text{SGD}}}=\frac{1-p}{1-\hat{p}}\left(\frac{2cM}{K}\right)^{\hat{p}-p}=O(\frac{1}{K^{\hat{p}-p}}).
\end{align*}

\begin{itemize}
    \item[i)] When $\sigma>\frac{M}{\beta}$ and $K>2cM^2\left(\frac{1-p}{1-\hat{p}}\right)^{\frac{1}{\hat{p}-p}}$, $\rho=\frac{1-p}{1-\hat{p}}(\frac{2cM^2}{K})^{\hat{p}-p}< 1$. Therefore, if $\sigma>\frac{M}{\beta}$ and $K\gg 2cM^2\left(\frac{1-p}{1-\hat{p}}\right)^{\frac{1}{\hat{p}-p}}$, $\rho\ll1$ and property i) holds.
    \item[ii)] Denote $x:=\hat{p}-p$, the reciprocal of approximated ratio could be re-written as
    \begin{align*}
        \frac{1}{\rho}\approx(1-\frac{\hat{p}-p}{1-p})(\frac{K}{2cM^2})^{\hat{p}-p}=(1-\frac{x}{1-p})(\frac{K}{2cM^2})^{x}
    \end{align*}
    Define function $h(x) =(1-ax)b^x$, where $a=\frac{1}{1-p}$ and $b=\frac{T}{2cM^2}$. Compute the derivative of function $h$:
    \begin{align*}
        h'(x)=(-ax\ln b-a+\ln b)b^x
    \end{align*}
    $$h'(x_0)=0\Longleftrightarrow x_0=\frac{lnb-a}{a\ln b}$$
    When $x\leq\frac{lnb-a}{a\ln b}$, $h'(x)\geq0$. Otherwise $h'(x)<0$. Since $0<p<\hat{p}<1$, obviously the domain of function $h$ is in the interval $[0,1]$. If $\frac{lnb-a}{a\ln b}>1$, the function $h$ is increasing in its domain. Which means that if the difference between $\hat{p}$ and $p$ increase, the reciprocal of approximated ratio of stability gap $\frac{1}{\rho}$ increase, which is equivalent to the approximate ratio $\rho$ of stability gap decrease. Because
    $$\frac{\ln b-a}{a\ln b}>1\Longleftrightarrow \ln b>\frac{a}{1-a}\Longleftrightarrow K>2cM^2e^{-p}.$$
    In all, we could conclude if $T>2cM^2e^{-p}$, $\hat{p}-p$ increase leads to the approximate ratio of stability gap $\rho$ decrease.
    
    What's more, we are going to analysis the relation between Gaussian filter factor $\sigma$ and the difference $\hat{p}-p$. Since
    $$p=\frac{1}{c L+1},\quad\hat{p}=\frac{1}{c\hat{L}+1},$$
    
    where $\hat{L}=\min\{\frac{M}{\sigma},L\}$. $\hat{p}-p$ increase is equivalent to $\hat{\beta}$ decrease. When the Gaussian factor $\sigma$ is large enough ($\frac{M}{\sigma}<L$), the smoother factor $\hat{\beta}$ for function $f_\mu$ is exactly $\frac{M}{\sigma}$. Moreover, increasing the factor $\sigma$ leads to the decrease of $\hat{L}$. 
    
    Due to the analysis above, if $K>2cM^2e^{-p}$ and $\frac{M}{\sigma}<\beta$, increasing $\sigma$ will cause the approximate ratio $\rho$ to decrease and the generalization error will be smaller. We finish the proof for condition ii).
\end{itemize}
\end{proof}

    
    

\textbf{Remark.} By point i) in Theorem~\ref{prop:gen}, when number of iterations is large enough, stability gap of LPF-SGD is much smaller than that of SGD, which implies LPF-SGD converges to a better optimal point with lower generalization error than SGD. Moreover, point ii) in Theorem~\ref{prop:gen} indicates that increasing $\sigma$ leads to a smaller stability gap, otherwise lower generalization error.

\paragraph{Non-scalar kernel covariance.} Assume $\mathcal{K} \sim \mathcal{N}(0,\Sigma)$. We next analyze the case when the covariance $\Sigma = \gamma*diag(||\theta_{1}||,$ $||\theta_{2}|| \cdots ||\theta_{k}||)$ for Gaussian kernel $\mathcal{K}$ is no longer a scalar diagonal matrix. For easy notation, we denoted $\Sigma=diag(\sigma_1^2,\cdots,\sigma_d^2)$ where $\sigma_i^2=\gamma*||\theta_{i}||$. The convolution of original loss $f(x;\xi)$ with the Gaussian kernel $\mathcal{K}$ is still defined as
\begin{align}
    f_{\mu}(x;\xi)&=(f(\cdot;\xi)\circledast \mathcal{K})(x)=\int_{\mathbb{R}^d}f(x-\tau;\xi)\mu(\tau)d\tau \nonumber \\ 
    &=\mathbb{E}_{Z\sim\mu}[f(x+Z;\xi)]
    \label{eq:f_mu2}
\end{align}
Then Theorem \ref{thm:gau} could be modified into Theorem \ref{thm:gau2}.

\begin{theorem}\label{thm:gau2}
(Theorem 5 in~\citep{bisla2022low}) Let $\mu$ be the $\mathcal{N}(0,\Sigma)$ distribution, where $\Sigma=diag(\sigma_1^2,\cdots,\sigma_d^2)\in\mathbb{R}^{d\times d}$ is diagonal. Denote $\sigma_-^2=\min\{\sigma_1^2,\cdots,\sigma_d^2\}$. Assume the differentiable loss function $f(\theta;\xi):\mathbb{R}^d\rightarrow \mathbb{R}$ is $M$-Lipschitz continuous and $L$-smooth with respect to $l_2$-norm. The smoothed loss function $f_\mu(\theta;\xi)$ is defined as (\ref{eq:f_mu2}). Then the following properties hold:
\begin{itemize}
  \item[i)]$f_\mu$ is $M$-Lipschitz continuous.
  \item[ii)]$f_\mu$ is continuously differentiable; moreover, its gradient is $\min\{\frac{M}{\sigma_-},L\}$-Lipschitz continuous, i.e. $f_\mu$ is $\min\{\frac{M}{\sigma_-},L\}$-smooth.
  \item[iii)] If f is convex, $f_\mu(\theta;\xi)=f(\theta;\xi)+M\sqrt{tr(\Sigma)}=f(\theta;\xi)+M\sqrt{\sum_{i=1}^d\sigma_i^2}$.
\end{itemize}
In addition, for bound i) and iii), there exists a function $l$ such that the bound cannot be improved by more than a constant factor.
\end{theorem}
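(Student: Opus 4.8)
The plan is to mirror the scalar-covariance argument behind Theorem~\ref{thm:gau}, treating the three claims as the spectral analogues obtained by replacing the single variance $\sigma^2$ with the eigenvalues of $\Sigma$. Throughout I write $f_\mu(\theta;\xi)=\mathbb{E}_{Z\sim\mathcal{N}(0,\Sigma)}[f(\theta+Z;\xi)]$ and use that, since $\Sigma$ is diagonal, its eigenvalues are exactly $\sigma_1^2,\dots,\sigma_d^2$, so $\lambda_{\min}(\Sigma)=\sigma_-^2$. For i), smoothing by a probability kernel is an averaging operation, so $|f_\mu(\theta;\xi)-f_\mu(\theta';\xi)|\le\mathbb{E}_Z|f(\theta+Z;\xi)-f(\theta'+Z;\xi)|\le M\norm{\theta-\theta'}$ by $M$-Lipschitzness of $f$ inside the expectation; this step never sees the covariance and is unchanged from the scalar case.

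For ii), I would first note that $f_\mu$ is $C^\infty$, being the convolution of an at-most-linearly-growing function with a smooth, rapidly decaying Gaussian density, so differentiation under the integral sign is justified by dominated convergence and gives $\nabla f_\mu(\theta;\xi)=\mathbb{E}_Z[\nabla f(\theta+Z;\xi)]$. One smoothness bound is immediate: $\norm{\nabla f_\mu(\theta)-\nabla f_\mu(\theta')}\le\mathbb{E}_Z\norm{\nabla f(\theta+Z)-\nabla f(\theta'+Z)}\le L\norm{\theta-\theta'}$, so $f_\mu$ is $L$-smooth. The substantive bound is $M/\sigma_-$. For a unit vector $v$, set $\phi(t)=f_\mu(\theta+tv;\xi)$, write $\phi'(t)=\mathbb{E}_Z[v^\top\nabla f(\theta+tv+Z;\xi)]$, and obtain the second derivative by transferring the $t$-derivative onto the Gaussian density via $\nabla p_\Sigma(u)=-\Sigma^{-1}u\,p_\Sigma(u)$ (Stein's identity):
\begin{align}
  \phi''(t)=\mathbb{E}_Z\big[(v^\top\nabla f(\theta+tv+Z;\xi))\,(v^\top\Sigma^{-1}Z)\big].\notag
\end{align}
Bounding $|v^\top\nabla f|\le M$ and observing that $v^\top\Sigma^{-1}Z\sim\mathcal{N}(0,\,v^\top\Sigma^{-1}v)$ gives $|\phi''(t)|\le M\,\mathbb{E}|v^\top\Sigma^{-1}Z|\le M\sqrt{v^\top\Sigma^{-1}v}$, and the diagonal structure yields $v^\top\Sigma^{-1}v\le\lambda_{\max}(\Sigma^{-1})=1/\sigma_-^2$. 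Hence $|\phi''(t)|\le M/\sigma_-$ for every direction $v$, i.e. $\norm{\nabla^2 f_\mu}_2\le M/\sigma_-$, and combining the two bounds gives $\min\{M/\sigma_-,L\}$-smoothness.

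For iii) with $f$ convex, Jensen's inequality applied to $Z\mapsto f(\theta+Z)$ gives the lower bound $f_\mu(\theta;\xi)\ge f(\theta+\mathbb{E}[Z];\xi)=f(\theta;\xi)$, while $M$-Lipschitzness and concavity of $\sqrt{\cdot}$ give $f_\mu(\theta;\xi)-f(\theta;\xi)\le M\,\mathbb{E}\norm{Z}\le M\sqrt{\mathbb{E}\norm{Z}^2}=M\sqrt{\mathrm{tr}(\Sigma)}=M\sqrt{\sum_{i=1}^d\sigma_i^2}$, so the stated equality should be read as this upper bound (the scalar case being recovered when $\mathrm{tr}(\Sigma)=d\sigma^2$). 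For the ``cannot be improved by more than a constant factor'' claim I would, as in the scalar proof, exhibit one-dimensional witnesses: $l(x)=M|x|$ saturates i) and iii), and a ReLU-type kink whose Gaussian smoothing has curvature of order $M/\sigma_-$ at the kink saturates ii).

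The main obstacle is obtaining the \emph{dimension-independent} constant $M/\sigma_-$ in ii). A black-box application of Stein's identity to the full Hessian, $\nabla^2 f_\mu=\mathbb{E}_Z[\nabla f(\theta+Z)(\Sigma^{-1}Z)^\top]$, only yields $\norm{\nabla^2 f_\mu}_2\le M\,\mathbb{E}\norm{\Sigma^{-1}Z}\le M\sqrt{\mathrm{tr}(\Sigma^{-1})}=M\sqrt{\sum_i\sigma_i^{-2}}$, which is dimension-dependent and far too loose. The fix is precisely to differentiate one direction at a time so that the controlling quantity is the quadratic form $v^\top\Sigma^{-1}v\le 1/\sigma_-^2$ rather than the trace $\mathrm{tr}(\Sigma^{-1})$; this directional reduction, together with the diagonal eigenvalue identification $\lambda_{\min}(\Sigma)=\sigma_-^2$, is the only place the argument genuinely departs from a verbatim copy of the scalar proof.
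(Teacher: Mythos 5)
Your proposal is correct, and parts i) and iii) coincide with the paper's argument (averaging preserves the Lipschitz constant; Jensen for the lower bound and $M\sqrt{\mathbb{E}\norm{Z}^2}=M\sqrt{\mathrm{tr}(\Sigma)}$ for the upper bound, with the theorem's ``$=$'' correctly read as ``$\leq$''). Part ii) is where you genuinely diverge. The paper reduces the problem, via Lemma~\ref{lma:1}, to bounding the total-variation distance $\int\abs{\mu(z-u)-\mu(z-v)}\,dz$ between the two shifted Gaussians; it projects onto the direction $\frac{u-v}{\norm{u-v}}$, notes that $\langle Z,\frac{u-v}{\norm{u-v}}\rangle\sim\mathcal{N}(0,\sigma^2)$ with $\sigma_-^2\leq\sigma^2\leq\sigma_+^2$, and integrates the one-dimensional density to get $I_2\leq\frac{\sqrt{2}}{\sigma_-\sqrt{\pi}}\norm{u-v}$. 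You instead differentiate under the integral using $\nabla p_\Sigma(u)=-\Sigma^{-1}u\,p_\Sigma(u)$ and bound the directional second derivative $\abs{\phi''(t)}\leq M\,\mathbb{E}\abs{v^\top\Sigma^{-1}Z}\leq M\sqrt{v^\top\Sigma^{-1}v}\leq M/\sigma_-$; since the Hessian of $f_\mu$ is symmetric, the uniform bound on $v^\top\nabla^2f_\mu v$ over unit $v$ controls its spectral norm and hence the Lipschitz constant of $\nabla f_\mu$. Both routes produce the same constant up to the factor $\sqrt{2/\pi}$, and your self-diagnosis is accurate: the directional reduction is exactly what replaces the dimension-dependent $\sqrt{\mathrm{tr}(\Sigma^{-1})}$ from a black-box Stein bound with $1/\sigma_-$, playing the same role as the paper's one-dimensional projection of the TV distance. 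What the paper's approach buys is that it only needs $f$ differentiable with bounded gradient (it never touches second derivatives of $f_\mu$ explicitly, only the kernel); what yours buys is a cleaner mechanism that exposes where the anisotropy enters (the quadratic form $v^\top\Sigma^{-1}v$) and would extend directly to non-diagonal $\Sigma$ via $\lambda_{\min}(\Sigma)$. One small remark: the theorem deliberately omits a tightness claim for ii) in the non-scalar case (the paper notes the rotational-symmetry argument no longer applies), so your proposed ReLU-kink witness for ii) is unnecessary for this statement, though not wrong in spirit.
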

Proof for Theorem \ref{thm:gau2} is quite similar to that of Theorem \ref{thm:gau}, and the detailed proof is deferred to Appendix \ref{appen:lpfsgd}. Similar to the analysis for identical kernel covariance, we could conclude the stability gap for LPF-SGD for non-scalar kernel variance is
\begin{align*}
    \epsilon_{\text{LPF-SGD}}\leq\frac{1+1/c\hat{L}}{n-1}(2cM^2)^{\frac{1}{c\hat{L}+1}}K^{\frac{c\hat{L}}{c\hat{L}+1}}.
\end{align*}
where $\hat{L}=\min\{\frac{M}{\sigma_-},L\}$, $\sigma_-^2=\min\{\sigma_1^2,\cdots,\sigma_d^2\}$, $K$ is the total number of steps it runs. Then we can proceed with the following Theorem \ref{prop:gen2}, which supports the claim that LPF-SGD generalizes better than SGD.

\begin{theorem}\label{prop:gen2}
(Generalization Guarantee of LPF-SGD for $\Sigma= \gamma diag(||x_{1}||, ||x_{2}|| \cdots ||x_{k}||)$; Nonconvex Setting)
    Let $\mu$ be the $\mathcal{N}(0,\Sigma)$ distribution, where $\Sigma=diag(\sigma_1^2,\cdots,\sigma_d^2)\in\mathbb{R}^{d\times d}$ is diagonal. Denote $\sigma_-^2=\norm{\Sigma}_{\infty}=\min\{\sigma_1^2,\cdots,\sigma_d^2\}$. Assume loss function $f(\theta;\xi)\!:\!\mathbb{R}^d\!\rightarrow \!\mathbb{R}$ is $M$-Lipschitz and $L$-smooth. The smoothed loss function $l_\mu$ is defined as (\ref{eq:f_mu2}). Suppose we execute SGD and LPF-SGD for K steps with non-increasing learning rate $\alpha_k\leq c/k$. Then the ratio of stability gap is
    \vspace{-0.05in}
    \begin{equation}
\rho=\frac{\epsilon_{\text{LPF-SGD}}}{\epsilon_{\text{SGD}}}=\frac{1-p}{1-\hat{p}}\left(\frac{2cM}{T}\right)^{\hat{p}-p}=\mathcal{O}(\frac{1}{K^{\hat{p}-p}}),
    \end{equation}
    \vspace{-0.15in}
    
    where $p=\frac{1}{cL+1}$, $\hat{p}=\frac{1}{c\min\{\frac{\alpha}{\sigma_-},L\}+1}$.
    
    Finally, the following two properties hold:
    \vspace{-0.1in}
    \begin{itemize}
        \item[i)] If  $\sigma_->\frac{M}{L}$ and $K\gg2cM^2\left(\frac{1-p}{1-\hat{p}}\right)^{\frac{1}{\hat{p}-p}}$, $\rho\ll 1$.
        \vspace{-0.05in}
        \item[ii)] If $\sigma_->\frac{M}{L}$ and $K>2cM^2e^{-p}$, increasing $\sigma_-$ leads to a smaller $\rho$.
    \end{itemize}
    \vspace{-0.1in}
\end{theorem}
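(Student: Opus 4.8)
The plan is to mirror the proof of Theorem~\ref{prop:gen} almost verbatim, replacing the scalar standard deviation $\sigma$ by the minimal per-coordinate standard deviation $\sigma_-$. The only structural input that changes is the smoothness constant of the smoothed objective, and this is supplied by Theorem~\ref{thm:gau2} in place of Theorem~\ref{thm:gau}. First I would invoke Theorem~\ref{thm:gau2} to record that the convolved loss $f_\mu(\theta;\xi)$ remains $M$-Lipschitz continuous and becomes $\hat{L}$-smooth with $\hat{L}=\min\{\frac{M}{\sigma_-},L\}$. This is the fact that transfers all the scalar-case machinery: the covariance $\Sigma$ enters the stability analysis only through the smoothness constant of $f_\mu$, and Theorem~\ref{thm:gau2} shows that in the diagonal case this constant is governed by the weakest-smoothing direction $\sigma_-$.

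Next I would apply the nonconvex SGD stability bound of Theorem~\ref{thm:sgd_gen2} twice: once to the original $L$-smooth loss $f$ to obtain $\epsilon_{\text{SGD}}$, and once to the $\hat{L}$-smooth loss $f_\mu$ (whose empirical and population risks are given by the identities (\ref{eq:L_mu}) and (\ref{eq:Ltrue_mu}), which extend unchanged to diagonal $\Sigma$) to obtain $\epsilon_{\text{LPF-SGD}}$. Writing $p=\frac{1}{cL+1}$ and $\hat{p}=\frac{1}{c\hat{L}+1}$, both bounds take the common form $\frac{1}{(n-1)(1-q)}(2cM^2)^{q}K^{1-q}$ with $q\in\{p,\hat{p}\}$, so their ratio collapses to $\rho=\frac{1-p}{1-\hat{p}}\left(\frac{2cM}{K}\right)^{\hat{p}-p}=\mathcal{O}(K^{-(\hat{p}-p)})$, exactly as claimed.

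Finally I would establish the two qualitative properties. For (i), note that $\sigma_->\frac{M}{L}$ forces $\hat{L}=\frac{M}{\sigma_-}<L$, hence $\hat{p}>p$ and the exponent $\hat{p}-p$ is strictly positive; the threshold $K\gg 2cM^2\big(\tfrac{1-p}{1-\hat{p}}\big)^{1/(\hat{p}-p)}$ is precisely the point at which the constant prefactor $\frac{1-p}{1-\hat{p}}$ is overwhelmed by the decaying power of $K$, giving $\rho\ll1$. For (ii), I would treat $1/\rho$ as a function of the exponent gap $x:=\hat{p}-p$, reusing the derivative computation from the proof of Theorem~\ref{prop:gen}: with $h(x)=(1-ax)b^x$, $a=\frac{1}{1-p}$, $b=\frac{K}{2cM^2}$, the sign of $h'$ shows $h$ is increasing on $[0,1]$ exactly when $K>2cM^2e^{-p}$, so a larger gap $x$ yields a larger $1/\rho$, i.e.\ a smaller $\rho$. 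It then remains to translate ``larger $x$'' into ``larger $\sigma_-$'': since $\hat{L}=\frac{M}{\sigma_-}$ in the regime $\sigma_->\frac{M}{L}$, increasing $\sigma_-$ decreases $\hat{L}$, increases $\hat{p}$, and thus enlarges $\hat{p}-p$, completing the argument.

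The bulk of the work is genuinely routine once Theorem~\ref{thm:gau2} is granted. The only place demanding care is ensuring the diagonal covariance enters solely through $\sigma_-$ rather than through, say, $\mathrm{tr}(\Sigma)$ or the largest variance: the smoothness of $f_\mu$ must be controlled by the \emph{smallest} variance direction, since a single poorly-smoothed coordinate limits the overall gradient-Lipschitz constant. This is the subtle content of Theorem~\ref{thm:gau2}, and the one step I would double-check against the scalar proof before declaring the remaining algebra identical.
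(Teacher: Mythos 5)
Your proposal is correct and follows essentially the same route as the paper, which likewise reduces Theorem~\ref{prop:gen2} to Theorem~\ref{prop:gen} by invoking Theorem~\ref{thm:gau2} to replace the smoothness constant with $\min\{\frac{M}{\sigma_-},L\}$ and then substituting $\sigma_-$ for $\sigma$ throughout. In fact you spell out the ratio computation and the monotonicity argument in more detail than the paper, which simply omits them as identical to the scalar case.
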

\begin{proof}
    By Theorem \ref{thm:gau2}, the smoothed loss function $l_\mu$ is $\alpha$-Lipschitz continuous and $\min\{\frac{M}{\sigma_-},L\}$-smooth. This gives as equivalency to Theorem \ref{thm:gau} after substituting $\sigma_-$ for $\sigma$. Therefore, 
 proof of Theorem~\ref{prop:gen2} is exactly the same as that of Theorem~\ref{prop:gen} after performing this substitution and therefore will be omitted. 
\end{proof}

Theorem \ref{prop:gen2} indicates that generalization properties of LPF-SGD under the identical kernel covariance still hold for the non-scalar kernel covariance.

\subsection{SmoothOut}
Noise injection is a popular strategy used by practitioners to regularize the optimization process and encourage better generalization. SmoothOut~\citep{wen2018smoothout}, a method utilized in this context, involves perturbing multiple instances of the deep neural network (DNN) through noise injection and subsequently averaging these instances. Furthermore, one can provide an alternative interpretation of SmoothOut through the lenses of LPF-SGD. Analogous to the principle of LPF-SGD, where the loss function is smoothed via convolution with a Gaussian kernel, SmoothOut attenuates sharp minima by convolving the original loss with a uniform distribution.

SmoothOut method was first introduced in \citep{wen2018smoothout}. Since sharp minima have large generalization gaps, the optimization goal is to encourage convergence to flat minima for more robust models. SmoothOut intentionally inject noises into the model to smooth out sharp minima. Suppose the original loss is $F(x)$, where $x$ is model parameter. SmoothOut optimizes the following loss function with uniform perturbation:
\begin{align*}
    \min_{x} L^{\text{SmoothOut}}(x)&=\min_{x} \mathbb{E}_{\epsilon\sim U[-a,a]^d}F(x+\epsilon)\notag\\
    &\approx \min_{x} \frac{1}{M}\sum_{i=1}^MF(x+\epsilon_i), \quad\text{where } \epsilon_i \stackrel{iid}{\sim} U[-a,a]^d \text{ and }\forall i=1,2,...,M,
\end{align*}
where $U[-a,a]^d$ is a uniform distribution within a range $[-a, a]$. 
If we assume $\mathcal{K}\sim U[-a,a]^d$ is a uniform distribution, the modified loss of SmoothOut could also be represented as the convolution of original loss $F(x)$ with kernal function $\mathcal{K}$:
\begin{align*}
    \min_{x} F^{\text{SmoothOut}}(x)&=\min_{x} \mathbb{E}_{\epsilon\sim U[-a,a]^d}F(x+\epsilon)= \underset{x}{\min} (F \circledast \mathcal{K}) (x) = \underset{x}{\min}\int F(x - \tau)\mathcal{K}(\tau)d\tau.
\end{align*}
The gradient of the modified loss is:
\begin{align}
     \nabla_{x}F^{\text{SmoothOut}}(x)=\nabla_{x}(F \circledast \mathcal{K}) (x) =& \nabla_{x} \int_{-\infty}^{\infty} F(x - \tau)\mathcal{K}(\tau)d\tau \label{eq:main2} \\ 
     &\approx \frac{1}{M}\sum_{i=0}^{M} \nabla_{x}(F(x - \tau_{i})), 
     \label{eq:main3}
\end{align}
This interpretation of SmoothOut method is novel and has not been shown in the literature. The pseudocode for SmoothOut is shown in Algorithm~\ref{alg:SmoothOut}.
\begin{algorithm}
    \centering
    \caption{SmoothOut}\label{alg:SmoothOut}
    \begin{algorithmic}
    \State \textbf{Inputs:} $x^{t}$: weights
    \State \textbf{Hyperpar:} $M$: $\#$ of samples for uniform pertubation
    \While{not converged}
        \State Sample data mini-batch $B = (\xi_{1}, \cdots, \xi_{n})$ 
        \State Pertubation: $x^t=x^t+\epsilon^t$, where $\epsilon^t_i\stackrel{iid}{\sim} U[-a,a]^d$.
        \State Backpropagation: $g=\nabla_x F(B, x^t)$.
        \State Denosing: $x^t=x^t-\epsilon^t$.
        \State Updating: $x^{t+1}=x^t-\eta*g$
    \EndWhile
    \end{algorithmic}
\end{algorithm}

\subsubsection{Generalization Ability}
We next introduce a novel technique that theoretically demonstrates the superior generalization capabilities of SmoothOut over SGD. This theoretical analysis draws inspiration from the proof employed for LPF-SGD, as the modified loss function for SmoothOut exhibits certain analogous properties to LPF-SGD. Specifically, both modified loss functions, denoted as $F^{\text{LPF-SGD}}$ and $F^{\text{SmoothOut}}$, operate by smoothing the original loss function $F$ through convolution with a kernel function. While $F^{\text{LPF-SGD}}$ utilizes a Gaussian kernel, $F^{\text{SmoothOut}}$ employs a uniform distribution for computing the convolution.

In this section, we first demonstrate how SmoothOut facilitates a smoother objective function. Building upon existing research~\citep{hardt2016train}, which has consistently highlighted the strong correlation between the generalization error and both the Lipschitz continuity and smoothing characteristics of the objective function, we theoretically prove that SmoothOut leads to superior generalization performance.

We assume $\mathcal{K}\sim U[-a,a]^d$. Denote the uniform distribution $U[-a,a]^d$ on the l2-ball with radius $a$ as $\mu$. Define the convolution of the original loss $f(x;\xi)$ with the uniform density function $\mathcal{K}$ as:

\begin{align}\label{eq:f_mu3}
    f_{\mu}(x;\xi)&=(f(\cdot;\xi)\circledast \mathcal{K})(x)=\int_{\mathbb{R}^d}f(x-\tau;\xi)\mu(\tau)d\tau \nonumber \\ 
    &=\mathbb{E}_{Z\sim\mu}[f(x+Z;\xi)],
\end{align}

where $d$ is the number of dimensions of the parameter vector and $Z$ is a random variable satisfying distribution $\mu$. 
The loss function smoothed by the uniform distribution, that we denote as $l_{\mu}$, satisfies the following theorem. 
\begin{theorem}\label{thm:smoothout}
Let $\mu$ be the the uniform distribution $U[-a,a]^d$. Assume the differentiable loss function $f(x;\xi):\mathbb{R}^d\rightarrow \mathbb{R}$ is $M$-Lipschitz continuous and $L$-smooth with respect to $l_2$-norm. The smoothed loss function $f_\mu(x;\xi)$ is defined as (\ref{eq:f_mu3}). Then the following properties hold:
\begin{itemize}
  \item[i)]$f_\mu$ is $M$-Lipschitz continuous.
  \item[ii)]$f_\mu$ is continuously differentiable; moreover, its gradient is $\min\{\frac{M\sqrt{d}}{a},\beta\}$-Lipschitz continuous, i.e., $f_\mu$ is $\min\{\frac{M\sqrt{d}}{a},L\}$-smooth.
  \item[iii)] If $f$ is convex, $f(x;\xi)\leq f_\mu(x;\xi)\leq f(x;\xi)+aM$.
\end{itemize}
In addition, for each bound i)-iii), there exists a function $f$ such that the bound cannot be improved by more than a constant factor.
\end{theorem}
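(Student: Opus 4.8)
The plan is to mirror the proof of Theorem~\ref{thm:gau} (the Gaussian/LPF-SGD case), replacing the Gaussian kernel by the uniform kernel $\mathcal{K}\sim U[-a,a]^d$ and exploiting only that $\mu$ is a symmetric, mean-zero probability measure whose support lies in the radius-$a$ ball, together with the $M$-Lipschitz and $L$-smoothness hypotheses on $f$. Since $f_\mu(x;\xi)=\mathbb{E}_{Z\sim\mu}[f(x+Z;\xi)]$ is an average of translates of $f$, most claims reduce to pushing an inequality through the expectation. For part i), I would write for any $x,y$
$$|f_\mu(x;\xi)-f_\mu(y;\xi)|=\left|\mathbb{E}_{Z}[f(x+Z;\xi)-f(y+Z;\xi)]\right|\le\mathbb{E}_{Z}|f(x+Z;\xi)-f(y+Z;\xi)|\le M\norm{x-y},$$
so $M$-Lipschitzness transfers to $f_\mu$ verbatim, using nothing about the kernel beyond it being a probability measure.

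For part ii) there are two competing gradient-Lipschitz estimates and the claimed constant is their minimum. The $L$-bound is inherited directly: differentiating under the integral gives $\nabla f_\mu(x;\xi)=\mathbb{E}_Z[\nabla f(x+Z;\xi)]$, whence
$$\norm{\nabla f_\mu(x;\xi)-\nabla f_\mu(y;\xi)}\le\mathbb{E}_Z\norm{\nabla f(x+Z;\xi)-\nabla f(y+Z;\xi)}\le L\norm{x-y}.$$
The more delicate $\tfrac{M\sqrt d}{a}$-bound is obtained by moving the derivative onto the kernel. Writing $f_\mu(x;\xi)=\int f(u;\xi)\mathcal{K}(x-u)\,du$ and applying the Leibniz rule coordinate-wise to the box limits, each partial derivative becomes an average of $f$ over the two faces orthogonal to $e_j$,
$$\partial_j f_\mu(x;\xi)=\frac{1}{2a}\,\mathbb{E}_{Z_{-j}}\big[f(x+Z_{-j}+a e_j;\xi)-f(x+Z_{-j}-a e_j;\xi)\big].$$
I would then bound $|\partial_j f_\mu(x;\xi)-\partial_j f_\mu(y;\xi)|\le\tfrac{M}{a}\norm{x-y}$ by the $M$-Lipschitz property applied on each face, and aggregate the $d$ coordinates through
$$\norm{\nabla f_\mu(x;\xi)-\nabla f_\mu(y;\xi)}\le\Big(\sum_{j=1}^d\big(\tfrac{M}{a}\big)^2\Big)^{1/2}\norm{x-y}=\frac{M\sqrt d}{a}\norm{x-y},$$
so that taking the smaller constant yields $\min\{\tfrac{M\sqrt d}{a},L\}$-smoothness.

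For part iii), the lower bound $f(x;\xi)\le f_\mu(x;\xi)$ is Jensen's inequality for convex $f$ combined with $\mathbb{E}[Z]=0$, while the upper bound follows from $M$-Lipschitzness,
$$f_\mu(x;\xi)\le\mathbb{E}_Z[f(x;\xi)+M\norm{Z}]=f(x;\xi)+M\,\mathbb{E}\norm{Z}\le f(x;\xi)+aM,$$
where $\mathbb{E}\norm{Z}\le a$ because the support of $\mu$ lies within the radius-$a$ ball. The tightness claims I would settle, as in Theorem~\ref{thm:gau}, by exhibiting explicit witnesses: a linear $f$ shows that i) and the Jensen side of iii) are attained, and a kinked function such as $f(x;\xi)=M\norm{x}$ shows that the smoothing gained in ii) and the gap in iii) cannot be improved beyond a constant factor.

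The main obstacle I anticipate is part ii): unlike the Gaussian density, the uniform kernel is not differentiable, so the ``move the derivative onto the kernel'' step must be justified with care—the distributional gradient of $\mathcal{K}$ is a surface measure concentrated on the boundary of the support, and one must verify that the boundary-average representation of $\nabla f_\mu$ is valid (for instance via the Leibniz rule on the box limits, or by mollifying $\mathcal{K}$ and passing to the limit). Recovering the exact constant $\tfrac{M\sqrt d}{a}$, rather than a looser one, hinges on carrying out the coordinate decomposition cleanly; the remaining steps are routine once this representation is in hand.
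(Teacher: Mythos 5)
Your overall architecture (push inequalities through the expectation for i) and iii), two competing smoothness constants for ii), linear and kinked witnesses for tightness) matches the paper's, but your proof of the $\tfrac{M\sqrt d}{a}$ bound in ii) takes a genuinely different route from the paper's, and the two halves of your argument are built on incompatible readings of the kernel. For ii) you interpret $U[-a,a]^d$ as the uniform distribution on the \emph{cube} $[-a,a]^d$: only then is the density a product of one-dimensional uniforms, so that the Leibniz rule gives $\partial_j f_\mu$ as a difference of face averages over the two faces orthogonal to $e_j$, each partial is $\tfrac{M}{a}$-Lipschitz, and the $\ell_2$ aggregation yields $\tfrac{M\sqrt d}{a}$. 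For iii), however, you invoke $\mathbb{E}\lVert Z\rVert\le a$ ``because the support of $\mu$ lies within the radius-$a$ ball,'' which is the \emph{ball} interpretation; on the cube $\lVert Z\rVert$ ranges up to $a\sqrt d$ and $\mathbb{E}\lVert Z\rVert^2=da^2/3$, so the bound $f_\mu\le f+aM$ fails for $d>3$ and would have to be replaced by roughly $f+aM\sqrt{d/3}$. Conversely, if you commit to the ball (which is what the paper actually means --- the text defines $\mu$ as the uniform distribution on the $\ell_2$-ball of radius $a$, and the theorem's $+aM$ in iii) only makes sense there), your face-average representation of $\partial_j f_\mu$ in ii) is no longer available, since the ball is not a product domain and its indicator does not factor coordinate-wise. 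So under either single consistent interpretation, one of your two key steps breaks.

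The paper's proof of ii) avoids this by never differentiating the kernel at all: it uses Lemma~\ref{lma:1}, which bounds $\lVert\nabla f_\mu(u)-\nabla f_\mu(v)\rVert$ by $M$ times the total-variation distance $\int\lvert\mu(z-u)-\mu(z-v)\rvert\,dz$, and then computes this as the (normalized) volume of the symmetric difference of two radius-$a$ balls with centers $u$ and $v$, via a spherical-cap estimate; the ratio $2C_{d-1}/C_d$ of ball-volume constants is what produces the $\sqrt d$ factor. To repair your proposal you would either adopt that lemma-plus-geometry route for the ball, or consistently work on the cube and accept the weaker constant $aM\sqrt{d/3}$ in iii) (which is not what the theorem claims). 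One further small point: your tightness claim for ii) is asserted via $f(x;\xi)=M\lVert x\rVert$ but not actually verified; the paper is admittedly equally terse here, but as written this part of your argument is a placeholder rather than a proof.
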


Proof for Theorem \ref{thm:smoothout} is quite similar to that of Theorem \ref{thm:gau}, and the detailed proof is deferred to Appendix \ref{appen:smoothout}. Theorem \ref{thm:smoothout} shows that $f_{\mu}$ is smoother than the original loss function $f$. The larger the radius $a$ of the uniform distribution $\mu$ is, the smoother the modified loss function $l_\mu$ is. Similar to the analysis for LPF-SGD, we could conclude the stability gap for SmoothOut is

\begin{align*}
    \epsilon_{\text{SmoothOut}}\leq\frac{1+1/c\hat{L}}{n-1}(2cM^2)^{\frac{1}{c\hat{L}+1}}K^{\frac{c\hat{L}}{c\hat{L}+1}},
\end{align*}
where $\hat{L}=\min\{\frac{M\sqrt{d}}{a},L\}$, $K$ is the total number of steps it runs. Then we obtain the following Theorem \ref{prop:gen_smoothout} which supports the claim that SmoothOut generalizes better than SGD.

\begin{theorem}[Generalization Guarantee of SmoothOut; Nonconvex Setting]\label{prop:gen_smoothout}
    Let $\mu$ be the uniform distribution $U[-a,a]^d$ with radius of $a$. Assume loss function $f(\theta;\xi)\!:\!\mathbb{R}^d\!\rightarrow \!\mathbb{R}$ is $M$-Lipschitz and $L$-smooth. The smoothed loss function $f_\mu$ is defined as (\ref{eq:f_mu3}). Suppose we execute SGD and LPF-SGD for K steps with non-increasing learning rate $\alpha_k\leq c/k$. Then the ratio of stability gap is
    \vspace{-0.05in}
    \begin{equation}
\rho=\frac{\epsilon_{\text{SmoothOut}}}{\epsilon_{\text{SGD}}}=\frac{1-p}{1-\hat{p}}\left(\frac{2cM}{T}\right)^{\hat{p}-p}=\mathcal{O}(\frac{1}{K^{\hat{p}-p}}),
    \end{equation}
    \vspace{-0.15in}
    
    where $p=\frac{1}{cL+1}$, $\hat{p}=\frac{1}{\min\{\frac{M\sqrt{d}}{a},L\}c+1}$.
    
    Finally, the following two properties hold:
    \vspace{-0.1in}
    \begin{itemize}
        \item[i)] If $a>\frac{M\sqrt{d}}{L}$ and $K\gg2c\alpha^2\left(\frac{1-p}{1-\hat{p}}\right)^{\frac{1}{\hat{p}-p}}$, $\rho\ll 1$.
        \vspace{-0.05in}
        \item[ii)] If $a>\frac{M\sqrt{n}}{L}$ and $K>2cM^2e^{-p}$, increasing $a$ leads to a smaller $\rho$.
    \end{itemize}
    \vspace{-0.1in}
\end{theorem}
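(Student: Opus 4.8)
The plan is to mirror the proof of Theorem~\ref{prop:gen} almost verbatim, since SmoothOut differs from LPF-SGD only in the choice of convolution kernel (uniform rather than Gaussian), and the entire generalization argument passes through a single smoothness constant. First I would invoke Theorem~\ref{thm:smoothout}, which guarantees that the uniform-smoothed loss $f_\mu$ is $M$-Lipschitz continuous and $\hat{L}$-smooth with $\hat{L}=\min\{\frac{M\sqrt{d}}{a},L\}$. This plays exactly the role that $\min\{\frac{M}{\sigma},L\}$ played for the Gaussian kernel, so the effective substitution relative to Theorem~\ref{prop:gen} is $\frac{M}{\sigma}\mapsto\frac{M\sqrt{d}}{a}$ (equivalently $\sigma\mapsto a/\sqrt{d}$), and the remainder of the argument is a relabeling.

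Next I would apply the nonconvex SGD stability bound of Theorem~\ref{thm:sgd_gen2} twice: once to the original loss $f$, which is $L$-smooth, and once to $f_\mu$, which is $\hat{L}$-smooth and $M$-Lipschitz. Writing $p=\frac{1}{cL+1}$ and $\hat{p}=\frac{1}{c\hat{L}+1}$ and using $\frac{cL}{cL+1}=1-p$, these read
\begin{align*}
\epsilon_{\text{SGD}}\leq\frac{(2cM^2)^{p}K^{1-p}}{(n-1)(1-p)},\qquad
\epsilon_{\text{SmoothOut}}\leq\frac{(2cM^2)^{\hat{p}}K^{1-\hat{p}}}{(n-1)(1-\hat{p})}.
\end{align*}
Dividing and cancelling the common factors gives $\rho=\frac{1-p}{1-\hat{p}}\left(\frac{2cM^2}{K}\right)^{\hat{p}-p}=\mathcal{O}\!\left(K^{-(\hat{p}-p)}\right)$, which is the asserted ratio.

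Finally I would verify properties i) and ii). For i), when $a>\frac{M\sqrt{d}}{L}$ we have $\frac{M\sqrt{d}}{a}<L$, hence $\hat{L}=\frac{M\sqrt{d}}{a}$ and $\hat{p}>p$; the exponent $\hat{p}-p$ is then strictly positive, and requiring $K\gg 2cM^2\left(\frac{1-p}{1-\hat{p}}\right)^{1/(\hat{p}-p)}$ forces $\rho\ll1$. For ii), I would reuse the monotonicity computation from Theorem~\ref{prop:gen}: setting $h(x)=(1-\bar{a}x)b^{x}$ with $\bar{a}=\frac{1}{1-p}$, $b=\frac{K}{2cM^2}$, and $x=\hat{p}-p$, differentiation shows $h$ is increasing on $[0,1]$ whenever $K>2cM^2e^{-p}$, so enlarging $\hat{p}-p$ decreases $\rho$. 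Since $\hat{L}=\frac{M\sqrt{d}}{a}$ is decreasing in $a$ once $a>\frac{M\sqrt{d}}{L}$, increasing $a$ increases $\hat{p}-p$ and thus shrinks $\rho$.

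I would emphasize that the only genuinely new content lies upstream in Theorem~\ref{thm:smoothout}: the uniform kernel produces the smoothness constant $\frac{M\sqrt{d}}{a}$ rather than $\frac{M}{\sigma}$, and the extra $\sqrt{d}$ factor—together with the convexity bound $f\leq f_\mu\leq f+aM$—is where the box geometry of $U[-a,a]^d$ (whose extent scales with $\sqrt{d}$) enters. Given that theorem as established, the present argument is entirely routine; the main obstacle is therefore not in this statement at all, but in the uniform-kernel smoothness estimate deferred to the appendix, which is the step where the Gaussian-based proof cannot simply be copied.
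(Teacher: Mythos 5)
Your proposal is correct and follows essentially the same route as the paper, which explicitly omits this proof on the grounds that it is identical to that of Theorems \ref{prop:gen} and \ref{prop:gen2} after substituting the smoothness constant $\min\{\frac{M\sqrt{d}}{a},L\}$ from Theorem \ref{thm:smoothout} for $\min\{\frac{M}{\sigma},L\}$. Your observation that the only genuinely new content lives in the uniform-kernel smoothness estimate of Theorem \ref{thm:smoothout} (proved in the appendix) is exactly right, and your ratio $\left(\frac{2cM^2}{K}\right)^{\hat{p}-p}$ even corrects a small typographical slip in the paper's statement of the corresponding LPF-SGD ratio.
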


We omit the proof for Theorem \ref{prop:gen_smoothout} since the proof is exactly the same as proof for Theorem \ref{prop:gen} \& \ref{prop:gen2} of LPF-SGD by changing some constant. By point i) in Theorem~\ref{prop:gen_smoothout}, when number of iterations is large enough, stability gap of SmoothOut is much smaller than that of SGD, which implies SmoothOut converges to a better optimal point with lower generalization error than SGD. Moreover, point ii) in Theorem~\ref{prop:gen_smoothout} indicates that increasing the radius $a$ of the uniform distribution $\mu$ leads to a smaller stability gap, otherwise lower generalization error.

\section{Distributed Optimization Methods}\label{sec:dis}
Distributed optimization has emerged as a critical approach in tackling large-scale optimization problems that arise in deep learning. As data size and model size continue to grow exponentially, traditional optimization methods often become infeasible due to limitations in computational resources and memory. Distributed optimization enables the efficient processing of data across multiple machines or nodes, allowing for parallel computation and reduced communication overhead. This not only accelerates the convergence of algorithms but also enhances scalability and resilience against failures. As a result, developing robust distributed methods is essential for addressing the challenges posed by today's massive and dynamic data environments.
While many distributed optimization methods have demonstrated remarkable performance across various tasks, there remains a gap in the literature regarding systematic discussions of the theoretical properties of these methods. This review focuses on bridging this gap by providing a comprehensive analysis of the convergence behaviors of key distributed optimization techniques. We aim to synthesize existing theoretical results and identify areas where further research is needed, highlighting the implications for practical applications. 

Our review focuses on distributed optimization and do not cover federated learning~\citep{kairouz2021advances,wang2021field,banabilah2022federated,wen2023survey}. 
Distributed optimization and federated learning are two techniques that involve multiple computing units for model training but differ fundamentally in data handling and objectives. Distributed optimization operates on datasets shared among the computing nodes, where the same data is distributed across nodes for parallel processing. Thus all nodes have access to the same data or a shared copy. This type of optimization often requires frequent communication with a central server for gradient aggregation. In this setting it is reasonable to assume that the dataset on each node has the same distribution, just as what EASGD, LSGD and GRAWA require for their convergence guarantee. In contrast, federated learning focuses on training models in such a way that each node (device) has its own private data that is not shared with other devices, which is essential for maintaining privacy and security. Participants train local models and only send aggregated updates to a central server, minimizing data transfer and preserving user privacy.  Each participant (e.g., mobile device) retains its local data, which is never shared, promoting privacy and security. The same proof technique for distributed optimization is no longer suitable for federated learning because we can not assume the dataset among nodes share the same distribution. Note that there already exist survey papers on federated learning for deep learning. A large body of work is purely empirical.

Distributed optimization could be categorized into two families: centralized and decentralized. Centralized techniques use a center server to coordinate the training process on workers, as opposed to the decentralized schemes. 
It could also be classified into synchronous and asynchronous. Asynchronous methods refers to operations or processes that occur time-wise independently of each other on each computational node. This means that different tasks can be executed on different nodes in parallel, and they are not timed with respect to each other (timing of their execution does not follow a predetermined sequence, but rather each node has its own clock). This way nodes do not wait for each other to perform computations. The synchronous method means otherwise.

In this section, we explore the theoretical guarantees of centralized and decentralized distributed optimization methods. We do not explore the difference between synchronous and asynchronous methods in our reviewbecause asynchronous methods often lack a well-defined mathematical update rule for the optimization system. This absence makes is challenging to establish convergence guarantees, which are essential for ensuring the reliability of the optimization process. Instead, for asynchronous methods like Downpour SGD, we focus on providing convergence guarantees specifically for synchronous versions of these methods.


\subsection{Centralized Methods}

Centralized distributed optimization methods represent a pivotal approach in addressing large-scale optimization problems across distributed computing environments. In these methods, a central entity coordinates the optimization process, leveraging information exchanged among distributed workers to collectively optimize a global objective function. In this section, we are going to discuss the convergence guarantee of some representative centralized methods. 

\subsubsection{Downpour SGD}
Introduced as part of Google's DistBelief framework, Downpour SGD~\citep{dean2012large}, an asynchronous centralized stochastic gradient descent procedure, is proposed to efficiently train large-scale machine learning models across distributed computing systems. 
In Downpour SGD, model parameters are aggregated through an asynchronous process where multiple workers independently compute gradients based on their mini-batches of data and send these gradients to a central parameter server. The server aggregates the received gradients, often by averaging or using weighted aggregation based on mini-batch sizes, and then updates the model parameters. After the update, the parameter server broadcasts the new model parameters back to all workers, allowing them to continue training with the most current values while managing the challenges of parameter staleness.

Downpour SGD works as follows: the training data is segmented into multiple subsets, with each subset processed by an individual copy of a model (worker). The models exchange updates via a centralized parameter server, which maintains the current state of all model parameters, split across many machines (e.g., if we have $10$ parameter server shards, each shard is responsible for storing and applying updates to $1/10^{\text{th}}$ of the model parameters). 
Figure \ref{fig:downpour} shows the framework of Downpour SGD. 

The existing literature on Downpour SGD~\citep{dean2012large} primarily focuses on its implementation details and performance metrics, with limited emphasis on its theoretical guarantees. Our objective is to fill this gap by conducting a thorough theoretical analysis of the method and providing convergence proofs under both convex and non-convex assumptions. Since Downpour SGD serves as the basis for various centralized methods, its proof scheme holds the potential to transfer to other similar centralized methods.

\begin{figure}[t]
    \centering
    \includegraphics[width=0.5\textwidth]{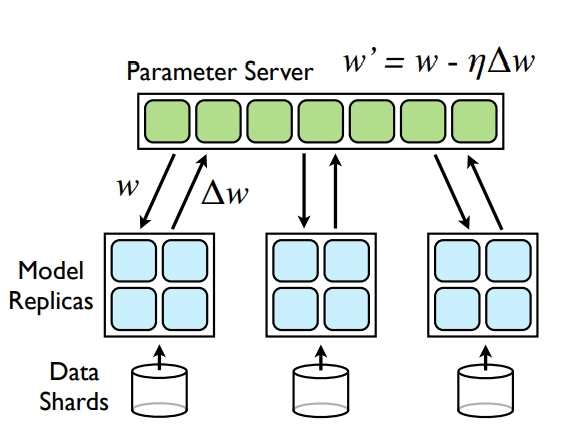}
    \caption{Dowpour SGD. Model replicas asynchronously fetch parameters and push gradients to the parameter server. The figure is taken from Figure 2 of~\citep{dean2012large}}
    \label{fig:downpour}
\end{figure}

Although Downpour SGD exhibits both data and model parallelism, as the dataset and model are distributed across multiple machines, it is important to note that model parallelism does not impact the convergence of the method, as the back-propagation process for gradient computation remains consistent with that of a single-machine setup. Hence, without loss of generality, we analyze the convergence guarantee of Downpour SGD under the assumption that all model parameters are kept on a single machine and only the dataset is partitioned.

In order to theoretically analyze Downpour SGD, we first mathematically formulate its underlying optimization problem. Assume that there are $m$ workers and a data subset processed by worker $i$ satisfies distribution $D_i$. The model parameter is denoted as $x$ where $x\in\mathbb{R}^d$. The purpose of distributed SGD is to train a model by minimizing the objective function $F(x)$ using $m$ workers. The optimization problem is formulated as follows:

\begin{align}\label{prob:downpour}
\min_{x\in R^d} F(x) &= \min_{x\in R^d} \sum_{i = 1}^m F_i(x)
\notag\\
&=\min_{x\in R^d} \sum_{i = 1}^m \mathbb{E}_{\xi\sim D_i}[f(x;\xi)],
\end{align}

where $f(x;\xi)$ is the point-wise loss function and $F_i(x)=\mathbb{E}_{\xi\sim D_i}[f(x;\xi)]$ is the local objective function optimized by the $i$-th worker. The stochastic gradient update step for solving problem \ref{prob:downpour} is
\begin{align}\label{update:downpour}
    x_{k+1}=x_{k}-\eta \sum_{i=1}^m g_i(x_k),
\end{align}
where $\eta$ is the learning rate and where $g_i(x_k)$ is a stochastic gradient of local objective $F_i(x)$ on worker $i$ at iteration $k$, such that $\mathbb{E}[g_i(x_k)]=\nabla F_i(x_k)$. The update rule specified in Equation \ref{update:downpour} precisely reflects the methodology employed by Downpour. 

We then move into the detailed theoretical analysis based on the mathematical formulation introduced above.

\paragraph{Convex Setting.} We first show Downpour SGD reaches the same convergence rate as SGD under the stronly-convex assumption.
\begin{assumption}[Convex Setting]\label{asm:downpour_con}
    We assume that the loss function $F(x)=\sum_{i=1}^mF_i(x)$ and stochastic gradient $g_i(x_k)$ satisfy the following conditions:
\begin{itemize}
    \item[(1)]\textit{Lipschitz gradient (L-smooth):} $\norm{\nabla F_i(x)-\nabla F_i(y)}\leq L\norm{x-y}$
    \item[(2)]\textit{$\mu$-strongly convex:} $F_i(y)\geq F_i(x)+\nabla F_i(x)^\intercal(x-y)+\frac{\mu}{2}\norm{x-y}^2$
    \item[(3)]\textit{Unbiased gradient:} 
    $\mathbb{E}_{\xi_i}[g_i(x_k)]=\nabla F_i(x)$
    \item[(4)]\textit{Bounded variance:} $\mathbb{E}[\norm{g_i(x_k)-\nabla F_i(x)}^2]\!\leq\!\sigma^2$
\end{itemize}
\end{assumption}

\begin{theorem}[Convergence of Donwpour SGD; Convex Setting]\label{thm:dp_con}
    Under Assumption \ref{asm:downpour_con}, when the learning rate $\eta$ satisefiles $\eta<\frac{1}{mL}$, the update rule \ref{update:downpour} of Downpour SGD satisefies:
   $$ \mathbb{E}[F(x_{k+1})]-F(x^*)\leq(1-\eta\mu)(F(x_k)-F(x^*))+\frac{\eta^2L\sigma^2}{2}.$$
   If the learning rate $\eta$ decreasse as $\eta_k=\frac{1}{\sqrt{k}}$, $\mathbb{E}[F(x_{k+1})]-F(x^*)\leq \mathcal{O}(\frac{1}{k})$. In other words, Downpour SGD has sublinear convergence rate.
\end{theorem}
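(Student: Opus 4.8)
The plan is to reduce the statement to the standard strongly-convex stochastic-gradient recursion and then unroll it, exactly as was done for plain SGD in Theorem~\ref{thm:SGD_cov_con}. The only genuinely new ingredient relative to single-machine SGD is that the search direction is the aggregate $G_k := \sum_{i=1}^m g_i(x_k)$. By part (3) of Assumption~\ref{asm:downpour_con} this direction is unbiased for the full gradient, $\mathbb{E}[G_k \mid x_k] = \sum_{i=1}^m \nabla F_i(x_k) = \nabla F(x_k)$, and, assuming the per-worker noises are independent across the $m$ data shards, part (4) gives the aggregate variance bound $\mathbb{E}[\norm{G_k - \nabla F(x_k)}^2 \mid x_k] \leq m\sigma^2$ by additivity of variances. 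I would also record at the outset that, as a sum of $m$ functions each $L$-smooth and $\mu$-strongly convex, $F$ is $mL$-smooth and $m\mu$-strongly convex; this is precisely why the step-size threshold $\eta < 1/(mL)$ appears.

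First I would apply the descent lemma for the $mL$-smooth objective $F$ to the update $x_{k+1} = x_k - \eta G_k$ and take the conditional expectation, using unbiasedness on the linear term and the bias--variance split $\mathbb{E}[\norm{G_k}^2 \mid x_k] = \norm{\nabla F(x_k)}^2 + \mathbb{E}[\norm{G_k-\nabla F(x_k)}^2\mid x_k]$ on the quadratic term, to obtain
\begin{equation*}
\mathbb{E}[F(x_{k+1})\mid x_k] \leq F(x_k) - \eta(1-\tfrac{mL\eta}{2})\norm{\nabla F(x_k)}^2 + \frac{m^2L\eta^2\sigma^2}{2}.
\end{equation*}
The hypothesis $\eta < 1/(mL)$ forces $1-\tfrac{mL\eta}{2} > \tfrac12$, so the gradient term carries a strictly negative coefficient that I can lower-bound by $-\tfrac{\eta}{2}\norm{\nabla F(x_k)}^2$.

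Next I would invoke the Polyak--{\L}ojasiewicz inequality implied by strong convexity, $\norm{\nabla F(x_k)}^2 \geq 2\mu(F(x_k)-F(x^*))$ in the appropriate normalization, to convert the gradient term into a multiple of the suboptimality gap. Writing $a_k := \mathbb{E}[F(x_k)] - F(x^*)$ and taking total expectation produces the one-step contraction
\begin{equation*}
a_{k+1} \leq (1-\eta\mu)\,a_k + \frac{L\eta^2\sigma^2}{2},
\end{equation*}
which is the first assertion of the theorem. I note that pinning the displayed constants down to $(1-\eta\mu)$ and $\tfrac{L\eta^2\sigma^2}{2}$, rather than carrying the $m$-factors produced by the unaveraged update $x_{k+1}=x_k-\eta\sum_i g_i$, is a bookkeeping step that depends on the chosen normalization of $F$ and of the aggregate variance; this is a minor but necessary reconciliation.

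Finally, for the rate I would pass to a diminishing schedule and unroll the recursion by induction, mirroring the argument of Theorem~\ref{thm:SGD_cov_con}: one verifies an inductive hypothesis $a_k \leq Q/k$ for a constant $Q$ that balances the contraction factor against the additive $\tfrac{L\eta_k^2\sigma^2}{2}$ term. The main obstacle is exactly this last passage. With a constant step size the recursion only drives $a_k$ into a noise ball of radius $\Theta(\eta)$, so extracting a vanishing $\mathcal{O}(1/k)$ rate hinges delicately on how fast $\eta_k$ decays relative to $\mu$ and on controlling the accumulated noise $\sum_j \big(\prod_{l>j}(1-\eta_l\mu)\big)\eta_j^2$; the schedule must be chosen and the induction constant tuned with care so that the contraction outpaces the injected variance and the claimed rate, rather than the slower rate a naive schedule would yield, is recovered.
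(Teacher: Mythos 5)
Your proposal follows essentially the same route as the paper's proof: the descent lemma for the $mL$-smooth sum $F=\sum_i F_i$, a bias--variance split of the aggregated stochastic gradient, the step-size condition $\eta<\tfrac{1}{mL}$ to make the gradient coefficient at least $\tfrac{\eta}{2}$, and the Polyak--{\L}ojasiewicz inequality to convert this into the stated one-step contraction. Your remark about reconciling the $m$-factors in the noise term is apt---the paper's own derivation silently drops a factor of $m^2$ in passing from $\tfrac{\eta^2 mL}{2}\sum_i\mathbb{E}\norm{g_i-\nabla F_i}^2$ to $\tfrac{\eta^2 L\sigma^2}{2}$---and, like the paper, you leave the final $\mathcal{O}(1/k)$ claim under the $\eta_k=1/\sqrt{k}$ schedule as an asserted unrolling rather than a completed induction.
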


\begin{proof}
    Under the condition (1) in Assumption \ref{asm:downpour_con}, we have
    \begin{align*}
        |F(x)-F(y)|=|\sum_{i=1}^mF_i(x)-\sum_{i=1}^mF_i(y)|\leq \sum_{i=1}^m|F_i(x)-F_i(y)|\leq mL\norm{x-y}.
    \end{align*}
    Therefore, we can conclude that the loss function $F$ is $mL$-smooth. From the update rule \ref{update:downpour}, we have:
    \begin{align}
        F(x_{k+1})=&F(x_k-\eta\sum_{i=1}^m g_i(x_k))\notag\\
        \leq&F(x_k)-\eta\langle\nabla F(x_k), \sum_{i=1}^m g_i(x_k)\rangle +\frac{\eta^2mL}{2}\norm{\sum_{i=1}^m g_i(x_k)}^2\notag\\
        \leq&F(x_k)-\eta\langle\nabla F(x_k), \sum_{i=1}^m g_i(x_k)\rangle +\frac{\eta^2mL}{2}\left[\norm{\sum_{i=1}^m \nabla F_i(x_k)}^2+\sum_{i=1}^m\norm{g_i(x_k)-\nabla F_i(x_k)}^2\right]\notag\\
        =&F(x_k)-\eta\langle\nabla F(x_k), \sum_{i=1}^m g_i(x_k)\rangle +\frac{\eta^2mL}{2}\left[\norm{\nabla F(x_k)}^2+\sum_{i=1}^m\norm{g_i(x_k)-\nabla F_i(x_k)}^2\right].\notag
    \end{align}
    By taking expectation on both sides, we have
    \begin{align}\label{eq:dp_con1}
        \mathbb{E}[F(x_{k+1})]\leq& F(x_k)-\eta\norm{\nabla F(x_k)}^2+\frac{\eta^2mL}{2}\norm{\nabla F(x_k)}^2+\frac{\eta^2L\sigma^2}{2}\notag\\
        =&F(x_k)-\eta(1-\frac{\eta m L}{2})\norm{\nabla F(x_k)}^2+\frac{\eta^2L\sigma^2}{2}.
    \end{align}
    Since the sum of $\mu$-strongly convex functions is still $\mu$-strongly convex, we could conclude that $F(x)$ is $\mu$-strong convex based on condition (2) in Assumption \ref{asm:downpour_con}. Let $x^*$ be the minimizer of function $F$, by Polyak-Łojasiewicz inequality we have
    \begin{align}\label{eq:dp_con2}
        F(x)-F(x^*)\leq\frac{1}{2\mu}\norm{\nabla F(x)}^2, \quad \forall x.
    \end{align}
    Combine formula (\ref{eq:dp_con1}) with (\ref{eq:dp_con2}) to obtain
    \begin{align}
        \mathbb{E}[F(x_{k+1})]\leq&F(x_k)-2\eta\mu(1-\frac{\eta m L}{2})(F(x_k)-F(x^*))+\frac{\eta^2L\sigma^2}{2}.
    \end{align}
    Since $\eta<\frac{1}{mL}$, we have
    \begin{align}
        \mathbb{E}[F(x_{k+1})]\leq&F(x_k)-\eta\mu(F(x_k)-F(x^*))+\frac{\eta^2L\sigma^2}{2}.
    \end{align}
    Substract $F(x^*)$ on both sides to obtain
    \begin{align}
        \mathbb{E}[F(x_{k+1})]-F(x^*)\leq(1-\eta\mu)(F(x_k)-F(x^*))+\frac{\eta^2L\sigma^2}{2}.
    \end{align}
\end{proof}

\paragraph{Nonconvex Setting.} We next show that Downpour SGD converges with sublinear convergence rate also under the nonconvex assumptions.
\begin{assumption}[Nonconvex Setting]\label{asm:dp_noncon}
     We assume that the loss function $F(x)=\sum_{i=1}^mF_i(x)$ and stochastic gradient $g_i(x_k)$ satisfy the following conditions:
\begin{itemize}
    \item[(1)]\textit{Lipschitz gradient (L-smooth):} $\norm{\nabla F_i(x)-\nabla F_i(y)}\leq L\norm{x-y}$
    \item[(3)]\textit{Unbiased gradient:} 
    $\mathbb{E}_{\xi_i}[g_i(x_k)]=\nabla F_i(x)$
    \item[(4)]\textit{Bounded variance:} $\mathbb{E}[\norm{g_i(x_k;\xi_i)-\nabla F_i(x)}^2]\!\leq\!\sigma^2$
\end{itemize}
\end{assumption}

\begin{theorem}[Convegrence of Donwpour SGD; Nonconvex Setting]\label{thm:dp_noncon}
    Under Assumption \ref{asm:dp_noncon}, when the learning rate $\eta$ satisefiles $\eta<\frac{1}{mL}$, the update rule \ref{update:downpour} of Downpour SGD satisefies:
   $$ \frac{1}{K}\sum_{i=0}^{K-1}\mathbb{E}[\norm{\nabla F(x_k)}^2] \leq \frac{2\mathbb{E}[F(x_0) - F(x_K)]}{\eta K}+\eta L\sigma^2.$$
   If the learning rate $\eta$ decreasse as $\eta_k=\frac{1}{\sqrt{k}}$, $\frac{1}{K}\sum_{i=0}^{K-1}\norm{\nabla F(x_k)}^2 \leq \mathcal{O}(\frac{1}{\sqrt{K}})$. In other words, Downpour SGD enjoys sublinear convergence rate.
\end{theorem}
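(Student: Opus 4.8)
The plan is to mirror the structure of the non-convex SGD convergence proof (Theorem~\ref{thm:SGD_noncov_con}), since the update rule \eqref{update:downpour} is algebraically just a single SGD step with stochastic gradient $\sum_{i=1}^m g_i(x_k)$ whose expectation equals $\nabla F(x_k)=\sum_{i=1}^m\nabla F_i(x_k)$. The key observation, already used in the convex case (Theorem~\ref{thm:dp_con}), is that summing $m$ copies of an $L$-smooth function yields an $mL$-smooth function, so the aggregated objective $F$ satisfies the descent inequality with modified constant $mL$. I would begin by invoking $L$-smoothness of $F$ (equivalently $mL$-smoothness derived from each $F_i$) to write the standard quadratic upper bound
\begin{align*}
    F(x_{k+1})\leq F(x_k)-\eta\Big\langle\nabla F(x_k),\sum_{i=1}^m g_i(x_k)\Big\rangle+\frac{\eta^2 mL}{2}\Big\|\sum_{i=1}^m g_i(x_k)\Big\|^2.
\end{align*}

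Next I would take expectation conditioned on $x_k$. The cross term yields $-\eta\|\nabla F(x_k)\|^2$ by the unbiasedness condition (3). For the quadratic term I would split it using the bias-variance decomposition $\|\sum_i g_i\|^2=\|\sum_i\nabla F_i(x_k)\|^2+\|\sum_i(g_i-\nabla F_i)\|^2$ in expectation; the first piece is exactly $\|\nabla F(x_k)\|^2$, and the second is bounded by $m\sigma^2$ after assuming the per-worker noises are independent (or more conservatively by $m^2\sigma^2$ if no independence is assumed, though the stated bound $\eta L\sigma^2$ suggests they intend a per-iteration variance of $\sigma^2$ on the aggregate). This produces
\begin{align*}
    \mathbb{E}[F(x_{k+1})]\leq F(x_k)-\eta\Big(1-\frac{\eta mL}{2}\Big)\|\nabla F(x_k)\|^2+\frac{\eta^2 L\sigma^2}{2}.
\end{align*}

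Using the step-size condition $\eta<\tfrac{1}{mL}$ gives $1-\tfrac{\eta mL}{2}>\tfrac12$, so I can replace the coefficient by $\tfrac12$ and rearrange to $\mathbb{E}[\|\nabla F(x_k)\|^2]\leq \tfrac{2}{\eta}\mathbb{E}[F(x_k)-F(x_{k+1})]+\eta L\sigma^2$. Summing this telescoping inequality over $k=0,\dots,K-1$, dividing by $K$, and using $F(x_K)\geq F^*$ (or simply keeping $F(x_K)$) yields the claimed bound $\tfrac1K\sum_k\mathbb{E}[\|\nabla F(x_k)\|^2]\leq \tfrac{2\mathbb{E}[F(x_0)-F(x_K)]}{\eta K}+\eta L\sigma^2$. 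Setting $\eta=K^{-1/2}$ balances the two terms at order $K^{-1/2}$, giving the $\mathcal{O}(1/\sqrt{K})$ rate.

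The main obstacle is bookkeeping the constants in the variance term correctly: one must decide whether the $\sigma^2$ in condition~(4) bounds each worker's gradient variance separately (in which case the aggregate variance scales with $m$ under independence, and the constants in the theorem implicitly absorb this $m$ into $\sigma^2$) or whether $\sigma^2$ already refers to the aggregate. To land exactly on $\eta L\sigma^2$ rather than $\eta mL\sigma^2$, I would treat $\sigma^2$ as the variance of the aggregated stochastic gradient $\sum_i g_i$, consistent with how the convex proof arrived at $\tfrac{\eta^2 L\sigma^2}{2}$; this keeps the smoothness-induced factor $m$ only in the step-size restriction $\eta<\tfrac{1}{mL}$ and not in the noise floor, matching the statement.
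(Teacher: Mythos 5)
Your proposal follows essentially the same route as the paper's own proof: the $mL$-smoothness descent inequality, unbiasedness to extract $-\eta\|\nabla F(x_k)\|^2$, a variance bound on the quadratic term, the step-size condition to halve the coefficient, and telescoping. Your closing remark about the variance bookkeeping is well taken — the paper's own derivation silently drops factors of $m$ when it bounds $\frac{\eta^2 mL}{2}\sum_{i=1}^m\mathbb{E}\|g_i(x_k)-\nabla F_i(x_k)\|^2$ by $\frac{\eta^2 L\sigma^2}{2}$, so your reading of $\sigma^2$ as an aggregate-level variance is the charitable interpretation needed to match the stated constants.
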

\begin{proof}
    Similarly to the proof for convex setting, under condition (1) in Assumption \ref{asm:dp_noncon}, we have
    \begin{align}
        \mathbb{E}[F(x_{k+1})- F(x_k)]\leq-\eta(1-\frac{\eta m L}{2})\mathbb{E}[\norm{\nabla F(x_k)}^2]+\frac{\eta^2L\sigma^2}{2}.
    \end{align}
    Since $\eta<\frac{1}{mL}$, we have:
    \begin{align}\label{eq:dp_noncon_1}
        \mathbb{E}[F(x_{k+1})- F(x_k)]\leq-\frac{\eta}{2}\mathbb{E}[\norm{\nabla F(x_k)}^2]+\frac{\eta^2L\sigma^2}{2}.
    \end{align}
    Sum the inequality (\ref{eq:dp_noncon_1}) from $k=0,....,K-1$ and divide by $K$ to obtain:
    \begin{align}
        \frac{1}{K}\sum_{i=0}^{K-1}\mathbb{E}[\norm{\nabla F(x_k)}^2] \leq \frac{2\mathbb{E}[F(x_0) - F(x_K)]}{\eta K}+\eta L\sigma^2.
    \end{align}
    If $\eta=\frac{1}{\sqrt{K}}$, we have
    \begin{align}
        \frac{1}{K}\sum_{i=0}^{K-1}\mathbb{E}[\norm{\nabla F(x_k)}^2] \leq \mathcal{O}(\frac{1}{\sqrt{K}}).
    \end{align}   
\end{proof}

\subsubsection{Elastic Averaging SGD (EASGD)}\label{sec:easgd}
In the Downpour SGD, local workers receive parameters from a centralized parameter server and transmit the stochastic gradient term back to the central worker for subsequent update steps. However, this approach has two potential drawbacks: i) all local workers inherit identical parameters from the centralized parameter server, disregarding potential variations in problem settings arising from different data shards seen by different local workers. Consequently, this uniform parameter inheritance can limit exploration of local optima; ii) the computation of gradient updates for each worker relies on the parameters stored on the centralized parameter server. This necessitates frequent communication between the central worker and local workers to ensure the parameters on the local workers remain current. This, in turn, also counter-acts the exploration of the best possible local optima.

To address the limitations of Downpour SGD,~\citep{zhang2015deep} introduced the Elastic Averaging SGD method (EASGD) along with its variants. EASGD enables each local worker to maintain its own set of parameters, while the center parameter is continuously updated as a moving average derived from the parameters computed by local workers. The coordination and communication among local workers are facilitated by an elastic force mechanism based on the quadratic penalty method, linking the local parameters with the center parameter stored in the centralized server. EASGD encourages exploration of the loss landscape by each local worker by allowing its parameters to fluctuate further from the center parameter, thereby reducing the need for extensive communication between local workers and the centralized server. The reduction in communication not only encourages exploration but also enhances computational efficiency.

The problem formulation for EASGD is provided next. Consider minimizing a function $F(x)$ in a parallel computing environment with $m$ local workers and a centralized parameter server. We focus on the stochastic optimization problem of the following form:
\begin{align}\label{prob:easgd_1}
    \min_{x}F(x)=\min_{x}\mathbb{E}_{\xi\sim D}[f(x,\xi)],
\end{align}
where $f(x,\xi)$ is the point-wise loss function, $x$ denotes the model parameters to be estimated, and $\xi$ is a random variable that follows the data distribution $D$. The stochastic optimization problem (\ref{prob:easgd_1}) can be reformulated as follow:
\begin{align}\label{prob:easgd_2}
    \min_{x^1,...,x^n,\widetilde{x}}\sum_{i=1}^n\mathbb{E}[f(x^i,\xi^i)]+\frac{\rho}{2}\norm{x^i-\widetilde{x}}^2,
\end{align}
where $x^i$ is the local parameter on worker $i$ and $\widetilde{x}$ is the center parameter, each $\xi^i$ satisfies the data distribution $D$ (EASGD assumes that each worker can sample the entire dataset). The problem of the equivalence of these two objectives is studied in the literature and is known as the augmentability or the global variable consensus problem~\citep{hestenes1975optimization,boyd2011distributed}. The quadratic penalty term aims to prevent local workers from converging to distant attractors that deviate significantly from the central worker.

The gradient update rule for EASGD is then given as:
\begin{align}
    x_{k+1}^i&=x_k^i-\eta(g_i(x_k^i)+\rho(x_k^i-\widetilde{x}_k))\label{pre_easgd_up1}\\
    \widetilde{x}_{k+1}&=\widetilde{x}_k+\eta\sum_{i=1}^m\rho(x_k^i-\widetilde{x}_k),\label{pre_easgd_up2}
\end{align}
 where $g_i(x_k^i)$ is a stochastic gradient of $F$ with respect to $x^i$ at iteration $k$, such that $\mathbb{E}[g_i(x_k^i)]=\nabla F(x_k^i)$, $\eta$ is the learning rate, $x_k^i$ and $\widetilde{x}_k$ are the parameters of the local worker $i$ and the centralized server at iteration $k$, respectively. Denote $\alpha=\eta\rho$ and $\beta=m\alpha$, the update rule in (\ref{pre_easgd_up1}-\ref{pre_easgd_up2}) could be rewitten as:
 \begin{align}
    x_{k+1}^i&=x_k^i-\eta g^i(x_k^i)-\alpha(x_k^i-\widetilde{x}_k)\label{easgd_up1}\\
    \widetilde{x}_{k+1}&=(1-\beta)\widetilde{x}_k+\beta\left(\frac{1}{m}\sum_{i=1}^mx_k^i\right)\label{easgd_up2}
\end{align}
The term $-\alpha(x_k^i-\widetilde{x})$ in (\ref{easgd_up1}) is the elastic force between the local variable $x^i$ and center variable $\widetilde{x}$ that prevents the local worker to drift too far away from the center server. Equation (\ref{easgd_up2}) shows that the center variable $\widetilde{x}$ is updated as the moving average of local workers $x^i$ where the average is taken both in time and space.

We then move into the detailed theoretical analysis of EASGD based on the mathematical formulation introduced above. The proof is motivated by the theoretical analysis in~\citep{teng2019leader}.

\paragraph{Convex Setting.} We first show that EASGD reaches the same convergence rate as SGD under the stronly-convex assumption.
\begin{assumption}[Convex Setting]\label{asm:easgd_con}
    We assume that the loss function $F=\mathbb{E}_{\xi\sim D}[f(x;\xi)]$ and stochastic gradient $g_i(x_k^i)$ satisfy the following conditions:
\begin{itemize}
    \item[(1)]\textit{Lipschitz gradient (L-smooth):} $\norm{\nabla F(x)-\nabla F(y)}\leq L\norm{x-y}$
    \item[(2)]\textit{$\mu$-strongly convex:} $F(y)\geq F(x)+\nabla F(x)^\intercal(x-y)+\frac{\mu}{2}\norm{x-y}^2$
    \item[(3)]\textit{Unbiased gradient:} 
    $\mathbb{E}[g_i(x_k^i)]=\nabla F(x_k^i)$
    \item[(4)]\textit{Bounded variance:} $\mathbb{E}[\norm{g_i(x_k^i)-\nabla F(x_k^i)}^2]\!\leq\!\sigma^2$.
\end{itemize}
\end{assumption}
\begin{theorem}[Convergence of EASGD; Convex Setting]\label{thm:easgd_con}
Let $\delta_k^i=x_k^i-\widetilde{x}_k$. Under Assumption \ref{asm:easgd_con}, when the learning rate satisfies $\eta<\frac{1}{2L}$ and $\eta\rho<\frac{\mu}{2L}$, the update rule (\ref{easgd_up1}-\ref{easgd_up2}) for EASGD satisefies:
\begin{align}
    \mathbb{E}[F(x_{k+1}^i)]\leq& F(x_k^i)-\eta(1-\eta L)\norm{\nabla F(x_k^i)}^2-\eta\rho(\frac{\mu}{2}-\eta\rho L)\norm{\delta_k^i}^2\notag\\
    &-\eta\rho(F(x_k^i)-F(\widetilde{x}_k))+\frac{\eta^2}{2}L\sigma^2.\notag
\end{align}
 Note the presence of the new term $-\eta\rho(F(x_k^i)-F(\widetilde{x}_k))$, which speeds up convergence when $F(\widetilde{x_k})\leq F(x_k^i)$, i.e., is better than local workers. If the center parameter $\widetilde{x_k}$ is always chosen so that $F(\widetilde{x_k})\leq F(x_k^i)$ at every step $k$ and the learning rate $\eta$ decrease as $\eta_k=\frac{1}{\sqrt{k}}$, then $\mathbb{E}[F(x_{k+1})]-F(x^*)\leq \mathcal{O}(\frac{1}{k})$. In other words, EASGD converges sublinearly.   
\end{theorem}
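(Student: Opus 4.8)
The plan is to follow the same smoothness-plus-strong-convexity template used for Downpour SGD (Theorem \ref{thm:dp_con}), but applied to the local update rule (\ref{easgd_up1}) while carefully tracking the extra elastic-force term. First I would fix a worker index $i$ and start from the descent inequality obtained by $L$-smoothness of $F$ applied to the local step $x_{k+1}^i = x_k^i - \eta g_i(x_k^i) - \alpha(x_k^i - \widetilde{x}_k)$, where $\alpha = \eta\rho$. Writing the effective update direction as $g_i(x_k^i) + \rho(x_k^i - \widetilde{x}_k)$, I would expand
\begin{align}
    F(x_{k+1}^i) \leq F(x_k^i) - \eta\langle \nabla F(x_k^i), g_i(x_k^i) + \rho\delta_k^i\rangle + \frac{\eta^2 L}{2}\norm{g_i(x_k^i) + \rho\delta_k^i}^2,\notag
\end{align}
where $\delta_k^i = x_k^i - \widetilde{x}_k$. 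Taking conditional expectation and using the unbiasedness and bounded-variance conditions (3)-(4) of Assumption \ref{asm:easgd_con}, the cross term gives $-\eta\norm{\nabla F(x_k^i)}^2 - \eta\rho\langle\nabla F(x_k^i),\delta_k^i\rangle$, and the squared-norm term contributes the variance penalty $\frac{\eta^2 L}{2}\sigma^2$ plus deterministic pieces involving $\norm{\nabla F(x_k^i)}^2$, $\norm{\delta_k^i}^2$, and the inner product $\langle\nabla F(x_k^i),\delta_k^i\rangle$.

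The second key step is to convert the inner product $-\eta\rho\langle\nabla F(x_k^i),\delta_k^i\rangle$ into the interpretable function-value difference $-\eta\rho(F(x_k^i)-F(\widetilde{x}_k))$ together with a strong-convexity remainder. Here I would invoke the $\mu$-strong convexity condition (2): by definition $F(\widetilde{x}_k) \geq F(x_k^i) + \nabla F(x_k^i)^\intercal(\widetilde{x}_k - x_k^i) + \frac{\mu}{2}\norm{\widetilde{x}_k - x_k^i}^2$, which rearranges to
\begin{align}
    -\langle\nabla F(x_k^i), \delta_k^i\rangle = \nabla F(x_k^i)^\intercal(\widetilde{x}_k - x_k^i) \geq F(\widetilde{x}_k) - F(x_k^i) + \frac{\mu}{2}\norm{\delta_k^i}^2.\notag
\end{align}
Substituting this lower bound on $-\langle\nabla F(x_k^i),\delta_k^i\rangle$ produces exactly the $-\eta\rho(F(x_k^i)-F(\widetilde{x}_k))$ term in the claim and a $+\eta\rho\frac{\mu}{2}\norm{\delta_k^i}^2$ contribution. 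I would then collect all $\norm{\nabla F(x_k^i)}^2$ coefficients into $-\eta(1-\eta L)$ and all $\norm{\delta_k^i}^2$ coefficients into $-\eta\rho(\frac{\mu}{2}-\eta\rho L)$, which is where the step-size restrictions $\eta < \frac{1}{2L}$ and $\eta\rho < \frac{\mu}{2L}$ come in: they guarantee both bracketed factors are nonnegative so the two terms genuinely act as decrease terms rather than growth terms.

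The main obstacle I anticipate is the bookkeeping around the quadratic term $\frac{\eta^2 L}{2}\norm{g_i(x_k^i)+\rho\delta_k^i}^2$: expanding it yields a cross term $\eta^2 L\rho\langle g_i(x_k^i),\delta_k^i\rangle$ and a $\frac{\eta^2 L\rho^2}{2}\norm{\delta_k^i}^2$ term, and one must check that these are precisely the pieces that get absorbed into the $-\eta\rho(\frac{\mu}{2}-\eta\rho L)\norm{\delta_k^i}^2$ and $-\eta(1-\eta L)\norm{\nabla F(x_k^i)}^2$ coefficients after taking expectation (the $\rho^2$ term is what forces the $\eta\rho L$ inside the second bracket). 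A subtle point is the variance/gradient decomposition $\esp{\norm{g_i(x_k^i)}^2} = \norm{\nabla F(x_k^i)}^2 + \esp{\norm{g_i(x_k^i)-\nabla F(x_k^i)}^2} \leq \norm{\nabla F(x_k^i)}^2 + \sigma^2$, which isolates the single $\frac{\eta^2 L}{2}\sigma^2$ noise term. Once the per-step inequality is established, the final sublinear $\mathcal{O}(1/k)$ rate follows exactly as in the strongly-convex SGD argument of Theorem \ref{thm:SGD_cov_con}: under the hypothesis $F(\widetilde{x}_k) \leq F(x_k^i)$ the elastic term is a genuine decrease, dropping the nonnegative $\norm{\nabla F}^2$ and $\norm{\delta}^2$ terms and applying the Polyak-Łojasiewicz inequality $F(x)-F(x^*)\leq \frac{1}{2\mu}\norm{\nabla F(x)}^2$ reduces the recursion to the form $\esp{F(x_{k+1})}-F(x^*) \leq (1-\eta\mu)(F(x_k)-F(x^*)) + \frac{\eta^2 L\sigma^2}{2}$, whose $\eta_k = 1/\sqrt{k}$ schedule yields the claimed $\mathcal{O}(1/k)$ convergence.
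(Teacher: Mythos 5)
Your proposal matches the paper's proof essentially step for step: the $L$-smoothness descent inequality applied to the combined direction $g_i(x_k^i)+\rho\delta_k^i$, the strong-convexity bound converting $-\eta\rho\langle\nabla F(x_k^i),\delta_k^i\rangle$ into $-\eta\rho(F(x_k^i)-F(\widetilde{x}_k))-\eta\rho\frac{\mu}{2}\norm{\delta_k^i}^2$, the variance decomposition isolating $\frac{\eta^2 L}{2}\sigma^2$, and the final Polyak--{\L}ojasiewicz reduction to the strongly convex SGD recursion. The only cosmetic difference is that the paper bounds $\norm{\nabla F(x_k^i)+\rho\delta_k^i}^2\leq 2\norm{\nabla F(x_k^i)}^2+2\rho^2\norm{\delta_k^i}^2$ rather than expanding the cross term, which yields the same coefficients.
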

\begin{proof}
    Assume $\delta_k^i=x_k^i-\widetilde{x}_k$. From the update rule in (\ref{easgd_up1}-\ref{easgd_up2}) we have
    \begin{align}
        F(x_{k+1}^i)=&F(x_k^i)-\eta\langle\nabla F(x_k^i), g_i(x_k^i)+\rho\delta_k^i\rangle +\frac{\eta^2}{2} L\norm{g_i(x_k^i)+\rho\delta_k^i}^2\notag\\
        =&F(x_k^i)-\eta\langle\nabla F(x_k^i), g_i(x_k^i)+\rho\delta_k^i\rangle +\frac{\eta^2}{2} L\norm{\nabla F(x_k^i)+\rho\delta_k^i+g_i(x_k^i)-\nabla F(x_k^i)}^2\notag\\
        \leq&F(x_k^i)-\eta\langle\nabla F(x_k^i), g_i(x_k^i)+\rho\delta_k^i\rangle +\frac{\eta^2}{2} L(\norm{\nabla F(x_k^i)+\rho\delta_k^i}^2+\norm{g_i(x_k^i)-\nabla F(x_k^i)}^2).\notag
    \end{align}
    Taking the expectation on both sides:
    \begin{align}
        \mathbb{E}[F(x_{k+1}^i)]\leq& F(x_k^i)-\eta\norm{\nabla F(x_k^i)}^2-\eta\rho\nabla F(x_k^i)^\intercal\delta_k^i+\frac{\eta^2}{2}L\norm{\nabla F(x_k^i)+\rho\delta_k^i}^2+\frac{\eta^2}{2}L\sigma^2\notag\\
        \leq&F(x_k^i)-\eta(1-\eta L)\norm{\nabla F(x_k^i)}^2+\frac{\eta^2}{2}L\sigma^2-\eta\rho(\frac{\mu}{2}-\eta\rho L)\norm{\delta_k^i}^2\notag\\
        &-\eta\rho(F(x_k^i)-F(\widetilde{x}_k)).\notag
    \end{align}
    Since $\eta<\frac{1}{2L}$ and $\eta\rho<\frac{\mu}{2L}$, we have:
    \begin{align}\label{eq:easgd1}
        \mathbb{E}[F(x_{k+1}^i)]\leq&F(x_k^i)-\frac{\eta}{2}\norm{\nabla F(x_k^i)}^2-\eta\rho(F(x_k^i)-F(\widetilde{x}_k))+\frac{\eta^2}{2}L\sigma^2.
    \end{align}
    Let $x^*$ be the minimizer of function $F$, by Polyak-Łojasiewicz inequality for $\mu$-strongl convex function $F$ we have:
    \begin{align}\label{eq:easgd2}
        F(x)-F(x^{*})\leq\frac{1}{2\mu}\norm{\nabla F(x)}^2, \quad \forall x. 
    \end{align}
    Combine (\ref{eq:easgd1}) with (\ref{eq:easgd2}) to obtain
    \begin{align}
        \mathbb{E}[F(x_{k+1}^i)]\leq&F(x_k^i)-\eta\mu(F(x)-F(x^*))-\eta\rho(F(x_k^i)-F(\widetilde{x}_k))+\frac{\eta^2}{2}L\sigma^2.
    \end{align}
    Substract $F(x^*)$ on both sides and proceed:
    \begin{align}
        \mathbb{E}[F(x_{k+1}^i)]-F(x^*)\leq&(1-\eta\mu)(F(x)-F(x^*))-\eta\rho(F(x_k^i)-F(\widetilde{x}_k))+\frac{\eta^2}{2}L\sigma^2.
    \end{align}
\end{proof}

\paragraph{Nonconvex Setting.} We also show that EASGD converges sublinearly under the nonconvex assumptions.
\begin{assumption}[Nonconvex Setting.]\label{asm:easgd_noncon}
    We assume that the loss function $F=\mathbb{E}_{\xi\sim D}[f(x;\xi)]$ and stochastic gradient $g_i(x_k^i)$ satisfy the following conditions:
\begin{itemize}
    \item[(1)]\textit{Lipschitz gradient (L-smooth):} $\norm{\nabla F(x)-\nabla F(y)}\leq L\norm{x-y}$
    \item[(2)]\textit{Unbiased gradient:} 
    $\mathbb{E}[g_i(x_k^i)]=\nabla F(x_k^i)$
    \item[(3)]\textit{Bounded variance:} $\mathbb{E}[\norm{g_i(x_k^i)-\nabla F(x_k^i)^2}]\!\leq\!\sigma^2$
    \item[(4)]\textit{Bounded domain:} $\norm{x_k^i-\widetilde{x}_k}^2\leq\delta^2$.
\end{itemize}
\end{assumption}

\begin{theorem}[Convergence of EASGD; Nonconvex Setting]\label{thm:easgd_noncon}
Under Assumption \ref{asm:easgd_noncon}, when the learning rate $\eta$ and elastic force $\rho$ satisfy the condition: $2\eta L+\rho<1$, the update rule \ref{update:downpour} of EASGD satisfies:
   $$ \frac{1}{K}\sum_{i=0}^{K-1}\mathbb{E}[\norm{\nabla F(x_k^i})^2] \leq \frac{2\mathbb{E}[F(x_0^i}) - F(x_K^i)]{\eta K}+2\eta\rho L\delta^2+\rho\delta^2+\eta L\sigma^2.$$
   If the learning rate $\eta$ and elastic force $\rho$ decreasse as $\eta_k=\rho_k=\frac{1}{\sqrt{k}}$, $\frac{1}{K}\sum_{i=0}^{K-1}\norm{\nabla F(x_k^i})^2 \leq \mathcal{O}(\frac{1}{\sqrt{K}})$. In other words, Elastic SGD converges sublinearly.
    
\end{theorem}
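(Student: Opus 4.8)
The plan is to mirror the descent-lemma approach used in the convex case (Theorem \ref{thm:easgd_con}), but to replace the strong-convexity step—unavailable here—with a Young-type bound on the cross term, controlled by the bounded-domain assumption. I would begin by writing the per-worker update as $x_{k+1}^i = x_k^i - \eta(g_i(x_k^i) + \rho\delta_k^i)$ with $\delta_k^i = x_k^i - \widetilde{x}_k$, and apply $L$-smoothness of $F$ to obtain
\begin{align*}
F(x_{k+1}^i) \leq F(x_k^i) - \eta\nabla F(x_k^i)^\intercal(g_i(x_k^i) + \rho\delta_k^i) + \frac{\eta^2 L}{2}\norm{g_i(x_k^i)+\rho\delta_k^i}^2.
\end{align*}
Taking expectation and using unbiasedness with the variance bound (splitting $g_i = \nabla F(x_k^i) + (g_i - \nabla F(x_k^i))$ so the cross term vanishes and $\mathbb{E}\norm{g_i - \nabla F(x_k^i)}^2 \leq \sigma^2$), then expanding $\norm{\nabla F(x_k^i) + \rho\delta_k^i}^2$, would give
\begin{align*}
\mathbb{E}[F(x_{k+1}^i)] \leq F(x_k^i) - \eta\left(1 - \tfrac{\eta L}{2}\right)\norm{\nabla F(x_k^i)}^2 - \eta\rho(1-\eta L)\nabla F(x_k^i)^\intercal\delta_k^i + \frac{\eta^2 L\rho^2}{2}\norm{\delta_k^i}^2 + \frac{\eta^2 L}{2}\sigma^2.
\end{align*}

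The key step—and the main departure from the convex proof—is handling $\nabla F(x_k^i)^\intercal\delta_k^i$ without strong convexity. I would apply Young's inequality $-\nabla F(x_k^i)^\intercal\delta_k^i \leq \tfrac{1}{2}\norm{\nabla F(x_k^i)}^2 + \tfrac{1}{2}\norm{\delta_k^i}^2$. This re-introduces a $\norm{\nabla F(x_k^i)}^2$ term, so the crucial check is that the step-size condition $2\eta L + \rho < 1$ keeps the aggregate coefficient of $\norm{\nabla F(x_k^i)}^2$ at or below $-\eta/2$: since $2\eta L + \rho < 1$ forces $\eta L + \rho < 1$, and $0 < 1-\eta L < 1$, one verifies $(1 - \tfrac{\eta L}{2}) - \tfrac{\rho(1-\eta L)}{2} > \tfrac{1}{2}$. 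For the remaining $\norm{\delta_k^i}^2$ terms I would invoke the bounded-domain assumption $\norm{\delta_k^i}^2 \leq \delta^2$, and use $\rho^2 \leq \rho$ and $1-\eta L \leq 1$ to reach the clean per-step inequality
\begin{align*}
\mathbb{E}[F(x_{k+1}^i)] \leq F(x_k^i) - \frac{\eta}{2}\norm{\nabla F(x_k^i)}^2 + \eta^2\rho L\delta^2 + \frac{\eta\rho\delta^2}{2} + \frac{\eta^2 L}{2}\sigma^2.
\end{align*}

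Finally I would rearrange to isolate $\tfrac{\eta}{2}\norm{\nabla F(x_k^i)}^2$, sum over $k = 0, \dots, K-1$ so the function-value differences telescope to $F(x_0^i) - F(x_K^i)$, divide by $K$, and multiply through by $2/\eta$ to recover the stated bound with error terms $2\eta\rho L\delta^2 + \rho\delta^2 + \eta L\sigma^2$. For the rate, I would set $\eta = \rho = 1/\sqrt{K}$ (constant across iterations but scaled by the horizon $K$, as in Theorem \ref{thm:SGD_noncov_con}): the telescoping term and the $\rho\delta^2$, $\eta L\sigma^2$ terms each become $\mathcal{O}(1/\sqrt{K})$ while $2\eta\rho L\delta^2 = \mathcal{O}(1/K)$, giving the claimed $\mathcal{O}(1/\sqrt{K})$ sublinear rate. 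I expect the main obstacle to be the bookkeeping that drives the coefficient of $\norm{\nabla F(x_k^i)}^2$ to exactly $-\eta/2$: the Young's-inequality substitution is delicate, since a careless split would either leave a positive gradient coefficient or produce error terms not matching $2\eta\rho L\delta^2 + \rho\delta^2$. The bounded-domain assumption, absent in the convex analysis, does the essential work of converting the elastic-force contribution into a controllable constant.
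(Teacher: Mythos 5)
Your proposal is correct and follows essentially the same route as the paper's proof: an $L$-smoothness descent step, Young's inequality $-\nabla F(x_k^i)^\intercal\delta_k^i \leq \tfrac{1}{2}\norm{\nabla F(x_k^i)}^2 + \tfrac{1}{2}\norm{\delta_k^i}^2$ to absorb the elastic cross term, the bounded-domain assumption to control $\norm{\delta_k^i}^2$, the condition $2\eta L + \rho < 1$ to keep the gradient coefficient below $-\eta/2$, and telescoping with $\eta=\rho=1/\sqrt{K}$. The minor differences in intermediate coefficient bookkeeping are immaterial; both arrive at the same per-step inequality and final bound.
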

\begin{proof}
    Similarly to the proof for convex setting we start as follows:
    \begin{align}\label{eq:easgd_non_1}
        &\mathbb{E}[F(x_{k+1}^i)-F(x_k^i)]\notag\\
        \leq&-\eta(1-\eta L)\mathbb{E}[\norm{\nabla F(x_k^i)}^2]+\eta^2\rho L\norm{\delta_k^i}^2-\eta\rho\mathbb{E}[\nabla F(x_k^i)^\intercal \delta_k^i]+\frac{\eta^2}{2}L\sigma^2.
    \end{align}
    We are going to derive a bound for $-\mathbb{E}[\nabla F(x_k^i)^\intercal \delta_k^i]$:
    \begin{align}\label{eq:easgd_non_2}
        -\mathbb{E}[\nabla F(x_k^i)^\intercal \delta_k^i]=&\frac{1}{2}\mathbb{E}\left[\norm{\nabla F(x_k^i)}^2+\norm{\delta_k^i}^2-\norm{\nabla F(x_k^i)+\delta_k^i}^2\right]\notag\\
        \leq&\frac{1}{2}\mathbb{E}\left[\norm{\nabla F(x_k^i)}^2+\norm{\delta_k^i}^2\right]\notag\\
        \leq&\frac{1}{2}\mathbb{E}[\norm{\nabla F(x_k^i)}]^2+\frac{1}{2}\norm{\delta_k^i}^2.
    \end{align}
    Therefore, combining (\ref{eq:easgd_non_1}) with (\ref{eq:easgd_non_2}) gives
    \begin{align}
        \mathbb{E}[F(x_{k+1}^i)-F(x_k^i)]\leq 
        -\eta(1-\eta L-\frac{\rho}{2})\mathbb{E}[\norm{\nabla F(x_k^i)}^2]+\frac{\eta\rho}{2}(2\eta L+1)\norm{\delta_k^i}^2+\frac{\eta^2L\sigma^2}{2}.
    \end{align}
    When $2\eta L+\rho<1$, we have
    \begin{align}\label{eq:easgd_non_3}
        \mathbb{E}[\norm{\nabla F(x_k^i)}^2]\leq\frac{2\mathbb{E}[F(x_{k+1}^i)-F(x_k^i)]}{\eta}+2\eta\rho L\delta^2+\rho\delta^2+\eta L\sigma^2.
    \end{align}

    Sum the inequality (\ref{eq:easgd_non_3}) from $k=0,....,K-1$ and divide by $K$ to obtain:
    \begin{align}
        \frac{1}{K}\sum_{i=0}^{K-1}\mathbb{E}[\norm{\nabla F(x_k^i)}^2] \leq \frac{2\mathbb{E}[F(x_0^i) - F(x_K^i)]}{\eta K}+2\eta\rho L\delta^2+\rho\delta^2+\eta L\sigma^2
    \end{align}
    If $\eta=\frac{1}{\sqrt{K}}$ and $\rho=\frac{1}{\sqrt{K}}$, we have
    \begin{align}
        \frac{1}{K}\sum_{i=0}^{K-1}\mathbb{E}[\norm{\nabla F(x_k^i)}^2] \leq \mathcal{O}(\frac{1}{\sqrt{K}}).
    \end{align}   
\end{proof}

\subsubsection{Leader Stochastic Gradient Descent (LSGD)}
\citep{teng2019leader} improves EASGD and relies on the framework consisting of multiple leaders. $\widetilde{x}$ in EASGD is close to the average of the local workers. However, the average is a poor solution for the whole system when the local workers converges to different local optimizers. In order to overcome the disadvantage, $\widetilde{x}$ in LSGD is chosen to be the best performer among all local workers of current training step.
Except for a single center parameter introduced by EASGD, LSGD introduces multiple local leaders for each group of local workers, and pulls all the local workers towards the current local leader as well as the global leader to ensure fast convergence. The multi-leader setting is well-aligned with the current hardware architecture, where the local workers forming a group lie within a single computational node and different groups correspond to different nodes. 

The problem formulation for LSGD is next crystalized. Consider minimizing a function $F(x)=\mathbb{E}_{\xi\in D}[f(x,\xi)]$ in parallel computing environment, where $x$ corresponds to the model parameters and $\xi$ is a random variable sampled from distribution $D$. 
Firstly, recall the optimization problem formulation introduced in EASGD:
\begin{align}\label{prob:lsgd_0}
    \min_{x^1,...,x^l,\widetilde{x}}\sum_{i=1}^l\mathbb{E}[f(x^i,\xi^i)]+\frac{\lambda}{2}\norm{x^i-\widetilde{x}}^2,
\end{align}
where $l$ is the number of workers, $\xi^i$s are samples from distribution $D$, $x^i$ is the local set of parameters processed by the $i^{\text{th}}$ worker, and $\widetilde{x}$ is the center parameter. 
In EASGD, $\widetilde{x}$ is the average of the local workers. However, the average is a poor solution for the whole system when the local workers converges to different local optimizers. In order to overcome the disadvantage, In LSGD, we define $\widetilde{x}$ as the best performing worker: $\widetilde{x}:=\argmin_{x^1,...,x^l}\mathbb{E}[f(x^i, \xi^i)]$ and use the best performing worker $\widetilde{x}$ as the single leader of the system.

Formulation (\ref{prob:lsgd_0}) could be extended to the multi-leader setting:
\begin{align}\label{prob:lsgd_1}
\min_{x^{1,1},x^{1,2},...,x^{n,l},\widetilde{x}}\sum_{j=1}^n\sum_{i=1}^l\mathbb{E}[f(x^{j,i},\xi^{j,i})]+\frac{\lambda}{2}\norm{x^{j,i}-\widetilde{x}^j}^2+\frac{\lambda_{G}}{2}\norm{x^{j,i}-\widetilde{x}}^2,
\end{align}
where $n$ is the number of groups, $l$ is the number of workers in each group, $\xi^{j,i}$s are samples from distribution $D$, and $\widetilde{x}^j$ and $\widetilde{x}$ is the local leader and global leader, respectively, i.e. $\widetilde{x}^j=\argmin_{x^{j,1},...,x^{j,i}}\mathbb{E}[f(x^{j,i}, \xi^{j,i})]$ and $\widetilde{x}=\argmin_{x^{1,1},...,x^{n,l}}\mathbb{E}[f(x^{j,i}, \xi^{j,i})]$.

The update rule for LSGD is:
\begin{align}\label{update_lsgd}
    &x_{k+1}^{j,i}=x_t^{j,i}-\eta g_{t}^{j,i}(x_t^{j,i})-\lambda(x_t^{j,i}-\widetilde{x}^j_t)-\lambda_G(x_t^{j,i}-\widetilde{x}_t),\\
    &\widetilde{x}^j=\argmin_{x^{j,1},...,x^{j,i}}\mathbb{E}[f(x^{j,i}, \xi^{j,i})],
    \quad\widetilde{x}=\argmin_{x^{1,1},...,x^{n,l}}\mathbb{E}[f(x^{j,i}, \xi^{j,i})],\notag
\end{align}
where $g_t^{j,i}$ denotes the stochastic gradient of $\mathbb{E}[f(x^{j,i}, \xi^{j,i})]$.

From previous analysis, we know that the only difference between the formulation of EASGD and LSGD is that: EASGD only has a single center parameter $\widetilde{x}$ while LSGD has multiple leaders $\{\widetilde{x}^j\}$. We are going to show The objective function of LSGD with multiple-leaders could be rewritten as the form of objective function of EASGD with a single center parameter.

From formulation \ref{prob:lsgd_1}, the objective function for multiple-leaders of LSGD is of the form:
\begin{align}\label{eq:lsgd_mul}
    f(x)+\frac{\lambda_1}{2}\norm{x-z_1}^2+\frac{\lambda_2}{2}\norm{x-z_2}^2+...+\frac{\lambda_c}{2}\norm{x-z_c}^2,
\end{align}
where $z_i$ represents the local leaders and the global leader in LSGD generally. Formula (\ref{eq:lsgd_mul}) could be rewritten as:
\begin{align}\label{eq:lsgd_sin}
    f(x)+\frac{\Lambda}{2}\norm{x-\widetilde{z}}^2,
\end{align}
where $\Lambda=\sum_{i=1}^c\lambda_i$ and $\widetilde{z}=\frac{1}{\Lambda}\sum_{i=1}^c \lambda_iz_i$. By comparing the problem formulation of EASGD (Equation \ref{prob:lsgd_0}) with the reformulated objective function of LSGD (Equation \ref{eq:lsgd_sin}), we observe that LSGD's objective function is structurally the same as that of EASGD, differing only in the choice of certain constants. Since we have already established the convergence guarantee for EASGD with a single center worker, this proof can be readily adapted to LSGD by modifying these constants. Therefore, we omit the proof of the convergence guarantee for LSGD in this manuscript.

\subsubsection{Gradient-based Weighted Averaging (GRAWA)}
Gradient-based Weighted Averaging (GRAWA)~\citep{dimlioglu2024grawa} considers the problem of minimizing a loss function $F$ with respect to model parameters $x$ over a large data set $D$. In a parallel computing environment with $m$ workers $(x^1, x^2, ..., x^m)$, this optimization problem can be written as:

\begin{equation}
  \begin{aligned}
    \min_{x^1,\dots, x^m} \sum_{i=1}^{m} \mathbb{E}_{\xi \sim \mathcal{D}_i } f (x^i; \xi) + \frac{\lambda}{2} || x^i - x^c ||^2,
  \end{aligned}
  \vspace{-3mm}
  \label{eq:dist_opt}
\end{equation}

where $D$ is partitioned and distributed across $m$ workers, $D_i$ is the data distribution exclusively seen by worker $i$. $\widetilde{x}$ is the center variable (for example, an average of all the workers), and $x^i$ stands for the parameters of model $i$. 

Motivated by EASGD, GRAWA applies a pulling force on all workers towards the center worker. The penalty term $\frac{\lambda}{2} || x^i - \widetilde{x} ||^2$ in the problem formulation ($\ref{eq:dist_opt}$) introduces the pulling force. However, differently from EASGD which applies the $1/m$ equal average, GRAWA takes the weighted average among local workers based on the gradients. Define the gradient vector for model $i$ as $A_i$. More specifically, the weights are calculated as follows ($\beta_i$ is the weight for worker $i$):

\begin{equation}
\begin{aligned}
\beta_i \propto \frac{1}{||A_i||_2},  \quad&\sum_{i=1}^{m} \beta^{i} = 1  \quad\Longrightarrow \quad \beta^i = \frac{\Theta}{||\nabla f(x^i)||}, \\ 
\text{where} \quad \Theta &= \frac{ \prod_{i=1}^m ||\nabla f(x^i)||}{  \sum_{i=1}^m \frac{\prod_{j=1}^m ||\nabla f(x^j)||}{||\nabla f(x^i)||} }. \\
\end{aligned}
\label{eq:grawa}
\end{equation} 

The update rule for GRAWA is:
\begin{align}
    x_{k+1}^i&=x_k^i-\eta( g^i(x_k^i)+\lambda(x_k^i-\widetilde{x_k}))\label{grawa_up1}\\
    x^c_{k+1}&=(1-\beta_k^i)\widetilde{x}_k+\beta_k^i\left(\frac{1}{m}\sum_{i=1}^mx_k^i\right),\label{grawa_up2}
\end{align}
where $\beta_k^i$ is computed through Equation (\ref{eq:grawa}) defined above. We then move into the detailed theoretical analysis of GRAWA based on the mathematical formulation introduced above.

\paragraph{Convex Setting.}

We first show the GRAWA reaches sublinear convergene rate under the strongly-convex assumption.

\begin{assumption}[Convex Setting]\label{asm:grawa_con}
    We assume that the loss function $F=\mathbb{E}_{\xi\sim D}[f(x;\xi)]$ with the minimum value $x^*$ and stochastic gradient $g_i(x_k^i)$ satisfy the following conditions:
\begin{itemize}
    \item[(1)]\textit{Lipschitz gradient (L-smooth):} $\norm{\nabla F(x)-\nabla F(y)}\leq L\norm{x-y}$
    \item[(2)]\textit{$m$-strongly convex:} $F(y)\geq F(x)+\nabla F(x)^\intercal(x-y)+\frac{m}{2}\norm{x-y}^2$
    \item[(3)] $v$-cone: $|F(x) - F(x^*)| \geq v||x - x^*|| $ i.e. lower bounded by a cone which has a tip at $x^*$ and a slope $k$
    \item[(4)] $\mu$-sPL: $||\nabla F(x)|| \geq \mu \left( F(x) - F(x^*)\right)$
    \item[(5)]\textit{Unbiased gradient:} 
    $\mathbb{E}[g_i(x_k^i)]=\nabla F(x_k^i)$
    \item[(6)]\textit{Bounded variance:} $\mathbb{E}[\norm{g_i(x_k^i)-\nabla F(x_k^i)^2}]\!\leq\!\sigma^2$.
\end{itemize}
\end{assumption}

\begin{theorem}\label{main_body:smaller_th}
(Theorem 1 in~\citep{dimlioglu2024grawa}) Let $x_k^c = \sum_{i=1}^{m} \beta^i_k x_k^i$ and $\beta^i_k$'s are calculated as in Equation \ref{eq:grawa}. Under the Assumption \ref{asm:grawa_con} for the loss function $F$, the GRAWA center variable holds the following property: $F(x^c_k) \leq f(x^i_k)$ for all $i \in 1,2,...,m$ when $\mu v \geq L \sqrt{M}$. 
\end{theorem}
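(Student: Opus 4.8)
The plan is to reduce the statement to a pointwise suboptimality comparison and then exploit the fact that the inverse-gradient weights in \eqref{eq:grawa} concentrate the center on the best-performing worker. Fix the iteration $k$ and drop it from the notation, writing $x^i$ for the workers, $x^c=\sum_{i=1}^m\beta^i x^i$ for the center, and $x^*$ for the minimizer of $F$, so that $\nabla F(x^*)=0$. Introduce the shorthands $r_i=\norm{x^i-x^*}$, $g_i=\norm{\nabla F(x^i)}$, and $\Delta_i=F(x^i)-F(x^*)\ge 0$. Since $x^*$ minimizes $F$, the claim $F(x^c)\le F(x^i)$ for all $i$ is equivalent to $F(x^c)-F(x^*)\le\min_i\Delta_i$, and the binding constraint is the worker closest to the optimum. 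First I would record this reformulation so that every subsequent estimate is phrased in terms of the scalars $r_i,g_i,\Delta_i$.

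Next I would assemble the ``equivalence of magnitudes'' that the structural hypotheses of Assumption~\ref{asm:grawa_con} force on these scalars. Using $L$-smoothness together with $\nabla F(x^*)=0$ gives $g_i=\norm{\nabla F(x^i)-\nabla F(x^*)}\le L r_i$; the $v$-cone condition (3) gives $\Delta_i\ge v\,r_i$; and the $\mu$-sPL condition (4) gives $g_i\ge\mu\Delta_i$. Chaining these yields
\begin{equation*}
\mu v\,r_i\ \le\ \mu\Delta_i\ \le\ g_i\ \le\ L\,r_i ,
\end{equation*}
so $g_i$, $r_i$ and $\Delta_i$ are all comparable up to the factor $L/(\mu v)$. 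The immediate consequence is that the weights $\beta^i\propto 1/g_i$ are largest exactly for the workers with the smallest $r_i$ (equivalently, the smallest loss), which is the quantitative expression of the idea that $x^c$ is pulled toward the best worker.

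The core estimate I would then carry out bounds $F(x^c)$ from above and the target $\min_i\Delta_i$ from below. Because $\nabla F(x^*)=0$, smoothness gives $F(x^c)-F(x^*)\le\tfrac{L}{2}\norm{x^c-x^*}^2$, and convexity of $\norm{\cdot}^2$ (Jensen applied to the weights $\beta^i$) gives $\norm{x^c-x^*}^2\le\sum_{i=1}^m\beta^i r_i^2$. On the other side, the cone bound gives $\min_i\Delta_i\ge v\,r_{\min}$ with $r_{\min}=\min_i r_i$. Hence it suffices to establish
\begin{equation*}
\frac{L}{2}\sum_{i=1}^m\beta^i r_i^2\ \le\ v\,r_{\min}.
\end{equation*}
Writing $\beta^i=(1/g_i)/\sum_{j}(1/g_j)$ and substituting the magnitude-equivalences of the previous step converts this into a purely scalar inequality among the $r_i$; applying Cauchy--Schwarz to the weighted sum (rather than bounding term by term) is what replaces a naive worker-count factor by $\sqrt{M}$, and the hypothesis $\mu v\ge L\sqrt{M}$ is then exactly the threshold that closes the inequality.

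The hard part will be this last scalar estimate: showing that the weighted spread $\sum_i\beta^i r_i^2$ is controlled by a constant multiple of $r_{\min}$ uniformly over every admissible configuration of workers. The difficulty is that a worker far from $x^*$ inflates $r_i^2$, but it is simultaneously down-weighted (its $g_i$, hence $1/g_i$, is small) and, through the cone, its large loss gap must be absorbed by the curvature term; coupling these two effects sharply --- so that the dependence on the number of workers enters only as $\sqrt{M}$ and not linearly --- is where the careful Cauchy--Schwarz / rearrangement argument is needed and where the precise constant $\mu v\ge L\sqrt{M}$ originates. Everything else (the reformulation, the magnitude chain, and the two Jensen steps) is routine once this balancing estimate is in hand.
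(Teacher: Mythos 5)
Your reduction and your ``magnitude chain'' are fine, but the core estimate you propose to close the argument with is a dead end. You upper-bound $F(x^c)-F(x^*)$ by $\tfrac{L}{2}\norm{x^c-x^*}^2\le\tfrac{L}{2}\sum_i\beta^i r_i^2$ and lower-bound $\min_i\Delta_i$ by $v\,r_{\min}$, and then try to prove $\tfrac{L}{2}\sum_i\beta^i r_i^2\le v\,r_{\min}$. That inequality is quadratic in the overall scale of the $r_i$ on the left and linear on the right, so it cannot hold under the stated assumptions: take all $m$ workers at the same point $x$ with $\norm{x-x^*}$ large. Then $x^c=x$ and the theorem's conclusion $F(x^c)\le F(x^i)$ is a trivial equality, yet your sufficient condition reads $\tfrac{L}{2}\norm{x-x^*}^2\le v\norm{x-x^*}$, which fails once $\norm{x-x^*}>2v/L$. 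The loss occurs exactly where you replace ``$F(x^c)$ compared to the $F(x^i)$'' by ``$F(x^c)$ compared to $F(x^*)$ via smoothness at $x^*$'': the weights $\beta^i$ cannot rescue a bound that is already scale-inconsistent. No Cauchy--Schwarz or rearrangement refinement will produce the $\sqrt{M}$ threshold from this route.

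The paper closes the argument differently, and the ingredient you are missing is an application of the $\mu$-sPL condition \emph{at the center} $x^c$ rather than smoothness of $F$ around $x^*$. Concretely: from the definition of the weights, $\beta^i=\Theta/\norm{\nabla F(x^i)}\le 1$ forces $\Theta\le\norm{\nabla F(x^i)}$ for every $i$, and the weighted-combination structure gives $\norm{\nabla F(x^c)}\le\sqrt{M}\,\Theta\le\sqrt{M}\,\norm{\nabla F(x^i)}$. Then for each worker the cone and smoothness conditions give $F(x^i)-F(x^*)\ge v\norm{x^i-x^*}\ge\tfrac{v}{L}\norm{\nabla F(x^i)}\ge\tfrac{v}{L\sqrt{M}}\norm{\nabla F(x^c)}$, and the sPL inequality applied at $x^c$ converts the right-hand side into $\tfrac{\mu v}{L\sqrt{M}}\left(F(x^c)-F(x^*)\right)$. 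The hypothesis $\mu v\ge L\sqrt{M}$ then yields $F(x^i)-F(x^*)\ge F(x^c)-F(x^*)$ directly. Note that this chain compares function gaps through gradient norms only, never through $\norm{x^c-x^*}^2$, which is precisely how it avoids the quadratic-versus-linear mismatch that sinks your version. Your observation that $\beta^i\propto 1/g_i$ concentrates mass on the best workers is correct in spirit, but the place where that fact enters the proof is the bound $\Theta\le\norm{\nabla F(x^i)}$, not a bound on the weighted second moment of the distances.
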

\begin{proof}
    For notation simplicity, we omit the subscript $k$ in this proof. We denote $x_k^i$, $x_k^c$ and $\beta_k^i$ as $x^i$, $x^c$ and $\beta^i$ repectively.

    Since $x^c = \sum_{i=1}^{m} \beta^i x^i$, where $\sum_{i=1}^m\beta^i=1$, and $F$ is convex:
    \begin{align}
        F(x^c)=F(\sum_{i=1}^m\beta^ix^i)\leq\sum_{i=1}^m\beta^iF(x^i).
    \end{align}
    By the definition of $\beta^i$ in Equations (\ref{eq:grawa}) we have
    \begin{align}
        \norm{\nabla F(x^c)}^2\leq\sum_{i=1}^m\frac{\Theta^2}{\norm{\nabla F(x^i)}^2}\norm{\nabla F(x^i)}^2\leq M\Theta^2.
    \end{align}
    Therefore, we have
    \begin{align}
        \norm{\nabla F(x^c)}\leq\sqrt{M}\Theta.
    \end{align}
    Since 
    $\beta^i=\frac{\Theta}{\norm{\nabla F(x^i)}}\leq1$, we have $\Theta\leq\norm{\nabla F(x^i)}$ for all $i=1,2,..,m$. Therefore:
    \begin{align}\label{eq:grawa_xc}
        \norm{\nabla F(x^c)}\leq\sqrt{M}\norm{\nabla F(x^i)},\quad \forall i=1,..,m.
    \end{align}
    Combine (\ref{eq:grawa_xc}) with Assumption $\ref{asm:grawa_con}$ for $F$ ($v$-cone and $L$-smooth, where $x^*$ is the minimum value):
    $$F(x^i)-F(x^*)\geq v\norm{x^i-x^*}\geq\frac{v}{L}\norm{\nabla F(x^i)}\geq\frac{v}{\sqrt{M}L}\norm{\nabla F(x^c)}.$$
    Finally utilizing $\mu$-sPL assumption for $F$, we have
    $$F(x^i)-F(x^*)\geq \frac{\mu v}{\sqrt{M}L}(x^c-x^*).$$
    When $k\mu \geq L\sqrt{M}$, we can write: $F(x^i) \geq F(x^c)$.
\end{proof}
        
\begin{theorem}\label{thm:grawa_con}
(Convergence of GRAWA; Convex Setting; Theorem 2 in~\citep{dimlioglu2024grawa}) Let $\delta_k^i=x_k^i-\widetilde{x}_k$. Under the Assumption \ref{asm:grawa_con}, when the learning rate satisfies $\eta<\frac{1}{2L}$ and $\eta\lambda<\frac{m}{2L}$, the update rule (\ref{grawa_up1}-\ref{grawa_up2}) for GRAWA satisefies:
\begin{align}
    \mathbb{E}[F(x_{k+1}^i)]\leq& F(x_k^i)-\eta(1-\eta L)\norm{\nabla F(x_k^i)}^2-\eta\lambda(\frac{m}{2}-\eta\lambda L)\norm{\delta_k^i}^2\notag\\
    &-\eta\lambda(F(x_k^i)-F(x^c_k))+\frac{\eta^2}{2}L\sigma^2.\notag
\end{align}
 Note the presence of the new term $- \eta\lambda(F(x_k^i)-F(x_k^c))$, which speeds up convergence since $F(x_k^i)>F(x^c_k)$ is guaranteed by Theorem \ref{main_body:smaller_th} when $\mu v\leq L\sqrt{M}$. 
 
 If the learning rate $\eta$ decreases as $\eta_k=\frac{1}{\sqrt{k}}$, then $\mathbb{E}[F(x_{k+1})]-F(x^*)\leq \mathcal{O}(\frac{1}{k})$. In other words, GRAWA shows sublinear convergence rate.   
\end{theorem}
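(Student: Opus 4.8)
The plan is to mirror exactly the convex-setting proof of EASGD (Theorem~\ref{thm:easgd_con}), since the GRAWA update rule (\ref{grawa_up1}-\ref{grawa_up2}) differs from the EASGD rule (\ref{easgd_up1}-\ref{easgd_up2}) only in that the elastic constant $\rho$ is renamed $\lambda$, the strong-convexity constant $\mu$ is renamed $m$, and the center variable $\widetilde{x}_k$ is the gradient-weighted average $x_k^c$ rather than the uniform average. First I would set $\delta_k^i = x_k^i - \widetilde{x}_k$ and apply $L$-smoothness of $F$ to the update $x_{k+1}^i = x_k^i - \eta(g_i(x_k^i) + \lambda\delta_k^i)$, obtaining
\begin{align*}
F(x_{k+1}^i) \leq F(x_k^i) - \eta\langle\nabla F(x_k^i), g_i(x_k^i)+\lambda\delta_k^i\rangle + \frac{\eta^2 L}{2}\norm{g_i(x_k^i)+\lambda\delta_k^i}^2.
\end{align*}

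Next I would split the squared norm using the unbiasedness and bounded-variance conditions (5) and (6) of Assumption~\ref{asm:grawa_con}, so that $\mathbb{E}\norm{g_i(x_k^i)+\lambda\delta_k^i}^2 \leq \norm{\nabla F(x_k^i)+\lambda\delta_k^i}^2 + \sigma^2$ after taking expectation. Expanding $\norm{\nabla F(x_k^i)+\lambda\delta_k^i}^2 = \norm{\nabla F(x_k^i)}^2 + 2\lambda\nabla F(x_k^i)^\intercal\delta_k^i + \lambda^2\norm{\delta_k^i}^2$ and collecting terms yields the coefficient $-\eta(1-\eta L)$ on $\norm{\nabla F(x_k^i)}^2$ and $-\eta\lambda(\frac{m}{2}-\eta\lambda L)$ on $\norm{\delta_k^i}^2$. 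The cross term $-\eta\lambda\nabla F(x_k^i)^\intercal\delta_k^i$ is handled exactly as in EASGD: by $m$-strong convexity, $\nabla F(x_k^i)^\intercal(x_k^i-\widetilde{x}_k) \geq F(x_k^i) - F(\widetilde{x}_k) + \frac{m}{2}\norm{\delta_k^i}^2$, which produces the crucial term $-\eta\lambda(F(x_k^i)-F(x_k^c))$ and contributes the remaining curvature into the $\norm{\delta_k^i}^2$ coefficient. This is where the step-size conditions $\eta<\frac{1}{2L}$ and $\eta\lambda<\frac{m}{2L}$ enter, guaranteeing both bracketed coefficients stay nonnegative.

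Having assembled the one-step inequality, I would invoke Theorem~\ref{main_body:smaller_th}, which guarantees $F(x_k^c)\leq F(x_k^i)$ for all $i$ when $\mu v \geq L\sqrt{M}$, so the new term $-\eta\lambda(F(x_k^i)-F(x_k^c))$ is nonpositive and accelerates the descent rather than obstructing it. To extract the $\mathcal{O}(1/k)$ rate I would drop the favorable (nonpositive) $\delta_k^i$ and center-comparison terms, apply the Polyak--\L{}ojasiewicz inequality $F(x)-F(x^*)\leq\frac{1}{2m}\norm{\nabla F(x)}^2$ valid for $m$-strongly convex $F$, subtract $F(x^*)$ from both sides, and arrive at a contraction of the form $\mathbb{E}[F(x_{k+1}^i)-F(x^*)]\leq(1-\eta m)(F(x_k^i)-F(x^*))+\frac{\eta^2 L\sigma^2}{2}$; the decreasing schedule $\eta_k=1/\sqrt{k}$ then gives the sublinear $\mathcal{O}(1/k)$ rate by the same recursion unrolling used throughout the first-order analysis.

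The main obstacle is not the mechanical descent computation, which is routine, but rather ensuring that the weighted center $x_k^c$ genuinely satisfies the comparison $F(x_k^c)\leq F(x_k^i)$ so that the extra term helps; this is precisely the content of Theorem~\ref{main_body:smaller_th} and hinges on the somewhat delicate interplay of the $v$-cone, $\mu$-sPL, and the gradient-weighting definition (\ref{eq:grawa}) under the condition $\mu v\geq L\sqrt{M}$. I would therefore be careful to state this condition as a hypothesis and cite Theorem~\ref{main_body:smaller_th} at the exact point where the nonpositivity of $-\eta\lambda(F(x_k^i)-F(x_k^c))$ is used, rather than treating the center as an arbitrary average as in the generic EASGD argument.
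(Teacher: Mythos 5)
Your proposal is correct and follows essentially the same route as the paper, which simply notes that once Theorem~\ref{main_body:smaller_th} guarantees $F(x_k^c)\leq F(x_k^i)$, the argument reduces verbatim to the convex EASGD proof (Theorem~\ref{thm:easgd_con}) with $\rho\mapsto\lambda$, $\mu\mapsto m$, and $\widetilde{x}_k\mapsto x_k^c$; you have merely written out the steps the paper leaves implicit. One small point in your favor: you correctly cite the condition as $\mu v\geq L\sqrt{M}$, matching Theorem~\ref{main_body:smaller_th}, whereas the theorem statement's remark writes $\mu v\leq L\sqrt{M}$, which appears to be a typo.
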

\begin{proof}
    Since we already prove that $F(x^c_k) \leq f(x^i_k)$ for all $i \in 1,2,...,m$ under Assumption \ref{asm:grawa_con} in Theorem \ref{main_body:smaller_th}. The Proof scheme for Theorem \ref{thm:grawa_con} is exactly the same as that of EASGD based on the verified inequality.
\end{proof}

\paragraph{Nonconvex Setting.} The proof for GRAWA under nonconvex setting is exactly the same as that of EASGD if we assume the distance between local workers and the center worker is always bounded throughout the training process, i.e $\norm{x^i_k-x^c_k}\leq\delta^2$. 

\subsection{Decentralized Methods}
Differently from the centralized methods, decentralized methods eliminate the need for a central worker, distributing tasks among multiple workers that communicate according to a certain topology. The decentralized communication topology effectively addresses the communication bottleneck encountered in centralized methods, particularly in scenarios involving a large number of workers or limited network bandwidth within the framework, since we no longer need to send local parameters from all local workers to a single processing unit. 

Before moving into the detailed discussion of different decentralized methods, we consider the problem formulation for decentralized distributed optimization. Assume that there are $m$ workers and each dataset satisfies distribution $D_i$. The model parameter is denoted as $x$ where $x\in\mathbb{R}^d$. The purpose of distributed optimization is to optimize the following problem:
\begin{align}\label{prob:decen}
\min_{x\in R^d} F(x) &= \min_{x\in R^d} \sum_{i = 1}^m F_i(x)
\notag\\
&=\min_{x\in R^d} \frac{1}{m}\sum_{i = 1}^m \mathbb{E}_{\xi\sim D_i}[f(x;\xi)],
\end{align}
where $f(x;\xi)$ is the point-wise loss function and $F_i(x)=\mathbb{E}_{\xi\sim D_i}[f(x;\xi)]$ is the local objective function optimized by the $i$-th worker. We are going to stick to this problem formulation in this section.

\subsubsection{Decentralized Parallel SGD (D-PSGD)}
Firstly, we introduce the Decentralized Parallel SGD (D-PSGD) method ~\citep{lian2017can,lian2018asynchronous}. The communication topology of D-PSGD is given by an undirected weighted graph $G=(V,\vect{W})$, where $V:=\{1,2,...,m\}$ denotes the set of $m$ workers, and $\vect{W}=(W_{ij})_{m\times m}$ represents the weight matrix of edges in the graph.

Within D-PSGD, each worker performs a local gradient update and subsequently averages its parameters with those of its neighboring workers, as determined by the weight matrix $W$. The weight $W_{ij}$ determines the influence of node $j$ on node $i$. The weight $W_{ij}$ should always be bounded in the range $[0,1]$ and $\sum_{j=1}^mW_{ij}=1$ for all $i$. $W_{ij}=0$ indicates nodes $i$ and $j$ are not connected. The weight matrix $W\in\mathbb{R}^{m\times m}$ is a symmetric doubly stochastic matrix, which means: \textbf{(i)}$W_{ij}\in[0,1], \forall i,j$; \textbf{(ii)}$W_{ij}=W_{ji},\forall i,j$; \textbf{(iii)}$\sum_{j=1}^mW_{ij}=1,\forall i$.

The gradient update rule for D-PSGD is:

\begin{equation}
x_{k+1,i} = \sum_{j=1}^m W_{ij}\left[x_{k,j} - \eta g_i(x_{k,i};\xi_{k,i})\right],
\end{equation}

where $\eta$ denotes the learning rate and $x_{k,i}$ denotes the model parameters of worker $i$ at iteration $k$. $g_i(x_{k,i};\xi_{k,i})$ denotes the stochastic gradient of local loss $F_i$ with respect to the local parameters of worker $i$ at iteration $k$, such that $\mathbb{E}_{\xi_i}[g_i(x_{k,i};\xi_{k,i})]=\nabla F_i(x_k)$.

We then move into the detailed proof of convergence based on the mathematical formulation introduced above. We focus on the non-convex setting. The proof is based on~\citep{lian2017can}, it is however reorganized.

\paragraph{Convergence Analysis.} This section provides the analysis of the convergence of the D-PSGD algorithm.  We define the average iterate at step $k$ as $\overline{x}_k=\frac{1}{m}\sum_{i=1}^mx_{k,i}$, and the minimum of the loss function as $F^*$. This section demonstrates that under certain assumptions, the averaged gradient norm $\frac{1}{K}\sum_{k=1}^K\mathbb{E}[\norm{\nabla F(\overline{x}_{k})}]$ converges to zero with sublinear convergence rate.

Before moving into the details, we first introduce the following Lemma \ref{lma:dpsgd} that the doubly stochastic matrix $\vect{W}$ satisfied. Lemma \ref{lma:dpsgd} will play a vital role in the convergence proof later on.

\begin{lemma}\label{lma:dpsgd}
    If $\vect{W}$ is a symmetric doubly stochastic matrix with the spectral norm $\rho=(\max\{|\lambda_2(\vect{W})|, |\lambda_m(\vect{W})|\})^2<1,$
    then for $\vect{J}=\vect{1}\vect{1}^\intercal/m$ and arbitrary matrix $\vect{B}$,
    \begin{align*}
        \norm{\vect{B}\left(\vect{W}^k-\vect{J}\right)}_F^2\leq\rho^k\norm{\vect{B}}_F^2.
    \end{align*}
\end{lemma}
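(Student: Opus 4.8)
The plan is to reduce the claimed Frobenius-norm decay to a spectral statement about a single symmetric matrix, namely $\vect{W}-\vect{J}$. First I would record the algebraic consequences of double stochasticity: since the rows of $\vect{W}$ sum to one we have $\vect{W}\vect{1}=\vect{1}$, and since the columns sum to one we have $\vect{1}^\intercal\vect{W}=\vect{1}^\intercal$. Because $\vect{J}=\vect{1}\vect{1}^\intercal/m$, these two facts immediately give $\vect{W}\vect{J}=\vect{J}$ and $\vect{J}\vect{W}=\vect{J}$, and of course $\vect{J}^2=\vect{J}$ since $\vect{1}^\intercal\vect{1}=m$. In particular $\vect{W}$ and $\vect{J}$ commute, which is the structural fact that makes everything work.

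Next I would show by induction on $k$ that $\vect{W}^k-\vect{J}=(\vect{W}-\vect{J})^k$. The base case is trivial, and the inductive step is the computation
\begin{align*}
(\vect{W}-\vect{J})^{k+1}=(\vect{W}^k-\vect{J})(\vect{W}-\vect{J})=\vect{W}^{k+1}-\vect{W}^k\vect{J}-\vect{J}\vect{W}+\vect{J}^2=\vect{W}^{k+1}-\vect{J},
\end{align*}
where I used $\vect{W}^k\vect{J}=\vect{J}$ (obtained by iterating $\vect{W}\vect{J}=\vect{J}$), $\vect{J}\vect{W}=\vect{J}$, and $\vect{J}^2=\vect{J}$. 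This rewriting is what converts a statement about powers of $\vect{W}$ into one about powers of the single operator $\vect{W}-\vect{J}$.

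I would then diagonalize. Since $\vect{W}$ is symmetric it admits an orthonormal eigenbasis $\vect{u}_1,\dots,\vect{u}_m$ with real eigenvalues, and doubly stochastic matrices have $\lambda_1=1$ attained at $\vect{u}_1=\vect{1}/\sqrt{m}$, with all $|\lambda_i|\le 1$. Observing that $\vect{J}=\vect{u}_1\vect{u}_1^\intercal$ is exactly the orthogonal projector onto this top eigenspace, subtracting it zeroes out the leading eigenvalue and leaves $\vect{W}-\vect{J}=\sum_{i=2}^m\lambda_i\vect{u}_i\vect{u}_i^\intercal$. Hence $\vect{W}-\vect{J}$ is symmetric with eigenvalues $\{0,\lambda_2,\dots,\lambda_m\}$, so its operator ($2$-)norm equals $\max\{|\lambda_2|,|\lambda_m|\}=\sqrt{\rho}$, and therefore $\norm{(\vect{W}-\vect{J})^k}_2=\rho^{k/2}$.

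Finally I would invoke the mixed-norm submultiplicativity $\norm{\vect{B}\vect{M}}_F\le\norm{\vect{B}}_F\,\norm{\vect{M}}_2$ (proved row-by-row from $\norm{\vect{M}^\intercal\vb}_2\le\norm{\vect{M}}_2\norm{\vb}_2$), applied with $\vect{M}=(\vect{W}-\vect{J})^k$, to get $\norm{\vect{B}(\vect{W}^k-\vect{J})}_F\le\norm{\vect{B}}_F\,\rho^{k/2}$; squaring yields the stated bound. I do not expect a serious obstacle here, as every step is standard; the only points requiring care are correctly identifying $\vect{J}$ as the spectral projector onto the $\lambda=1$ eigenspace (so that $\vect{W}-\vect{J}$ has spectral norm $\sqrt\rho$ rather than $1$) and using the mixed Frobenius/operator-norm inequality rather than pure submultiplicativity of the Frobenius norm, which would give a weaker constant.
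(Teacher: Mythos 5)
Your proof is correct and follows essentially the same spectral route as the paper's: both hinge on $\vect{W}\vect{J}=\vect{J}\vect{W}=\vect{J}$, $\vect{J}^2=\vect{J}$, and the fact that the iteration operator has norm $\sqrt{\rho}$ on the orthogonal complement of $\vect{1}$. The only difference is packaging — the paper peels off one factor of $\vect{W}$ per step via a row-wise quadratic form (and writes the key bound loosely as $\sigma_{\max}(\vect{W}^2)\le\rho$, which holds only on $\vect{1}^\perp$), whereas you diagonalize $\vect{W}-\vect{J}$ once, identify $\vect{J}$ as the projector onto the simple top eigenspace (simple precisely because $\rho<1$), and finish with $\norm{\vect{B}\vect{M}}_F\le\norm{\vect{B}}_F\norm{\vect{M}}_2$; this is a cleaner rendering of the same argument that makes explicit why the contraction factor is $\rho$ rather than $1$.
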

\begin{proof}
    Since $\vect{W}$ is symmetric doubly stochastic and $\vect{J}=\vect{1}\vect{1}^\intercal/m$, we have $\vect{W}\vect{J}=\vect{J}$ and $\vect{J}^2=\vect{J}$. Therefore,
    \begin{align}
        \vect{W}^k-\vect{J} = \vect{W}(\vect{W}^{k-1}-\vect{J}).
    \end{align}
    Therefore, we have
    \begin{align}
       \norm{\vect{B}\left(\vect{W}^k-\vect{J}\right)}_F^2
       =&\sum_{i=1}^d\norm{\vect{b}_i^\intercal(\vect{W}^k-\vect{J})}^2\notag\\
       =&\sum_{i=1}^d\vect{b}_i^\intercal(\vect{W}^{k-1}-\vect{J})\vect{W}^2(\vect{W}^{k-1}-\vect{J})\vect{b}_i\notag\\
       \leq&\sigma_{\text{max}}(\vect{W}^2)\sum_{i=1}^d\vect{b}_i^\intercal(\vect{W}^{k-1}-\vect{J})(\vect{W}^{k-1}-\vect{J})\vect{b}_i\notag\\
       \leq& \rho \norm{\vect{B}\left(\vect{W}^{k-1}-\vect{J}\right)}_F^2\notag\\
       & ...\notag\\
       \leq& \rho^k\norm{\vect{B}}_F^2.\notag
    \end{align}
    
\end{proof}

Then we are going to move to the main Theorem for the convergence analysis.

\begin{assumption}[Nonconvex Setting]\label{asm:dpsgd}
We assume that the loss function $F(x)=\sum_{i=1}^mF_i(x)$ satisfies the following conditions:
\begin{itemize}
    \item[(1)]\textit{Lipschitz gradient:} $\norm{\nabla F_i(x)-\nabla F_i(y)}\leq L\norm{x-y}$
    \item[(2)]\textit{Unbiased gradient:} 
    $\mathbb{E}_{\xi_i}[g_i(x_{k,i};\xi_{k,i})]=\nabla F_i(x_{k,i})$
    \item[(3)]\textit{Bounded variance:} $\mathbb{E}_{\xi_i}[\norm{g_i(x_{k,i};\xi_{k,i})-\nabla F_i(x_{k,i})}^2]\leq \sigma^2$
    \item[(4)]\textit{Unified gradient:} $\mathbb{E}_{\xi_i}[\norm{\nabla F_i(x)-\nabla F_i(x)}^2]\leq\zeta^2$.
\end{itemize}
\end{assumption}

\begin{theorem}[Convergence of D-PSGD; Nonconvex Setting]\label{thm:2}
    Suppose all local workers are initialized with the same point $\overline{x}_1$ and $\vect{W}$ is the symmetric doubly stochastic weight matrix used for the D-PSGD update rule. Define the spectral norm of $\vect{W}$ as $\rho=(\max\{|\lambda_2(\vect{W})|, |\lambda_m(\vect{W})|\})^2$. Under Assumption \ref{asm:dpsgd}, if $\rho<1$ and $\eta L\leq \min\{1, (\sqrt{\rho^{-1}-1})/4\}$, then after K iterations:
    \begin{align*}
     \frac{1}{K}\sum_{i=1}^K\mathbb{E}\left[\norm{\nabla F(\overline{x}_k)}^2\right] \leq\frac{8(F(\overline{x}_{1})-F^*)}{\eta K}+\frac{4\eta L\sigma^2}{m}+\frac{8\eta^2L^2\rho}{1-\sqrt{\rho}}\left(\frac{\sigma^2}{1+\sqrt{\rho}}+\frac{3\zeta^2}{1-\sqrt{\rho}}\right).
    \end{align*}
    When setting $\eta=\sqrt{\frac{m}{K}}$, we obtain sublinear convergence rate $\mathcal{O}(\frac{1}{\sqrt{mK}})+\mathcal{O}(\frac{m}{K})$.
\end{theorem}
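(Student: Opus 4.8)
The plan is to follow the standard ``perturbed iterate'' analysis for decentralized SGD: track the averaged iterate $\overline{x}_k=\frac1m\sum_{i=1}^m x_{k,i}$ and separately control how far the individual workers drift from it. First I would exploit the double stochasticity of $\vect{W}$ (property $\sum_j W_{ij}=1$ together with symmetry) to show that averaging commutes with the local gradient step, so the averaged iterate obeys the clean recursion $\overline{x}_{k+1}=\overline{x}_k-\frac{\eta}{m}\sum_{i=1}^m g_i(x_{k,i};\xi_{k,i})$. Applying the $L$-smoothness descent inequality to $F$ at $\overline{x}_k$ and taking the conditional expectation (using unbiasedness, condition (2) of Assumption \ref{asm:dpsgd}) yields a one-step bound of the form
\begin{align*}
\mathbb{E}[F(\overline{x}_{k+1})]\le \mathbb{E}[F(\overline{x}_k)]-\tfrac{\eta}{2}\mathbb{E}\|\nabla F(\overline{x}_k)\|^2+(\text{consensus error})+\tfrac{\eta^2 L\sigma^2}{2m},
\end{align*}
where the $\sigma^2/m$ factor comes from the variance bound (condition (3)) after the $m$ independent worker noises are averaged, so that $\mathbb{E}\|\frac1m\sum_i(g_i-\nabla F_i)\|^2\le\sigma^2/m$.

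The crux is the ``consensus error'' term, which arises because each worker evaluates its gradient at its own $x_{k,i}$ rather than at $\overline{x}_k$; via $L$-smoothness this is controlled by $Q_k:=\frac1m\sum_{i=1}^m\mathbb{E}\|x_{k,i}-\overline{x}_k\|^2$. To bound $Q_k$ I would pass to matrix notation, writing $X_k=[x_{k,1},\dots,x_{k,m}]$ so that the update becomes $X_{k+1}=(X_k-\eta G_k)\vect{W}$ with $G_k$ the matrix of stochastic gradients. Unrolling this recursion and using that all workers share the same initialization (so $X_1=\overline{x}_1\vect{1}^\intercal$ already lies in the consensus subspace and its projected part vanishes) gives $X_k-\overline{x}_k\vect{1}^\intercal=-\eta\sum_{s=1}^{k-1}G_s(\vect{W}^{k-s}-\vect{J})$ with $\vect{J}=\vect{1}\vect{1}^\intercal/m$. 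Here Lemma \ref{lma:dpsgd} is exactly the tool needed: it gives $\|G_s(\vect{W}^{k-s}-\vect{J})\|_F^2\le\rho^{k-s}\|G_s\|_F^2$, so the spectral gap $\rho<1$ makes the contributions of old gradients decay geometrically.

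The main obstacle will be assembling this consensus bound correctly. After taking the Frobenius norm one must handle the double sum of cross terms and bound $\mathbb{E}\|G_s\|_F^2$ using the variance bound $\sigma^2$ and the gradient-dissimilarity bound $\zeta^2$ (condition (4)), together with the term $\mathbb{E}\|\nabla F(\overline{x}_s)\|^2$, which couples back to the very quantity we are trying to control. Summing the one-step inequality over $k=1,\dots,K$, inserting the accumulated consensus bound, and collecting the geometric sums $\sum_k\rho^{k/2}\sim\frac{1}{1-\sqrt{\rho}}$ produces the $\frac{1}{1-\sqrt{\rho}}$ and $\frac{1}{(1-\sqrt{\rho})^2}$ factors appearing in the statement. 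The step-size restriction $\eta L\le\min\{1,(\sqrt{\rho^{-1}-1})/4\}$ is precisely what guarantees that the coefficient multiplying $\sum_k\mathbb{E}\|\nabla F(\overline{x}_k)\|^2$ on the consensus side stays strictly below the $\frac{\eta}{2}$ gained from the descent step, so this term can be moved to the left-hand side with a positive residual coefficient (of order $\eta/8$, which is the source of the constant $8$). Dividing by $K$ and telescoping $F(\overline{x}_1)-F(\overline{x}_{K+1})\le F(\overline{x}_1)-F^*$ then yields the displayed bound; finally substituting $\eta=\sqrt{m/K}$ balances the $\frac{1}{\eta K}=\frac{1}{\sqrt{mK}}$ and $\frac{\eta L\sigma^2}{m}=\frac{L\sigma^2}{\sqrt{mK}}$ terms against the $\eta^2\sim m/K$ consensus term to give the claimed $\mathcal{O}(1/\sqrt{mK})+\mathcal{O}(m/K)$ rate.
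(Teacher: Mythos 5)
Your proposal follows essentially the same route as the paper's proof: the averaged-iterate recursion from double stochasticity, the $L$-smoothness descent step with the $\sigma^2/m$ variance reduction, the consensus error $\norm{x_k(\vect{I}-\vect{J})}_F^2$ bounded by unrolling the update and applying the spectral-decay lemma, the self-coupling of $\mathbb{E}\norm{\nabla F(\overline{x}_k)}^2$ through the bound on $\norm{\nabla\vect{F}^{(k)}}_F^2$ in terms of $\zeta^2$ and the consensus gap, and the absorption of that coupled term using the step-size condition (the paper makes this explicit via $D=\frac{6\eta^2L^2\rho}{(1-\sqrt{\rho})^2}$ and $\frac{1}{1-2D}\le 4$, which is where the constant $8$ actually arises). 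The plan is correct and complete in its essential ideas.
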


\begin{proof}
For notation simplicity, we denote the stochastic gradient $g_i(x_{k,i};\xi_{k,i})$ as $g_i(x_{k,i})$. We first introduce the following matrix forms for notation simplicity in the proof. 
\begin{align*}
    &x^{(k)}=[x_{k,1}, x_{k,2},...,x_{k,m}];\\
    &\vect{G}^{(k)}=[g_1(x_{k,1}), g_2(x_{k,2}), ..., g_m(x_{k,m})];\\
    &\vect{F}^{(k)}=[\nabla F_1(x_{k,1}),\nabla F_2(x_{k,2}),...,\nabla F_m(x_{k,m})].
\end{align*}
The  matrix update rule can be written as
\begin{align}
    x^{(k+1)}=(x^{(k)}-\eta\vect{G}^{(k)})\vect{W},
\end{align}
after taking the average, we have
\begin{align}
    \overline{x}_{k+1}=\overline{x}_k-\frac{\eta}{m}\vect{G}^{(k)}\vect{1}.
\end{align}
From Assumption \ref{asm:dpsgd}, since function $F$ is Lipschitz smooth, we have
\begin{align}
    F(\overline{x}_{k+1})-F(\overline{x}_{k})\leq& \langle\nabla F(\overline{x}_k), \overline{x}_{k+1}-\overline{x}_k\rangle+\frac{L}{2}\norm{\overline{x}_{k+1}-\overline{x}_k}^2.
\end{align}
Since $\overline{x}_{k+1}=\overline{x}_k-\frac{\eta}{m}\vect{G}^{(k)}\vect{1}$, we have
\begin{align}
    F(\overline{x}_{k+1})-F(\overline{x}_{k})\leq& -\eta\langle\nabla F(\overline{x}_k), \frac{\vect{G}^{(k)}\vect{1}}{m}\rangle+\frac{\eta^2L}{2}\norm{\frac{\vect{G}^{(k)}\vect{1}}{m}}^2.
\end{align}
Taking expectation gives
\begin{align}\label{eq:dpsgd1}
    \mathbb{E}[F(\overline{x}_{k+1})-F(\overline{x}_{k})]\leq -\eta\langle\nabla F(\overline{x}_k), \frac{\vect{F}^{(k)}\vect{1}}{m}\rangle+\frac{\eta^2L}{2}\mathbb{E}\left[\norm{\frac{\vect{G}^{(k)}\vect{1}}{m}}^2\right].
\end{align}
We are going to bound the two terms in the RHS of formula \ref{eq:dpsgd1} one by one.
\begin{itemize}
    \item[(i)] Firstly, we are going to bound $\langle\nabla F(\overline{x}_k), \frac{\vect{F}^{(k)}\vect{1}}{m}\rangle$.
    \begin{align}
        &\langle\nabla F(\overline{x}_k), \frac{\vect{F}^{(k)}\vect{1}}{m}\rangle = \langle\nabla F(\overline{x}_k), \frac{1}{m}\sum_{i=1}^m\nabla F_i({x}_{k,i})\rangle\notag\\
        =&\frac{1}{2}\left[\norm{\nabla F(\overline{x}_k)}^2+\norm{\frac{1}{m}\sum_{i=1}^m\nabla F_i({x}_{k,i})}^2-\norm{F(\overline{x}_k)- \frac{1}{m}\sum_{i=1}^m\nabla F_i({x}_{k,i})}^2\right]\notag\\
        =&\frac{1}{2}\left[\norm{\nabla F(\overline{x}_k)}^2+\norm{\frac{1}{m}\sum_{i=1}^m\nabla F_i({x}_{k,i})}^2-\norm{\frac{1}{m}\sum_{i=1}^m\left[\nabla F_i(\overline{x}_k)-\nabla F_i({x}_{k,i})\right]}^2\right]\notag\\
        \geq&\frac{1}{2}\left[\norm{\nabla F(\overline{x}_k)}^2+\norm{\frac{1}{m}\sum_{i=1}^m\nabla F_i({x}_{k,i})}^2-\frac{1}{m}\sum_{i=1}^m\norm{\nabla F_i(\overline{x}_k)-\nabla F_i({x}_{k,i})}^2\right]\notag\\
        &\text{(By Jensen's Inequlity.)}\notag\\
        \geq&\frac{1}{2}\left[\norm{\nabla F(\overline{x}_k)}^2+\norm{\frac{1}{m}\sum_{i=1}^m\nabla F_i({x}_{k,i})}^2-\frac{L^2}{m}\sum_{i=1}^m\norm{\overline{x}_k-{x}_{k,i}}^2\right]\notag\\
        &\text{($F$ is L-Lipschitz smooth.)}\notag
    \end{align}
    Define 
    $$\vect{J}=\frac{\vect{1}_m}{m}.$$
    We can reformulate the above bound as:
    \begin{align}\label{eq:dpsgd_b1}
    \langle\nabla F(\overline{x}_k), \frac{\vect{F}^{(k)}\vect{1}}{m}\rangle
    \geq \frac{1}{2}\norm{\nabla F(\overline{x}_k)}^2 +\frac{1}{2}\norm{\frac{\vect{F}^{(k)}\vect{1}}{m}}^2-\frac{L^2}{2m}\norm{x_k(\vect{I}-\vect{J})}_F^2.
    \end{align}

    \item [(ii)] Secondly, we are going to bound $\mathbb{E}\left[\norm{\frac{\vect{G}^{(k)}\vect{1}}{m}}^2\right]$.
    \begin{align}\label{eq:dpsgd_b2}
        &\mathbb{E}\left[\norm{\frac{\vect{G}^{(k)}\vect{1}}{m}}^2\right] = \mathbb{E}\left[\norm{\frac{1}{m}\sum_{i=1}^mg_i(x_{k,i})}^2\right]\notag\\
        =&\mathbb{E}\left[\norm{\frac{1}{m}\sum_{i=1}^m[g_i(x_{k,i})-\nabla F_i(x_{k,i})+\nabla F_i(x_{k,i})]}^2\right]\notag\\
        \leq&\frac{1}{m^2}\sum_{i=1}^m\mathbb{E}\left[\norm{\frac{1}{m}\sum_{i=1}^m[g_i(x_{k,i})-\nabla F_i(x_{k,i})]}^2\right]+\norm{\frac{1}{m}\sum_{i=1}^m\nabla F_i({x}_{k,i})}^2\notag\\
        \leq&\frac{\sigma^2}{m}+\norm{\frac{\vect{F}^{(k)}\vect{1}}{m}}^2.
    \end{align}
\end{itemize}

Combine formula (\ref{eq:dpsgd1}), (\ref{eq:dpsgd_b1}), and (\ref{eq:dpsgd_b2}) to obtain
\begin{align}
    \mathbb{E}[F(\overline{x}_{k+1})-F(\overline{x}_{k})]
    \leq&-\frac{\eta}{2}\mathbb{E}[\norm{\nabla F(\overline{x}_k)}^2]
    -\frac{\eta}{2}(1-\eta L)\norm{\frac{\vect{F}^{(k)}\vect{1}}{m}}^2\notag\\
    &+\frac{\eta L^2}{2m}\mathbb{E}\left[\norm{x_k(\vect{I}-\vect{J})}_F^2\right] 
    + \frac{\eta^2 L \sigma^2}{2m}.
\end{align}
Averaging over all iterations from $k=1,...,K$ we obtain
\begin{align}
    \frac{\mathbb{E}[F(\overline{x}_{K})-F(\overline{x}_{1})]}{K}
    \leq&-\frac{\eta}{2}\frac{1}{K}\sum_{k=1}^K\mathbb{E}[\norm{\nabla F(\overline{x}_k)}^2]
    -\frac{\eta}{2}(1-\eta L)\frac{1}{K}\sum_{k=1}^K\norm{\frac{\vect{F}^{(k)}\vect{1}}{m}}^2\notag\\
    &+\frac{1}{K}\sum_{k=1}^K\frac{\eta L^2}{2m}\mathbb{E}\left[\norm{x_k(\vect{I}-\vect{J})}_F^2\right] 
    + \frac{\eta^2 L \sigma^2}{2m}.
\end{align}
By rearranging, we have
\begin{align}\label{eq:dpsgd_2}
    \frac{1}{K}\sum_{k=1}^K\mathbb{E}[\norm{\nabla F(\overline{x}_k)}^2]
    \leq&\frac{2\mathbb{E}[F(\overline{x}_{1})-F(\overline{x}_{K})]}{\eta K}
    -\frac{1-\eta L}{m}\frac{1}{K}\sum_{k=1}^K\norm{\frac{\vect{F}^{(k)}\vect{1}}{m}}^2\notag\\
    &+\frac{L^2}{mK}\sum_{k=1}^K\mathbb{E}\left[\norm{x_k(\vect{I}-\vect{J})}_F^2\right] 
    + \frac{\eta L \sigma^2}{m}\notag\\
    \leq &\frac{2\mathbb{E}[F(\overline{x}_{1})-F(\overline{x}_{K})]}{\eta K}
    +\frac{L^2}{mK}\sum_{k=1}^K\mathbb{E}\left[\norm{x_k(\vect{I}-\vect{J})}_F^2\right] 
    + \frac{\eta L \sigma^2}{m}.
\end{align}

We now completed the first part of the proof. In the following section, we are going to bound $\mathbb{E}\left[\norm{x_k(\vect{I}-\vect{J})}_F^2\right]$.

\begin{align}
    x_k(\vect{I}-\vect{J})=&(x_{k-1}-\eta\vect{G}^{(k)})\vect{W}(\vect{I}-\vect{J})\notag\\
    =&x_{k-1}(\vect{I}-\vect{J})\vect{W}-\eta\vect{G}^{(k-1)}\vect{W}(\vect{I}-\vect{J})\notag\\
    =&...\notag\\
    =&x_1(\vect{I}-\vect{J})\vect{W}^{k-1}-\eta\sum_{q=1}^{k-1}\vect{G}^{(q)}\vect{W}^{k-q}(\vect{I}-\vect{J}).
\end{align}

Since all workers start from the same point, we have $x_1(\vect{I}-\vect{J})=0$. Moreover, since $\vect{W}$ is symmetric doubly stochastic and $\vect{J}=\vect{1}_m/m$, we know $\vect{WJ}=\vect{J}$. Thus:

\begin{align}\label{eq:dpsgd3}
    &\norm{x_k(\vect{I}-\vect{J})}_F^2
    =\eta^2\norm{\sum_{q=1}^{k-1}\vect{G}^{(q)}(\vect{W}^{k-q}-\vect{J})}_F^2\notag\\
    =&\eta^2\norm{\sum_{q=1}^{k-1}(\vect{G}^{(q)}-\nabla\vect{F}^{(q)}+\nabla\vect{F}^{(q)})(\vect{W}^{k-q}-\vect{J})}_F^2\notag\\
    \leq &2\eta^2\underbrace{\norm{\sum_{q=1}^{k-1}(\vect{G}^{(q)}-\nabla\vect{F}^{(q)})(\vect{W}^{k-q}-\vect{J})}_F^2}_{:=I_1}
    +2\eta^2\underbrace{\norm{\sum_{q=1}^{k-1}\nabla\vect{F}^{(q)}(\vect{W}^{k-q}-\vect{J})}_F^2}_{:=I_2}.
\end{align}
Then we are going to bound terms ($I_1$) and ($I_2$) in formula (\ref{eq:dpsgd3}).

\begin{align}\label{eq:dpsgd_b3}
    \mathbb{E}[I_1]\leq&\sum_{q=1}^{k-1}\mathbb{E}\left[\norm{(\vect{G}^{(q)}-\nabla\vect{F}^{(q)})(\vect{W}^{k-q}-\vect{J})}_F^2\right]\notag\\
    \leq&\sum_{q=1}^{k-1} \rho^{k-q}\mathbb{E}\left[\norm{\vect{G}^{(q)}-\nabla\vect{F}^{(q)}}_F^2\right]\quad(\text{Lemma \ref{lma:dpsgd}})\notag\\
    \leq&\frac{m\sigma^2\rho}{1-\rho}\quad (\text{Assumption \ref{asm:dpsgd}: variance bounded}).
\end{align}

\begin{align}
    \mathbb{E}[I_2]\leq&\sum_{q=1}^{k-1}\mathbb{E}\left[\norm{\nabla\vect{F}^{(q)}(\vect{W}^{k-q}-\vect{J})}_F^2\right] \notag\\
    &+ \sum_{q=1}^k\sum_{p=1,p\neq q}^k\mathbb{E}\left[\norm{\nabla\vect{F}^{(q)}(\vect{W}^{k-q}-\vect{J})}_F\norm{\nabla\vect{F}^{(p)}(\vect{W}^{k-p}-\vect{J})}_F\right]\notag\\
    \leq& \sum_{q=1}^{k-1}\rho^{k-q}\mathbb{E}\left[\norm{\nabla\vect{F}^{(q)}}_F^2\right] \notag\\
    &+ \sum_{q=1}^k\sum_{p=1,p\neq q}^k\mathbb{E}\left[\frac{1}{2\epsilon}\rho^{k-q}\norm{\nabla\vect{F}^{(q)}}_F^2+\frac{\epsilon}{2}\rho^{k-p}\norm{\nabla\vect{F}^{(p)}}_F^2\right]\notag\\
    &(\text{Lemma \ref{lma:dpsgd} \& Young's inequality}).\notag
\end{align}
If $\epsilon=\rho^{\frac{p-q}{2}}$ for the Young's inequality, we have
\begin{align}\label{eq:dpsgd_b4}
    \mathbb{E}[I_2]\leq&\sum_{q=1}^{k-1}\rho^{k-q}\mathbb{E}\left[\norm{\nabla\vect{F}^{(q)}}_F^2\right] \notag\\
    &+ \sum_{q=1}^k\sum_{p=1,p\neq q}^k\sqrt{\rho}^{2k-p-q}\mathbb{E}\left[\norm{\nabla\vect{F}^{(q)}}_F^2+\norm{\nabla\vect{F}^{(p)}}_F^2\right]\notag\\
    =&\sum_{q=1}^{k-1}\rho^{k-q}\mathbb{E}\left[\norm{\nabla\vect{F}^{(q)}}_F^2\right]+ \sum_{q=1}^k\sqrt{\rho}^{k-q}\mathbb{E}\left[\norm{\nabla\vect{F}^{(q)}}_F^2\right]\sum_{p=1,p\neq q}^k\sqrt{\rho}^{k-p}\notag\\
    =&\sum_{q=1}^{k-1}\rho^{k-q}\mathbb{E}\left[\norm{\nabla\vect{F}^{(q)}}_F^2\right]+ \sum_{q=1}^k\sqrt{\rho}^{k-q}\mathbb{E}\left[\norm{\nabla\vect{F}^{(q)}}_F^2\right](\sum_{p=1}^k\sqrt{\rho}^{k-p}-\sqrt{\rho}^{k-q})\notag\\
    \leq&\frac{\sqrt{\rho}}{1-\sqrt{\rho}}\sum_{q=1}^{k-1}\sqrt{\rho}^{k-q}\mathbb{E}\left[\norm{\nabla\vect{F}^{(q)}}_F^2\right].
\end{align}

After plugging (\ref{eq:dpsgd_b3}) and (\ref{eq:dpsgd_b3}) back into (\ref{eq:dpsgd3}), summing the inequality for $k=1,...,K$, and then averaging by $mK$, we have
\begin{align}\label{eq:dpsgd_b5}
    \frac{1}{mK}\sum_{k=1}^K\mathbb{E}\left[\norm{x_k(\vect{I}-\vect{J})}\right]\leq&\frac{2\eta^2\sigma^2\rho}{1-\rho}+\frac{2\eta^2}{m}\frac{\sqrt{\rho}}{1-\sqrt{\rho}}\frac{1}{K}\sum_{k=1}^K\sum_{q=1}^k\sqrt{\rho}^{k-q}\mathbb{E}\left[\norm{\nabla\vect{F}^{(q)}}_F^2\right]\notag\\
    =&\frac{2\eta^2\sigma^2\rho}{1-\rho}+\frac{2\eta^2}{m}\frac{\sqrt{\rho}}{1-\sqrt{\rho}}\frac{1}{K}\sum_{k=1}^K\mathbb{E}\left[\norm{\nabla\vect{F}^{(k)}}_F^2\right]\sum_{q=1}^k\sqrt{\rho}^{q}\notag\\
    \leq&\frac{2\eta^2\sigma^2\rho}{1-\rho}+\frac{2\eta^2}{m}\frac{\rho}{(1-\sqrt{\rho})^2}\frac{1}{K}\sum_{k=1}^K\mathbb{E}\left[\norm{\nabla\vect{F}^{(k)}}_F^2\right].
\end{align}
Note that:
\begin{align}\label{eq:dpsgd_b6}
    \norm{\nabla\vect{F}^{(k)}}_F^2=&\sum_{i=1}^m\norm{\nabla F_i(x_{i,k})}^2\notag\\
    =&\sum_{i=1}^m\norm{\nabla F_i(x_{i,k})-\nabla F(x_{i,k})+\nabla F(x_{i,k})-\nabla F(\overline{x}_k)+\nabla F(\overline{x}_k)}^2\notag\\
    \leq&3\sum_{i=1}^m\left[\norm{\nabla F_i(x_{i,k})-\nabla F(x_{i,k})}^2+\norm{\nabla F(x_{i,k})-\nabla F(\overline{x}_k)}^2+\norm{\nabla F(\overline{x}_k)}^2\right]\notag\\
    \leq&3m\zeta^2+3L^2\norm{x_k(\vect{I}-\vect{J})}_F^2+3m\norm{\nabla F(\overline{x}_k)}^2.
\end{align}

Plugging (\ref{eq:dpsgd_b6}) back into (\ref{eq:dpsgd_b5}), we obtain:
\begin{align}
    \frac{1}{mK}\sum_{k=1}^K\mathbb{E}\left[\norm{x_k(\vect{I}-\vect{J})}\right]\leq&\frac{2\eta^2\sigma^2\rho}{1-\rho}+\frac{6\eta^2\zeta^2\rho}{(1-\sqrt{\rho})^2}+\frac{6\eta^2 L^2\rho}{(1-\sqrt{\rho})^2}\frac{1}{mK}\sum_{k=1}^K\mathbb{E}\left[\norm{x_k(\vect{I}-\vect{J})}\right]\notag\\
    &+\frac{6\eta^2 \rho}{(1-\sqrt{\rho})^2}\frac{1}{K}\sum_{k=1}^K\norm{\nabla F(\overline{x}_k)}^2.
\end{align}

Define 
$$D=\frac{6\eta^2L^2\rho}{(1-\sqrt{\rho})^2}.$$

After rearranging, we have
\begin{align}\label{eq:dpsgd_4}
    \frac{1}{mK}\sum_{k=1}^K\mathbb{E}\left[\norm{x_k(\vect{I}-\vect{J})}\right]\leq\frac{1}{1-D}\left[\frac{2\eta^2\sigma^2\rho}{1-\rho}+\frac{6\eta^2\zeta^2\rho}{(1-\sqrt{\rho})^2}+\frac{6\eta^2 \rho}{(1-\sqrt{\rho})^2}\frac{1}{K}\sum_{k=1}^K\norm{\nabla F(\overline{x}_k)}^2\right].
\end{align}

Plugging the bound (\ref{eq:dpsgd_4}) back into (\ref{eq:dpsgd_2}), we obtain
\begin{align}
    \frac{1}{K}\sum_{k=1}^K\mathbb{E}[\norm{\nabla F(\overline{x}_k)}^2]
    \leq& \frac{2\mathbb{E}[F(\overline{x}_{1})-F(\overline{x}_{K})]}{\eta K}
    + \frac{\eta L \sigma^2}{m}\notag\\
    &+\frac{1}{1-D}\frac{2\eta^2\sigma^2\rho}{1-\rho}+\frac{D\zeta^2}{1-D}+\frac{D}{1-D}\frac{1}{K}\sum_{k=1}^K\norm{\nabla F(\overline{x}_k)}^2.
\end{align}

By rearranging, we have
\begin{align}\label{eq:dpsgd_5}
    \frac{1}{K}\sum_{k=1}^K\mathbb{E}[\norm{\nabla F(\overline{x}_k)}^2]
    \leq& \frac{1-D}{1-2D}\left[\frac{2\mathbb{E}[F(\overline{x}_{1})-F(\overline{x}_{K})]}{\eta K}+ \frac{\eta L \sigma^2}{m}\right]\notag\\
    &+\frac{1}{1-2D}\frac{2\eta^2L^2\rho}{1-\sqrt{\rho}}(\frac{\sigma^2}{1+\sqrt{\rho}}+\frac{3\zeta^2}{1-\sqrt{\rho}})\notag\\
    \leq& \frac{1}{1-2D}\left[\frac{2\mathbb{E}[F(\overline{x}_{1})-F(\overline{x}_{K})]}{\eta K}+ \frac{\eta L \sigma^2}{m}\right]\notag\\
    &+\frac{1}{1-2D}\frac{2\eta^2L^2\rho}{1-\sqrt{\rho}}(\frac{\sigma^2}{1+\sqrt{\rho}}+\frac{3\zeta^2}{1-\sqrt{\rho}}).
\end{align}

Since $\eta L\leq\frac{1-\sqrt{\rho}}{4\sqrt{\rho}}$, we have
$$D=\frac{6\eta^2L^2\rho}{(1-\sqrt{\rho})^2}\leq\frac{3}{8}<\frac{1}{2} \quad\Longrightarrow\quad \frac{1}{1-2D}\leq 4.$$

Therefore (\ref{eq:dpsgd_5}) could be bounded as:
\begin{align}\label{eq:dpsgd_6}
    \frac{1}{K}\sum_{k=1}^K\mathbb{E}[\norm{\nabla F(\overline{x}_k)}^2]
    \leq& \frac{8\mathbb{E}[F(\overline{x}_{1})-F(\overline{x}_{K})]}{\eta K}+ \frac{4\eta L \sigma^2}{m}+\frac{8\eta^2L^2\rho}{1-\sqrt{\rho}}(\frac{\sigma^2}{1+\sqrt{\rho}}+\frac{3\zeta^2}{1-\sqrt{\rho}}).
\end{align}

Since $F^*$ is the minimum value of the loss, we have:
\begin{align}
    \frac{1}{K}\sum_{i=1}^K\mathbb{E}\left[\norm{\nabla F(\overline{x}_k)}^2\right] \leq\frac{8(F(\overline{x}_{1})-F^*)}{\eta K}+\frac{4\eta L\sigma^2}{m}+\frac{8\eta^2L^2\rho}{1-\sqrt{\rho}}\left(\frac{\sigma^2}{1+\sqrt{\rho}}+\frac{3\zeta^2}{1-\sqrt{\rho}}\right).
\end{align}

If the learning rate is set to be $\eta=\sqrt{\frac{m}{K}}$, we have
\begin{align}
    \frac{1}{K}\sum_{i=1}^K\mathbb{E}\left[\norm{\nabla F(\overline{x}_k)}^2\right] \leq&\frac{8(F(\overline{x}_{1})-F^*)+4L\sigma^2}{\sqrt{mK}}+\frac{8m}{K}\frac{L^2\rho}{1-\sqrt{\rho}}\left(\frac{\sigma^2}{1+\sqrt{\rho}}+\frac{3\zeta^2}{1-\sqrt{\rho}}\right)\notag\\
    =&\mathcal{O}(\frac{1}{\sqrt{mK}})+\mathcal{O}(\frac{m}{K}).
\end{align}
\end{proof}

\subsubsection{MATCHA}
Since the communication topology in DP-SGD is fixed, the algorithm encounters an error-runtime trade-off issue. A dense topology requires significant communication time per iteration. Conversely, a sparse topology reduces communication time but results in slower convergence per iteration. To address this challenge,~\citep{wang2019matcha} introduced the MATCHA algorithm. This approach adopts a win-win strategy, enabling both rapid convergence and reduced communication time. MATCHA decomposes the topology into matching components, facilitating parallelization of inter-node communication.

In this section, we introduce the MATCHA technique. Consider a communication topology of $m$ worker nodes. The communication links connecting the nodes are represented by an undirected connected graph $G=(V,E)$, where $V=\{1,2,...,m\}$ are the vertices and $E$ is the set of edges ($E\subset V\times V$). The communication graph $G$ could be abstracted as a adjacency matrix $\vect{A}$, where $A_{ij}=1$ means $(i,j)\in E$ and $A_{ij}=0$ means $(i,j)\notin E$. The Laplacian matrix or $\vect{A}$ is defined as: $\vect{L}=diag(d_1,...,d_m)-\vect{A}$, where $d_i$ denotes the degree of node $i$. 

In order to reduce the frequency of inter-node communication without affecting the convergence speed, MATCHA decompose a dense topology with the matching decomposition and generate a new random temporary activated topology for each iteration based on this decomposition. The algorithm also follows the intuition that it is beneficial to communicate over critical links more frequently and less over other links. Figure \ref{fig:matcha} visualize the process of constructing the temporary activated topology for each iteration. We are going to briefly introduce three steps of MATCHA below.
\begin{figure}
    \centering
    \includegraphics[width=\textwidth]{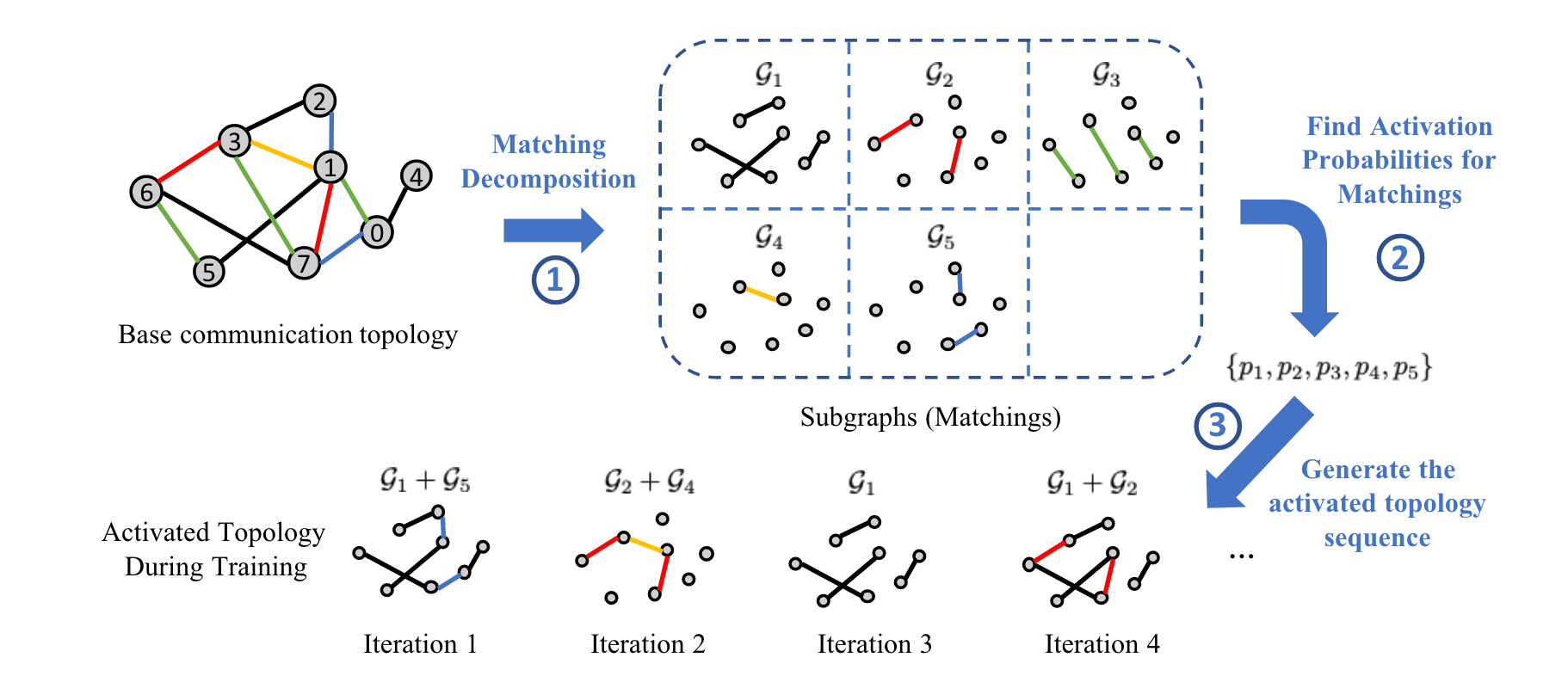}
    \caption{Illustration of MATCHA. This plot is taken from \citep{wang2019matcha} (Figure 2).}
    \label{fig:matcha}
\end{figure}

\textit{Step 1: Matching Decomposition.} The dense base communication topology is decomposed into a total of $M$ disjoint matchings, i.e., $G(V,E)=\bigcup_{j=1}^MG_j(V,E_i)$ and $E_i\cap E_j=\empty, \forall i\neq j$. Each matching $G_j$ of graph $G$ is a subgraph where each vertex appears in at most one edge of this subgraph.

\textit{Step 2: Computing Matching Activation Probabilities.} To manage the total communication time per iteration, MATCHA assigns an independent Bernoulli random variable $B_j\sim Bernoulli(p_j)$ to each matching, where $p_j$ denotes the activation probability for matching $G_j$. This approach ensures that each matching is activated with probability $p_j$ and deactivated with probability $1-p_j$. As a result, the expected communication time for each iteration under this scheme can be expressed as:
$$\text{Expected Comm. Time} = \mathbb{E}\left[\sum_{j=1}^M B_j\right]=\sum_{j=1}^M p_j.$$

In order to reduce the total communication time, MATCHA defines a communication budget $C_b$ and sets a constraint on the activation probabilities $p_j$ based on the definition of the expected communication time: $\sum_{j=1}^M p_j\leq C_bM$. 

In order to give more importance to critical links in the communication graph, MATCH maximizes the connectivity of the expected graph by solving the following optimization problem:
\begin{align}\label{prob:aldsgd_0}
    \max_{p_1,...,p_M}&\quad\lambda_2(\sum_{j=1}^M p_j\vect{L}_j)\notag\\
    s.t.&\quad \sum_{j=1}^Mp_j\leq C_bM\\
    &0\leq p_j\leq 1, \forall j=1,...,M,\notag
\end{align}
where $\lambda_2$ is the second smallest eigenvalue of the graph Laplacian and formula (\ref{prob:aldsgd_0}) could be generated from the algebraic connectivity of the graph, as interpreted in~\citep{bollobas2013modern}.

\textit{Step 3: Generating Random Topology Sequence.} Given the activation probability $p_j$ solved in step 2, MATCHA samples $B_j^{(k)}$ from the Bernoulli distribution $Bernoulli(p_j)$ before $k$-th iteration. $B_j^{(k)}$ controls whether the matching $j$ will be included in the current iteration. The temporary activated topology at $k$-th iteration is $\vect{G}^{(k)}=\bigcup_{j=1}^MB_{j}^{(k)}G_j$, which is sparse or even disconnected. The corresponding Laplacian matrix is $\vect{L}^{(k)}=\sum_{j=1}^MB_{j}^{(k)}\vect{L}_j$. The weight matrix $\vect{W}^{(k)}$ for MATCHA at iteration $k$ can be represented as:
\begin{align}\label{eq:update_matcha}
    \vect{W}^{(k)}=\vect{I}-\alpha \vect{L}^{(k)}=\vect{I}-\alpha\sum_{j=1}^M B_j^{(k)}\vect{L}_j.
\end{align}

The gradient update rule for MATCHA is:

\begin{equation}
x_{k+1,i} = \sum_{j=1}^m W_{ij}^{(k)}\left[x_{k,j} - \eta g_i(x_{k,i};\xi_{k,i})\right],
\end{equation}

where $\eta$ denotes the learning rate and $x_{k,i}$ denotes the model parameters of worker $i$ at iteration $k$. $g_i(x_{k,i};\xi_{k,i})$ denotes the stochastic gradient of local loss $F_i$ with respect to the local parameters of worker $i$ at iteration $k$, such that $\mathbb{E}_{\xi_i}[g_i(x_{k,i};\xi_{k,i})]=\nabla F_i(x_k)$.

\paragraph{Convergence proof.} This section presents an analysis of the convergence rate of the MATCHA algorithm. The proof is structured similar to that of DP-SGD. We define the average iterate at step $k$ as $\overline{x}_k=\frac{1}{m}\sum_{i=1}^mx_{k,i}$ and the minimum of the loss function as $F^*$. This section demonstrates that under certain assumptions, the averaged gradient norm $\frac{1}{K}\sum_{k=1}^K\mathbb{E}[\norm{\nabla F(\overline{x}_{k})}]$ converges to zero with sublinear convergence rate.

Before moving into the detailed convergence guarantee, we first discuss the property of weight matrix $\vect{W}^{(k)}$ (Theorem \ref{thm:matcha_lap}), that will be used in the proof.

\begin{theorem}\label{thm:matcha_lap}
    (Theorem 1 in~\citep{wang2019matcha}) Let $\{\vect{L}^{(k)}\}$ denote the sequence of Laplacian matrix generated by MATCHA algorithm with arbitrary communication budget $C_b>0$ for the the base communication topology $G$. Let the weight matrix $\vect{W}^{(k)}$ be defined as in Equation (\ref{eq:update_matcha}). There exists a range of $\alpha$ such that the spectral norm $\rho=\norm{\mathbb{E}[\vect{W}^{(k)\intercal}\vect{W}^{(k)}-\vect{J}]}_2<1$, where $\vect{J}=\mathbf{1}\mathbf{1}^\intercal/m$.
\end{theorem}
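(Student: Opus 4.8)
The plan is to exploit the symmetry of $\vect{W}^{(k)}$ together with the fact that every matching Laplacian annihilates the all-ones vector, and then to reduce the operator-norm bound to a scalar inequality in $\alpha$ via a Rayleigh-quotient argument on the subspace orthogonal to $\mathbf{1}$. First I would record the structural facts: each $\vect{L}_j$ is symmetric positive semidefinite with $\vect{L}_j\mathbf{1}=\vect{0}$, so $\vect{W}^{(k)}=\vect{I}-\alpha\sum_j B_j^{(k)}\vect{L}_j$ is symmetric and satisfies $\vect{W}^{(k)}\mathbf{1}=\mathbf{1}$. Consequently $\vect{M}:=\mathbb{E}[\vect{W}^{(k)\intercal}\vect{W}^{(k)}]=\mathbb{E}[(\vect{W}^{(k)})^2]$ is symmetric, positive semidefinite, and fixes $\mathbf{1}$. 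Since $\vect{J}\mathbf{1}=\mathbf{1}$ and $\vect{J}$ vanishes on $\mathbf{1}^\perp$, both $\vect{M}$ and $\vect{J}$ preserve the orthogonal splitting $\mathbb{R}^m=\mathrm{span}(\mathbf{1})\oplus\mathbf{1}^\perp$; on $\mathrm{span}(\mathbf{1})$ the difference $\vect{M}-\vect{J}$ is zero, while on $\mathbf{1}^\perp$ it equals $\vect{M}$. Hence $\rho=\norm{\vect{M}-\vect{J}}_2=\max_{\vect{v}\perp\mathbf{1},\ \norm{\vect{v}}=1}\vect{v}^\intercal\vect{M}\vect{v}$, and it suffices to show this maximum is strictly below $1$.

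Next I would compute $\vect{M}$ explicitly. Writing $\overline{\vect{L}}:=\mathbb{E}[\vect{L}^{(k)}]=\sum_{j=1}^M p_j\vect{L}_j$ and using that the $B_j^{(k)}$ are independent $\{0,1\}$-valued with $\mathbb{E}[B_j]=p_j$ and $\mathbb{E}[B_j^2]=p_j$, expansion of $\mathbb{E}[(\sum_j B_j\vect{L}_j)^2]$ gives $\mathbb{E}[(\vect{L}^{(k)})^2]=\overline{\vect{L}}^2+\sum_j p_j(1-p_j)\vect{L}_j^2$. Therefore
$$\vect{M}=\vect{I}-2\alpha\overline{\vect{L}}+\alpha^2\mathbb{E}[(\vect{L}^{(k)})^2]=(\vect{I}-\alpha\overline{\vect{L}})^2+\alpha^2\vect{Q},\qquad \vect{Q}:=\sum_{j=1}^M p_j(1-p_j)\vect{L}_j^2\succeq 0.$$
Then for any unit $\vect{v}\perp\mathbf{1}$ I would bound the Rayleigh quotient term by term, $\vect{v}^\intercal\vect{M}\vect{v}=\norm{(\vect{I}-\alpha\overline{\vect{L}})\vect{v}}^2+\alpha^2\vect{v}^\intercal\vect{Q}\vect{v}\le 1-2\alpha\lambda_2(\overline{\vect{L}})+\alpha^2(\lambda_{\max}(\overline{\vect{L}})^2+\norm{\vect{Q}}_2)$, using $\vect{v}^\intercal\overline{\vect{L}}\vect{v}\ge\lambda_2(\overline{\vect{L}})$ on $\mathbf{1}^\perp$ and crude upper bounds on the remaining quadratic terms.

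Finally I would solve the scalar inequality. The bound is strictly less than $1$ precisely when $0<\alpha<2\lambda_2(\overline{\vect{L}})/(\lambda_{\max}(\overline{\vect{L}})^2+\norm{\vect{Q}}_2)$, a nonempty interval as soon as $\lambda_2(\overline{\vect{L}})>0$. The one genuine input — and the step I expect to be the main obstacle — is verifying $\lambda_2(\overline{\vect{L}})>0$, i.e.\ that the expected graph with Laplacian $\overline{\vect{L}}=\sum_j p_j\vect{L}_j$ is connected. This is exactly what Step 2 of MATCHA secures: the activation probabilities maximize $\lambda_2(\sum_j p_j\vect{L}_j)$ subject to $\sum_j p_j\le C_b M$, and since the base graph $G=\bigcup_j G_j$ is connected, any $C_b>0$ admits a feasible assignment with every $p_j>0$, for which $\overline{\vect{L}}$ inherits the connectivity of $G$ and hence has $\lambda_2>0$. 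The non-commutativity of $\overline{\vect{L}}$ and $\vect{Q}$, which would obstruct a simultaneous-diagonalization argument, is sidestepped entirely by bounding the two quadratic forms separately, so no further care is needed there.
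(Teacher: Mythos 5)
Your proposal is correct and follows essentially the same route as the paper: the same second-moment computation $\mathbb{E}[\vect{W}^{(k)\intercal}\vect{W}^{(k)}]=(\vect{I}-\alpha\overline{\vect{L}})^2+\alpha^2\vect{Q}$ with $\vect{Q}=\sum_j p_j(1-p_j)\vect{L}_j^2$, the same appeal to the optimization in Step 2 of MATCHA (plus a feasible all-positive assignment inherited from the connected base graph) to secure $\lambda_2(\overline{\vect{L}})>0$, and the same conclusion that a nonempty interval of admissible $\alpha$ exists. The only divergence is in the endgame: the paper evaluates $\norm{(\vect{I}-\alpha\overline{\vect{L}})^2-\vect{J}}$ exactly as $\max\{(1-\alpha\lambda_2)^2,(1-\alpha\lambda_m)^2\}$, uses the matching identity $\vect{L}_j^2=2\vect{L}_j$ to write the variance term as $2\alpha^2\zeta$, and minimizes the convex quadratic $h_\lambda(\alpha)=(1-\alpha\lambda)^2+2\alpha^2\zeta$, whereas you restrict to $\mathbf{1}^\perp$ and bound the Rayleigh quotient term by term, which yields the looser constant $\lambda_{\max}(\overline{\vect{L}})^2+\norm{\vect{Q}}_2$ in place of the eigenvalue-matched $\lambda^2+2\zeta$ but an equally valid nonempty range for $\alpha$.
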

\begin{proof}
    Firstly, we are going to show that the expected activated topology $\sum_{j=1}^Mp_j\vect{L}_j$ is connected. Let $p_0=C_b$,
    $$\lambda_2(\sum_{j=1}^Mp_j\vect{L}_j)\geq p_0 \lambda_2(p_0\sum_{j=1}^M\vect{L}_j)=p_0\lambda_2(\sum_{j=1}^M\vect{L}_j)>0.$$
    where $p_j$ is the activation probability of matching $G_j$($p_0$ is the activation probability of first matching $G_0$). $\lambda_i$ is the i-th smallest eigenvalue of the graph Laplacian $\vect{L}$.
    Therefore, the base communication topology is connected. 
    \begin{align}\label{eq:aldsgd_w1}
        \rho=\norm{\mathbb{E}[\vect{W}^{(k)^\intercal}\vect{W}^{(k)}-\vect{J}]}_2=&\norm{\mathbb{E}\left[(\vect{I}-\alpha\vect{L}^{(k)})^\intercal(\vect{I}-\alpha\vect{L}^{(k)})-\vect{J}\right]}\notag\\
        &=\norm{\mathbb{E}\left[\vect{I}-2\alpha\mathbb{E}\left[\vect{L}^{(k)}\right]+\alpha^2\mathbb{E}\left[\vect{L}^{(k)\intercal}\vect{L}^{(k)}\right]-\vect{J}\right]},
    \end{align}

    where $\vect{L}^{(k)}=\sum_{j=1}^M B_j^{(k)}\vect{L}_j$. Since $B_j^{(k)}$ are i.i.d. across all subgraphs and iterations,
    \begin{align} \mathbb{E}\left[\vect{L}^{(k)}\right]&=\sum_{j=1}^mp_{j}\vect{L}_{j}\label{eq:aldsgd_l1}\\
    \mathbb{E}\left[\vect{L}^{(k)\intercal} \vect{L}^{(k)}\right]&=\sum_{j=1}^Mp_j^2\vect{L}_j^2+\sum_{j=1}^M\sum_{t=1,t\neq j}p_jp_t\vect{L}_j^\intercal\vect{L}_t+\sum_{j=1}^Mp_j(1-p_j)\vect{L}_j^2\notag\\
    &=\left(\sum_{j=1}^Mp_j\vect{L}_j\right)^2+\sum_{j=1}^Mp_j(1-p_j)\vect{L}_j^2\notag\\
    &=\left(\sum_{j=1}^Mp_j\vect{L}_j\right)^2+2\sum_{j=1}^Mp_j(1-p_j)\vect{L}_j.\label{eq:aldsgd_l2}
    \end{align}

    Plugging (\ref{eq:aldsgd_l1}) and (\ref{eq:aldsgd_l2}) back to (\ref{eq:aldsgd_w1}), we have
    \begin{align}
        \norm{\mathbb{E}\left[\vect{W}^{(k)\intercal} \vect{W}^{(k)}\right]-J}\leq&\norm{\left(I-\alpha\sum_{j=1}^mp_{j}\vect{L}_{j}\right)^2-\vect{J}}+2\alpha^2\norm{\sum_{j=1}^Mp_{j}(1-p_{j})\vect{L}_{j}}\notag\\
        =&\max\{(1-\alpha\lambda_{2})^2,(1-\alpha\lambda_{m})^2\}+2\alpha^2\zeta,
    \end{align}
    where $\lambda_{l}$ denotes the $l$-th smallest eigenvalue of matrix $\sum_{j=1}^mp_{j}\vect{L}_{j} $ and $\zeta>0$ denotes the spectral norm of matrix $\sum_{j=1}^mp_{j}(1-p_{j})\vect{L}_{j}$. Suppose $h_\lambda(\alpha)=(1-\alpha\lambda)^2+2\alpha^2\zeta$. Then we have:
    \begin{align*}
        \frac{\partial h}{\partial \alpha}=-2\lambda(1-\alpha\lambda)+4\alpha\zeta, \notag\\
        \frac{\partial^2 h}{\partial\alpha^2}=2\lambda^2+4\zeta>0.
    \end{align*}
    Therefore, $h_\lambda(\alpha)$ is a convex function. By setting its derivative to zero, we can get the minimal value:
    \begin{align}
        \alpha^*=&\frac{\lambda}{\lambda^2+2\zeta},\notag\\
        h_\lambda(\alpha^*)=&\frac{4\zeta^2}{(\lambda^2+2\zeta)^2}+\frac{2\lambda^2\zeta}{(\lambda^2+2\zeta)^2}=\frac{2\zeta}{\lambda^2+2\zeta}.
    \end{align}
    We already prove that $\lambda_m\geq\lambda_2>0$. Therefore, we have $\alpha^*>0$ and $h_\lambda(\alpha^*)<1$. Note that $h_\lambda(0)=1$ and $h_\lambda(\alpha)$ is a quadratic function. Therefore, if $\alpha\in(0,2\alpha^*)$, $h_\lambda(\alpha^*)\leq h_\lambda(\alpha)<1$. Thus when $\alpha<(0, \min\{\frac{2\lambda_2}{(\lambda_m^2+2\zeta)},\frac{2\lambda_m}{(\lambda_m^2+2\zeta)}\})$, we have
    $$\norm{\mathbb{E}\left[\vect{W}^{(k)\intercal} \vect{W}^{(k)}\right]-J}\leq\max\{h_{\lambda_2}(\alpha), h_{\lambda_m}(\alpha)\}<1.$$
\end{proof}

Theorem \ref{thm:matcha_lap} demonstrates that for MATCHA with an arbitrary communication budget $C_b>0$, there exists a value of $\alpha$ such that the resulting spectral norm $\rho<1$. This spectral norm is crucial for ensuring the convergence of MATCHA.

\begin{lemma}\label{lma:matcha}
    (Lemma 1 in~\citep{wang2019matcha}) Let $\{\vect{W}^{(k)}\}_{k=1}^{\infty}$ be an i.i.d. symmetric and doubly stochastic matrices sequence. Then for arbitrary matrix $\vect{B}$
    \begin{align*}
        \mathbb{E}\left[\norm{\vect{B}\left(\prod_{l=1}^n\vect{W}^{(l)}-\vect{J}\right)}_F^2\right]\leq\rho^n\norm{B}_F^2.
    \end{align*}
\end{lemma}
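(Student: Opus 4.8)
The plan is to generalize the single-step contraction bound from Lemma~\ref{lma:dpsgd} to the product of $n$ independent random weight matrices. The key structural fact I would exploit is that each $\vect{W}^{(l)}$ is symmetric doubly stochastic, so $\vect{W}^{(l)}\vect{J}=\vect{J}\vect{W}^{(l)}=\vect{J}$ and $\vect{J}^2=\vect{J}$. This lets me telescope: I claim $\prod_{l=1}^n\vect{W}^{(l)}-\vect{J}=\left(\prod_{l=1}^{n}\vect{W}^{(l)}\right)-\vect{J}$ can be rewritten recursively as $\left(\prod_{l=1}^{n-1}\vect{W}^{(l)}-\vect{J}\right)\vect{W}^{(n)}$, using the identity $\vect{J}\vect{W}^{(n)}=\vect{J}$. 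Establishing this algebraic identity cleanly is the first step.

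With that recursion in hand, I would condition on the filtration generated by $\vect{W}^{(1)},\dots,\vect{W}^{(n-1)}$ and take expectation over the final matrix $\vect{W}^{(n)}$. Writing $\vect{C}=\vect{B}\left(\prod_{l=1}^{n-1}\vect{W}^{(l)}-\vect{J}\right)$, the quantity of interest becomes $\mathbb{E}\left[\norm{\vect{C}\vect{W}^{(n)}}_F^2\right]$. Expanding the Frobenius norm row by row as in the proof of Lemma~\ref{lma:dpsgd}, I would write $\norm{\vect{C}\vect{W}^{(n)}}_F^2=\sum_i\vect{c}_i^\intercal\vect{W}^{(n)\intercal}\vect{W}^{(n)}\vect{c}_i$ where $\vect{c}_i$ are the rows of $\vect{C}$. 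Since each row $\vect{c}_i$ of $\vect{C}$ is orthogonal to $\vect{1}$ (because the matrix $\prod_{l=1}^{n-1}\vect{W}^{(l)}-\vect{J}$ annihilates the all-ones direction), I can use the spectral-norm bound $\rho=\norm{\mathbb{E}[\vect{W}^{(k)\intercal}\vect{W}^{(k)}-\vect{J}]}_2<1$ from Theorem~\ref{thm:matcha_lap} to conclude $\mathbb{E}\left[\vect{c}_i^\intercal\vect{W}^{(n)\intercal}\vect{W}^{(n)}\vect{c}_i\right]\leq\rho\norm{\vect{c}_i}^2$, because $\vect{c}_i^\intercal\vect{J}\vect{c}_i=0$ for vectors orthogonal to $\vect{1}$. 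Summing over $i$ gives $\mathbb{E}\left[\norm{\vect{C}\vect{W}^{(n)}}_F^2\mid\mathcal{F}_{n-1}\right]\leq\rho\norm{\vect{C}}_F^2$.

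The remaining step is a straightforward induction on $n$. The base case $n=1$ is exactly the spectral-norm hypothesis of Theorem~\ref{thm:matcha_lap} applied rowwise. For the inductive step, I would take the total expectation of the conditional bound just derived, obtaining $\mathbb{E}\left[\norm{\vect{B}\left(\prod_{l=1}^n\vect{W}^{(l)}-\vect{J}\right)}_F^2\right]\leq\rho\,\mathbb{E}\left[\norm{\vect{B}\left(\prod_{l=1}^{n-1}\vect{W}^{(l)}-\vect{J}\right)}_F^2\right]$, then apply the inductive hypothesis to the right-hand side to pick up the extra factor $\rho^{n-1}$, yielding $\rho^n\norm{\vect{B}}_F^2$.

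I expect the main obstacle to be the care needed in justifying the orthogonality-to-$\vect{1}$ argument and the use of the \emph{in-expectation} spectral norm rather than an almost-sure bound. Unlike the deterministic Lemma~\ref{lma:dpsgd}, here $\vect{W}^{(n)}$ is random, so I cannot bound $\norm{\vect{W}^{(n)\intercal}\vect{W}^{(n)}-\vect{J}}_2$ pathwise; I must commit to taking expectation at each level and invoking the bound on $\mathbb{E}[\vect{W}^{(n)\intercal}\vect{W}^{(n)}-\vect{J}]$. The key subtlety is that the tower property decouples $\vect{W}^{(n)}$ from the earlier matrices only because the sequence is i.i.d.\ and $\vect{C}$ is $\mathcal{F}_{n-1}$-measurable while $\vect{W}^{(n)}$ is independent of $\mathcal{F}_{n-1}$; the quadratic form $\vect{c}_i^\intercal\,\mathbb{E}[\vect{W}^{(n)\intercal}\vect{W}^{(n)}]\,\vect{c}_i$ can then be replaced by $\vect{c}_i^\intercal\,(\mathbb{E}[\vect{W}^{(n)\intercal}\vect{W}^{(n)}-\vect{J}]+\vect{J})\,\vect{c}_i$ and the $\vect{J}$ term vanishes. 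Getting this measurability-and-independence bookkeeping exactly right is where the proof must be precise.
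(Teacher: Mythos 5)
Your proposal is correct and follows essentially the same route as the paper's proof: peel off the last random matrix, take the conditional expectation over $\vect{W}^{(n)}$ row by row to reduce to the quadratic form $v_i^\intercal\,\mathbb{E}[\vect{W}^{(n)\intercal}\vect{W}^{(n)}-\vect{J}]\,v_i \leq \rho\norm{v_i}^2$, and iterate using independence. The only cosmetic difference is that the paper writes the recursion as $\vect{A}_{1,n}=\vect{A}_{1,n-1}(\vect{W}^{(n)}-\vect{J})$ and uses $(\vect{W}-\vect{J})(\vect{W}-\vect{J})^\intercal=\vect{W}\vect{W}^\intercal-\vect{J}$, whereas you keep $\vect{W}^{(n)}$ alone and kill the $\vect{J}$ term via orthogonality of the rows to $\vect{1}$ — these are equivalent.
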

\begin{proof}
Define $\vec{A}_{q,n}=\prod_{l=q}^n\vect{W}^{(l)}-\vect{J}$, $\vect{b}_i^\intercal$ denote the $i$-th row vector of $\vect{B}$. Since for all $k$, $\vect{W}^{(k)}\intercal=\vect{W}^{(k)}$ and $\vect{W}^{(k)}\vect{J}=\vect{J}\vect{W}^{(k)}=\vect{W}^{(k)}$. Thus, we have
$$\vect{A}_{1,n}=\prod_{l=1}^n\left(\vect{W}^{(l)}-\vect{J}\right)=A_{1,n-1}(\widetilde{\vect{W}}^{(n)}-\vect{J}).$$
Thus, we have
\begin{align*}
    \mathbb{E}_{\vect{W}^{(n)}}\left[\norm{\vect{BA}_{1,n}}_F^2\right]\leq&\sum_{i=1}^d\mathbb{E}_{\vect{W}^{(n)}}\left[\norm{\vect{b}_i^\intercal \vect{A}_{1,n}}^2\right]\notag\\
    =&\sum_{i=1}^d\vect{b}_i^\intercal \vect{A}_{1,n-1}\mathbb{E}_{\vect{W}^{(n)}}\left[\vect{W}^{(n)\intercal}\vect{W}^{(n)}-\vect{J}\right]\vect{A}_{1,n-1}^\intercal \vect{b}_i
\end{align*}
Let $\vect{C}=\mathbb{E}_{\vect{W}^{(n)}}\left[\vect{W}^{(n)\intercal}\vect{W}^{(n)}-\vect{J}\right]$, $v_i=\vect{A}_{1,n-1}^\intercal \vect{b}_i$, we have:
\begin{align*}
    \mathbb{E}_{\vect{W}^{(n-1)}}\left[\norm{\vect{BA}_{1,n-1}}_F^2\right]=&\sum_{i=1}^dv_i^\intercal\vect{C}v\notag\\
    \leq&\sigma_{\text{max}}(\vect{C})\sum_{i=1}^dv_i^\intercal v\notag\\
    \leq&\rho\norm{\vect{B}\vect{A}_{1,n-1}}.
\end{align*}

Repeat the previous procedure, since $\vect{\vect{W}}^{(k)}$ are all i.i.d, we could get the final results.
\end{proof}

We are next going to move to the main Theorem that captures the convergence of MATCHA. Lemma \ref {lma:matcha} is used to prove the theorem.

\begin{assumption}[Nonconvex Setting]\label{asm:matcha}
We assume that the loss function $F(x)=\sum_{i=1}^mF_i(x)$ satisfy the following conditions:
\begin{itemize}
    \item[(1)]\textit{Lipschitz gradient:} $\norm{\nabla F_i(x)-\nabla F_i(y)}\leq L\norm{x-y}$
    \item[(2)]\textit{Unbiased gradient:} 
    $\mathbb{E}_{\xi_i}[g_i(x_{k,i};\xi_{k,i})]=\nabla F_i(x_{k,i})$
    \item[(3)]\textit{Bounded variance:} $\mathbb{E}_{\xi_i}[\norm{g_i(x_{k,i};\xi_{k,i})-\nabla F_i(x_{k,i})}^2]\leq \sigma^2$
    \item[(4)]\textit{Unified gradient:} $\mathbb{E}_{\xi_i}[\norm{\nabla F_i(x)-\nabla F_i(x)}^2]\leq\zeta^2$.
\end{itemize}
\end{assumption}

\begin{theorem}[Convergence of MATCHA; Nonconvex Setting]\label{thm:matcha_conv}
    (Theorem 2 in~\citep{wang2019matcha}) Suppose all local workers are initialized with the same point $\overline{x}_1$ and $\{\vect{W}^{k}\}_{k=1}^K$ is the i.i.d. weight matrix generated by the MATCHA algorithm. Define the spectral norm of $\vect{W}$ as $\rho=\norm{\mathbb{E}[\vect{W}^{(k)\intercal}\vect{W}^{(k)}-\vect{J}]}$. Under Assumption \ref{asm:matcha}, if learning rate $\eta L\leq \min{1, (\sqrt{\rho^{-1}-1})/4}$, then after K iterations:
    \begin{align*}
     \frac{1}{K}\sum_{i=1}^K\mathbb{E}\left[\norm{\nabla F(\overline{x}_k)}^2\right] \leq\frac{8(F(\overline{x}_{1})-F^*)}{\eta K}+\frac{4\eta L\sigma^2}{m}+\frac{8\eta^2L^2\rho}{1-\sqrt{\rho}}\left(\frac{\sigma^2}{1+\sqrt{\rho}}+\frac{3\zeta^2}{1-\sqrt{\rho}}\right).
    \end{align*}
    When setting $\eta=\sqrt{\frac{m}{K}}$, we obtain sublinear convergence rate $\mathcal{O}(\frac{1}{\sqrt{mK}})+\mathcal{O}(\frac{m}{K})$.
\end{theorem}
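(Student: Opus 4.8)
The plan is to mirror the proof of Theorem \ref{thm:2} for D-PSGD almost verbatim, with the single structural substitution of the deterministic spectral-contraction Lemma \ref{lma:dpsgd} by its stochastic counterpart Lemma \ref{lma:matcha}. First I would introduce the same matrix notation $x^{(k)}=[x_{k,1},\dots,x_{k,m}]$, $\vect{G}^{(k)}$ and $\vect{F}^{(k)}$, so that the MATCHA update reads $x^{(k+1)}=(x^{(k)}-\eta\vect{G}^{(k)})\vect{W}^{(k)}$. Since each $\vect{W}^{(k)}$ is symmetric and doubly stochastic, $\vect{W}^{(k)}\vect{1}=\vect{1}$, hence the averaged iterate still obeys $\overline{x}_{k+1}=\overline{x}_k-\frac{\eta}{m}\vect{G}^{(k)}\vect{1}$ --- exactly as in D-PSGD, the averaging dynamics do not see the topology. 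Applying $L$-smoothness to this average and taking the full expectation over both the stochastic gradients and the random matchings reproduces the intermediate inequality
\[
\frac{1}{K}\sum_{k=1}^K\mathbb{E}[\norm{\nabla F(\overline{x}_k)}^2] \leq \frac{2\mathbb{E}[F(\overline{x}_{1})-F(\overline{x}_{K})]}{\eta K} + \frac{L^2}{mK}\sum_{k=1}^K\mathbb{E}\left[\norm{x_k(\vect{I}-\vect{J})}_F^2\right] + \frac{\eta L \sigma^2}{m},
\]
whose derivation uses only unbiasedness, the variance bound, and $L$-smoothness, none of which is affected by the randomness of $\vect{W}^{(k)}$.

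The heart of the argument is bounding the consensus error $\mathbb{E}[\norm{x_k(\vect{I}-\vect{J})}_F^2]$. Unrolling the recursion and using $\vect{W}^{(l)}\vect{J}=\vect{J}$ together with the common initialization $x_1(\vect{I}-\vect{J})=0$ gives $x_k(\vect{I}-\vect{J})=-\eta\sum_{q=1}^{k-1}\vect{G}^{(q)}\big(\prod_{l=q}^{k-1}\vect{W}^{(l)}-\vect{J}\big)$. Splitting $\vect{G}^{(q)}=(\vect{G}^{(q)}-\nabla\vect{F}^{(q)})+\nabla\vect{F}^{(q)}$ and using $\norm{a+b}^2\le 2\norm{a}^2+2\norm{b}^2$ yields the same two terms $I_1$ (noise) and $I_2$ (gradient) as in the D-PSGD proof. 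For $I_1$ the cross terms across distinct iterations vanish because the gradient noise is a martingale difference independent of the past matchings, and Lemma \ref{lma:matcha} applied to $\mathbb{E}[\norm{(\vect{G}^{(q)}-\nabla\vect{F}^{(q)})(\prod_l\vect{W}^{(l)}-\vect{J})}_F^2]$ followed by the variance bound gives $\mathbb{E}[I_1]\le m\sigma^2\rho/(1-\rho)$. For $I_2$ I would again invoke Lemma \ref{lma:matcha} on each product and then apply Young's inequality with weight $\epsilon=\rho^{(p-q)/2}$ to control the cross terms, obtaining $\mathbb{E}[I_2]\le \frac{\sqrt{\rho}}{1-\sqrt{\rho}}\sum_{q}\sqrt{\rho}^{\,k-q}\mathbb{E}[\norm{\nabla\vect{F}^{(q)}}_F^2]$.

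From here the remainder is identical bookkeeping: substitute the decomposition $\norm{\nabla\vect{F}^{(k)}}_F^2\le 3m\zeta^2+3L^2\norm{x_k(\vect{I}-\vect{J})}_F^2+3m\norm{\nabla F(\overline{x}_k)}^2$ (using the unified-gradient assumption), define $D=6\eta^2L^2\rho/(1-\sqrt{\rho})^2$, and use the step-size condition $\eta L\le(\sqrt{\rho^{-1}-1})/4$ to guarantee $D\le 3/8$ so that $1/(1-2D)\le 4$. Collecting constants and replacing $F(\overline{x}_K)$ by the lower bound $F^*$ produces the stated estimate, and the choice $\eta=\sqrt{m/K}$ yields the $\mathcal{O}(1/\sqrt{mK})+\mathcal{O}(m/K)$ rate. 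I expect the main obstacle to be purely the ordering of expectations: one must verify that Lemma \ref{lma:matcha}, which assumes i.i.d. matchings, is legitimately applicable after conditioning on the stochastic gradients, and that the cross terms in $I_1$ genuinely vanish under the joint expectation over both sources of randomness. Once this independence structure is pinned down, every remaining estimate transfers line-by-line from Theorem \ref{thm:2}.
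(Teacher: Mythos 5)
Your proposal is correct and follows essentially the same route as the paper: the paper's proof of Theorem \ref{thm:matcha_conv} is a near-verbatim transcription of the D-PSGD argument in which the only structural change is replacing the deterministic contraction bound of Lemma \ref{lma:dpsgd} with the stochastic contraction bound of Lemma \ref{lma:matcha} applied to the products $\prod_{l=q}^{k-1}\vect{W}^{(l)}$, exactly as you describe. The independence issue you flag is the right one to worry about, and it is resolved as you anticipate (the matchings are i.i.d.\ and independent of the gradient noise, so the conditional application of Lemma \ref{lma:matcha} and the vanishing of the cross terms in $I_1$ both go through).
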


\begin{proof}
For notation simplicity, we denote the stochastic gradient $g_i(x_{k,i};\xi_{k,i})$ as $g_i(x_{k,i})$. We first introduce the following matrix forms for notation simplicity in the proof. 
\begin{align*}
    &x^{(k)}=[x_{k,1}, x_{k,2},...,x_{k,m}];\\
    &\vect{G}^{(k)}=[g_1(x_{k,1}), g_2(x_{k,2}), ..., g_m(x_{k,m})];\\
    &\vect{F}^{(k)}=[\nabla F_1(x_{k,1}),\nabla F_2(x_{k,2}),...,\nabla F_m(x_{k,m})].
\end{align*}
The  matrix update rule can be written as
\begin{align}
    x^{(k+1)}=(x^{(k)}-\eta\vect{G}^{(k)})\vect{W}^{(k)},
\end{align}
after taking the average, we have
\begin{align}
    \overline{x}_{k+1}=\overline{x}_k-\frac{\eta}{m}\vect{G}^{(k)}\vect{1}.
\end{align}
From Assumption \ref{asm:dpsgd}, since function $F$ is Lipschitz smooth, we have
\begin{align}
    F(\overline{x}_{k+1})-F(\overline{x}_{k})\leq& \langle\nabla F(\overline{x}_k), \overline{x}_{k+1}-\overline{x}_k\rangle+\frac{L}{2}\norm{\overline{x}_{k+1}-\overline{x}_k}^2.
\end{align}
Since $\overline{x}_{k+1}=\overline{x}_k-\frac{\eta}{m}\vect{G}^{(k)}\vect{1}$, we have
\begin{align}
    F(\overline{x}_{k+1})-F(\overline{x}_{k})\leq& -\eta\langle\nabla F(\overline{x}_k), \frac{\vect{G}^{(k)}\vect{1}}{m}\rangle+\frac{\eta^2L}{2}\norm{\frac{\vect{G}^{(k)}\vect{1}}{m}}^2.
\end{align}
Taking expectation, we have
\begin{align}\label{eq:dpsgd1}
    \mathbb{E}[F(\overline{x}_{k+1})-F(\overline{x}_{k})]\leq -\eta\langle\nabla F(\overline{x}_k), \frac{\vect{F}^{(k)}\vect{1}}{m}\rangle+\frac{\eta^2L}{2}\mathbb{E}\left[\norm{\frac{\vect{G}^{(k)}\vect{1}}{m}}^2\right].
\end{align}
We are going to bound the 2 terms in the RHS of formula \ref{eq:dpsgd1} on by one.
\begin{itemize}
    \item[(i)] Firstly, we are going to bound $\langle\nabla F(\overline{x}_k), \frac{\vect{F}^{(k)}\vect{1}}{m}\rangle$.
    \begin{align}
        &\langle\nabla F(\overline{x}_k), \frac{\vect{F}^{(k)}\vect{1}}{m}\rangle = \langle\nabla F(\overline{x}_k), \frac{1}{m}\sum_{i=1}^m\nabla F_i({x}_{k,i})\rangle\notag\\
        =&\frac{1}{2}\left[\norm{\nabla F(\overline{x}_k)}^2+\norm{\frac{1}{m}\sum_{i=1}^m\nabla F_i({x}_{k,i})}^2-\norm{F(\overline{x}_k)- \frac{1}{m}\sum_{i=1}^m\nabla F_i({x}_{k,i})}^2\right]\notag\\
        =&\frac{1}{2}\left[\norm{\nabla F(\overline{x}_k)}^2+\norm{\frac{1}{m}\sum_{i=1}^m\nabla F_i({x}_{k,i})}^2-\norm{\frac{1}{m}\sum_{i=1}^m\left[\nabla F_i(\overline{x}_k)-\nabla F_i({x}_{k,i})\right]}^2\right]\notag\\
        \geq&\frac{1}{2}\left[\norm{\nabla F(\overline{x}_k)}^2+\norm{\frac{1}{m}\sum_{i=1}^m\nabla F_i({x}_{k,i})}^2-\frac{1}{m}\sum_{i=1}^m\norm{\nabla F_i(\overline{x}_k)-\nabla F_i({x}_{k,i})}^2\right]\notag\\
        &\text{(By Jensen's Inequlity.)}\notag\\
        \geq&\frac{1}{2}\left[\norm{\nabla F(\overline{x}_k)}^2+\norm{\frac{1}{m}\sum_{i=1}^m\nabla F_i({x}_{k,i})}^2-\frac{L^2}{m}\sum_{i=1}^m\norm{\overline{x}_k-{x}_{k,i}}^2\right]\notag\\
        &\text{($F$ is L-Lipschitz smooth.)}\notag
    \end{align}
    Define 
    $$\vect{J}=\frac{\vect{1}_m}{m},$$
    we could reformulate the above bound as:
    \begin{align}\label{eq:dpsgd_b1}
    \langle\nabla F(\overline{x}_k), \frac{\vect{F}^{(k)}\vect{1}}{m}\rangle
    \geq \frac{1}{2}\norm{\nabla F(\overline{x}_k)}^2 +\frac{1}{2}\norm{\frac{\vect{F}^{(k)}\vect{1}}{m}}^2-\frac{L^2}{2m}\norm{x_k(\vect{I}-\vect{J})}_F^2
    \end{align}

    \item [(ii)] Secondly, we are going to bound $\mathbb{E}\left[\norm{\frac{\vect{G}^{(k)}\vect{1}}{m}}^2\right]$.
    \begin{align}\label{eq:dpsgd_b2}
        &\mathbb{E}\left[\norm{\frac{\vect{G}^{(k)}\vect{1}}{m}}^2\right] = \mathbb{E}\left[\norm{\frac{1}{m}\sum_{i=1}^mg_i(x_{k,i})}^2\right]\notag\\
        =&\mathbb{E}\left[\norm{\frac{1}{m}\sum_{i=1}^m[g_i(x_{k,i})-\nabla F_i(x_{k,i})+\nabla F_i(x_{k,i})]}^2\right]\notag\\
        \leq&\frac{1}{m^2}\sum_{i=1}^m\mathbb{E}\left[\norm{\frac{1}{m}\sum_{i=1}^m[g_i(x_{k,i})-\nabla F_i(x_{k,i})]}^2\right]+\norm{\frac{1}{m}\sum_{i=1}^m\nabla F_i({x}_{k,i})}^2\notag\\
        \leq&\frac{\sigma^2}{m}+\norm{\frac{\vect{F}^{(k)}\vect{1}}{m}}^2
    \end{align}
\end{itemize}

Combine formula (\ref{eq:dpsgd1}), (\ref{eq:dpsgd_b1}) and (\ref{eq:dpsgd_b2}), we have
\begin{align}
    \mathbb{E}[F(\overline{x}_{k+1})-F(\overline{x}_{k})]
    \leq&-\frac{\eta}{2}\mathbb{E}[\norm{\nabla F(\overline{x}_k)}^2]
    -\frac{\eta}{2}(1-\eta L)\norm{\frac{\vect{F}^{(k)}\vect{1}}{m}}^2\notag\\
    &+\frac{\eta L^2}{2m}\mathbb{E}\left[\norm{x_k(\vect{I}-\vect{J})}_F^2\right] 
    + \frac{\eta^2 L \sigma^2}{2m}
\end{align}
Averaging over all iterations from $k=1,...,K$, we have
\begin{align}
    \frac{\mathbb{E}[F(\overline{x}_{K})-F(\overline{x}_{1})]}{K}
    \leq&-\frac{\eta}{2}\frac{1}{K}\sum_{k=1}^K\mathbb{E}[\norm{\nabla F(\overline{x}_k)}^2]
    -\frac{\eta}{2}(1-\eta L)\frac{1}{K}\sum_{k=1}^K\norm{\frac{\vect{F}^{(k)}\vect{1}}{m}}^2\notag\\
    &+\frac{1}{K}\sum_{k=1}^K\frac{\eta L^2}{2m}\mathbb{E}\left[\norm{x_k(\vect{I}-\vect{J})}_F^2\right] 
    + \frac{\eta^2 L \sigma^2}{2m}
\end{align}
By rearranging, we have
\begin{align}\label{eq:dpsgd_2}
    \frac{1}{K}\sum_{k=1}^K\mathbb{E}[\norm{\nabla F(\overline{x}_k)}^2]
    \leq&\frac{2\mathbb{E}[F(\overline{x}_{1})-F(\overline{x}_{K})]}{\eta K}
    -\frac{1-\eta L}{m}\frac{1}{K}\sum_{k=1}^K\norm{\frac{\vect{F}^{(k)}\vect{1}}{m}}^2\notag\\
    &+\frac{L^2}{mK}\sum_{k=1}^K\mathbb{E}\left[\norm{x_k(\vect{I}-\vect{J})}_F^2\right] 
    + \frac{\eta L \sigma^2}{m}\notag\\
    \leq &\frac{2\mathbb{E}[F(\overline{x}_{1})-F(\overline{x}_{K})]}{\eta K}
    +\frac{L^2}{mK}\sum_{k=1}^K\mathbb{E}\left[\norm{x_k(\vect{I}-\vect{J})}_F^2\right] 
    + \frac{\eta L \sigma^2}{m}
\end{align}

We now complete the first part of the proof. In the following section, we are going to bound $\mathbb{E}\left[\norm{x_k(\vect{I}-\vect{J})}_F^2\right]$.

\begin{align}
    x_k(\vect{I}-\vect{J})=&(x_{k-1}-\eta\vect{G}^{(k)})\vect{W}^{(k)}(\vect{I}-\vect{J})\notag\\
    =&x_{k-1}(\vect{I}-\vect{J})\vect{W}^{(k)}-\eta\vect{G}^{(k-1)}\vect{W}^{(k-1)}(\vect{I}-\vect{J})\notag\\
    =&...\notag\\
    =&x_1(\vect{I}-\vect{J})\prod_{q=1}^{k-1}\vect{W}^{(q)}-\eta\sum_{q=1}^{k-1}\vect{G}^{(q)}(\prod_{q=1}^{k-1}\vect{W}^{(q)}-\vect{J})
\end{align}

Since all workers start from the same point, we have $x_1(\vect{I}-\vect{J})=0$. Moreover, since $\vect{W}$ is symmetric doubly stochastic and $\vect{J}=\vect{1}_m/m$, we know $\vect{WJ}=\vect{J}$. In all:

\begin{align}\label{eq:dpsgd3}
    &\norm{x_k(\vect{I}-\vect{J})}_F^2
    =\eta^2\norm{\sum_{q=1}^{k-1}\vect{G}^{(q)}(\prod_{l=q}^{k-1}\vect{W}^{(l)}-\vect{J})}_F^2\notag\\
    =&\eta^2\norm{\sum_{q=1}^{k-1}(\vect{G}^{(q)}-\nabla\vect{F}^{(q)}+\nabla\vect{F}^{(q)})(\prod_{l=q}^{k-1}\vect{W}^{(l)}-\vect{J})}_F^2\notag\\
    \leq &2\eta^2\underbrace{\norm{\sum_{q=1}^{k-1}(\vect{G}^{(q)}-\nabla\vect{F}^{(q)})(\prod_{l=q}^{k-1}\vect{W}^{(l)}-\vect{J})}_F^2}_{:=I_1}
    +2\eta^2\underbrace{\norm{\sum_{q=1}^{k-1}\nabla\vect{F}^{(q)}(\prod_{l=q}^{k-1}\vect{W}^{(l)}-\vect{J})}_F^2}_{:=I_2}
\end{align}
Then we are going to bound terms ($I_1$) and ($I_2$) in formula (\ref{eq:dpsgd3}).

\begin{align}\label{eq:dpsgd_b3}
    \mathbb{E}[I_1]\leq&\sum_{q=1}^{k-1}\mathbb{E}\left[\norm{(\vect{G}^{(q)}-\nabla\vect{F}^{(q)})(\prod_{l=q}^{k-1}\vect{W}^{(l)}-\vect{J})}_F^2\right]\notag\\
    \leq&\sum_{q=1}^{k-1} \rho^{k-q}\mathbb{E}\left[\norm{\vect{G}^{(q)}-\nabla\vect{F}^{(q)}}_F^2\right]\quad(\text{Lemma \ref{lma:dpsgd}})\notag\\
    \leq&\frac{m\sigma^2\rho}{1-\rho}\quad (\text{Assumption \ref{asm:dpsgd}: variance bounded})
\end{align}

\begin{align}
    \mathbb{E}[I_2]\leq&\sum_{q=1}^{k-1}\mathbb{E}\left[\norm{\nabla\vect{F}^{(q)}(\prod_{l=q}^{k-1}\vect{W}^{(l)}-\vect{J})}_F^2\right] \notag\\
    &+ \sum_{q=1}^k\sum_{p=1,p\neq q}^k\mathbb{E}\left[\norm{\nabla\vect{F}^{(q)}(\prod_{l=q}^{k-1}\vect{W}^{(l)}-\vect{J})}_F\norm{\nabla\vect{F}^{(p)}(\vect{W}^{k-p}-\vect{J})}_F\right]\notag\\
    \leq& \sum_{q=1}^{k-1}\rho^{k-q}\mathbb{E}\left[\norm{\nabla\vect{F}^{(q)}}_F^2\right] \notag\\
    &+ \sum_{q=1}^k\sum_{p=1,p\neq q}^k\mathbb{E}\left[\frac{1}{2\epsilon}\rho^{k-q}\norm{\nabla\vect{F}^{(q)}}_F^2+\frac{\epsilon}{2}\rho^{k-p}\norm{\nabla\vect{F}^{(p)}}_F^2\right]\notag\\
    &(\text{Lemma \ref{lma:dpsgd} \& Young's inequality})\notag
\end{align}
If $\epsilon=\rho^{\frac{p-q}{2}}$ for the Young's inequality, we have
\begin{align}\label{eq:dpsgd_b4}
    \mathbb{E}[I_2]\leq&\sum_{q=1}^{k-1}\rho^{k-q}\mathbb{E}\left[\norm{\nabla\vect{F}^{(q)}}_F^2\right] \notag\\
    &+ \sum_{q=1}^k\sum_{p=1,p\neq q}^k\sqrt{\rho}^{2k-p-q}\mathbb{E}\left[\norm{\nabla\vect{F}^{(q)}}_F^2+\norm{\nabla\vect{F}^{(p)}}_F^2\right]\notag\\
    =&\sum_{q=1}^{k-1}\rho^{k-q}\mathbb{E}\left[\norm{\nabla\vect{F}^{(q)}}_F^2\right]+ \sum_{q=1}^k\sqrt{\rho}^{k-q}\mathbb{E}\left[\norm{\nabla\vect{F}^{(q)}}_F^2\right]\sum_{p=1,p\neq q}^k\sqrt{\rho}^{k-p}\notag\\
    =&\sum_{q=1}^{k-1}\rho^{k-q}\mathbb{E}\left[\norm{\nabla\vect{F}^{(q)}}_F^2\right]+ \sum_{q=1}^k\sqrt{\rho}^{k-q}\mathbb{E}\left[\norm{\nabla\vect{F}^{(q)}}_F^2\right](\sum_{p=1}^k\sqrt{\rho}^{k-p}-\sqrt{\rho}^{k-q})\notag\\
    \leq&\frac{\sqrt{\rho}}{1-\sqrt{\rho}}\sum_{q=1}^{k-1}\sqrt{\rho}^{k-q}\mathbb{E}\left[\norm{\nabla\vect{F}^{(q)}}_F^2\right]
\end{align}

Plugging (\ref{eq:dpsgd_b3}) and (\ref{eq:dpsgd_b3}) back to (\ref{eq:dpsgd3}), and summing the inequality for $k=1,...,K$ then averaging by $mK$, we have
\begin{align}\label{eq:dpsgd_b5}
    \frac{1}{mK}\sum_{k=1}^K\mathbb{E}\left[\norm{x_k(\vect{I}-\vect{J})}\right]\leq&\frac{2\eta^2\sigma^2\rho}{1-\rho}+\frac{2\eta^2}{m}\frac{\sqrt{\rho}}{1-\sqrt{\rho}}\frac{1}{K}\sum_{k=1}^K\sum_{q=1}^k\sqrt{\rho}^{k-q}\mathbb{E}\left[\norm{\nabla\vect{F}^{(q)}}_F^2\right]\notag\\
    =&\frac{2\eta^2\sigma^2\rho}{1-\rho}+\frac{2\eta^2}{m}\frac{\sqrt{\rho}}{1-\sqrt{\rho}}\frac{1}{K}\sum_{k=1}^K\mathbb{E}\left[\norm{\nabla\vect{F}^{(k)}}_F^2\right]\sum_{q=1}^k\sqrt{\rho}^{q}\notag\\
    \leq&\frac{2\eta^2\sigma^2\rho}{1-\rho}+\frac{2\eta^2}{m}\frac{\rho}{(1-\sqrt{\rho})^2}\frac{1}{K}\sum_{k=1}^K\mathbb{E}\left[\norm{\nabla\vect{F}^{(k)}}_F^2\right]
\end{align}
Note that:
\begin{align}\label{eq:dpsgd_b6}
    \norm{\nabla\vect{F}^{(k)}}_F^2=&\sum_{i=1}^m\norm{\nabla F_i(x_{i,k})}^2\notag\\
    =&\sum_{i=1}^m\norm{\nabla F_i(x_{i,k})-\nabla F(x_{i,k})+\nabla F(x_{i,k})-\nabla F(\overline{x}_k)+\nabla F(\overline{x}_k)}^2\notag\\
    \leq&3\sum_{i=1}^m\left[\norm{\nabla F_i(x_{i,k})-\nabla F(x_{i,k})}^2+\norm{\nabla F(x_{i,k})-\nabla F(\overline{x}_k)}^2+\norm{\nabla F(\overline{x}_k)}^2\right]\notag\\
    \leq&3m\zeta^2+3L^2\norm{x_k(\vect{I}-\vect{J})}_F^2+3m\norm{\nabla F(\overline{x}_k)}^2.
\end{align}

Plugging (\ref{eq:dpsgd_b6}) back to (\ref{eq:dpsgd_b5}), we have:
\begin{align}
    \frac{1}{mK}\sum_{k=1}^K\mathbb{E}\left[\norm{x_k(\vect{I}-\vect{J})}\right]\leq&\frac{2\eta^2\sigma^2\rho}{1-\rho}+\frac{6\eta^2\zeta^2\rho}{(1-\sqrt{\rho})^2}+\frac{6\eta^2 L^2\rho}{(1-\sqrt{\rho})^2}\frac{1}{mK}\sum_{k=1}^K\mathbb{E}\left[\norm{x_k(\vect{I}-\vect{J})}\right]\notag\\
    &+\frac{6\eta^2 \rho}{(1-\sqrt{\rho})^2}\frac{1}{K}\sum_{k=1}^K\norm{\nabla F(\overline{x}_k)}^2
\end{align}

Define 
$$D=\frac{6\eta^2L^2\rho}{(1-\sqrt{\rho})^2},$$

After rearranging, we have
\begin{align}\label{eq:dpsgd_4}
    \frac{1}{mK}\sum_{k=1}^K\mathbb{E}\left[\norm{x_k(\vect{I}-\vect{J})}\right]\leq\frac{1}{1-D}\left[\frac{2\eta^2\sigma^2\rho}{1-\rho}+\frac{6\eta^2\zeta^2\rho}{(1-\sqrt{\rho})^2}+\frac{6\eta^2 \rho}{(1-\sqrt{\rho})^2}\frac{1}{K}\sum_{k=1}^K\norm{\nabla F(\overline{x}_k)}^2\right].
\end{align}

Plugging the bound (\ref{eq:dpsgd_4}) back to (\ref{eq:dpsgd_2}), we have
\begin{align}
    \frac{1}{K}\sum_{k=1}^K\mathbb{E}[\norm{\nabla F(\overline{x}_k)}^2]
    \leq& \frac{2\mathbb{E}[F(\overline{x}_{1})-F(\overline{x}_{K})]}{\eta K}
    + \frac{\eta L \sigma^2}{m}\notag\\
    &+\frac{1}{1-D}\frac{2\eta^2\sigma^2\rho}{1-\rho}+\frac{D\zeta^2}{1-D}+\frac{D}{1-D}\frac{1}{K}\sum_{k=1}^K\norm{\nabla F(\overline{x}_k)}^2
\end{align}

By rearranging, we have
\begin{align}\label{eq:dpsgd_5}
    \frac{1}{K}\sum_{k=1}^K\mathbb{E}[\norm{\nabla F(\overline{x}_k)}^2]
    \leq& \frac{1-D}{1-2D}\left[\frac{2\mathbb{E}[F(\overline{x}_{1})-F(\overline{x}_{K})]}{\eta K}+ \frac{\eta L \sigma^2}{m}\right]\notag\\
    &+\frac{1}{1-2D}\frac{2\eta^2L^2\rho}{1-\sqrt{\rho}}(\frac{\sigma^2}{1+\sqrt{\rho}}+\frac{3\zeta^2}{1-\sqrt{\rho}})\notag\\
    \leq& \frac{1}{1-2D}\left[\frac{2\mathbb{E}[F(\overline{x}_{1})-F(\overline{x}_{K})]}{\eta K}+ \frac{\eta L \sigma^2}{m}\right]\notag\\
    &+\frac{1}{1-2D}\frac{2\eta^2L^2\rho}{1-\sqrt{\rho}}(\frac{\sigma^2}{1+\sqrt{\rho}}+\frac{3\zeta^2}{1-\sqrt{\rho}})
\end{align}

Since $\eta L\leq\frac{1-\sqrt{\rho}}{4\sqrt{\rho}}$, we have
$$D=\frac{6\eta^2L^2\rho}{(1-\sqrt{\rho})^2}\leq\frac{3}{8}<\frac{1}{2} \quad\Longrightarrow\quad \frac{1}{1-2D}\leq 4$$

Therefore (\ref{eq:dpsgd_5}) could be bounded as:
\begin{align}\label{eq:dpsgd_6}
    \frac{1}{K}\sum_{k=1}^K\mathbb{E}[\norm{\nabla F(\overline{x}_k)}^2]
    \leq& \frac{8\mathbb{E}[F(\overline{x}_{1})-F(\overline{x}_{K})]}{\eta K}+ \frac{4\eta L \sigma^2}{m}+\frac{8\eta^2L^2\rho}{1-\sqrt{\rho}}(\frac{\sigma^2}{1+\sqrt{\rho}}+\frac{3\zeta^2}{1-\sqrt{\rho}})
\end{align}

Since $F^*$ is the minimum value of the loss, we could have:
\begin{align}
    \frac{1}{K}\sum_{i=1}^K\mathbb{E}\left[\norm{\nabla F(\overline{x}_k)}^2\right] \leq\frac{8(F(\overline{x}_{1})-F^*)}{\eta K}+\frac{4\eta L\sigma^2}{m}+\frac{8\eta^2L^2\rho}{1-\sqrt{\rho}}\left(\frac{\sigma^2}{1+\sqrt{\rho}}+\frac{3\zeta^2}{1-\sqrt{\rho}}\right).
\end{align}

If the learning rate is set to be $\eta=\sqrt{\frac{m}{K}}$, we have
\begin{align}
    \frac{1}{K}\sum_{i=1}^K\mathbb{E}\left[\norm{\nabla F(\overline{x}_k)}^2\right] \leq&\frac{8(F(\overline{x}_{1})-F^*)+4L\sigma^2}{\sqrt{mK}}+\frac{8m}{K}\frac{L^2\rho}{1-\sqrt{\rho}}\left(\frac{\sigma^2}{1+\sqrt{\rho}}+\frac{3\zeta^2}{1-\sqrt{\rho}}\right)\notag\\
    =&\mathcal{O}(\frac{1}{\sqrt{mK}})+\mathcal{O}(\frac{m}{K}).
\end{align}
\end{proof}

\subsubsection{Adjacent Leader Decentralized SGD (AL-DSGD)}
The Adjacent Leader Decentralized SGD (AL-DSGD) algorithm, introduced by~\citep{he2024adjacent}, aims to enhance the final model performance, accelerate convergence, and minimize communication load in decentralized DL optimizers. The algorithm operates by assigning weights to neighboring workers based on their performance and the degree before averaging them. Additionally, it exerts a corrective force on workers determined by both the best-performing neighbor and the neighbor with the highest degree. To address the issue of reduced convergence speed and performance in nodes with lower degrees, AL-DSGD employs dynamic communication graphs. These graphs enable workers to interact with more nodes while maintaining low node degrees. Algorithm \ref{alg:aldsgd} shows the pseudocode for AL-DSGD algorithm. Let $k$ denote the iteration index ($k = 1,2,\dots,K$) and $i$ denote the index of the worker ($i = 1,2,\dots,m$). Let $x_{k,i}^N$ denote the best performing worker from among the workers adjacent to node $i$ at iteration $k$ and let $x_{k,i}^\tau$ denote the maximum degree worker from among the workers adjacent to node $i$ at iteration $k$. Let $\{G_{(i)}\}_{i=1}^n$ denote the dynamic communication graphs, each communication graph $\{G_{(i)}\}$has its own weight matrices sequence $\{\vect{W}^{(k)}\}$. There are mainly three steps needed to execute AL-DSGD algorithm and we discuss them below.

\begin{algorithm}[t!]
    \begin{algorithmic}[1]
    \caption{Proposed AL-DSGD algorithm}
    \label{alg:aldsgd}
    \State \textbf{Initialization:}initialize local models $\{x_0^i\}_{i=1}^m$ with the \textbf{different} initialization, learning rate $\gamma$, weight matrices sequence $\{\vect{W}^{(k)}\}$, and the total number of iterations $K$. Initialize communication graphs set $\{G_{(i)}\}_{i=1}^n$, each communication graph $\{G_{(i)}\}$has its own weight matrices sequence $\{\vect{W}^{(k)}\}$. Pulling coefficients $\lambda_N$ and $\lambda_{\tau}$. Model weights coefficients $w_N$ and $w_\tau$. Split the original dataset into subsets.\;
    \While{k=0, 1, 2, ... K-1 $\leq$ K}
            \State Compute  the local stochastic gradient $\nabla F_i(x_{k, i}; \xi_{k, i})$ on all workers\;
            \State For each worker, fetch neighboring models and determine the adjacent best worker $x_{k,i}^N$ and the adjacent maximum degree worker: $x_{k,i}^\tau$.
            \State Update the local model with corrective force: 
            \vspace{-0.1in}
            \begin{align*}
                x_{k+\frac{1}{2}, i} &= x_{k, i} - \gamma \nabla F_i(x_{k, i}; \xi_{k, i}) - \gamma \lambda_N (x_{k, i} - x_{k,i}^N) \\
                &- \gamma \lambda_{\tau} (x_{k, i} - x_{k,i}^\tau)
            \end{align*}\;
            \vspace{-0.25in}
            \State Average the model with neighbors, give additional weights to worker $x_{k,i}^N$ and $x_{k,i}^\tau$: 
            \vspace{-0.2in}
            \begin{align*}
                x_{k+1, i} &= (1 - w_N - w_{\tau}) \cdot \left(\right. \sum_{j=1,j\ne i}^m W_{ij}^{(k)}x_{k,j}+ W_{ii}^{(k)} x_{k+\frac{1}{2},i}\left.\right)+ w_N \cdot x_{k,i}^N + w_{\tau} \cdot x_{k,i}^\tau
            \end{align*}\;
            \vspace{-0.2in}
            \State Switch to new communication graph.\; 
    \EndWhile
    \State \textbf{Output:}the average of all workers $\frac{1}{m} \sum_{i=1}^m x_{k,i}$\;
    \vspace{-0.03in}
    \end{algorithmic}
\end{algorithm}

\textit{Step 1: The Corrective Force.} The workers with lower degree in the communication topology have worse performance than that with higher degree. In order to augment the influence of the best-performing worker on others, AL-DSGD introduces the corrective force between workers to pull current local worker to their adjacent nodes with the largest degrees and the lowest train loss. For each current worker $x_{k,i}$, denote the best-performing adjacent worker based on the train loss and the largest degree adjacent work as $x_{k,i}^N$ and $x_{k,i}^\tau$, respectively. Adding the corrective force is done as follows:
\begin{equation}
\begin{split}
x_{k+\frac{1}{2}, i}\! &=\! x_{k, i}\!- \!\gamma \nabla F_i(x_{k, i}; \xi_{k, i}) - \gamma \lambda_N (x_{k, i} - x_{k,i}^N) - \gamma \lambda_{\tau} (x_{k, i} - x_{k,i}^\tau) \nonumber
\end{split}
\end{equation}
where $\lambda_N$ and $\lambda_{\tau}$ are pulling coefficients, $\gamma$ is the learning rate, and $\nabla F_i(x_{k, i}; \xi_{k, i})$ is the gradient of the loss function for worker $i$ computed on parameters $x_{k,i}$ and local data sample $\xi_{k,i}$ that is seen by worker $i$ at iteration $k$. 

\begin{figure}[t]
    \vspace{-0.123in}
    \centering
    \includegraphics[width=150mm]{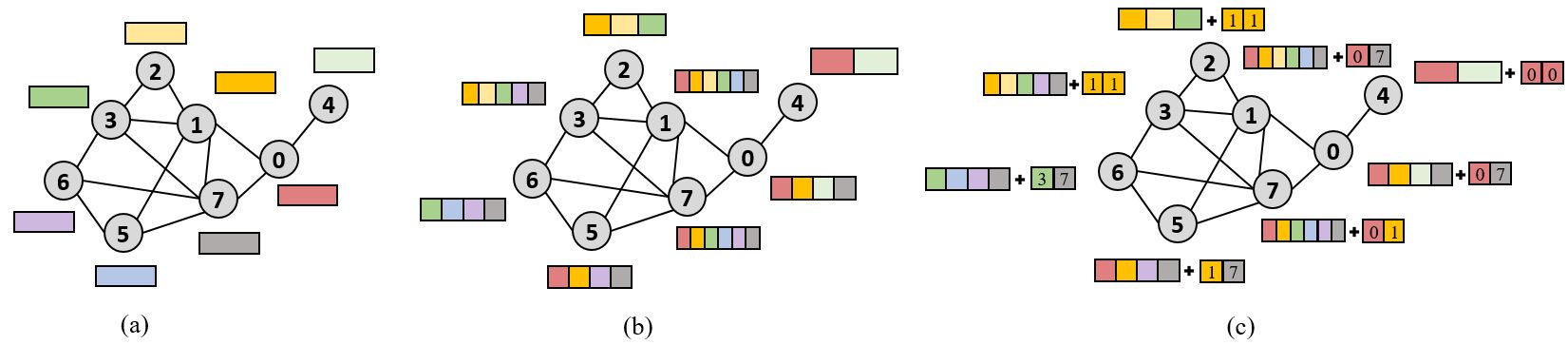}
    \centering
    \caption{(a) The weights before communication are represented as colored blocks, where different colors correspond to different workers. (b) Previous methods simply average the training model with neighbors. Each colored block denotes the identity of workers whose parameters were taken to compute the average. (c) To illustrate AL-DSGD, we assume that the higher is the index of the worker, the worse is its performance in this iteration. For each node, in addition to averaging with neighboring models, AL-DSGD assigns additional weights to the best performing adjacent model and the maximum degree adjacent model. This is depicted as the sum, where the additional block has two pieces (the left corresponds to the best performing adjacent model and the right corresponds to the maximum degree adjacent model; the indexes of these models are also provided). For example, in the case of model $2$, both the best-performing adjacent model and the maximum degree adjacent model is model $1$.}
    \label{fig:aldsgd1}
\end{figure}
\textit{Step 2: The Averaging Step.} When averaging workers, AL-DSGD weight them according to their degree and performance. Figure \ref{fig:aldsgd1} visualizes the process and the update step is shown below.
\vspace{-0.1in}
\begin{equation}
\begin{split}
x_{k + 1, i} \!= &(1 - w_N - w_{\tau}) \cdot (W_{ii}^{(k)} x_{k+\frac{1}{2},i} \!+\!\!\sum_{j=1,j\ne i}^m W_{ij}^{(k)}x_{k,j}) + w_N \cdot x_{k,i}^N + w_{\tau} \cdot x_{k,i}^\tau. \nonumber
\end{split}
\end{equation}

\textit{Step 3: The Dynamic Communication Graph.} Instead of being restricted to a single communication topology, AL-DSGD introduces $n$ different communication topologies and switches between them. Figure \ref{fig:aldsgd2} visualizes the process.
\begin{figure}[t]
    \centering
    \includegraphics[width=150mm]{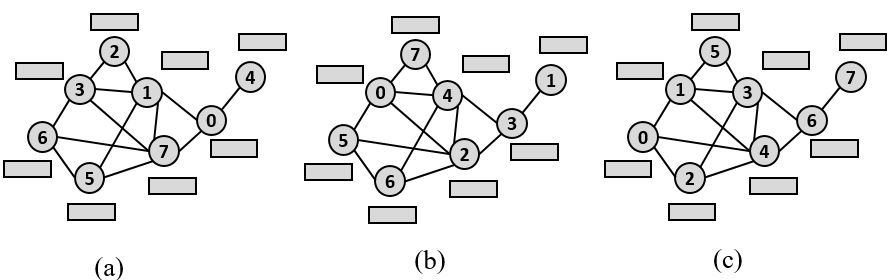}
    \centering
    \caption{AL-DSGD with three Laplacian matrices rotates workers locations between (a), (b), and (c).}
    \label{fig:aldsgd2}
\end{figure}

The gradient update step for AL-DSGD is:
\begin{align}\label{eq:update_aldsgd}
    x_{k+1,i}=&\overbrace{(1-\omega_N-\omega_\tau)\sum_{j=1}^mW_{ij}^{(k)}x_{k,j}+\omega_Nx_{k,i}^n+\omega_\tau x_{k,i}^\tau}^{(I)}\notag\\
    &-\gamma(1-\omega_N-\omega_\tau)W_{ii}^{(k)}\left[\right.\nabla F_i(x_{k, i}; \xi_{k, i}) 
    + \lambda_N (x_{k, i} - x_{k,i}^N) + \lambda_{\tau} (x_{k, i} - x_{k,i}^\tau)\left.\right]
\end{align}

\paragraph{Convergence proof.} This section presents an analysis of the convergence rate of the AL-DSGD algorithm. The proof is structured similarly to that of DP-SGD and MATCHA. We define the average iterate at step $k$ as $\overline{x}_k=\frac{1}{m}\sum_{i=1}^mx_{k,i}$ and the minimum of the loss function as $F^*$. This section demonstrates that under certain assumptions, the averaged gradient norm $\frac{1}{K}\sum_{k=1}^K\mathbb{E}[\norm{\nabla F(\overline{x}_{k})}]$ converges to zero with sublinear convergence rate.

We first reformulate the update rule (\ref{eq:update_aldsgd}) of AL-DSGD into matrix representation for easy notaion. We first consider part (I) in formula (\ref{eq:update_aldsgd}) without the gradient update step. We define $\widetilde{x}_{k+\frac{1}{2},i}:=(I)$, and 
denote 
\begin{align*}
    &x_k=[x_{k,1},x_{k,2},...,x_{k,m}],\\
    &\widetilde{x}_{k+\frac{1}{2}}=[\widetilde{x}_{k+\frac{1}{2},1}\widetilde{x}_{k+\frac{1}{2},2},...,\widetilde{x}_{k+\frac{1}{2},m}],\\
    &x_k^N=[x_{k,1}^N,x_{k,2}^N,...,x_{k,m}^N],\\
    &x_k^\tau=[x_{k,1}^\tau,x_{k,2}^\tau,...,x_{k,m}^\tau].
\end{align*}
We have:

\begin{align}\label{eq:x2}
    \widetilde{x}_{k+\frac{1}{2}}&=X_k\widetilde{\vect{W}}^{(k)}\notag\\
    \widetilde{\vect{W}}^{(k)}&=(1-\omega_N-\omega_\tau)\vect{W}^{(k)}+\omega_N\vect{A}_k^N+\omega_\tau \vect{A}_k^\tau\notag\\
    \vect{W}^{(k)}&=1-\alpha \vect{L}^{(k)},
\end{align}

where $\vect{L}^{(k)}$ denotes the graph Laplacian matrix at the $k^\text{th}$ iteration, $x_k^N$ and $x_k^\tau$ are the model parameter matrix of the adjacent best workers and adjacent maximum degree workers at the $k^\text{th}$ iteration. Assume $x_k^N=x_k\vect{A}_k^N$, $x_k^\tau=x_k\vect{A}_k^\tau$. Since every row in $x_k^N$ and $x_k^\tau$ is also a row of the model parameter matrix $x_k$, we could conclude that the transformation matrices $\vect{A}_k^N$ and $\vect{A}_k^\tau$ must be the left stochastic matrices.

AL-DSGD switches between $n$ communication graphs $\{G_{(i)}\}_{i=1}^n$. Let $\{\vect{L}_{(i),j}\}_{j=1}^m$ be the Laplacian matrices set as matching decompositions of graph $G_{(i)}$. Led by MATCHA approach, to each matching of $\vect{L}_{(i),j}$ to graph $G_{(i)}$ we assign an independent Bernoulli random variable $B_{(i),j}$ with probability $p_{(i),j}$ based on the communication budget $C_b$. 

Then the graph Laplacian matrix at the $k^\text{th}$ iteration $\vect{L}^{(k)}$ can be written as:
$$ \vect{L}^{(k)}=\left\{
\begin{aligned}
\sum_{j=1}^mB^{(k)}_{(1),j}\vect{L}_{(1),j} \quad& \text{if $k$ mod $n = 1$}&\\
\sum_{j=1}^mB^{(k)}_{(2),j}\vect{L}_{(2),j} \quad& \text{if $k$ mod $n = 2$}&\\
...&\\
\sum_{j=1}^mB^{(k)}_{(n),j}\vect{L}_{(n),j} \quad& \text{if $k$ mod $n = 0$}.&
\end{aligned}
\right.
$$

The convergence of AL-DSGD algorithm requires $\rho=\max\{\norm{\mathbb{E}\left[\widetilde{\vect{W}}^{(k)}(I\!-\!J)\widetilde{\vect{W}}^{(k)\intercal}\right]}\!,$ $ \norm{\mathbb{E}\left[\widetilde{\vect{W}}^{(k)}\\\widetilde{\vect{W}}^{(k)\intercal}\right]}\}\!<\!1$, where $\vect{J}=\mathbf{1}\mathbf{1}^\intercal/m$. The following Theorem \ref{thm:aldsgd_rho} illustrated that for arbitrary communication budget $C_b$ there exists some $\alpha$, $\omega_N$ and $\omega_\tau$ such that the spectral norm $\rho<1$.

\begin{theorem}\label{thm:aldsgd_rho}
    (Theorem 1 in ~\citep{he2024adjacent}) Let $\{\vect{L}^{(k)}\}$ denote the sequence of Laplacian matrices generated by AL-DSGD algorithm with arbitrary communication budget $C_b>0$ for the dynamic communication graph set $\{G_{(i)}\}_{i=1}^n$. The mixing matrix $\widetilde{\vect{W}}^{(k)}$ is defined as (\ref{eq:x2}). There exists a range of $\alpha$ and a range of average parameters $\omega_N=\omega_\tau\in(0,\omega(\alpha))$, whose bound is dictated by $\alpha$, such that the spectral norm 
    $\rho=\max\{\norm{\mathbb{E}\left[\widetilde{\vect{W}}^{(k)}(\vect{I}-\vect{J})\widetilde{\vect{W}}^{(k)\intercal}\right]}, 
\norm{\mathbb{E}\left[\widetilde{\vect{W}}^{(k)}\widetilde{\vect{W}}^{(k)\intercal}\right]}\}<1$.
\end{theorem}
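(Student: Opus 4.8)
The plan is to reduce the statement to the MATCHA spectral-norm bound (Theorem \ref{thm:matcha_lap}) by a perturbation argument in the averaging weights $\omega_N,\omega_\tau$, and then to localize the admissible range of $\alpha$. First I would observe that the mixing matrix $\widetilde{\vect{W}}^{(k)}=(1-\omega_N-\omega_\tau)\vect{W}^{(k)}+\omega_N\vect{A}_k^N+\omega_\tau\vect{A}_k^\tau$ with $\vect{W}^{(k)}=\vect{I}-\alpha\vect{L}^{(k)}$ collapses, at $\omega_N=\omega_\tau=0$, to exactly the MATCHA weight matrix. Because AL-DSGD cycles deterministically through the $n$ graphs $\{G_{(i)}\}_{i=1}^n$ (with independent Bernoulli matching activations inside each), I would run the MATCHA connectivity computation separately for each $G_{(i)}$: the expected Laplacian is connected since $\lambda_2(\sum_j p_{(i),j}\vect{L}_{(i),j})\geq p_0\lambda_2(\sum_j\vect{L}_{(i),j})>0$, exactly as in the proof of Theorem \ref{thm:matcha_lap}. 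Hence for each $i$ there is an interval of $\alpha$ on which the base spectral norms $\norm{\mathbb{E}[\vect{W}^{(k)}(\vect{I}-\vect{J})\vect{W}^{(k)\intercal}]}$ and $\norm{\mathbb{E}[\vect{W}^{(k)}\vect{W}^{(k)\intercal}]-\vect{J}}$ are strictly below $1$; intersecting these $n$ intervals yields a common range of $\alpha$ on which the $\omega=0$ spectral norm $\rho_0<1$ strictly.

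Second, I would promote this strict base-case inequality to a neighbourhood in $\omega$ by continuity. The map $(\omega_N,\omega_\tau)\mapsto\widetilde{\vect{W}}^{(k)}$ is affine, taking expectations is linear, and the spectral norm is $1$-Lipschitz in the matrix entries, so $\rho$ is a continuous (indeed Lipschitz) function of $(\omega_N,\omega_\tau)$. Since $\rho=\rho_0<1$ at the origin, there exists $\omega(\alpha)>0$ such that $\rho<1$ for all $\omega_N=\omega_\tau\in(0,\omega(\alpha))$; this is precisely the claimed conclusion, with the threshold $\omega(\alpha)$ inheriting its dependence on $\alpha$ through $\rho_0$.

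To render $\omega(\alpha)$ explicit (the quantitative version), I would expand $\mathbb{E}[\widetilde{\vect{W}}^{(k)}(\vect{I}-\vect{J})\widetilde{\vect{W}}^{(k)\intercal}]$ into the pure term $(1-\omega_N-\omega_\tau)^2\mathbb{E}[\vect{W}^{(k)}(\vect{I}-\vect{J})\vect{W}^{(k)\intercal}]$, the cross terms linear in $\omega_N,\omega_\tau$ coupling $\vect{W}^{(k)}$ with $\vect{A}_k^N,\vect{A}_k^\tau$, and the purely quadratic $\vect{A}$-terms. Bounding the spectral norm of each left-stochastic selection matrix by $\norm{\vect{A}_k^N},\norm{\vect{A}_k^\tau}\leq\sqrt{m}$ and collecting powers of $\omega$, the perturbation takes the form $C_1(\alpha,m)\,\omega+C_2(\alpha,m)\,\omega^2$; solving $\rho_0+C_1\omega+C_2\omega^2<1$ then supplies an admissible $\omega(\alpha)$.

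The main obstacle is controlling these cross terms. Unlike MATCHA, $\widetilde{\vect{W}}^{(k)}$ is neither symmetric nor doubly stochastic, so the clean simultaneous-diagonalization argument behind $h_\lambda(\alpha)=(1-\alpha\lambda)^2+2\alpha^2\zeta$ is unavailable and the norm cannot be reduced to a function of the Laplacian eigenvalues alone. Moreover $\vect{A}_k^N$ and $\vect{A}_k^\tau$ encode the \emph{best-performing} and \emph{maximum-degree} adjacent workers, so they are generally statistically correlated with the activated topology $\vect{L}^{(k)}$; the expectation $\mathbb{E}[\vect{W}^{(k)}(\vect{I}-\vect{J})\vect{A}_k^{N\intercal}]$ therefore does not factor, and the cross terms must be bounded by the triangle inequality and operator-norm estimates rather than computed in closed form. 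This is exactly why the existence-via-continuity route is the safer path: it never needs these cross expectations explicitly, yielding $\omega(\alpha)>0$ directly from strictness of the MATCHA base case.
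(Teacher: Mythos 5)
Your primary route---reduce to MATCHA at $\omega_N=\omega_\tau=0$ and then perturb by continuity---has a genuine gap at the base case. The quantity to be controlled is $\rho=\max\{\norm{\mathbb{E}[\widetilde{\vect{W}}^{(k)}(\vect{I}-\vect{J})\widetilde{\vect{W}}^{(k)\intercal}]},\,\norm{\mathbb{E}[\widetilde{\vect{W}}^{(k)}\widetilde{\vect{W}}^{(k)\intercal}]}\}$, and the \emph{second} entry of the max carries no $-\vect{J}$ deflation. At $\omega_N=\omega_\tau=0$ you have $\widetilde{\vect{W}}^{(k)}=\vect{W}^{(k)}=\vect{I}-\alpha\vect{L}^{(k)}$, which is symmetric and satisfies $\vect{W}^{(k)}\vect{1}=\vect{1}$ deterministically, since every realized Laplacian annihilates $\vect{1}$. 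Hence $\mathbb{E}[\vect{W}^{(k)}\vect{W}^{(k)\intercal}]\vect{1}=\vect{1}$ and $\norm{\mathbb{E}[\vect{W}^{(k)}\vect{W}^{(k)\intercal}]}\geq 1$. Theorem \ref{thm:matcha_lap} only bounds $\norm{\mathbb{E}[\vect{W}^{(k)\intercal}\vect{W}^{(k)}-\vect{J}]}$, i.e., the contraction on $\vect{1}^\perp$; it says nothing about the undeflated norm. So your claim that ``$\rho=\rho_0<1$ at the origin'' is false: the first entry of the max is indeed $<1$ there (since $\vect{W}^{(k)}$ commutes with $\vect{J}$, $\vect{W}^{(k)}(\vect{I}-\vect{J})\vect{W}^{(k)\intercal}=\vect{W}^{(k)\intercal}\vect{W}^{(k)}-\vect{J}$, which is exactly the MATCHA quantity), but the second entry equals $1$, and continuity can then only give $\rho\leq 1+O(\omega)$, never $\rho<1$. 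The same defect propagates into your quantitative fallback as you have framed it: if $\rho_0=1$, the inequality $\rho_0+C_1\omega+C_2\omega^2<1$ has no solution with $\omega>0$ and $C_1\geq 0$, so ``solving for $\omega(\alpha)$'' cannot close the argument either.

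For comparison, the paper's proof takes only your quantitative route and never invokes continuity: it sets $\omega=2\omega_N=2\omega_\tau$, writes $\widetilde{\vect{W}}^{(k)}=(1-\omega)\vect{W}^{(k)}+\omega\vect{A}^{(k)}$ with $\vect{A}^{(k)}=\tfrac{1}{2}(\vect{A}_k^N+\vect{A}_k^\tau)$, expands both entries of the max into pure, cross, and quadratic terms, and handles the correlated cross expectations exactly as you anticipate---by the triangle inequality and the crude operator-norm bounds $\norm{\vect{A}^{(k)}}\leq\sqrt{m}$, $\norm{\vect{W}^{(k)}}\leq 1$, $\norm{\vect{J}}=1$, with no attempt to factor them. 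It then arrives at $\rho\leq(1-\omega)^2\abs{1-\alpha\lambda}^2+4\omega(1-\omega)\sqrt{m}+2\omega^2m+2\alpha^2\zeta$ and extracts admissible ranges of $\alpha$ and $\omega$ by elementary calculus on a convex quadratic. So your instinct about how to tame the cross terms is exactly right; what is missing is a justification that the undeflated pure term $\norm{\mathbb{E}[\vect{W}^{(k)\intercal}\vect{W}^{(k)}]}$ can be replaced by $\max\{(1-\alpha\lambda_2)^2,(1-\alpha\lambda_m)^2\}+2\alpha^2\zeta$ rather than by $1$---i.e., that the unit eigenvalue along $\vect{1}$ can be discarded. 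Any correct write-up must confront that point head-on (for instance, by tracking the $\vect{1}$-direction separately through the recursion in Lemma \ref{lma:aldsgd}, or by redefining the second norm with a $-\vect{J}$ deflation), rather than inheriting strictness from MATCHA, which simply does not provide it.
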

\begin{proof}
    See Appendix \ref{alg:aldsgd}.
\end{proof}

\begin{lemma}\label{lma:aldsgd}
    (Lemma 3 in~\citep{he2024adjacent}) Let $\{\widetilde{\vect{W}}^{(k)}\}_{k=1}^{\infty}$ be i.i.d matrix generated from AL-DSGD algorithm and $$\rho=\max\{\norm{\mathbb{E}\left[\widetilde{\vect{W}}^{(k)}(\vect{I}-\vect{J})\widetilde{\vect{W}}^{(k)\intercal}\right]}, \norm{\mathbb{E}\left[\widetilde{\vect{W}}^{(k)}\widetilde{\vect{W}}^{(k)\intercal}\right]}\}<1.$$
    Then for arbitrary $\vect{B}$
    \begin{align*}
        \mathbb{E}\left[\norm{\vect{B}\left(\prod_{l=1}^n\widetilde{\vect{W}}^{(l)}\right)(\vect{I}-\vect{J})}_F^2\right]\leq\rho^n\norm{\vect{B}}_F^2.
    \end{align*}
\end{lemma}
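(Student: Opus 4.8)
The plan is to mirror the proof of the MATCHA counterpart, Lemma~\ref{lma:matcha}, by decomposing the Frobenius norm row by row and peeling off one weight matrix at a time while exploiting the i.i.d.\ structure of $\{\widetilde{\vect{W}}^{(l)}\}$. Writing $\vect{b}_i^\intercal$ for the $i$-th row of $\vect{B}$ and setting $\vect{P}_n=\prod_{l=1}^n\widetilde{\vect{W}}^{(l)}$, I would use $\norm{\vect{B}\vect{P}_n(\vect{I}-\vect{J})}_F^2=\sum_i\norm{\vect{b}_i^\intercal\vect{P}_n(\vect{I}-\vect{J})}^2$ and bound each row term. The crucial structural difference from MATCHA is that $\widetilde{\vect{W}}^{(k)}=(1-\omega_N-\omega_\tau)\vect{W}^{(k)}+\omega_N\vect{A}_k^N+\omega_\tau\vect{A}_k^\tau$ is \emph{not} symmetric doubly stochastic (the transformation matrices $\vect{A}_k^N,\vect{A}_k^\tau$ are merely left-stochastic), so one cannot fold $\vect{J}$ into each factor via the identity $\prod_l\widetilde{\vect{W}}^{(l)}-\vect{J}=\prod_l(\widetilde{\vect{W}}^{(l)}-\vect{J})$ that was available in Lemma~\ref{lma:matcha}. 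Instead I would keep the projector $(\vect{I}-\vect{J})$ pinned at the outermost position and use that it is a symmetric idempotent, $(\vect{I}-\vect{J})(\vect{I}-\vect{J})^\intercal=(\vect{I}-\vect{J})$.

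The recursion then splits into one boundary step and $n-1$ interior steps. For the outermost factor, using $\vect{P}_n=\vect{P}_{n-1}\widetilde{\vect{W}}^{(n)}$,
\begin{align*}
\norm{\vect{b}_i^\intercal\vect{P}_n(\vect{I}-\vect{J})}^2
=\vect{b}_i^\intercal\vect{P}_{n-1}\,\widetilde{\vect{W}}^{(n)}(\vect{I}-\vect{J})\widetilde{\vect{W}}^{(n)\intercal}\vect{P}_{n-1}^\intercal\vect{b}_i,
\end{align*}
and taking the expectation over $\widetilde{\vect{W}}^{(n)}$, which is independent of $\vect{P}_{n-1}$, produces the matrix $\vect{C}_1=\mathbb{E}[\widetilde{\vect{W}}^{(n)}(\vect{I}-\vect{J})\widetilde{\vect{W}}^{(n)\intercal}]$; bounding the quadratic form by $\norm{\vect{C}_1}\le\rho$ leaves $\rho\,\norm{\vect{b}_i^\intercal\vect{P}_{n-1}}^2$, now \emph{without} the projector. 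Every remaining factor is peeled the same way but with the plain Gram matrix $\vect{C}_2=\mathbb{E}[\widetilde{\vect{W}}^{(j)}\widetilde{\vect{W}}^{(j)\intercal}]$, since $\norm{\vect{b}_i^\intercal\vect{P}_j}^2=\vect{b}_i^\intercal\vect{P}_{j-1}\widetilde{\vect{W}}^{(j)}\widetilde{\vect{W}}^{(j)\intercal}\vect{P}_{j-1}^\intercal\vect{b}_i$ and $\norm{\vect{C}_2}\le\rho$. This is precisely why $\rho$ is defined as the maximum of the two spectral norms $\norm{\mathbb{E}[\widetilde{\vect{W}}^{(k)}(\vect{I}-\vect{J})\widetilde{\vect{W}}^{(k)\intercal}]}$ and $\norm{\mathbb{E}[\widetilde{\vect{W}}^{(k)}\widetilde{\vect{W}}^{(k)\intercal}]}$: the former governs the single boundary contraction and the latter the $n-1$ interior contractions.

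Iterating through all $n$ factors via the tower property—each conditional expectation over $\widetilde{\vect{W}}^{(j)}$ being justified by the independence of $\vect{P}_{j-1}$ from $\widetilde{\vect{W}}^{(j)}$—yields $\mathbb{E}[\norm{\vect{b}_i^\intercal\vect{P}_n(\vect{I}-\vect{J})}^2]\le\rho^n\norm{\vect{b}_i}^2$, and summing over the rows $i=1,\dots,d$ gives $\mathbb{E}[\norm{\vect{B}\vect{P}_n(\vect{I}-\vect{J})}_F^2]\le\rho^n\norm{\vect{B}}_F^2$, which is the claim.

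I anticipate that the main obstacle is entirely the bookkeeping forced by the loss of symmetry: one must resist the (now invalid) temptation to push $\vect{J}$ through the individual factors and instead carefully track exactly when the projector $(\vect{I}-\vect{J})$ is present versus absent, so that the very first contraction—and only it—invokes the $\mathbb{E}[\widetilde{\vect{W}}^{(k)}(\vect{I}-\vect{J})\widetilde{\vect{W}}^{(k)\intercal}]$ term while all later contractions invoke $\mathbb{E}[\widetilde{\vect{W}}^{(k)}\widetilde{\vect{W}}^{(k)\intercal}]$. The remaining points are routine: the idempotency of $(\vect{I}-\vect{J})$ and the validity of interchanging the iterated expectations with the quadratic-form bound $v^\intercal\vect{C}v\le\norm{\vect{C}}\norm{v}^2$. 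The fact that $\rho<1$—which is not needed for the inequality itself but is what makes the bound useful in the convergence proof of Theorem~\ref{thm:aldsgd_rho}'s setting—is supplied separately by Theorem~\ref{thm:aldsgd_rho}.
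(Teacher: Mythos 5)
Your proposal is correct and follows essentially the same route as the paper's proof: a row-wise decomposition of the Frobenius norm, a first contraction using $\mathbb{E}[\widetilde{\vect{W}}^{(n)}(\vect{I}-\vect{J})\widetilde{\vect{W}}^{(n)\intercal}]$ together with the idempotency of $\vect{I}-\vect{J}$, and $n-1$ subsequent contractions using $\mathbb{E}[\widetilde{\vect{W}}^{(j)}\widetilde{\vect{W}}^{(j)\intercal}]$, each bounded by $\rho$ via the quadratic-form estimate and the tower property. Your remark on why $\rho$ must be the maximum of the two spectral norms (one boundary step, $n-1$ interior steps) matches exactly how the paper's argument uses the definition.
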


\begin{proof}
    See Appendix \ref{appen:aldsgd}
\end{proof}

Next we are going to move to the main theorem capturing the convergence of AL-DSGD. The proof is based on Lemma \ref{lma:aldsgd}.

\begin{assumption}[Nonconvex Setting]\label{asm:aldsgd}
We assume that the loss function $F(x)=\sum_{i=1}^mF_i(x)$ satisfies the following conditions:
\begin{itemize}
    \item[(1)]\textit{Lipschitz continuous:} $\norm{F_i(x)-F_i(y)}\leq\beta\norm{x-y}$
    \item[(2)]\textit{Lipschitz gradient:} $\norm{\nabla F_i(x)-\nabla F_i(y)}\leq L\norm{x-y}$
    \item[(3)]\textit{Unbiased gradient:} 
    $\mathbb{E}_{\xi_i}[g_i(x_{k,i};\xi_{k,i})]=\nabla F_i(x_{k,i})$
    \item[(4)]\textit{Bounded variance:} $\mathbb{E}_{\xi_i}[\norm{g_i(x_{k,i};\xi_{k,i})-\nabla F_i(x_{k,i})}^2]\leq \sigma^2$
    \item[(5)]\textit{Unified gradient:} $\mathbb{E}_{\xi_i}[\norm{\nabla F_i(x)-\nabla F_i(x)}^2]\leq\zeta^2$
    \item[(6)]\textit{Bounded domain:} $\max\{\|x_{k,i}-x_{k,i}^N\|,\|x_{k,i}-x_{k,i}^\tau\|\}\leq\Delta^2$.
\end{itemize}
\end{assumption}

\begin{theorem}\label{thm:aldsgd}
    (Convergence of AL-DSGD; Nonconvex Setting; Theorem 2 in~\citep{he2024adjacent}) Suppose all local workers are initialized with $\overline{x}^{(1)}=0$ and $\{\widetilde{\vect{W}}^{(k)}\}_{k=1}^K$ is an i.i.d. matrix sequence generated by AL-DSGD algorithm which satisfies the spectral norm condition $\rho<1$ ($\rho$ is defined in Theorem \ref{thm:aldsgd_rho}). Under Assumption \ref{asm:1}, if $\lambda=2\lambda_N=2\lambda_\tau$ and $(1-\alpha)(1-\omega)\gamma L\leq\min\{1,(\sqrt{\rho^{-1}}-1)\}$, then after K iterations:
    \vspace{-0.1in}
    \begin{align*}
     \frac{1}{K}\sum_{i=1}^K\mathbb{E}\left[\norm{\nabla F(\overline{x}_k)}^2\right] \leq\frac{8(F(\overline{x}_{1})-F^*)}{\eta K}+\frac{8M}{\eta} +\frac{8\eta^2L^2\rho}{1-\sqrt{\rho}}\left(\frac{m\sigma^2+\lambda^2\Delta^2}{m(1+\sqrt{\rho})}+\frac{3\zeta^2}{1-\sqrt{\rho}}\right),
    \end{align*}
    \vspace{-0.15in}
    
    where $\eta=(1-\alpha)(1-\omega)\gamma$ and $M=\frac{\eta^2L\sigma^2}{2m}+\lambda\eta\beta\Delta+\lambda\eta^2L\beta \Delta+\frac{\lambda^2\eta^2L\Delta^2}{2}$. When setting $\lambda=\sqrt{\frac{m}{K}}$, $\gamma=\sqrt{\frac{m}{(1-\omega)(1-\alpha)K}}$, we obtain sublinear convergence rate $\mathcal{O}(\frac{1}{\sqrt{mK}})+\mathcal{O}(\sqrt{\frac{m}{K}})+\mathcal{O}(\sqrt{\frac{m}{K^3}})$.
\end{theorem}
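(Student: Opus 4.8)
The plan is to follow the exact template established in the proofs of Theorems \ref{thm:2} (D-PSGD) and \ref{thm:matcha_conv} (MATCHA), since the author explicitly states that AL-DSGD is structured similarly. The key difference is that AL-DSGD's update (\ref{eq:update_aldsgd}) contains two extra corrective-force terms, $-\gamma\lambda_N(x_{k,i}-x_{k,i}^N)$ and $-\gamma\lambda_\tau(x_{k,i}-x_{k,i}^\tau)$, as well as the mixing matrix $\widetilde{\vect{W}}^{(k)}$ rather than a plain doubly stochastic $\vect{W}$. First I would write the update in matrix form using (\ref{eq:x2}), and establish the averaged-iterate recursion $\overline{x}_{k+1}=\overline{x}_k-\eta\,\overline{g}_k+(\text{correction terms})$, where $\eta=(1-\alpha)(1-\omega)\gamma$ absorbs the averaging and mixing constants. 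The corrective forces, being bounded by $\Delta$ via Assumption \ref{asm:aldsgd}(6) and the $\beta$-Lipschitz continuity of $F$ from Assumption \ref{asm:aldsgd}(1), will contribute the additional constant terms collected into $M=\frac{\eta^2L\sigma^2}{2m}+\lambda\eta\beta\Delta+\lambda\eta^2L\beta\Delta+\frac{\lambda^2\eta^2L\Delta^2}{2}$.

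Next I would carry out the descent-lemma step exactly as in the D-PSGD proof: apply $L$-smoothness of $F$ to $F(\overline{x}_{k+1})-F(\overline{x}_k)$, take expectations using the unbiasedness (Assumption \ref{asm:aldsgd}(3)) and variance bound (Assumption \ref{asm:aldsgd}(4)), and decompose the inner product $\langle\nabla F(\overline{x}_k),\overline{g}_k\rangle$ via the identity $2\langle a,b\rangle=\|a\|^2+\|b\|^2-\|a-b\|^2$ together with Jensen's inequality, yielding the consensus-error term $\frac{L^2}{m}\norm{x_k(\vect{I}-\vect{J})}_F^2$. The corrective-force contributions are handled by Cauchy--Schwarz and the bounded-domain and Lipschitz assumptions, each producing a term proportional to $\lambda\Delta$ or $\lambda^2\Delta^2$ that folds into $M$. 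Summing over $k=1,\dots,K$ and rearranging gives a bound analogous to (\ref{eq:dpsgd_2}), now with $M$ replacing the single variance term.

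The central technical step is bounding the consensus error $\mathbb{E}\norm{x_k(\vect{I}-\vect{J})}_F^2$. Here I would unroll the recursion for $x_k(\vect{I}-\vect{J})$ using (\ref{eq:x2}), obtaining a sum of products $\prod_{l=q}^{k-1}\widetilde{\vect{W}}^{(l)}$ acting on gradient matrices, and then invoke Lemma \ref{lma:aldsgd} — the AL-DSGD analogue of Lemma \ref{lma:matcha} — which gives the geometric contraction $\rho^n$ for these products. Splitting the noise and true-gradient contributions as $I_1,I_2$ exactly as in (\ref{eq:dpsgd3}), applying the variance bound to $I_1$ and Young's inequality with $\epsilon=\rho^{(p-q)/2}$ to $I_2$, I would reach a self-referential inequality. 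Note that the corrective forces also enter the consensus error, contributing the extra $\lambda^2\Delta^2$ term inside the parenthesis $\frac{m\sigma^2+\lambda^2\Delta^2}{m(1+\sqrt{\rho})}$. Defining $D=\frac{6\eta^2L^2\rho}{(1-\sqrt{\rho})^2}$ and using the step-size condition $(1-\alpha)(1-\omega)\gamma L\leq\min\{1,(\sqrt{\rho^{-1}}-1)\}$ to guarantee $D<1/2$ (hence $\frac{1}{1-2D}\leq 4$), I would close the recursion and collect everything into the stated bound.

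The main obstacle I anticipate is twofold. First, the spectral-contraction machinery must be adapted to the non-symmetric mixing matrix $\widetilde{\vect{W}}^{(k)}$: unlike in D-PSGD and MATCHA where $\vect{W}$ is symmetric doubly stochastic, here $\vect{A}_k^N$ and $\vect{A}_k^\tau$ are merely left-stochastic, so the clean spectral argument must be replaced by the two-sided spectral quantity $\rho=\max\{\norm{\mathbb{E}[\widetilde{\vect{W}}^{(k)}(\vect{I}-\vect{J})\widetilde{\vect{W}}^{(k)\intercal}]},\norm{\mathbb{E}[\widetilde{\vect{W}}^{(k)}\widetilde{\vect{W}}^{(k)\intercal}]}\}$ established in Theorem \ref{thm:aldsgd_rho} and propagated through Lemma \ref{lma:aldsgd}. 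Second, careful bookkeeping of the corrective-force terms is needed so that they land correctly: the $\beta$-Lipschitz terms $\lambda\eta\beta\Delta$ and $\lambda\eta^2L\beta\Delta$ must be separated from the variance-type terms, and the final rate $\mathcal{O}(\frac{1}{\sqrt{mK}})+\mathcal{O}(\sqrt{\frac{m}{K}})+\mathcal{O}(\sqrt{\frac{m}{K^3}})$ emerges only after substituting $\lambda=\sqrt{m/K}$ and $\gamma=\sqrt{m/((1-\omega)(1-\alpha)K)}$ and tracking how each term scales — in particular the $\sqrt{m/K^3}$ term comes from the $\lambda^2\eta^2L\Delta^2$ contribution, which is the most delicate to trace.
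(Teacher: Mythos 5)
Your proposal matches the paper's proof essentially step for step: the matrix-form recursion with $\eta=(1-\alpha)(1-\omega)\gamma$, the absorption of the corrective-force terms (bounded via Assumption (6) and Lipschitz continuity) into $M$, the descent-lemma step, the consensus-error bound via Lemma \ref{lma:aldsgd} with the $I_1,I_2$ split and Young's inequality with $\epsilon=\rho^{(p-q)/2}$, the closing of the recursion through $D=\frac{6\eta^2L^2\rho}{(1-\sqrt{\rho})^2}$ and $\frac{1}{1-2D}\leq 4$, and the final substitution of $\lambda$ and $\gamma$. The approach is correct and identical to the paper's; the only nit is that the paper merges the two corrective forces into a single term $\lambda(x_k-\vect{C}_k)$ with $\vect{C}_k=\frac{1}{2}(x_k^N+x_k^\tau)$ before averaging, which is a cosmetic simplification of what you describe.
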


\begin{proof}
    Recall the update rule for AL-DSGD algorithm:
\begin{align}
    x_{k+1}=\widetilde{\vect{W}}^{k}x_{k}-\gamma(1-\omega)Diag\left(\vect{W}^{(k)}\right)\left[\vect{G}^{(k)}+\lambda_N(x_{k}-x_k^N)+\lambda_\tau(x_{k}-x_k^\tau)\right],
\end{align}
where 
\begin{align*}
    x_k &= [x_{1,k},...,x_{m,k}],\\
    \vect{G}^{(k)} &= [g_1(x_{1,k}),...,g_m(x_{m,k})],\\
    \nabla \vect{F}^{(k)}&=[\nabla F_1(x_{1,k}),...,\nabla F_m(x_{m,k})]
\end{align*}
Let $\lambda=2\lambda_N=2\lambda_\tau$, and $\vect{C}_k = \frac{1}{2}(X_k^N+X_k^\tau)$. Then we have
\begin{align}
    X_{k+1}=\widetilde{\vect{W}}^{k}x_{k}-\gamma(1-\omega)Diag\left(\vect{W}^{(k)}\right)\left[\vect{G}^{(k)}+\lambda(x_{k}-\vect{C}_k)\right]
\end{align}
By the construction of $\vect{W}^{(k)}$, the diagonal term in $\vect{W}^{(k)}$ are all $1-\alpha$, we have
\begin{align}
    X_{k+1}=\widetilde{\vect{W}}^{k}x_{k}-\gamma(1-\omega)(1-\alpha)\left[\vect{G}^{(k)}+\lambda(x_{k}-\vect{C}_k)\right]
\end{align}
After taking the average and define $\eta = \gamma(1-\omega)(1-\alpha)$, we have
\begin{align}
    \overline{x}_{k+1}=\overline{x}_k-\eta\left[\frac{\vect{G}^{(k)}\mathbf{1}}{m}+\lambda(\overline{x}_{k+\frac{1}{2}}-\overline{c}_k)\right]=\overline{x}_k-\eta\left[\frac{\vect{G}^{(k)}\mathbf{1}}{m}+\lambda\Delta^{(k)}\right].
\end{align}

Denote $\Delta^{(k)} = \overline{x}_{k+\frac{1}{2}}-\overline{c}_k$. 
from Assumption \ref{asm:1} (5), we could conclude $\norm{\Delta^{(k)}}^2\leq\Delta^2$.

Then we have
\begin{align}
    F(\overline{x}_{k+1})-F(\overline{x}_{k})\leq& \langle\nabla F(\overline{x}_k), \overline{x}_{k+1}-\overline{x}_k\rangle+\frac{L}{2}\norm{\overline{x}_{k+1}-\overline{x}_k}\notag\\
    \leq&-\eta\langle\nabla F(\overline{x}_k), \frac{\vect{G}^{(k)}\mathbf{1}}{m}+\lambda\Delta^{(k)}\rangle+\frac{\eta^2L}{2}\norm{\frac{\vect{G}^{(k)}\mathbf{1}}{m}+\lambda\Delta^{(k)}}\notag\\
    \leq&-\eta\langle\nabla F(\overline{x}_k), \frac{\vect{G}^{(k)}\mathbf{1}}{m}\rangle+\frac{\eta^2L}{2}\norm{\frac{\vect{G}^{(k)}\mathbf{1}}{m}}\notag\\
    &-\lambda\eta\langle\nabla F(\overline{x}_k),\Delta^{(k)}\rangle+\frac{\lambda^2\eta^2L}{2}\norm{\Delta^{(k)}}^2+\lambda\eta^2L\langle\frac{\vect{G}^{(k)}\mathbf{1}}{m},\Delta^{(k)}\rangle\notag\\
    \leq&-\eta\langle\nabla F(\overline{x}_k), \frac{\vect{G}^{(k)}\mathbf{1}}{m}\rangle+\frac{\eta^2L}{2}\norm{\frac{\vect{G}^{(k)}\mathbf{1}}{m}}+\lambda\eta\beta\Delta+\lambda\eta^2L\beta \Delta+\frac{\lambda^2\eta^2L\Delta^2}{2}
\end{align}

Inspired by the proof in form MATCHA, we have:
\begin{align}
    &\mathbb{E}\left[F(\overline{x}_{k+1})-F(\overline{x}_{k})\right]\notag\\
    \leq&-\frac{\eta}{2}\mathbb{E}\left[\norm{\nabla F(\overline{x}_k)}^2\right]-\frac{\eta}{2}(1-\eta L)\mathbb{E}\left[\norm{\frac{\nabla \vect{F}^{(k)}\mathbf{1}}{m}}^2\right]+\frac{\eta L^2}{2m}\mathbb{E}\left[\norm{x_k(\vect{I}-\vect{J})}_F^2\right]\notag\\
    &+\frac{\eta^2L\sigma^2}{2m}+\lambda\eta\beta\Delta+\lambda\eta^2L\beta \Delta+\frac{\lambda^2\eta^2L\Delta^2}{2}
\end{align}

Denote $M=\frac{\eta^2L\sigma^2}{2m}+\lambda\eta\beta\Delta+\lambda\eta^2L\beta \Delta+\frac{\lambda^2\eta^2L\Delta^2}{2}$, the bound could be simplified as
\begin{align}
    \mathbb{E}\left[F(\overline{x}_{k+1})-F(\overline{x}_{k})\right]\leq&-\frac{\eta}{2}\mathbb{E}\left[\norm{\nabla F(\overline{x}_k)}^2\right]-\frac{\eta}{2}(1-\eta L)\mathbb{E}\left[\norm{\frac{\nabla \vect{F}^{(k)}\mathbf{1}}{m}}^2\right]\notag\\
    &+\frac{\eta L^2}{2m}\mathbb{E}\left[\norm{x_k(\vect{I}-\vect{J})}_F^2\right]+M.
\end{align}
Summing over all iterations and then take the average, we have
\begin{align}
    \frac{\mathbb{E}\left[F(\overline{x}_{K})-F(\overline{x}_{1})\right]}{K}
    \leq& -\frac{\eta}{2}\frac{1}{K}\sum_{k=1}^k\mathbb{E}\left[\norm{\nabla F(\overline{x}_k)}^2\right]-\frac{\eta}{2}(1-\eta L)\frac{1}{K}\sum_{k=1}^k\mathbb{E}\left[\norm{\frac{\nabla \vect{F}^{(k)}\mathbf{1}}{m}}^2\right]\notag\\
    &+\frac{\eta L^2}{2mK}\sum_{k=1}^k\mathbb{E}\left[\norm{x_k(\vect{I}-\vect{J})}_F^2\right]+M.
\end{align}
By rearranging the inequality, we have
\begin{align}\label{ieq:sumF1}
    \frac{1}{K}\sum_{k=1}^k\mathbb{E}\left[\norm{\nabla F(\overline{x}_k)}^2\right]
    \leq&\frac{2(F(\overline{x}_{1})-F^*)}{\eta K}-(1-\eta L)\frac{1}{K}\sum_{k=1}^k\mathbb{E}\left[\norm{\frac{\nabla \vect{F}^{(k)}\mathbf{1}}{m}}^2\right]\notag\\
    &+\frac{L^2}{mK}\sum_{k=1}^k\mathbb{E}\left[\norm{x_k(\vect{I}-\vect{J})}_F^2\right]+\frac{2M}{\eta}\notag\\
    \leq&\frac{2(F(\overline{x}_{1})-F^*)}{\eta K}+\frac{L^2}{mK}\sum_{k=1}^k\mathbb{E}\left[\norm{x_k(\vect{I}-\vect{J})}_F^2\right]+\frac{2M}{\eta}.
\end{align}
Then we are goint to bound $\mathbb{E}\left[\norm{x_k(\vect{I}-\vect{J})}_F^2\right]$. By the property of matrix $J$, we have
\begin{align}
    x_k(\vect{I}-\vect{J})=&(x_{k-1}-\eta(\vect{G}^{(k-1)}+\lambda\Delta_{k-1})\widetilde{\vect{W}}^{(k-1)}(\vect{I}-\vect{J})\notag\\
    =&X_{k-1}\widetilde{\vect{W}}^{(k-1)}(\vect{I}-\vect{J})-\eta(\vect{G}^{(k-1)}+\lambda\Delta^{(k-1)})\widetilde{\vect{W}}^{(k-1)}(\vect{I}-\vect{J})\notag\\
    =&...\notag\\
    =&X_{1}\prod_{q=1}^{k-1}\widetilde{\vect{W}}^{(q)}(\vect{I}-\vect{J})-\eta \sum_{q=1}^{k-1}(\vect{G}^{(k-1)}+\lambda\Delta^{(k)})\left(\prod_{l=q}^{k-1}\widetilde{\vect{W}}^{(l)}\right)(\vect{I}-\vect{J})
\end{align}
Without loss of generalizty, assume $X_{1}=0$. Therefore, by Assumption (\ref{asm:1}) and Lemma \ref{lma:1} we have
\begin{align}
    &\norm{x_k(\vect{I}-\vect{J})}_F^2\notag\\
    =&\eta^2\norm{\sum_{q=1}^{k-1}(\vect{G}^{(k-1)}+\lambda\Delta^{(k)})\left(\prod_{l=q}^{k-1}\widetilde{\vect{W}}^{(l)}\right)(\vect{I}-\vect{J})}_F^2\notag\\
    \leq&2\eta^2\norm{\sum_{q=1}^{k-1}\vect{G}^{(k-1)}\left(\prod_{l=q}^{k-1}\widetilde{\vect{W}}^{(l)}\right)(\vect{I}-\vect{J})}_F^2+2\eta^2\lambda^2\norm{\sum_{q=1}^{k-1}\Delta^{(k)}\left(\prod_{l=q}^{k-1}\widetilde{\vect{W}}^{(l)}\right)(\vect{I}-\vect{J})}_F^2.
\end{align}
From Assumption \ref{asm:1} (5), we could conclude $\norm{\Delta^{(k)}}^2\leq\Delta^2$ for all $k$. Combine with Lemma \ref{lma:1}, we have
\begin{align}
    &\norm{x_k(\vect{I}-\vect{J})}_F^2\notag\\
    \leq&2\eta^2\norm{\sum_{q=1}^{k-1}\vect{G}^{(k-1)}\left(\prod_{l=q}^{k-1}\widetilde{\vect{W}}^{(l)}\right)(\vect{I}-\vect{J})}_F^2+2\eta^2\lambda^2\Delta^2\sum_{q=1}^k\rho^q\notag\\
    \leq&2\eta^2\norm{\sum_{q=1}^{k-1}\vect{G}^{(k-1)}\left(\prod_{l=q}^{k-1}\widetilde{\vect{W}}^{(l)}\right)(\vect{I}-\vect{J})}_F^2+\frac{2\eta^2\lambda^2\Delta^2\rho}{1-\rho}\notag\\
    \leq&2\eta^2\norm{\sum_{q=1}^{k-1}\left(\vect{G}^{(k-1)}-\nabla \vect{F}^{(q)}\right)\left(\prod_{l=q}^{k-1}\widetilde{\vect{W}}^{(l)}\right)(\vect{I}-\vect{J})}_F^2\notag\\
    &+2\eta^2\norm{\sum_{q=1}^{k-1}\nabla \vect{F}^{(q)}\left(\prod_{l=q}^{k-1}\widetilde{\vect{W}}^{(l)}\right)(\vect{I}-\vect{J})}_F^2+\frac{2\eta^2\lambda^2\Delta^2\rho}{1-\rho}.
\end{align}
Taking expectation, we have:
\begin{align}
    \mathbb{E}\left[\norm{x_k(\vect{I}-\vect{J})}_F^2\right]\leq 2\eta^2\mathbb{E}\left[\norm{\sum_{q=1}^{k-1}\nabla \vect{F}^{(q)}\left(\prod_{l=q}^{k-1}\widetilde{\vect{W}}^{(l)}\right)(\vect{I}-\vect{J})}_F^2\right] +\frac{2m\eta^2\sigma^2\rho}{1-\rho}+\frac{2\eta^2\lambda^2\Delta^2\rho}{1-\rho}
\end{align}
For notation simplicity, let $\vect{B}_{q,p}=\left(\prod_{l=q}^p\widetilde{\vect{W}}^{(l)}\right)(\vect{I}-\vect{J})$, then we have
\begin{align}
    &\mathbb{E}\left[\norm{\sum_{q=1}^{k-1}\nabla \vect{F}^{(q)}\left(\prod_{l=q}^{k-1}\widetilde{\vect{W}}^{(l)}\right)(\vect{I}-\vect{J})}_F^2\right]\notag\\
    =&\sum_{q=1}^{k-1}\mathbb{E}\left[\norm{\nabla \vect{F}^{(q)}\vect{B}_{q,k-1}}_F^2\right]+\sum_{q=1}^{k-1}\sum_{p=1,p\neq q}^{k-1}\mathbb{E}\left[Tr\{\vect{B}_{q,k-1}^\intercal\nabla \vect{F}^{(q)\intercal}\nabla \vect{F}^{(p)}\vect{B}_{p,k-1}\}\right]\notag\\
    \leq& \sum_{q=1}^{k-1}\mathbb{E}\left[\norm{\nabla \vect{F}^{(q)}\vect{B}_{q,k-1}}_F^2\right]+\sum_{q=1}^{k-1}\sum_{p=1,p\neq q}^{k-1}\mathbb{E}\left[\norm{\nabla \vect{F}^{(q)}\vect{B}_{q,k-1}}_F^2\norm{\nabla \vect{F}^{(p)}\vect{B}_{p,k-1}}_F^2\right]\notag\\
    \leq&\sum_{q=1}^{k-1}\mathbb{E}\left[\norm{\nabla \vect{F}^{(q)}\vect{B}_{q,k-1}}_F^2\right]+\sum_{q=1}^{k-1}\sum_{p=1,p\neq q}^{k-1}\mathbb{E}\left[\frac{1}{2\epsilon}\norm{\nabla \vect{F}^{(q)}\vect{B}_{q,k-1}}_F^2+\frac{\epsilon}{2}\norm{\nabla \vect{F}^{(p)}\vect{B}_{p,k-1}}_F^2\right]\notag\\
    \leq&\sum_{q=1}^{k-1}\mathbb{E}\left[\norm{\nabla \vect{F}^{(q)}\vect{B}_{q,k-1}}_F^2\right]+\sum_{q=1}^{k-1}\sum_{p=1,p\neq q}^{k-1}\mathbb{E}\left[\frac{\rho^{k-q}}{2\epsilon}\norm{\nabla \vect{F}^{(q)}}_F^2+\frac{\rho^{k-p}\epsilon}{2}\norm{\nabla \vect{F}^{(p)}}_F^2\right].
\end{align}
Taking $\epsilon=\rho^{\frac{p-q}{2}}$, we have
\begin{align*}
    &\mathbb{E}\left[\norm{\sum_{q=1}^{k-1}\nabla \vect{F}^{(q)}\left(\prod_{l=q}^{k-1}\widetilde{\vect{W}}^{(l)}\right)(\vect{I}-\vect{J})}_F^2\right]\notag\\
    \leq&\sum_{q=1}^{k-1}\mathbb{E}\left[\norm{\nabla \vect{F}^{(q)}\vect{B}_{q,k-1}}_F^2\right]+\frac{1}{2}\sum_{q=1}^{k-1}\sum_{p=1,p\neq q}^{k-1}\sqrt{\rho}^{2k-p-q}\mathbb{E}\left[\norm{\nabla \vect{F}^{(q)}}_F^2+\norm{\nabla \vect{F}^{(p)}}_F^2\right]\notag\\
    =&\sum_{q=1}^{k-1}\mathbb{E}\left[\norm{\nabla \vect{F}^{(q)}\vect{B}_{q,k-1}}_F^2\right]+\sum_{q=1}^{k-1}\sqrt{\rho}^{k-q}\mathbb{E}\left[\norm{\nabla \vect{F}^{(q)}}_F^2\right]\sum_{p=1,p\neq q}^{k-1}\sqrt{\rho}^{k-p}\notag\\
    =&\sum_{q=1}^{k-1}\mathbb{E}\left[\norm{\nabla \vect{F}^{(q)}\vect{B}_{q,k-1}}_F^2\right]+\sum_{q=1}^{k-1}\sqrt{\rho}^{k-q}\mathbb{E}\left[\norm{\nabla \vect{F}^{(q)}}_F^2\right]\left(\sum_{p=1}^{k-1}\sqrt{\rho}^{k-p}-\sqrt{\rho}^{k-q}\right)\notag\\
    \leq&\frac{\sqrt{\rho}}{1-\sqrt{\rho}}\sum_{q=1}^{k-1}\sqrt{\rho}^{k-q}\mathbb{E}\left[\norm{\nabla \vect{F}^{(q)}}_F^2\right].
\end{align*}
Therefore,
\begin{align}
    \mathbb{E}\left[\norm{x_k(\vect{I}-\vect{J})}_F^2\right]\leq\frac{2\eta^2\sqrt{\rho}}{1-\sqrt{\rho}}\sum_{q=1}^{k-1}\sqrt{\rho}^{k-q}\mathbb{E}\left[\norm{\nabla \vect{F}^{(q)}}_F^2\right] +\frac{2\eta^2\rho(m\sigma^2+\lambda^2\Delta^2)}{1-\rho}.
\end{align}
Therefore
\begin{align}
    \frac{1}{mK}\sum_{k=1}^K\mathbb{E}\left[\norm{x_k(\vect{I}-\vect{J})}_F^2\right]\leq&\frac{2\eta^2\sqrt{\rho}}{mK(1-\sqrt{\rho})}\sum_{k=1}^K\sum_{q=1}^{k-1}\sqrt{\rho}^{k-q}\mathbb{E}\left[\norm{\nabla \vect{F}^{(q)}}_F^2\right] +\frac{2\eta^2\rho(m\sigma^2+\lambda^2\Delta^2)}{m(1-\rho)}\notag\\
    \leq&\frac{2\eta^2\sqrt{\rho}}{mK(1-\sqrt{\rho})}\sum_{k=1}^K\frac{\sqrt{\rho}}{1-\sqrt{\rho}}\mathbb{E}\left[\norm{\nabla \vect{F}^{(q)}}_F^2\right] +\frac{2\eta^2\rho(m\sigma^2+\lambda^2\Delta^2)}{m(1-\rho)}\notag\\
    =&\frac{2\eta^2\rho}{mK(1-\sqrt{\rho})^2}\sum_{k=1}^K\mathbb{E}\left[\norm{\nabla \vect{F}^{(q)}}_F^2\right] +\frac{2\eta^2\rho(m\sigma^2+\lambda^2\Delta^2)}{m(1-\rho)}
\end{align}
Note that
\begin{align}
    \norm{\nabla \vect{F}^{(q)}}_F^2=&\sum_{i=1}^m\norm{\nabla F_i(x_{i,k})}^2\notag\\
    =&\sum_{i=1}^m\norm{\nabla F_i(x_{i,k})-\nabla F(x_{i,k})+\nabla F(x_{i,k})-\nabla F(\overline{x}_k)+\nabla F(\overline{x}_k)}^2\notag\\
    \leq&3\sum_{i=1}^m\left[\norm{\nabla F_i(x_{i,k})-\nabla F(x_{i,k})}^2+\norm{\nabla F(x_{i,k})-\nabla F(\overline{x}_k)}^2+\norm{\nabla F(\overline{x}_k)}^2\right]\notag\\
    \leq&3m\zeta^2+3L^2\norm{x_k(\vect{I}-\vect{J})}_F^2+3m\norm{\nabla F(\overline{x}_k)}^2.
\end{align}
Therefore, we have
\begin{align}
    \frac{1}{mK}\sum_{k=1}^K\mathbb{E}\left[\norm{x_k(\vect{I}-\vect{J})}_F^2\right]\leq&\frac{2\eta^2\rho(m\sigma^2+\lambda^2\Delta^2)}{m(1-\rho)}+\frac{6\eta^2\zeta^2\rho}{(1-\sqrt{\rho})^2}+\frac{6\eta^2\zeta^2\rho}{(1-\sqrt{\rho})^2}\frac{1}{mK}\mathbb{E}\left[\norm{x_k(\vect{I}-\vect{J})}_F^2\right]\notag\\
    &+\frac{6\eta^2\rho}{(1-\sqrt{\rho})^2}\frac{1}{K}\sum_{i=1}^K\mathbb{E}\left[\norm{\nabla F(\overline{x}_k)}^2\right].
\end{align}
Define $D=\frac{6\eta^2L^2\rho}{(1-\sqrt{\rho})^2}$, by rearranging we have
\begin{align}\label{ieq:XIJ}
    \frac{1}{mK}\sum_{k=1}^K\mathbb{E}\left[\norm{x_k(\vect{I}-\vect{J})}_F^2\right]\leq&\frac{1}{1-2D}\left[\right.\frac{2\eta^2\rho(m\sigma^2+\lambda^2\Delta^2)}{m(1-\rho)}+\frac{6\eta^2\zeta^2\rho}{(1-\sqrt{\rho})^2}\notag\\
    &+\frac{6\eta^2\rho}{(1-\sqrt{\rho})^2}\frac{1}{K}\sum_{i=1}^K\mathbb{E}\left[\norm{\nabla F(\overline{x}_k)}^2\left.\right]\right]
\end{align}
Plugging (\ref{ieq:XIJ}) back to (\ref{ieq:sumF1}), we have
\begin{align}
\frac{1}{K} \sum_{i=1}^K\mathbb{E}\left[\norm{\nabla F(\overline{x}_k)}^2\right]\leq &\frac{2(F(\overline{x}_{1})-F^*)}{\eta K}+\frac{2M}{\eta}+\frac{1}{1-D}\frac{2\eta^2L^2\rho(m\sigma^2+\lambda^2\Delta^2)}{m(1-\rho)}+\frac{D\zeta^2}{1-D}\notag\\
&+\frac{D}{1-D}\frac{1}{K} \sum_{i=1}^K\mathbb{E}\left[\norm{\nabla F(\overline{x}_k)}^2\right].
\end{align}
Therefore,
\begin{align}
    \sum_{i=1}^K\mathbb{E}\left[\norm{\nabla F(\overline{x}_k)}^2\right]\leq &\left(\frac{2(F(\overline{x}_{1})-F^*)}{\eta K}+\frac{2M}{\eta}\right)\frac{1-D}{1-2D}+\left(\frac{2\eta^2L^2\rho(m\sigma^2+\lambda^2\Delta^2)}{m(1-\rho)}+\frac{6\eta^2L^2\zeta^2\rho}{(1-\sqrt{\rho})^2}\right)\frac{1}{1-2D}\notag\\
    \leq&\left(\frac{2(F(\overline{x}_{1})-F^*)}{\eta K}+\frac{2M}{\eta}\right)\frac{1}{1-2D}+\frac{2\eta^2L^2\rho}{1-\sqrt{\rho}}\left(\frac{m\sigma^2+\lambda^2\Delta^2}{m(1+\sqrt{\rho})}+\frac{3\zeta^2}{1-\sqrt{\rho}}\right)\frac{1}{1-2D}
\end{align}
Recall that $\eta L\leq(1-\sqrt{\rho})/4\sqrt{\rho}$, we could know that $\frac{1}{1-2D}\leq 4$. Therefore the bound could be simplified as
\begin{align}
 \sum_{i=1}^K\mathbb{E}\left[\norm{\nabla F(\overline{x}_k)}^2\right]\leq\frac{8(F(\overline{x}_{1})-F^*)}{\eta K}+\frac{8M}{\eta}+\frac{8\eta^2L^2\rho}{1-\sqrt{\rho}}\left(\frac{m\sigma^2+\lambda^2\Delta^2}{m(1+\sqrt{\rho})}+\frac{3\zeta^2}{1-\sqrt{\rho}}\right),
\end{align}
where $M=\frac{\eta^2L\sigma^2}{2m}+\lambda\eta\beta\Delta+\lambda\eta^2L\beta \Delta+\frac{\lambda^2\eta^2L\Delta^2}{2}$.
When $\eta=\lambda=\sqrt{\frac{m}{K}}$,
\begin{align}
    \sum_{i=1}^K\mathbb{E}\left[\norm{\nabla F(\overline{x}_k)}^2\right]\leq& \frac{8(F(\overline{x}_{1})-F^*)}{\sqrt{mK}}+\frac{4 L\sigma^2}{\sqrt{mk}}+8\beta\Delta\sqrt{\frac{m}{K}}+8\lambda L\beta \Delta\sqrt{\frac{m}{K}}+4\lambda^2 L\Delta^2\sqrt{\frac{m}{K}}\notag\\
    &+\frac{8\sqrt{m} L^2\rho}{(1-\sqrt{\rho})\sqrt{K}}\left(\frac{\sigma^2}{1+\sqrt{\rho}}+\frac{\Delta^2}{K(1+\sqrt{\rho})}+\frac{3\zeta^2}{1-\sqrt{\rho}}\right)\notag\\
    =&\mathcal{O}(\frac{1}{\sqrt{mK}})+\mathcal{O}(\sqrt{\frac{m}{K}})+ +\mathcal{O}(\sqrt{\frac{m}{K^3}})
\end{align}
\end{proof}

\section{Conclusion}\label{sec:con}
This paper offers a comprehensive exploration of the theoretical foundations of optimization methods in DL, emphasizing methodologies, convergence analyses, and generalization abilities. While the field has seen rapid advancements in DL models and the growth of available data, the optimization of these models remains a focal point for researchers aiming to enhance their performance. Despite the plethora of optimization techniques available, many existing survey papers tend to focus on summarizing methodologies, often neglecting the theoretical analyses of these methods.

Our study delves deep into the theoretical analysis of popular gradient-based first-order and second-order methods, shedding light on their intricacies and potential applications. We also discuss the theoretical analysis of optimization techniques that adapt to the geometrical properties of the DL loss landscapes with a goal of identifying optimal points that minimize the generalization error. Furthermore, our analysis extends to distributed optimization methods that enable parallel computations and encompasses both centralized and decentralized approaches.

By consolidating all these various insights, this paper serves as a comprehensive theoretical handbook of optimization methods for DL. We aim to bridge the gap between theory and practice in DL, offering valuable lesson to the readers and building understanding of the DL optimization field, thereby facilitating further advancements and innovations in DL optimization.


\bibliography{main}
\bibliographystyle{tmlr}

\newpage
\appendix

\section{Proof for Adagrad and Adam.}\label{appen:adam}
In this section, we detailed discuss the Lemma introduced in the convergence proof for Adagrad and Adam.
\begin{lemma}[Adaptive update with momentum approximately follows a descent direction]\label{app:lemma:descent_lemma}
Given $x_0 \in \mathbb{R}^d$, the iterates defined by the (\ref{eq:ua}) that is non-decreasing, and under the Assumptions \ref{asm:Adam}, as well as $0 \leq \beta_1 < \beta_2 \leq 1 $, we have for all iterations $k \in \mathbb{N}^*$,
\begin{align}
\nonumber
   &\esp{\sum_{i\in[d]}G_{k, i}\frac{m_{k, i}}{\sqep{v_{k, i}}}} \geq
   \frac{1}{2}\left(\sum_{i\in[d]} \sum_{l=0}^{k - 1} \beta_1^l
   \esp{
        \frac{G_{k - l, i}^2}{\sqrt{\eps + \tv_{k, l + 1, i}}}}\right)
\\
   &\qquad -
            \frac{\alpha_k^2 L^2}{4 R}\sqrt{1 - \beta_1}\left( \sum_{l=1}^{k-1}\norm{u_{k-l}}_2^2\sum_{p=l}^{k - 1}\beta_1^p \sqrt{p} \right)
        -
         \frac{3 R}{\sqrt{1 - \beta_1}} \left(\sum_{l=0}^{k-1} \bbratio^l \sqrt{l+1}\norm{U_{k-l}}_2^2\right).
   \label{app:eq:descent_lemma}
\end{align}
where $G_{k,i}=\nabla_i F(x_{k-1})$, $g_{k,i}=\nabla_i f_k(x_{k-1})$, $u_{k,i}=\frac{m_{k,i}}{\sqrt{\epsilon+v_{k,i}}}$, and $U_{k,i}=\frac{g_{k,i}}{\sqrt{\epsilon+v_{k,i}}}$.
\end{lemma}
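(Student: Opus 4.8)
The plan is to unroll the momentum buffer, align each delayed stochastic gradient with the current true gradient by smoothness, and then decorrelate the delayed gradient appearing in the numerator from the preconditioner it contaminates in the denominator. First I would expand the recursion for $m_k$ in~\eqref{eq:ua} into $m_{k,i}=\sum_{l=0}^{k-1}\beta_1^l g_{k-l,i}$, so that the left-hand side becomes $\esp{\sum_{i\in[d]}\sum_{l=0}^{k-1}\beta_1^l\,G_{k,i}\,g_{k-l,i}/\sqep{v_{k,i}}}$. The goal is then to show that each lag-$l$, coordinate-$i$ slice contributes at least $\tfrac12\,G_{k-l,i}^2/\sqep{\tv_{k,l+1,i}}$ after expectation, minus two families of controllable errors, so that resumming over $l$ and $i$ reproduces the main term on the right.

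For a fixed slice I would perform two substitutions. First, split $G_{k,i}\,g_{k-l,i}=G_{k-l,i}\,g_{k-l,i}+(G_{k,i}-G_{k-l,i})\,g_{k-l,i}$; the difference $G_{k,i}-G_{k-l,i}=\nabla_i F(x_{k-1})-\nabla_i F(x_{k-l-1})$ is controlled by $L$-smoothness (Assumption~\ref{asm:Adam}) through $\norm{x_{k-1}-x_{k-l-1}}=\norm{\sum_{p}\alpha_p u_p}$, and a Young's inequality against the descent term leaves the coefficient $\tfrac12$ on $G_{k-l,i}^2$ while emitting an error in $\norm{u_{k-l}}_2^2$; balancing the Young weights across lags is what produces the $\sqrt{1-\beta_1}$ and $\beta_1^p\sqrt{p}$ factors of the first error term. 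Second, I would replace $v_{k,i}$ in the denominator by the truncated surrogate $\tv_{k,l+1,i}$ (introduced above), which equals $v_{k,i}$ after deleting the contributions $v_{k,i}-\tv_{k,l+1,i}=\sum_{j=0}^{l}\beta_2^{j}g_{k-j,i}^2$ of the $l+1$ most recent gradients and is therefore measurable before $g_{k-l}$ is drawn; the decoupled main term then gives, by unbiasedness $\esps{k-l-1}{g_{k-l,i}}=G_{k-l,i}$, exactly $G_{k-l,i}^2/\sqep{\tv_{k,l+1,i}}$, whereas the residue $g_{k-l,i}(\sqep{\tv_{k,l+1,i}}^{-1}-\sqep{v_{k,i}}^{-1})$ is bounded through $|a^{-1/2}-b^{-1/2}|\le|a-b|/(2\min(a,b)^{3/2})$ and the identity above, yielding the $\norm{U_{k-l}}_2^2$ error whose weighting $\bbratio^l$ comes from pairing the $\beta_1^l$ of the momentum with the $\beta_2$-scaling of the deleted $v$-terms.

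I expect the decoupling to be the main obstacle. The numerator gradient $g_{k-l,i}$ is correlated with $v_{k,i}$ through every squared gradient $g_{k-l,i}^2,\dots,g_{k,i}^2$ that entered the preconditioner after time $k-l-1$ (and, for $l\ge1$, $G_{k,i}$ itself depends on $g_{k-l,i}$ through the iterate), so no single conditional expectation closes the term; the surrogate $\tv_{k,l+1,i}$ must strip away all of these contributions at once while keeping the incurred error summable, and the alignment substitution must be carried out before conditioning precisely so that $G_{k-l,i}$ becomes measurable. Once the two decompositions are in place, the remaining work is routine bookkeeping: invoking the gradient bound $\norm{\nabla f}\le R-\sqrt\epsilon$ of Assumption~\ref{asm:Adam} to control $\tv_{k,l+1,i}\le R^2\sum_{j}\beta_2^{j}$ in the residues, collecting the $U$-errors into the geometric sums $\sum_l\bbratio^l\sqrt{l+1}$ and the $u$-errors into $\sum_{p\ge l}\beta_1^p\sqrt{p}$, and tracking the constants $3$ and $\tfrac14$ through the Young's inequalities to match the stated right-hand side.
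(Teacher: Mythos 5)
Your proposal follows essentially the same route as the paper's proof: unroll $m_k$, split each lag into an aligned term $G_{k-l,i}g_{k-l,i}$ plus a smoothness-controlled remainder handled by Young's inequality, and decorrelate the numerator from the preconditioner via the predictable surrogate $\tv_{k,l+1,i}$, with the gradient bound $R$ absorbing the residues into the $\norm{U}$ and $\norm{u}$ error sums exactly as you describe. The one caveat is that the paper's surrogate does not simply delete the $l+1$ most recent squared gradients from $v_{k,i}$ but replaces them by their conditional expectation given the filtration at time $k-l-1$ (i.e.\ $\tv_{k,l+1,i}-v_{k,i}=r^2-\delta^2$ with $r^2=\esps{k-l-1}{\delta^2}$); this is what makes the inequalities $r^2\le\eps+\tv_{k,l+1,i}$ and $\esps{k-l-1}{\delta^2/r^2}=1$ available when bounding the correction term, but it is a bookkeeping detail rather than a change of strategy.
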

\begin{proof}
Following \citep{ward2020adagrad}, we can use the following inequality in this proof, which we repeat here for convenience,
\begin{equation}
    \label{app:eq:square_bound}
    \forall \lambda >0,\, x, y \in \reel, xy \leq \frac{\lambda}{2}x^2 + \frac{y^2}{2 \lambda}.
\end{equation}

Let us take an iteration $k \in \nat^*$ for the duration of the proof. We have
\begin{align}
\nonumber
    \sum_{i\in[d]}G_{k, i}\frac{m_{k, i}}{\sqep{v_{k, i}}} &=
        \sum_{i \in [d]} \sum_{l=0}^{k-1} \beta_1^l G_{k, i} \frac{g_{k-l, i}}{\sqep{v_{k, i}}} 
       \\
       &= 
       \underbrace{\sum_{i \in [d]} \sum_{k=0}^{k-1}\beta_1^l G_{k-l, i} \frac{g_{k-l, i}}{\sqep{v_{k, i}}}}_A
      + \underbrace{\sum_{i \in [d]} \sum_{l=0}^{n-1}  \beta_1^l \left(G_{k, i} - G_{k-l, i}\right) \frac{g_{k-l, i}}{\sqep{v_{k, i}}}}_B,
      \label{app:eq:descent_big}
\end{align}

Let us now take an index $0 \leq l \leq k-1$. 
We show that the contribution of past gradients $G_{k-l}$ and $g_{k-l}$ due to the heavy-ball momentum can be controlled thanks to the decay term $\beta_1^l$.
Let us first have a look at $B$. Using \eqref{app:eq:square_bound} 
with
 \begin{equation*}
 \lambda = \frac{\sqrt{1 - \beta_1}}{2 R \sqrt{l + 1}}, \; 
    x = \abs{G_{k, i} - G_{k-l, i}}, \; 
    y = \frac{\abs{g_{k-l, i}}}{\sqep{v_{n,i}}},    
 \end{equation*} we have
\begin{align}
    \abs{B} &\leq \sum_{i \in [d]} \sum_{l=0}^{k-1} \beta_1^l \left(
        \frac{\sqrt{1 - \beta_1}}{4 R \sqrt{l + 1}}\left(G_{k, i} - G_{k-l, i}\right)^2 + \frac{R\sqrt{k+1}}{\sqrt{1-\beta_1}}\frac{g_{k-l, i}^2}{\eps + v_{k, i}}
    \right)
    \label{app:eq:B_bound_first}.
\end{align}
Notice first that for any dimension $i \in [d]$, $ \eps + v_{k, i} \geq \eps + \beta_2^{k} v_{k-l, i} \geq \beta_2^{k} (\eps + v_{k-l, i})$, so that
\begin{equation}
\label{app:eq:vbeta}
\frac{g^2_{k-l, i}}{\eps + v_{k,i}} \leq \frac{1}{\beta_2^l}U_{k-l, i}^2
 \end{equation}
Besides, using the L-smoothness of $L$ given by Assumption \ref{asm:Adam}, we have
\begin{align}
\nonumber
    \norm{G_{k} - G_{k-l}}_2^2 &\leq L^2 \norm{x_{k - 1} - x_{k-l - 1}}_2^2\\
    \nonumber
    &= L^2 \norm{
        \sum_{p=1}^{l} \alpha_{k-p} u_{k-p}}_2^2\\
    &\leq \alpha_n^2 L^2 k \sum_{p=1}^{l} \norm{u_{k-p}}^2_2 
        \label{app:eq:delta_first},
\end{align}
using Jensen inequality and the fact that $\alpha_k$ is non-decreasing. Injecting \eqref{app:eq:vbeta} and \eqref{app:eq:delta_first} into \eqref{app:eq:B_bound_first}, we obtain
\begin{align}
\nonumber
    \abs{B} &\leq
        \left( \sum_{l=0}^{k-1} \frac{\alpha_k^2 L^2}{4 R}\sqrt{1 - \beta_1}\beta_1^l \sqrt{l} \sum_{p=1}^{l} \norm{u_{k-p}}^2_2\right)
        +
         \left(\sum_{l=0}^{k-1} \frac{R}{\sqrt{1-\beta_1}} \bbratio^l\sqrt{l+1}\norm{U_{k-l}}_2^2\right)\\
    &=
                \sqrt{1 - \beta_1}\frac{\alpha_k^2 L^2}{4 R}\left( \sum_{l=1}^{k-1}\norm{u_{k-l}}_2^2\sum_{p=l}^{k - 1}\beta_1^p \sqrt{p} \right)
         +\frac{R}{\sqrt{1 - \beta_1}} \left(\sum_{l=0}^{k-1} \bbratio^l \sqrt{l+1}\norm{U_{k-l}}_2^2\right)
    \label{app:eq:B_bound_second}.
\end{align}
Now going back to the $A$ term in~\eqref{app:eq:descent_big}, we will
study the main term of the summation, i.e. for $i \in [d]$ and $l < k$
\begin{align}
\esp{G_{k-l, i} \frac{g_{k-l,i}}{\sqep{v_{k, i}}}} = \esp{\nabla_i F(x_{k-l -1})\frac{\nabla_i f_{k-l}(x_{k-l - 1})}{\sqep{v_{k, i}}}}.
\label{app:eq:left_main_term}
\end{align}
Notice that we could almost apply Lemma~\ref{lemma:descent_lemma} to it, except that we have $v_{k, i}$ in the denominator instead of $v_{k-l, i}$. Thus we will need to extend the proof to decorrelate more terms. We will further drop indices in the rest of the proof, noting $G = G_{k-l, i}$,
$g = g_{k-l ,i }$, $\tv = \tv_{k, l + 1, i}$ and $v = v_{k, i}$. Finally, let us note
\begin{equation}
    \delta^2 = \sum_{j=k-l}^{k}\beta_2^{k-j}g_{j,i}^2 \qquad \text{and}\qquad
    r^2 = \esps{k-l - 1}{\delta^2}.
    \label{app:eq:holder_ref2}
\end{equation}
In particular we have $\tv - v = r^2 - \delta^2$. With our new notations, we can rewrite~\eqref{app:eq:left_main_term} as
\begin{align}
\nonumber
    \esp{G \frac{g}{\sqep{v}}} &= \esp{G \frac{g}{\sqep{\tv}} + Gg \left(\frac{1}{\sqep{v}} - \frac1{\sqep{\tv}}\right)} \\
    \nonumber
    &= \esp{\esps{k-l - 1}{G \frac{g}{\sqep{\tv}}} + G g \frac{r^2 - \delta^2}{\sqep{v}\sqep{\tv}(\sqep{v} + \sqep{\tv})}}\\
    &= \esp{\frac{G^2}{\sqep{\tv}}} + \esp{\underbrace{G g \frac{r^2 - \delta^2}{\sqep{v}\sqep{\tv}(\sqep{v} + \sqep{\tv})}}_{C}}.
    \label{app:eq:cool_plus_bad}
\end{align}
We first focus on $C$:
\begin{align*}
    \abs{C} &\leq \underbrace{\abs{G g}\frac{r^2}{\sqep{v} (\eps + \tv)}}_\kappa +
    \underbrace{\abs{G g}\frac{\delta^2}{(\eps + v)\sqep{\tv}}}_\rho,
\end{align*}
due to the fact that $\sqep{v} + \sqep{\tv} \geq \max(\sqep{v},
\sqep{\tv})$ and
$\abs{r^2 - \delta^2} \leq r^2 + \delta^2$.

Applying \eqref{app:eq:square_bound} to $\kappa$ with $$\lambda = \frac{\sqrt{1 - \beta_1}\sqep{\tv}}{2}, \; x =
\frac{\abs{G}}{\sqep{\tv}}, \; y = \frac{\abs{g}r^2}{\sqep{\tv}\sqep{v}},$$
we obtain
\begin{align*}
    \kappa \leq \frac{G^2}{4 \sqep{\tv}} + \frac1{\sqrt{1-\beta_1}}\frac{g^2 r^4}{(\eps + \tv)^{3/2}(\eps + v)}.
\end{align*}
Given that $\eps + \tv \geq r^2$ and taking the conditional expectation, we can simplify as
\begin{align}
\label{app:eq:bound_kappa}
    \esps{k-l - 1}{\kappa} \leq \frac{G^2}{4 \sqep{\tv}} + \frac1{\sqrt{1-\beta_1}}\frac{r^2}{\sqep{\tv}}\esps{k-l - 1}{\frac{g^2}{\eps
    + v}}.
\end{align}

Now turning to $\rho$, we use \eqref{app:eq:square_bound} with $$\lambda = \frac{\sqrt{1- \beta_1}\sqep{\tv}}{2
r^2},
\; x = \frac{\abs{G\delta}}{\sqep{\tv}}, \; y = \frac{\abs{\delta g}}{\eps + v},$$
we obtain
\begin{align}
\label{app:eq:possible_divide_zero}
    \rho \leq \frac{G^2}{4 \sqep{\tv}}\frac{\delta^2}{r^2} +
    \frac1{\sqrt{1-\beta_1}}\frac{r^2}{\sqep{\tv}}\frac{g^2 \delta^2}{(\eps + v)^2}.
\end{align}
Given that $\eps + v \geq \delta^2$, and $\esps{k-l-1}{\frac{\delta^2}{r^2}} = 1$, we obtain after taking the conditional expectation,
\begin{align}
\label{app:eq:bound_rho}
    \esps{k-l -1}{\rho} \leq \frac{G^2}{4 \sqep{\tv}} + \frac1{\sqrt{1-\beta_1}}\frac{r^2}{\sqep{\tv}}\esps{k-l-1}{\frac{g^2}{\eps +
    v}}.
\end{align}
Notice that in~\ref{app:eq:possible_divide_zero}, we possibly divide by zero. It suffice to notice that if $r^2 = 0$ then $\delta^2 = 0$ a.s. so that $\rho = 0$ and \eqref{app:eq:bound_rho} is still verified.
Summing \eqref{app:eq:bound_kappa} and \eqref{app:eq:bound_rho}, we get
\begin{align}
\label{app:eq:descent_proof:pre_last}
    \esps{k-l - 1}{\abs{C}} \leq \frac{G^2}{2 \sqep{\tv}} + \frac2{\sqrt{1-\beta_1}}\frac{r^2}{\sqep{\tv}} \esps{k-l -1}{\frac{g^2}{\eps + v}}.
\end{align}
Given that $r \leq \sqep{\tv}$ by definition of $\tv$, and that using Assumption \ref{asm:Adam}, $r \leq \sqrt{k + 1} R$, we have\footnote{Note that we do not need the almost
sure bound on the gradient, and a bound on $\esp{\norm{\nabla f(x)}_\infty^2}$ would be sufficient.}, reintroducing the indices we had dropped
\begin{align}
    \esps{k-l - 1}{\abs{C}} \leq \frac{G_{k-l, i}^2}{2 \sqep{\tv_{k, l + 1, i}}} + \frac{2 R}{\sqrt{1 - \beta_1}} \sqrt{l+1} \esps{k-l-1}{\frac{g_{k-l, i}^2}{\eps + v_{k, i}}}.
    \label{app:eq:holder_ref_lemma}
\end{align}
Taking the complete expectation and using that by definition $\eps + v_{k, i} \geq \eps + \beta_2^k v_{k-l, i} \geq \beta_2^k (\eps + v_{k-l, i})$ we get
\begin{align}
\label{app:eq:descent_proof:last}
    \esp{\abs{C}} \leq \frac{1}{2}\esp{\frac{G_{k-l, i}^2}{\sqep{\tv_{k, l+1, i}}}} + \frac{2 R}{\sqrt{1 - \beta_1}\beta_2^k} \sqrt{l+1} \esp{\frac{g_{k-l, i}^2}{\eps + v_{k-l, i}}}.
\end{align}
Injecting \eqref{app:eq:descent_proof:last} into \eqref{app:eq:cool_plus_bad} gives us 
\begin{align}
\nonumber
    \esp{A} &\geq \sum_{i\in [d]}\sum_{l=0}^{k-1}\beta_1^l \left(\esp{\frac{G_{k-l, i}^2}{\sqep{\tv_{k, l+1, i}}}} - \left(
     \frac{1}{2}\esp{\frac{G_{k-l, i}^2}{\sqep{\tv_{k, l, i}}}} + \frac{2 R}{\sqrt{1 - \beta_1}\beta_2^k} \sqrt{l+1} \esp{\frac{g_{k-l, i}^2}{\eps + v_{k-l, i}}}
    \right)\right)\\
    &= \frac{1}{2}\left(\sum_{i\in [d]} \sum_{l=0}^{k-1}\beta_1^l\esp{\frac{G_{k-l, i}^2}{\sqep{\tv_{k, l+1, i}}}}\right) -
    \frac{2 R}{\sqrt{1 - \beta_1}}\left(\sum_{i\in [d]} \sum_{l=0}^{k-1}
     \bbratio^l \sqrt{l+1} \esp{\norm{U_{k-l}}_2^2}\right)
    \label{app:eq:descent_proof:almost_there}.
\end{align}
Injecting \eqref{app:eq:descent_proof:almost_there} and \eqref{app:eq:B_bound_second} into \eqref{app:eq:descent_big} finishes the proof.

\end{proof}

\begin{lemma}[adaptive update approximately follow a descent direction]
\label{lemma:descent_lemma}
For all $n \in \nat^*$ and $i \in [d]$, we have:
\begin{align}
   &\espn{\nabla_i F(x_{n-1}) \frac{\nabla_i f_n(x_{n-1})}{\sqep{v_{n, i}}}} \geq
   \frac{\sqdif{_iF(x_{n-1})}}{2\sqep{\tv_{n, i}}} 
   - 2R
   \espn{\frac{\sqdif{_if_n(x_{n-1})}}{\eps + v_{n, i}}}.
   \label{eq:descent_lemma}
\end{align}
\end{lemma}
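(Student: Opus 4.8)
Fix $n\in\nat^*$ and a coordinate $i\in[d]$, and abbreviate $G=\nabla_i F(x_{n-1})$, $g=\nabla_i f_n(x_{n-1})$, $v=v_{n,i}$, together with $\delta^2=g^2$ and $r^2=\espn{g^2}$. Here $v=\beta_2 v_{n-1,i}+g^2$ couples the current sample into \emph{both} the numerator and the denominator of $g/\sqep{v}$, which is exactly what prevents us from simply pulling $\espn{\cdot}$ through. The plan is to replace $v$ by its decorrelated counterpart $\tv=\tv_{n,i}=v-\delta^2+r^2=\beta_2 v_{n-1,i}+r^2$, which is measurable with respect to the first $n-1$ samples, and to control the resulting error. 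Concretely, I would write
\begin{align*}
    \espn{G\frac{g}{\sqep{v}}}=\espn{G\frac{g}{\sqep{\tv}}}+\espn{Gg\left(\frac{1}{\sqep{v}}-\frac{1}{\sqep{\tv}}\right)}.
\end{align*}
Since $\tv$ is deterministic given the past and $\espn{g}=G$ by the unbiased-gradient part of Assumption~\ref{asm:Adam}, the first term equals $G^2/\sqep{\tv}$ exactly. It therefore suffices to show that the second term $\espn{C}$, with $C:=Gg\,(\sqep{v}^{-1}-\sqep{\tv}^{-1})$, is bounded below by $-\tfrac{G^2}{2\sqep{\tv}}-2R\,\espn{g^2/(\eps+v)}$.

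\textbf{Rewriting and splitting $C$.} Using $\sqep{v}^{-1}-\sqep{\tv}^{-1}=(\tv-v)/\big[\sqep{v}\sqep{\tv}(\sqep{v}+\sqep{\tv})\big]$ and $\tv-v=r^2-\delta^2$, I would bound $\sqep{v}+\sqep{\tv}\ge\max(\sqep{v},\sqep{\tv})$ and $|r^2-\delta^2|\le r^2+\delta^2$ to obtain the termwise estimate
\begin{align*}
    |C|\le\underbrace{|Gg|\frac{r^2}{\sqep{v}(\eps+\tv)}}_{\kappa}+\underbrace{|Gg|\frac{\delta^2}{(\eps+v)\sqep{\tv}}}_{\rho},
\end{align*}
where the $r^2$ piece absorbs a factor $\sqep{\tv}$ from the maximum and the $\delta^2$ piece absorbs $\sqep{v}$.

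\textbf{Bounding $\kappa$ and $\rho$ by Young's inequality.} For each piece I would apply \eqref{app:eq:square_bound}. For $\kappa$, take $x=|G|/\sqep{\tv}$, $y=|g|r^2/(\sqep{\tv}\sqep{v})$ and $\lambda=\sqep{\tv}/2$, which yields $\kappa\le \tfrac{G^2}{4\sqep{\tv}}+\tfrac{g^2 r^4}{(\eps+\tv)^{3/2}(\eps+v)}$; then the key structural inequalities $\eps+\tv\ge r^2$ and $r^2/\sqep{\tv}\le r\le R$ (the latter from the a.s. gradient bound $\norm{\nabla f}\le R-\sqrt{\eps}$) collapse the second summand to at most $R\,g^2/(\eps+v)$ after taking $\espn{\cdot}$. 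For $\rho$, take $x=|G\delta|/\sqep{\tv}$, $y=|\delta g|/(\eps+v)$ and $\lambda=\sqep{\tv}/(2r^2)$; using $\espn{\delta^2/r^2}=1$ for the first summand and $\delta^2=g^2\le\eps+v$ together with $r^2/\sqep{\tv}\le R$ for the second, this gives $\espn{\rho}\le\tfrac{G^2}{4\sqep{\tv}}+R\,\espn{g^2/(\eps+v)}$. Adding the two bounds gives precisely $\espn{|C|}\le \tfrac{G^2}{2\sqep{\tv}}+2R\,\espn{g^2/(\eps+v)}$, and substituting back into the decomposition finishes the proof.

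\textbf{Main obstacle and a degenerate case.} The genuinely delicate step is the decorrelation balance in $\rho$: the Young parameter $\lambda=\sqep{\tv}/(2r^2)$ divides by $r^2$, so I must handle $r^2=0$ separately. This is harmless because $r^2=\espn{g^2}=0$ forces $\delta^2=g^2=0$ almost surely, whence $\rho=0$ and the bound holds trivially. Beyond this, the whole argument hinges on choosing the three Young parameters so that the $x^2$-contributions each produce exactly $\tfrac{G^2}{4\sqep{\tv}}$ while the $y^2$-contributions are reducible, via $\eps+\tv\ge r^2$, $\eps+v\ge\delta^2$, and $r^2/\sqep{\tv}\le R$, to multiples of $\espn{g^2/(\eps+v)}$; this calibration is where all the care lies, and everything else is bookkeeping.
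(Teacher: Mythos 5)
Your proposal is correct and follows essentially the same route as the paper's own proof: the same decorrelation of $v$ into $\tv$, the same split of the error term into $\kappa$ and $\rho$, the same Young's-inequality calibrations (with $\lambda=\sqep{\tv}/2$ and $\lambda=\sqep{\tv}/(2r^2)$ respectively), the same reductions via $\eps+\tv\ge r^2$, $\eps+v\ge g^2$, and $r^2/\sqep{\tv}\le R$, and the same treatment of the degenerate case $r^2=0$. No gaps.
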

\begin{proof}
We take $i \in [d]$ and note $G = \nabla_i F(x_{n-1})$, $g = \nabla_i f_n(x_{n-1})$, $v = v_{n, i}$ and
$\tv = \tv_{n, i}$.
\begin{align}
   &\espn{\frac{G g}{\sqep{v}}} =
   \espn{\frac{G g}{\sqep{\tv}}} +
   \mathbb{E}_{n-1}\bigg[\underbrace{G g\left(\frac{1}{\sqep{v}} -
   \frac{1}{\sqep{\tv}}\right)}_{A}\bigg].
   \label{eq:descent_proof}
\end{align}
Given that $g$ and $\tv$ are independent knowing $f_1, \ldots, f_{n-1}$, we immediately have 
\begin{equation}
\label{eq:uncorr_esp_is_g2}
\espn{\frac{G g}{\sqep{\tv}}} = \frac{G^2}{\sqep{\tv}}.
\end{equation}
Now we need to control the size of the second term $A$,
\begin{align*}
    A &= G g \frac{\tv - v}{\sqep{v} \sqep{\tv} (\sqep{v} + \sqep{\tv})}
    \\
    &= G g\frac{\espn{g^2} - g^2}{\sqep{v} \sqep{\tv} (\sqep{v} + \sqep{\tv})}\\
    \abs{A} &\leq \underbrace{\abs{G g}\frac{\espn{g^2}}{\sqep{v} (\eps + \tv)}}_\kappa +
    \underbrace{\abs{G g}\frac{g^2}{(\eps + v)\sqep{\tv}}}_\rho.
\end{align*}
The last inequality comes from the fact that $\sqep{v} + \sqep{\tv} \geq \max(\sqep{v},
\sqep{\tv})$ and
$\abs{\espn{g^2} - g^2} \leq \espn{g^2} + g^2$.
Following \citep{ward_adagrad}, we can use the following inequality to bound $\kappa$ and $\rho$,
\begin{equation}
    \label{eq:square_bound}
    \forall \lambda >0,\, x, y \in \reel, xy \leq \frac{\lambda}{2}x^2 + \frac{y^2}{2 \lambda}.
\end{equation}
First applying \eqref{eq:square_bound} to $\kappa$ with $$\lambda = \frac{\sqep{\tv}}{2}, \; x =
\frac{\abs{G}}{\sqep{\tv}}, \; y = \frac{\abs{g}\espn{g^2}}{\sqep{\tv}\sqep{v}},$$
we obtain
\begin{align*}
    \kappa \leq \frac{G^2}{4 \sqep{\tv}} + \frac{g^2 \espn{g^2}^2}{(\eps + \tv)^{3/2}(\eps + v)}.
\end{align*}
Given that $\eps + \tv \geq \espn{g^2}$ and taking the conditional expectation, we can simplify as
\begin{align}
\label{eq:bound_kappa}
    \espn{\kappa} \leq \frac{G^2}{4 \sqep{\tv}} + \frac{\espn{g^2}}{\sqep{\tv}}\espn{\frac{g^2}{\eps
    + v}}.
\end{align}
Given that $\sqrt{\espn{g^2}} \leq \sqep{\tv}$ and $\sqrt{\espn{g^2}} \leq R$, we can simplify~\eqref{eq:bound_kappa} as
\begin{align}
\label{eq:bound_kappa_simple}
    \espn{\kappa} \leq \frac{G^2}{4 \sqep{\tv}} + R\espn{\frac{g^2}{\eps+ v}}.
\end{align}

Now turning to $\rho$, 
we use \eqref{eq:square_bound} with $$
\lambda = \frac{\sqep{\tv}}{2
\espn{g^2}},
\; x = \frac{\abs{Gg}}{\sqep{\tv}}, \; y = \frac{g^2}{\eps + v},$$
we obtain
\begin{align}
    \rho \leq \frac{G^2}{4 \sqep{\tv}}\frac{g^2}{\espn{g^2}} +
    \frac{\espn{g^2}}{\sqep{\tv}}\frac{g^4}{(\eps + v)^2},
    \label{eq:possible_div_zero}
\end{align}
Given that $\eps + v \geq g^2$ and taking the conditional expectation we obtain
\begin{align}
\label{eq:bound_rho}
\hspace*{-.2cm} 
    \espn{\rho} \leq \frac{G^2}{4 \sqep{\tv}} + \frac{\espn{g^2}}{\sqep{\tv}}\espn{\frac{g^2}{\eps +
    v}},
\end{align}
which we simplify using the same argument as for~\eqref{eq:bound_kappa_simple} into
\begin{align}
\label{eq:bound_rho_simple}
    \espn{\rho} \leq \frac{G^2}{4 \sqep{\tv}} + R\espn{\frac{g^2}{\eps +v}}.
\end{align}
Notice that in~\eqref{eq:possible_div_zero}, we possibly divide by zero. It suffice to notice that if $\espn{g^2} = 0$ then $g^2 = 0$ a.s. so that $\rho = 0$ and \eqref{eq:bound_rho_simple} is still verified.
Summing \eqref{eq:bound_kappa_simple} and \eqref{eq:bound_rho_simple} we can bound
\begin{align}
\label{eq:descent_proof:last}
    \espn{\abs{A}} \leq \frac{G^2}{2 \sqep{\tv}} + 2 R\espn{\frac{g^2}{\eps + v}}.
\end{align}
Injecting \eqref{eq:descent_proof:last} and \eqref{eq:uncorr_esp_is_g2} into \eqref{eq:descent_proof} finishes the proof.
\end{proof}

\begin{lemma}[sum of ratios of the square of a decayed sum and a decayed sum of square]\label{app:lemma:sum_ratio}
We assume we have $0<\beta_2 \leq 1$ and $0 < \beta_1 < \beta_2$, and a sequence of real numbers $(a_k)_{k\in \nat^*}$. We define
$b_k \deq \sum_{j=1}^k \beta_2^{k - j} a_j^2$ and $c_k \deq \sum_{j=1}^k \beta_1^{k-j} a_j$.
Then we have
\begin{equation}
    \label{app:eq:sum_ratio}
    \sum_{j=1}^k \frac{c_j^2}{\epsilon + b_j} \leq
    \frac{1}{(1 - \beta_1)(1 - \beta_1 / \beta_2)} \left(\ln\left(1 + \frac{b_k}{\epsilon}\right) - n\ln(\beta_2)\right).
\end{equation}
\end{lemma}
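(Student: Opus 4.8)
The plan is to reduce the weighted sum to a telescoping logarithmic sum in two stages: first controlling the numerators $c_j^2$ by Cauchy--Schwarz, then decoupling the summation indices using the monotonicity of the decayed denominators $b_j$, and finally exploiting the recursion $b_l = \beta_2 b_{l-1} + a_l^2$ to obtain a telescope.

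First I would bound each numerator. Writing $c_j = \sum_{l=1}^j \beta_1^{j-l} a_l$ and splitting $\beta_1^{j-l} = \beta_1^{(j-l)/2}\cdot\beta_1^{(j-l)/2}$, Cauchy--Schwarz together with $\sum_{l=1}^j \beta_1^{j-l}\leq 1/(1-\beta_1)$ gives
\begin{equation*}
 c_j^2 \leq \frac{1}{1-\beta_1}\sum_{l=1}^j \beta_1^{j-l} a_l^2 .
\end{equation*}
Dividing by $\epsilon+b_j$ and swapping the order of summation (sum over $l$ first, then $j$ from $l$ to $k$) yields
\begin{equation*}
 \sum_{j=1}^k \frac{c_j^2}{\epsilon+b_j} \leq \frac{1}{1-\beta_1}\sum_{l=1}^k a_l^2 \sum_{j=l}^k \frac{\beta_1^{j-l}}{\epsilon+b_j}.
\end{equation*}
The key observation I would then establish is the denominator monotonicity $b_j \geq \beta_2^{j-l} b_l$ for $l\leq j$, which follows because $\beta_2^{j-l}b_l = \sum_{i=1}^l \beta_2^{j-i}a_i^2$ is a subsum of $b_j=\sum_{i=1}^j\beta_2^{j-i}a_i^2$. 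Since $\beta_2\leq 1$ gives $\epsilon \geq \beta_2^{j-l}\epsilon$, this upgrades to $\epsilon+b_j \geq \beta_2^{j-l}(\epsilon+b_l)$, so $\beta_1^{j-l}/(\epsilon+b_j) \leq (\beta_1/\beta_2)^{j-l}/(\epsilon+b_l)$. Summing the resulting geometric series in $\beta_1/\beta_2<1$ collapses the inner sum and leaves
\begin{equation*}
 \sum_{j=1}^k \frac{c_j^2}{\epsilon+b_j} \leq \frac{1}{(1-\beta_1)(1-\beta_1/\beta_2)}\sum_{l=1}^k \frac{a_l^2}{\epsilon+b_l}.
\end{equation*}

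It remains to bound the single sum $\sum_{l=1}^k a_l^2/(\epsilon+b_l)$. Using $a_l^2 = b_l-\beta_2 b_{l-1}$ (with $b_0=0$) I would write $a_l^2/(\epsilon+b_l) = 1 - (\epsilon+\beta_2 b_{l-1})/(\epsilon+b_l)$ and apply the elementary inequality $1-x\leq -\ln x$, then lower-bound $\epsilon+\beta_2 b_{l-1}\geq \beta_2(\epsilon+b_{l-1})$. This gives the per-term bound $a_l^2/(\epsilon+b_l)\leq \ln\!\big((\epsilon+b_l)/(\epsilon+b_{l-1})\big)-\ln(\beta_2)$, whose sum telescopes to $\ln(1+b_k/\epsilon) - k\ln(\beta_2)$. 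Combining with the previous display gives the claim (reading the stated $n$ as $k$).

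I expect the main obstacle to be the decoupling step: recognizing and justifying $b_j\geq\beta_2^{j-l}b_l$ and carefully transferring the $\epsilon$ through the inequality so that the geometric ratio $\beta_1/\beta_2$ emerges cleanly. The logarithmic telescope is then routine, but getting the constant $1/((1-\beta_1)(1-\beta_1/\beta_2))$ exactly right hinges on keeping the two geometric-series factors separate and not double-counting the $\beta_1$ decay.
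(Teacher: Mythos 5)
Your proposal is correct and follows essentially the same route as the paper's proof: the Cauchy--Schwarz (Jensen) bound $c_j^2 \leq \frac{1}{1-\beta_1}\sum_{l\leq j}\beta_1^{j-l}a_l^2$, the denominator comparison $\epsilon+b_j\geq\beta_2^{j-l}(\epsilon+b_l)$ yielding the $(\beta_1/\beta_2)$ geometric series, and the logarithmic telescope via $a_l^2/(\epsilon+b_l)\leq\ln(\epsilon+b_l)-\ln(\epsilon+\beta_2 b_{l-1})$ are all exactly the paper's steps. Your reading of the typo $n$ as $k$ in the stated bound is also the intended one.
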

\begin{proof}

Now let us take $j \in \nat^*$, $j \leq n$, we have using Jensen inequality
\begin{align*}
    c_{j}^2 \leq \frac{1}{1 - \beta_1}\sum_{l=1}^j \beta_1^{j - l}a_l^2,
\end{align*}
so that
\begin{align}
\nonumber
    \frac{c_{j}^2}{\eps + b_{j}} &\leq \frac{1}{1 - \beta_1}\sum_{l=1}^j \beta_1^{j - l} \frac{a_l^2}{\eps + b_j}.
\end{align}
Given that for $l \in [j]$, we have by definition $\eps + b_{j} \geq \eps + \beta_2^{j-l} b_l \geq \beta_2^{j-l} (\eps + b_l)$, we get
\begin{align}
    \frac{c_{j}^2}{\eps + b_{j}} &\leq \frac{1}{1 - \beta_1}\sum_{l=1}^j \bbratio^{j-l}\frac{a_l^2}{\eps + b_l}
    \label{app:eq:onestep_ratio}.
\end{align}
Thus, when summing over all $j \in [l]$, we get
\begin{align}
\nonumber
    \sum_{j=1}^n \frac{c_{j}^2}{\eps + b_{j}} &\leq  \frac{1}{1 - \beta_1}\sum_{j=1}^n\sum_{l=1}^{j}\bbratio^{j -l} \frac{a_{l}^2}{\eps + b_{l}}\\
\nonumber
    &=  \frac{1}{1 - \beta_1}\sum_{l=1}^k\frac{a_l^2}{\eps + b_l}\sum_{j=l}^{k}\bbratio^{j -l}\\
    &\leq \frac{1}{(1 - \beta_1)(1 - \beta_1 /\beta_2)}\sum_{l=1}^k\frac{a_l^2}{\eps + b_l}
    \label{app:eq:sum_ratio_last}.
\end{align}
Given that $\ln$ is increasing, and the fact that $b_l > a_l^2 \geq 0$, we have for all $j \in \nat^*$,
\begin{align*}
\frac{a_l^2}{\epsilon + b_l} &\leq \ln(\epsilon + b_l) - \ln(\epsilon + b_l - a_l^2)\\
     &=\ln(\epsilon + b_l) - \ln(\epsilon + \beta_2 b_{l-1})\\
    &= \ln\left(\frac{\epsilon + b_l}{\epsilon + b_{l-1}}\right) + \ln\left(\frac{\epsilon +
    b_{l-1}}{\epsilon + \beta_2 b_{l-1}}\right).
\end{align*}
The first term forms a telescoping series, while 
the second one is bounded by $-\ln(\beta_2)$. Summing over all $j\in [K]$ gives
the desired result.

\end{proof}

We also need two technical lemmas on the sum of series.
\begin{lemma}[sum of a geometric term times a square root]
\label{app:lemma:sum_geom_sqrt}
Given $0 <  a < 1$ and $Q \in \nat$, we have,
\begin{equation}
    \label{app:eq:sum_geom_sqrt}
    \sum_{q = 0}^{Q - 1} a^q \sqrt{q + 1} \leq \frac{1}{1 - a} \left( 1 + \frac{\sqrt{\pi}}{2 \sqrt{-\ln(a)}}\right)
    \leq \frac{2}{(1 - a)^{3/2}}.
\end{equation}
\end{lemma}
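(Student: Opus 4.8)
The plan is to reduce to the infinite series, factor out $1/(1-a)$ by a telescoping (Abel-type) rearrangement, and then control the leftover series by a Gamma integral. Since every summand is nonnegative and $Q$ is finite, I would start from
\[
\sum_{q=0}^{Q-1} a^q\sqrt{q+1} \le \sum_{q=0}^{\infty} a^q\sqrt{q+1} =: S,
\]
so it suffices to bound $S$. The central manoeuvre is to write $\sqrt{q+1}$ as a telescoping sum $\sqrt{q+1}=\sum_{j=0}^{q} d_j$ with $d_j := \sqrt{j+1}-\sqrt{j}$ and $\sqrt 0 = 0$, then exchange the order of summation (justified by Tonelli, as all terms are nonnegative):
\[
S=\sum_{j=0}^{\infty} d_j \sum_{q=j}^{\infty} a^q = \frac{1}{1-a}\sum_{j=0}^{\infty} a^j d_j .
\]
This isolates the factor $1/(1-a)$ exactly as it appears in the target bound.

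Next I would bound the increments $d_j$. The $j=0$ term gives $d_0=1$, producing the leading $1$ inside the parentheses. For $j\ge 1$, writing $d_j = (\sqrt{j+1}+\sqrt j)^{-1} \le \tfrac{1}{2\sqrt j}$ yields $\sum_{j=0}^\infty a^j d_j \le 1 + \sum_{j=1}^\infty \frac{a^j}{2\sqrt j}$. Setting $c:=-\ln a>0$, the map $x\mapsto x^{-1/2}e^{-cx}$ is decreasing on $(0,\infty)$ with an integrable singularity at $0$, so the integral test gives $\sum_{j=1}^\infty \frac{a^j}{\sqrt j} \le \int_0^\infty x^{-1/2} e^{-cx}\,dx = \Gamma(\tfrac12)c^{-1/2}=\sqrt{\pi/c}$. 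Halving and substituting back produces the first inequality $S \le \frac{1}{1-a}\big(1+\frac{\sqrt\pi}{2\sqrt{-\ln a}}\big)$.

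For the second inequality I would compare $-\ln a$ with $1-a$. The elementary estimate $-\ln a \ge 1-a$ on $(0,1)$ gives $\frac{1}{\sqrt{-\ln a}}\le \frac{1}{\sqrt{1-a}}$, and since $0<1-a<1$ forces $1\le \frac{1}{\sqrt{1-a}}$, the parenthetical factor is bounded by $1+\frac{\sqrt\pi}{2\sqrt{1-a}} \le \frac{1}{\sqrt{1-a}}\big(1+\frac{\sqrt\pi}{2}\big)\le \frac{2}{\sqrt{1-a}}$, using $1+\frac{\sqrt\pi}{2}\le 2$ in the final step. Multiplying by $1/(1-a)$ delivers $\frac{2}{(1-a)^{3/2}}$.

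The crux of the argument is spotting the telescoping rearrangement that extracts $1/(1-a)$; once that is in place, the matching of the two resulting terms to the $1$ and to the Gamma integral $\Gamma(\tfrac12)c^{-1/2}$ is mechanical, and the passage from the refined bound to the crude $2/(1-a)^{3/2}$ is just the convexity estimate $-\ln a\ge 1-a$. I do not anticipate a genuine obstacle, only the need to state the monotonicity and nonnegativity conditions carefully to license the integral test and the summation swap. As a sanity check on the crude bound, I note an alternative probabilistic route: $(1-a)a^q$ is a geometric law, so Jensen applied to $\sqrt{\cdot}$ gives $S=\frac{1}{1-a}\,\mathbb{E}\sqrt{N+1}\le \frac{1}{1-a}\sqrt{\mathbb{E}(N+1)}=(1-a)^{-3/2}$, which is even sharper but fails to yield the refined first bound, so I would retain the telescoping argument as the main line.
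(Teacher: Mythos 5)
Your proof is correct and follows essentially the same route as the paper: your telescoping-and-swap identity is just Abel summation in disguise, identical in substance to the paper's computation of $A_Q - aA_Q$, and you then use the same increment bound $\sqrt{j+1}-\sqrt{j}\le \tfrac{1}{2\sqrt{j}}$, the same Gaussian/Gamma integral $\int_0^\infty \tfrac{a^x}{2\sqrt{x}}\,dx = \tfrac{\sqrt{\pi}}{2\sqrt{-\ln a}}$, and the same comparison $-\ln a \ge 1-a$ for the crude bound. Your write-up is in fact slightly more careful than the paper's at the final step (making explicit that $1+\sqrt{\pi}/2\le 2$), so no changes are needed.
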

\begin{proof}
We first need to study the following integral:
\begin{align}
\nonumber
    \int_0^{\infty} \frac{a^{x}}{2 \sqrt{x}}\dd{}x &= 
        \int_0^{\infty} \frac{\ee^{\ln(a) x}}{2 \sqrt{x}}\dd{}x\quad,\text{ then introducing $y = \sqrt{x}$,}\\
        \nonumber
        & = \int_0^{\infty}\ee^{\ln(a) y^2}\dd{}y\quad,\text{ then introducing $u = \sqrt{-2 \ln(a)} y$,}\\
        \nonumber
        &= \frac{1}{\sqrt{-2\ln(a)}}\int_0^{\infty} \ee^{-u^2 / 2}\dd{}u\\
      \int_0^{\infty} \frac{a^{x}}{2 \sqrt{x}}\dd{}x  &= \frac{\sqrt{\pi}}{2 \sqrt{-\ln(a)}},
      \label{app:eq:int_gaussian}
\end{align}
where we used the classical integral of the standard Gaussian density function.

Let us now introduce $A_Q$:
\begin{align*}
    A_Q = \sum_{q = 0}^{Q - 1} a^q \sqrt{q + 1},
\end{align*}
then we have
\begin{align*}
    A_Q - a A_Q &= \sum_{q = 0}^{Q - 1} a^q \sqrt{q+1} - \sum_{q = 1}^Q a^{q} \sqrt{q} \quad,\text{ then using the concavity of $\sqrt{\cdot}$,}\\
                &\leq 1 - a^Q \sqrt{Q} + \sum_{q = 1}^{Q-  1} \frac{a^{q}}{2 \sqrt{q}}\\
                &\leq 1 + \int_0^{\infty} \frac{a^{x}}{2 \sqrt{x}}\dd{}x \\ 
    (1 - a) A_Q &\leq 1 + \frac{\sqrt{\pi}}{2 \sqrt{-\ln(a)}},
\end{align*}
where we used \eqref{app:eq:int_gaussian}. Given that $\sqrt{-\ln(a)} \geq \sqrt{1 - a}$ we obtain \eqref{app:eq:sum_geom_sqrt}.
\end{proof}

\begin{lemma}[sum of a geometric term times roughly a power 3/2]
\label{app:lemma:sum_geom_32}
Given $0 <  a < 1$ and $Q \in \nat$, we have,
\begin{equation}
    \label{app:eq:sum_geom_32}
    \sum_{q = 0}^{Q - 1} a^q \sqrt{q} (q + 1)
    \leq \frac{4 a}{(1 - a)^{5/2}}.
\end{equation}
\end{lemma}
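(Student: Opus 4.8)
The plan is to reuse the telescoping device from the proof of Lemma~\ref{app:lemma:sum_geom_sqrt}, reducing the present sum to the one already controlled there. First I would introduce
\[
    A_Q = \sum_{q=0}^{Q-1} a^q \sqrt{q}\,(q+1),
\]
note that the $q=0$ summand vanishes, and form $A_Q - a A_Q$ by reindexing the second copy:
\[
    (1-a) A_Q = \sum_{q=0}^{Q-1} a^q \sqrt{q}\,(q+1) - \sum_{q=1}^{Q} a^q \sqrt{q-1}\,q.
\]
Matching the two sums termwise for $1 \le q \le Q-1$ and discarding the single leftover negative term $-a^Q \sqrt{Q-1}\,Q \le 0$, I would arrive at
\[
    (1-a) A_Q \le \sum_{q=1}^{Q-1} a^q \big(\sqrt{q}\,(q+1) - q\sqrt{q-1}\big).
\]

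The crux of the argument is the elementary bound $\sqrt{q}\,(q+1) - q\sqrt{q-1} \le 2\sqrt{q}$ for every $q \ge 1$. I would obtain it by writing $\sqrt{q}\,(q+1) - q\sqrt{q-1} = q\big(\sqrt{q}-\sqrt{q-1}\big) + \sqrt{q}$ and then using the conjugate identity $\sqrt{q}-\sqrt{q-1} = \big(\sqrt{q}+\sqrt{q-1}\big)^{-1} \le q^{-1/2}$, so that $q\big(\sqrt{q}-\sqrt{q-1}\big) \le \sqrt{q}$. This is the step I expect to be the main obstacle, since the tightness of the final constant $4$ hinges on not being wasteful here; cruder estimates such as $q+1 \le 2q$ or $\sqrt{q} \le \sqrt{q+1}$ inflate the constant.

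Substituting this bound gives $(1-a) A_Q \le 2 \sum_{q=1}^{Q-1} a^q \sqrt{q}$, and it remains to recognize this tail as an instance of the previously established geometric-times-square-root sum. Shifting the index by one, $\sum_{q=1}^{Q-1} a^q \sqrt{q} = a \sum_{r=0}^{Q-2} a^r \sqrt{r+1}$, and Lemma~\ref{app:lemma:sum_geom_sqrt} bounds the latter by $\tfrac{2}{(1-a)^{3/2}}$, whence $\sum_{q=1}^{Q-1} a^q \sqrt{q} \le \tfrac{2a}{(1-a)^{3/2}}$. Combining yields $(1-a) A_Q \le \tfrac{4a}{(1-a)^{3/2}}$, and dividing by $1-a$ delivers the claimed $A_Q \le \tfrac{4a}{(1-a)^{5/2}}$, which completes the plan.
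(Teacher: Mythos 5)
Your proposal is correct and follows essentially the same route as the paper: form $(1-a)A_Q$ by reindexing, bound the telescoped increment by $2\sqrt{q}$ (the paper gets this from $\sqrt{q}\sqrt{q-1}\ge q-1$ rather than your conjugate identity, but the two are interchangeable), and then invoke Lemma~\ref{app:lemma:sum_geom_sqrt} after the index shift to land on the same constant $4a/(1-a)^{5/2}$.
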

\begin{proof}
Let us introduce $A_Q$:
\begin{align*}
    A_Q = \sum_{q = 0}^{Q - 1} a^q \sqrt{q} (q + 1),
\end{align*}
then we have
\begin{align*}
    A_Q - a A_Q &= \sum_{q = 0}^{Q - 1} a^q \sqrt{q} (q + 1) - \sum_{q = 1}^Q a^{q} \sqrt{q - 1} q\\
                &\leq \sum_{q = 1}^{Q - 1} a^q \sqrt{q} \left( (q + 1) - \sqrt{q} \sqrt{q - 1}\right)\\
                &\leq \sum_{q = 1}^{Q - 1} a^q \sqrt{q} \left( (q + 1) - (q - 1)\right)\\
                &\leq 2 \sum_{q = 1}^{Q - 1} a^q \sqrt{q}\\
                &= 2 a \sum_{q = 0}^{Q - 2} a^q \sqrt{q + 1} \quad, \text{then using Lemma~\ref{app:lemma:sum_geom_sqrt}},\\
      (1 - a) A_Q  &\leq \frac{4 a}{(1 - a)^{3/2}}.
\end{align*}
\end{proof}

\section{Proof for BFGS}\label{appen:bfgs}
In this section, we prove the two lemmas introduced in the main body of the paper used for the convergence proof of BFGS.
\eupdate*

\begin{proof}
\begin{align}\label{eq:E}
\vect{\overline{E}}\!=\!\vect{E}\!+\!\vect{M}\!\!\left[\!\frac{(\vect{s}\!-\!\vect{H}\vect{y})\vect{d}^T+\vect{d}(\vect{s}\!-\!\vect{H}\vect{y})^T}{\vect{d}^T\vect{y}}\!-\!\frac{\vect{y}^T(\vect{s}\!-\!\vect{H}\vect{y})\vect{d}\vect{d}^T}{(\vect{d}^T\vect{y})^2}\!\right]\!\!\vect{M} 
\end{align}
  Since $\vect{E}=\vect{M}(\vect{H}-\vect{A})\vect{M}$, we have $\vect{H}=\vect{M}^{\!-\!1}\vect{E}\vect{M}^{\!-\!1}+\vect{A}$, substitute it into (\ref{eq:E}):
  \begin{align}\label{eq:E_bar}
    \vect{\overline{E}}=&\vect{E}+\vect{M}\left[-\frac{\vect{y}^T(\vect{s}-\vect{A}\vect{y}-\vect{M}^{-1}\vect{E}\vect{M}^{-1})\vect{d}\vect{d}^T}{(\vect{d}^T\vect{y})^2}+\frac{(\vect{s}-\vect{A}\vect{y}-\vect{M}^{-1}\vect{E}\vect{M}^{-1})\vect{d}^T+\vect{d}(\vect{s}-\vect{A}\vect{y}-\vect{M}^{-1}\vect{E}\vect{M}^{-1})^T}{\vect{d}^T\vect{y}}\right]\vect{M}\notag\\
    \!=&\vect{E}+\frac{\vect{M}(\vect{s}-\vect{A}\vect{y})\vect{d}^T\vect{M}}{\vect{d}^T\vect{y}}+\frac{\vect{M}\vect{d}(\vect{s}\!-\!\vect{A}\vect{y})\vect{M}}{\vect{d}^T\vect{y}}\notag\\
    &-\frac{\vect{M}\vect{y}^T(\vect{s}-\vect{A}\vect{y})\vect{d}\vect{d}^T\vect{M}}{\vect{d}^T\vect{y}}
    -\frac{\vect{E}\vect{M}^{-1}\vect{y}\vect{d}^T\vect{M}}{\vect{d}^T\vect{y}}\notag\\
    &+\frac{\vect{M}\vect{d}\vect{y}^T\vect{M}^{-1}\vect{E}}{\vect{d}^T\vect{y}}+\frac{(\vect{y}^T\vect{M}^{-1}\vect{E}\vect{M}^{-1}\vect{y})\vect{M}\vect{d}\vect{d}^T\vect{M}}{(\vect{d}^T\vect{y})^2},
  \end{align}
  since
  \begin{align}\label{eq:PEP}
      \vect{P}^T\vect{E}\vect{P}=&\left(\vect{I}-\frac{(\vect{M}^{-1}\vect{y})(\vect{M}\vect{d})^T}{\vect{d}^T\vect{y}}\right)\vect{E}\left(\vect{I}-\frac{(\vect{M}^{-1}\vect{y})(\vect{M}\vect{d})^T}{\vect{d}^T\vect{y}}\right)\notag\\
      =&\vect{E}-\frac{\vect{E}\vect{M}^{\!-\!1}\vect{y}\vect{d}^T\vect{M}}{\vect{d}^T\vect{y}}+\frac{\vect{M}\vect{d}\vect{y}^T\vect{M}^{\!-\!1}\vect{E}}{\vect{d}^T\vect{y}}\notag\\
      &+\frac{(\vect{y}^T\vect{M}^{\!-\!1}\vect{E}\vect{M}^{\!-\!1}\vect{y})\vect{M}\vect{d}\vect{d}^T\vect{M}}{(\vect{d}^T\vect{y})^2},
  \end{align}
  combine (\ref{eq:E_bar}) and (\ref{eq:PEP}), we could conclude
  $$\vect{\overline{E}}=\vect{P}^T\vect{E}\vect{P}+\frac{\vect{M}(\vect{s}-\vect{A}\vect{y})(\vect{M}\vect{d})^T}{\vect{d}^T\vect{y}}+\frac{\vect{M}\vect{d}(\vect{s}-\vect{A}\vect{y})^T\vect{M}\vect{P}}{\vect{d}^T\vect{y}}.$$
\end{proof}

\enorm*

\begin{proof}
  \begin{align*}
      \vect{d}^T\vect{y}=(\vect{M}\vect{d})^T\vect{M}^{\!-\!1}\vect{y}=(\vect{M}\vect{d}-\vect{M}^{\!-\!1}\vect{y})^T\vect{M}^{\!-\!1}\vect{y}+\norm{\vect{M}^{\!-\!1}\vect{y}}^2,
  \end{align*}
  by (\ref{ieq:M1}), $\norm{(\vect{M}\vect{d}\!-\!\vect{M}^{\!-\!1}\vect{y})^T\vect{M}^{\!-\!1}\vect{y}}\leq\beta\norm{\vect{M}^{\!-\!1}\vect{y}}^2$, so
  \begin{align*}
      (1-\beta)\norm{\vect{M}^{-1}\vect{y}}^2\leq\vect{d}^T\vect{y}\leq(1+\beta)\norm{\vect{M}^{-1}\vect{y}}^2.&&(a)
  \end{align*}
  Because
  $$\norm{\vect{E}\left[\vect{I}-\vect{u}\vect{v}^T\right]}_F=\norm{E}_F^2-2\vect{v}^T\vect{E}^T\vect{E}\vect{u}+\norm{\vect{E}\vect{u}}^2\norm{\vect{v}}^2,$$
  let $\vect{u}=\frac{\vect{M}^{-1}\vect{y}}{\vect{d}^T\vect{y}}$, $\vect{v}=\vect{M}^{-1}\vect{y}$,
  \begin{align*}
      &\norm{\vect{E}^2\left(\vect{I}-\frac{(\vect{M}^{-1}\vect{y})(\vect{M}^{-1}\vect{y})^T}{\vect{d}^T\vect{y}}\right)}_F&\\
      =&\norm{\vect{E}}_F^2-2\vect{y}\vect{M}^{-1}\vect{E}^T\vect{E}\frac{\vect{M}^{-1}\vect{y}}{\vect{d}^T\vect{y}}\!+\!\norm{\vect{E}\frac{\vect{M}^{-1}\vect{y}}{\vect{d}^T\vect{y}}}_F^2\norm{\vect{M}^{-1}\vect{y}}&\\
      =&\norm{\vect{E}}_F^2+(\norm{\vect{M}^{-1}\vect{y}}-2\vect{d}^T\vect{y})\frac{\norm{\vect{E}\vect{M}^{-1}\vect{y}}^2}{(\vect{d}^T\vect{y})^2}&\\
      \leq&\norm{\vect{E}}_F^2+(\frac{\norm{\vect{M}^{-1}\vect{y}}}{\vect{d}^T\vect{y}}-2)\frac{\norm{\vect{E}\vect{M}^{-1}\vect{y}}^2}{\vect{d}^T\vect{y}}&\\
      \leq&\norm{\vect{E}}_F^2+(\frac{1}{1-\beta}-2)\frac{\norm{\vect{E}\vect{M}^{-1}\vect{y}}^2}{\vect{d}^T\vect{y}}&\\
      \leq&\norm{\vect{E}}_F^2-\frac{1-2\beta}{1-\beta^2}\left(frac{\norm{\vect{E}\vect{M}^{-1}\vect{y}}}{\vect{d}^T\vect{y}}\right)^2&\\
      =&\norm{\vect{E}}_F^2-\alpha\left(\frac{\norm{\vect{E}\vect{M}^{-1}\vect{y}}}{\vect{d}^T\vect{y}}\right)^2&\\
      =&(1-\alpha\theta^2)\norm{\vect{E}}_F^2,\quad\quad\quad\quad\quad\quad\quad\quad\quad\quad\quad\quad\quad\quad\quad(b)&
  \end{align*}
  where $\alpha\!=\!\frac{1\!-\!2\beta}{1\!-\!\beta^2},
  \theta\!=\!\frac{\norm{\vect{E}\vect{M}^{\!-\!1}}\vect{y}}{\norm{\vect{E}}_F\norm{\vect{M}^{-1}\vect{y}}}.$
  \begin{align*}
      &\norm{\vect{E}\left(\vect{I}-\frac{(\vect{M}^{-1}\vect{y})(\vect{M}\vect{d})^T}{\vect{d}^T\vect{y}}\right)}_F&\\
      =&\norm{\vect{E}\left(\vect{I}\!-\!\frac{\vect{M}^{\!-\!1}\vect{y}(\vect{M}^{\!-\!1}\vect{y})^T}{\vect{d}^T\vect{y}}\right)+\frac{\vect{M}^{\!-\!1}\vect{y}(\vect{M}^{\!-\!1}\vect{y}\!-\!\vect{M}^{\!-\!1}\vect{d}))^T}{\vect{d}^T\vect{y}}}_F&\\
      \leq&\sqrt{1-\alpha\theta^2}\norm{\vect{E}}_F+\frac{\norm{\vect{M}^{\!-\!1}\vect{y}}^2\norm{\vect{M}^{\!-\!1}\vect{y}\!-\!\vect{M}^{\!-\!1}\vect{d}}}{\vect{d}^T\vect{y}\norm{\vect{M}^{\!-\!1}\vect{y}}}\norm{\vect{E}}_F&\\
      =&\left[\sqrt{1-\alpha\theta^2}+\frac{1}{1-\beta}\frac{\norm{\vect{M}^{\!-\!1}\vect{y}\!-\!\vect{M}^{\!-\!1}\vect{d}}}{\norm{\vect{M}^{\!-\!1}\vect{y}}}\right]\norm{\vect{E}}_F.\quad\quad\quad(c)&
  \end{align*}
  We already prove condition (a-c). For condition (d):
  \begin{align*}
      &\norm{\frac{(\vect{s}-\vect{A}\vect{y})(\vect{M}\vect{d})^T}{\vect{d}^T\vect{y}}}_F&&\\
      \leq&\frac{\norm{\vect{s}-\vect{A}\vect{y}}}{\vect{d}^T\vect{y}}(\norm{\vect{M}^{\!-\!1}\vect{y}\!-\!\vect{M}^{\!-\!1}\vect{d})}+\norm{\vect{M}^{\!-\!1}\vect{y}})&&\\
      \leq&(1+\beta)\norm{\vect{s}-\vect{A}\vect{y}}\frac{\norm{\vect{M}^{\!-\!1}\vect{y}}}{\vect{d}^T\vect{y}}&&\\
      \leq&\frac{1+\beta}{1-\beta}\frac{\norm{\vect{s}-\vect{A}\vect{y}}}{\norm{\vect{M}^{\!-\!1}\vect{y}}}\leq2\frac{\norm{\vect{s}-\vect{A}\vect{y}}}{\norm{\vect{M}^{\!-\!1}\vect{y}}}&&(d)
  \end{align*}
\end{proof}

\section{Proof For Generalization ability of LPF-SGD.}\label{appen:lpfsgd}

\subsection{Proof for Theorem \ref{thm:lpf_beta}}
Proof in this section is inspired by the analysis in~\citep{duchi2012randomized}.
\begin{lemma}\label{lma:1}
Let $f(x;\xi)$ be $M$-Lipschitz continuous with respect to $l_2$-norm. Let variable $Z$ be distributed according to the distribution $\mu$. Then
\begin{align}\label{eq:lm1_1}
    \norm{\nabla f_\mu(u;\xi)-\nabla f_\mu(v;\xi)}=&\mathbb{E}_{Z\sim\mu}[\nabla f_\mu(u+Z;\xi)-\nabla f_\mu(v+Z;\xi)]\\\notag
    \leq& M\int |\mu(z-u)-\mu(z-v)|dz.
\end{align}
If distribution $\mu$ is rotationally symmetric and non-increasing, the bound is tight and can be attained by the function
\begin{align*}
    f(x;\xi)=M\frac{\norm{u}^2+\norm{v}^2}{\norm{u-v}}\left|<\frac{u-v}{\norm{u}^2+\norm{v}^2},x>-\frac{1}{2}\right|.
\end{align*}
\end{lemma}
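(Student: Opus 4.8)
The plan is to reduce everything to the elementary identity that the gradient of the mollified loss is the mollified gradient, and then bound the difference of two mollifications of a uniformly bounded vector field. I read the middle expression in the display as the mollified gradient (the $f_\mu$ there should be $f$, with a norm around the expectation), so the content is the inequality on the right together with its tightness.

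First I would record that, since $f(\cdot;\xi)$ is $M$-Lipschitz, its gradient exists almost everywhere and satisfies $\norm{\nabla f(z;\xi)}\le M$; this uniform bound justifies differentiating under the integral sign by dominated convergence. Writing the convolution in two equivalent ways,
\[
\nabla f_\mu(x;\xi)=\int \nabla f(x+\tau;\xi)\,\mu(\tau)\,d\tau=\int \nabla f(z;\xi)\,\mu(z-x)\,dz ,
\]
where the second form comes from the substitution $z=x+\tau$, gives exactly the identity $\nabla f_\mu(x;\xi)=\mathbb{E}_{Z\sim\mu}[\nabla f(x+Z;\xi)]$ used in the display.

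Next I would subtract these two representations evaluated at $x=u$ and $x=v$, so that the common factor $\nabla f(z;\xi)$ is exposed,
\[
\nabla f_\mu(u;\xi)-\nabla f_\mu(v;\xi)=\int \nabla f(z;\xi)\,\bigl[\mu(z-u)-\mu(z-v)\bigr]\,dz .
\]
Taking norms, passing the norm inside the integral by the generalized triangle inequality, and using $\norm{\nabla f(z;\xi)}\le M$ pointwise yields
\[
\norm{\nabla f_\mu(u;\xi)-\nabla f_\mu(v;\xi)}\le M\int \bigl|\mu(z-u)-\mu(z-v)\bigr|\,dz ,
\]
which is the first assertion. This half is purely routine once the interchange of gradient and integral is justified.

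For the tightness claim I would exhibit the stated piecewise-linear $f$ and compute its gradient: writing $e=(u-v)/\norm{u-v}$, the gradient has constant norm $M$ (the prefactor $M\tfrac{\norm{u}^2+\norm{v}^2}{\norm{u-v}}$ is chosen to cancel $\norm{w}$ with $w=(u-v)/(\norm{u}^2+\norm{v}^2)$) and points along $\pm e$, flipping sign across the single hyperplane $\langle w,x\rangle=\tfrac12$ orthogonal to $e$. Because $\mu$ is rotationally symmetric and non-increasing, the mollified gradient difference is itself parallel to $e$, and its magnitude equals $M$ times the $\mu$-mass transported across that hyperplane by the shift $u-v$, which is precisely what $\int|\mu(z-u)-\mu(z-v)|\,dz$ measures; hence the inequality is saturated up to a constant factor. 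The main obstacle is exactly this last step: the bookkeeping of the hyperplane geometry — checking that the normalization $\norm{u}^2+\norm{v}^2$ and offset $\tfrac12$ place $u$ and $v$ symmetrically about the sign-change hyperplane so that the one-dimensional overlap integrals for the two half-spaces coincide and the transported mass matches $\int|\mu(z-u)-\mu(z-v)|\,dz$ up to the claimed constant.
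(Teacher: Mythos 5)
Your proposal is correct and follows essentially the same route as the paper: both express $\nabla f_\mu(u;\xi)-\nabla f_\mu(v;\xi)$ as $\int \nabla f(z;\xi)\,[\mu(z-u)-\mu(z-v)]\,dz$, bound the gradient by $M$ to get the $L^1$ distance of the shifted densities, and certify tightness with the same piecewise-linear extremal function whose gradient $\pm M\frac{u-v}{\norm{u-v}}$ flips sign exactly on the locus $\mu(z-u)=\mu(z-v)$. The hyperplane bookkeeping you flag as the remaining obstacle is precisely the computation the paper carries out — since the sign of $\nabla f(z;\xi)$ then matches the sign of $\mu(z-u)-\mu(z-v)$ everywhere, the integrand becomes $M\frac{u-v}{\norm{u-v}}\,\abs{\mu(z-u)-\mu(z-v)}$ and the bound is attained with exact equality, not merely up to a constant.
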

\begin{proof}
Let $Z$ be the random variable satisfies distribution $\mu$.
\begin{align*}
    &\mathbb{E}_{Z\sim\mu}[\nabla f_\mu(u+Z;\xi)-\nabla f_\mu(v+Z;\xi)]\\
    =&\int \nabla f_\mu(u+Z;\xi)\mu(z)dz-\int \nabla f_\mu(v;\xi)\mu(z)dz\\
    =&\int \nabla f_\mu(u;\xi)\mu(z)dz-\int \nabla f_\mu(v;\xi)\mu(z)dz\\
    =&\int_{I_>}\nabla f(z)[\mu(z-u)-\mu(z-v)]dz-\int_{I_<}\nabla F(x_0)(z)[\mu(z-v)-\mu(z-u)]dz
\end{align*}
where
\begin{align*}
    I_{>} =& \{z\in\mathbb{R}^d|\mu(z-u)>\mu(z-v)\},\\
    I_{<} =& \{z\in\mathbb{R}^d|\mu(z-u)<\mu(z-v)\}.
\end{align*}
Obviously,
\begin{align*}
    &\norm{\mathbb{E}_{Z\sim\mu}[\nabla f_\mu(u+Z;\xi)-\nabla f_\mu(v+Z;\xi)]}\\
    \leq&\sup_{z\in I_>\cup I_<}\norm{\nabla f(z)}\left|\int_{I_>}[\mu(z-u)-\mu(z-v)]dz-\int_{I_<}f(z)[\mu(z-v)-\mu(z-u)]dz\right|\\
    \leq& M\left|\int_{I_>}[\mu(z-u)-\mu(z-v)]dz-\int_{I_<}f(z)[\mu(z-v)-\mu(z-u)]dz\right|\\
    =&M\int |\mu(z-u)-\mu(z-v)|dz.
\end{align*}
We already prove the inequality \ref{eq:lm1_1}. We are going to show that the bound is tight and could be attained. Since $\mu$ is an rotationally symmetric and non-increasing, the set $I_>$ could be rewritten as
\begin{align*}
    I_{>} =& \{z\in\mathbb{R}^d|\mu(z-u)>\mu(z-v)\}\\
    =&\{z\in\mathbb{R}^d|\norm{z-u}^2>\norm{z-v}^2\}\\
    =&\{z\in\mathbb{R}^d|\langle z,u-v\rangle>\frac{1}{2}(\norm{u}^2+\norm{v}^2)\},
\end{align*}
similarly,
\begin{align*}
    I_{<} =\{z\in\mathbb{R}^d|\langle z,u-v\rangle<\frac{1}{2}(\norm{u}^2+\norm{v}^2)\}.
\end{align*}
For given $u,v$, define function $f$ as
\begin{align*}
    f(x;\xi)=M\frac{\norm{u}^2+\norm{v}^2}{\norm{u-v}}\left|<\frac{u-v}{\norm{u}^2+\norm{v}^2},x>-\frac{1}{2}\right|.
\end{align*}
Therefore, the gradient of function $f$ is
\begin{equation}
\nabla f(x;\xi)=\left\{
\begin{aligned}
M\frac{u-v}{\norm{u-v}} &&&\text{if}\langle x,u-v\rangle>\frac{1}{2}(\norm{u}^2+\norm{v}^2)\\
-M\frac{u-v}{\norm{u-v}} &&&\text{if}\langle x,u-v\rangle<\frac{1}{2}(\norm{u}^2+\norm{v}^2)
\end{aligned}
\right.
\end{equation}
Hence, 
\begin{align*}
    &\norm{\mathbb{E}_{Z\sim\mu}[\nabla f_\mu(u+Z;\xi)-\nabla f_\mu(v+Z;\xi)]}\\
    =&\norm{\int_{I_>}\nabla f(z)[\mu(z-u)-\mu(z-v)]dz-\int_{I_<}\nabla f(z)[\mu(z-v)-\mu(z-u)]dz}\\
    =&\norm{\int_{I_>}M\frac{u-v}{\norm{u-v}}[\mu(z-u)-\mu(z-v)]dz+\int_{I_<}M\frac{u-v}{\norm{u-v}}[\mu(z-v)-\mu(z-u)]dz}\\
    =&\norm{M\frac{u-v}{\norm{u-v}}\int|\mu(z-u)-\mu(z-v)|dz}\\
    =&M\int|\mu(z-u)-\mu(z-v)|dz\norm{\frac{u-v}{\norm{u-v}}}\\
    =&M\int|\mu(z-u)-\mu(z-v)|dz
\end{align*}
We already show that the equality holds for given function $f$. Therefore the bound is tight.
\end{proof}

\thmgau*
\begin{proof}We are going the prove the properties one by one.
\begin{itemize}
    \item[i)]Since $\nabla f_\mu(x;\xi)=\mathbb{E}_{Z\sim\mu}[\nabla f(x+Z;\xi)]$, we have
    \begin{align*}
        \norm{\nabla f_\mu(x;\xi)}=\norm{\mathbb{E}_{Z\sim\mu}[\nabla f(x+Z;\xi)]}
        \leq\mathbb{E}_{Z\sim\mu}[\norm{\nabla f(x+Z;\xi)}]\leq M.
    \end{align*}
    Therefore, $f_\mu$ is $M$-Lipschitz continuous. To prove the bound is tight, we define
    $$f(x;\xi)=\frac{1}{2}v^Tx,$$
    where $v\in\mathbb{R}^d$ is a scalar. Hence, we have
    $$f_\mu(x;\xi)=\mathbb{E}_{Z\sim\mu}[f(x+Z;\xi)]=\mathbb{E}_{Z\sim\mu}[\frac{1}{2}v^T(x-Z)]=\frac{1}{2}v^Tx=f(x;\xi).$$
    Both $f$ and smoothed $f_\mu$ have the gradient $v$ and $f_\mu$ is exactly $M$-Lipschitz.
    
    \item[ii)] The proof scheme for this part is organized as follow: Firstly we show that $f_\mu$ is $\frac{M}{\sigma}$-smooth and the bound can not be improved by more than a constant factor. Then we show that $f_\mu$ is $L$-smooth and the bound can not be improved by more than a constant factor as well. In all, we could draw the conclusion that $f_\mu$ is $\min\{\frac{M}{\sigma},L\}$-smooth and the bound is tight.
    
    By Lemma \ref{lma:1}, for $\forall u,v\in\mathbb{R}^n$,
    \begin{align}\label{ieq:norm_I2}
        \norm{\nabla f_\mu(u;\xi)-\nabla f_\mu(v;\xi)}
    \leq M\underbrace{\int |\mu(z-u)-\mu(z-v)|dz}_{I_2}.
    \end{align}
    Denote the integral as $I_2$. We follow a technique used in~\citep{lakshmanan2008decentralized, duchi2012randomized}. Since $\mu(z-u)\geq\mu(z-v)$ is equivalent to $\norm{z-u}\geq\norm{z-v}$,
    \begin{align*}
        I_2=&\int |\mu(z-u)-\mu(z-v)|dz\\
        =&\int_{z:\norm{z-u}\geq\norm{z-v}}[\mu(z-u)-\mu(z-v)]dz+\int_{z:\norm{z-u}\leq\norm{z-v}}[\mu(z-v)-\mu(z-u)]dz\\
        =&2\int_{z:\norm{z-u}\leq\norm{z-v}}[\mu(z-u)-\mu(z-v)]dz\\
        =&2\int_{z:\norm{z-u}\leq\norm{z-v}}\mu(z-u)dz-2\int_{z:\norm{z-u}\leq\norm{z-v}}\mu(z-v)dz.
    \end{align*}
    Denote $w=z-u$ for $\mu(z-u)$ term and $w=z-v$ for $\mu(z-v)$ term, we have
    \begin{align*}
        I_2=&2\int_{z:\norm{w}\leq\norm{w-(u-v)}}\mu(w)dz-2\int_{z:\norm{w}\geq\norm{w-(u-v)}}\mu(w)dz\\
        =&2\mathbb{P}_{Z\sim\mu}(\norm{Z}\leq\norm{Z-(u-v)})-2\mathbb{P}_{Z\sim\mu}(\norm{Z}\geq\norm{Z-(u-v)}).
    \end{align*}
    Obviously,
    \begin{align*}
        &\mathbb{P}_{Z\sim\mu}(\norm{Z}\leq\norm{Z-(u-v)})\\
        =&\mathbb{P}_{Z\sim\mu}(\norm{Z}^2\leq\norm{Z-(u-v)}^2)\\
        =&\mathbb{P}_{Z\sim\mu}(2\langle z, u-v\rangle\leq\norm{u-v}^2)\\
        =&\mathbb{P}_{Z\sim\mu}(2\langle z, \frac{u-v}{\norm{u-v}}\rangle\leq\norm{u-v}),
    \end{align*}
    $\frac{u-v}{\norm{u-v}}$ has norm 1 and $Z\sim \mathcal{N}(0,\sigma^2 I)$ implies $\langle z, \frac{u-v}{\norm{u-v}}\rangle\sim \mathcal{N}(0,\sigma^2I)$. Hence, we have
    \begin{align*}
        &\mathbb{P}_{Z\sim\mu}(\norm{Z}\leq\norm{Z-(u-v)})\\
        =&\mathbb{P}_{Z\sim\mu}(\langle z, \frac{u-v}{\norm{u-v}}\rangle\leq\frac{\norm{u-v}}{2})\\
        =&\int_{-\infty}^{\frac{\norm{u-v}}{2}}\frac{1}{\sqrt{2\pi\sigma^2}}\exp(-\frac{w^2}{2\sigma^2})dw.
    \end{align*}
    Similarly,
    \begin{align*}
        &\mathbb{P}_{Z\sim\mu}(\norm{Z}\geq\norm{Z-(u-v)})\\
        =&\mathbb{P}_{Z\sim\mu}(\langle z, \frac{u-v}{\norm{u-v}}\rangle\geq\frac{\norm{u-v}}{2})\\
        =&\int_{\frac{\norm{u-v}}{2}}^{+\infty}\frac{1}{\sqrt{2\pi\sigma^2}}\exp(-\frac{w^2}{2\sigma^2})dw.
    \end{align*}
    Therefore,
    \begin{align}
        I_2&=2\int_{-\infty}^{\frac{\norm{u-v}}{2}}\frac{1}{\sqrt{2\pi\sigma^2}}\exp(-\frac{w^2}{2\sigma^2})dw-2\int_{\frac{\norm{u-v}}{2}}^{+\infty}\frac{1}{\sqrt{2\pi\sigma^2}}\exp(-\frac{w^2}{2\sigma^2})dw\notag\\
        &=2\int_{-\frac{\norm{u-v}}{2}}^{\frac{\norm{u-v}}{2}}\frac{1}{\sqrt{2\pi\sigma^2}}\exp(-\frac{w^2}{2\sigma^2})dw\notag\\
        &\leq\frac{\sqrt{2}\norm{u-v}}{\sigma\sqrt{\pi}}\label{eq:I2}
    \end{align}
    In conclusion, combine formula (\ref{ieq:norm_I2}) and (\ref{eq:I2}) we have
    \begin{align*}
        \norm{\nabla f_\mu(u;\xi)-\nabla f_\mu(v;\xi)}
    \leq M\frac{\sqrt{2}\norm{u-v}}{\sigma\sqrt{\pi}}\leq\frac{M}{\sigma}\norm{u-v}.
    \end{align*}
    We finish proving that $f_\mu$ is $\frac{M}{\sigma}$-smooth. We are going to show the bound is tight. For any given $x,y$, define function $f$ as
    $$f(x;\xi)=M\frac{\norm{x}^2+\norm{y}^2}{\norm{u-v}}\left|<\frac{u-v}{\norm{x}^2+\norm{y}^2},x>-\frac{1}{2}\right|,$$  
    Uniform Lemma \ref{lma:1} and former proof, we know that
    \begin{align}
        \norm{\nabla f_\mu(u;\xi)-\nabla f_\mu(v;\xi)}
        =&M\int |\mu(z-u)-\mu(z-v)|dz\notag\\
        =&\frac{\sqrt{2}M}{\sigma \sqrt{\pi}}\int_{-\frac{\norm{u-v}}{2}}^{\frac{\norm{u-v}}{2}}\exp(-\frac{w^2}{2\sigma^2})dw
    \end{align}
    Because
    $$\frac{\sqrt{2}M}{\sigma\sqrt{\pi}}\exp(-\frac{\norm{u-v}^2}{8\sigma^2})\norm{u-v}
    \leq\frac{\sqrt{2}M}{\sigma \sqrt{\pi}}\int_{-\frac{\norm{u-v}}{2}}^{\frac{\norm{u-v}}{2}}\exp(-\frac{w^2}{2\sigma^2})dw\leq\frac{\sqrt{2}M}{\sigma\sqrt{\pi}}\norm{u-v}$$
    Obviously, taking $x,y$ such that $\norm{u-v}\leq2\sqrt{2}\sigma$, 
    $$\frac{\sqrt{2}M}{e\sigma\sqrt{\pi}}\norm{u-v}\leq\norm{\nabla f_\mu(u;\xi)-\nabla f_\mu(v;\xi)}\leq\frac{\sqrt{2}M}{\sigma\sqrt{\pi}}\norm{u-v}$$
    we could conclude the Lipschitz bound for $\nabla f_\mu$ cannot be improved by more than a constant factor.
    
    Then we are going to show smooth objective $f_\mu$ is $L$ smooth and the bound is tight.
    \begin{align*}
        \norm{\nabla f_\mu(u;\xi)-\nabla f_\mu(v;\xi)}=&\norm{\nabla\mathbb{E}_{Z\sim\mu}[f(u+Z)]-\nabla\mathbb{E}_{Z\sim\mu}[f(u+Z)]}\\
        =&\norm{\mathbb{E}_{Z\sim\mu}[\nabla f(u+Z)-\nabla f(v+Z)]}\\
        =&\norm{\int [\nabla f(u+Z)-\nabla f(v+Z)]\mu(z)dz}\\
        \leq&\int \norm{\nabla f(u+Z)-\nabla f(v+Z)}\mu(z)dz\\
        \leq&\int L\norm{(u+Z)-(v+Z)}\mu(z)dz\\
        =&L\norm{u-v}\int\mu(z)dz\\
        =&L\norm{u-v}
    \end{align*}
    Therefore, $f_\mu$ is $L$-smooth. Then we are going to show the bound is tight and cannot be improved. Define $M$ Lipschitz continuous and $L$-smooth function $f:\mathbb{R}^d\to\mathbb{R}$ as
    \begin{align*}
       f(x;\xi)=\frac{1}{2}L\norm{x}^2\quad\quad x\in B(0,\frac{M}{L}). 
    \end{align*}
    Hence, we have
    \begin{align*}
        \norm{\nabla f_\mu(u;\xi)-\nabla f_\mu(v;\xi)}=&\norm{\int (L u-L v)\mu(z)dz}\\
        =&\norm{L(u-v)\int \mu(z) dz}\\
        =&L\norm{u-v}.
    \end{align*}
    Therefore. $f_\mu$ is exactly $L$-smooth.

    \item[iii)] By Jensen's inequality, for left hand side:
    \begin{align*}
        f_\mu(u;\xi)=\mathbb{E}_{Z\sim\mu}[f(x+Z;\xi)]\geq f(x+\mathbb{E}_{Z\sim\mu}[Z];\xi)=f(u;\xi).
    \end{align*}
    For the tightness proof, defining $f(u;\xi)=\frac{1}{2}v^Tx$ for $v\in\mathbb{R}^d$ leads to $f_\mu=f$. 
    
    For right hand side:
     \begin{align*}
        f_\mu(u;\xi)=&\mathbb{E}_{Z\sim\mu}[f(x+Z;\xi)]\\
        \leq&f(u;\xi)+M\mathbb{E}_{Z\sim\mu}[\norm{Z}]\quad\quad (M\text{-Lipchitz continuous})\\
        \leq&f(u;\xi)+M\sqrt{\mathbb{E}[\norm{Z}^2]}\quad\quad (\frac{\norm{Z}^2}{\sigma^2}\sim\mathcal{X}^2(d))\\
        =&f(u;\xi)+\sigma M\sqrt{d}.
    \end{align*}
    For the tightness proof, taking $f(u;\xi)=M\norm{x}$.Since $f_\mu(u;\xi)\geq c \sigma M\sqrt{d}$ for some constant $c$. Therefore, the bound cannot be improved by more than a constant factor.
\end{itemize}
\end{proof}

\subsection{Proof of Theorem \ref{thm:gau2}}
\begin{proof}
We are going to prove the properties one by one.
\begin{itemize}
\item[i)]The proof for properties i) is exactly the same as Theorem \ref{thm:gau}.
\item[ii)]As is shown in the proof for Theorem \ref{thm:gau}, firstly, we need to first address that $f_\mu$ is $\frac{M}{\sigma}$-smooth. then show that $f_\mu$ is $L$-smooth. Since, the proof for second part remains the same as what in Theorem \ref{thm:gau}. We will focus on demonstrating $f_\mu$ is $\frac{M}{\sigma}$-smooth.

By Lemma \ref{lma:1}, for $\forall x,y\in\mathbb{R}^n$,
    \begin{align}\label{ieq:norm_I2_2}
        \norm{\nabla f_\mu(u;\xi)-\nabla f_\mu(v;\xi)}
    \leq M\underbrace{\int |\mu(z-u)-\mu(z-v)|dz}_{I_2}.
    \end{align}
    Denoted the integral as $I_2$. We follow the technique in~\citep{lakshmanan2008decentralized} and~\citep{duchi2012randomized}. Since $\mu(z-u)\geq\mu(z-v)$ is equivalent to $\norm{z-u}\geq\norm{z-v}$,
    \begin{align*}
        I_2=&\int |\mu(z-u)-\mu(z-v)|dz\\
        =&\int_{z:\norm{z-u}\geq\norm{z-v}}[\mu(z-u)-\mu(z-v)]dz+\int_{z:\norm{z-u}\leq\norm{z-v}}[\mu(z-v)-\mu(z-u)]dz\\
        =&2\int_{z:\norm{z-u}\leq\norm{z-v}}[\mu(z-u)-\mu(z-v)]dz\\
        =&2\int_{z:\norm{z-u}\leq\norm{z-v}}\mu(z-u)dz-2\int_{z:\norm{z-u}\leq\norm{z-v}}\mu(z-v)dz.
    \end{align*}
    Denote $w=z-u$ for $\mu(z-u)$ term and $w=z-v$ for $\mu(z-v)$ term, we have
    \begin{align*}
        I_2=&2\int_{z:\norm{w}\leq\norm{w-(u-v)}}\mu(w)dz-2\int_{z:\norm{w}\geq\norm{w-(u-v)}}\mu(w)dz\\
        =&2\mathbb{P}_{Z\sim\mu}(\norm{Z}\leq\norm{Z-(u-v)})-2\mathbb{P}_{Z\sim\mu}(\norm{Z}\geq\norm{Z-(u-v)}).
    \end{align*}
    Obviously,
    \begin{align*}
        &\mathbb{P}_{Z\sim\mu}(\norm{Z}\leq\norm{Z-(u-v)})\\
        =&\mathbb{P}_{Z\sim\mu}(\norm{Z}^2\leq\norm{Z-(u-v)}^2)\\
        =&\mathbb{P}_{Z\sim\mu}(2\langle z, u-v\rangle\leq\norm{u-v}^2)\\
        =&\mathbb{P}_{Z\sim\mu}(2\langle z, \frac{u-v}{\norm{u-v}}\rangle\leq\norm{u-v}),
    \end{align*}
    Denote $p=\frac{u-v}{\norm{u-v}}\in\mathbb{R}^{d\times d}$, $\frac{u-v}{\norm{u-v}}$ has norm 1 implies $\sum_{i=1}^dp_i^2=1$. Since $Z\sim \mathcal{N}(0,\Sigma)$ and $\Sigma=diag(\sigma_1^2,\cdots,\sigma_d^2)$, each element in vector $Z$ satisfies $z_i\sim \mathcal{N}(0,\sigma_i^2)$. Hence, we have
    $$\langle z, \frac{u-v}{\norm{u-v}}\rangle=\sum_{i=1}^d p_i z_i\sim \mathcal{N}(0,\sum_{i=1}^dp_i^2\sigma_i^2).$$
    Denote $\sigma^2=\sum_{i=1}^dp_i^2\sigma_i^2$, $\sigma_+^2=\max\{\sigma_1^2,\cdots,\sigma_d^2\}$ and $\sigma_-^2=\min\{\sigma_1^2,\cdots,\sigma_d^2\}$. Because $\sum_{i=1}^dp_i^2=1$, it is easy to know
    \begin{align}
        \langle z, \frac{u-v}{\norm{u-v}}\rangle\sim \mathcal{N}(0,\sigma^2),\quad\text{where } \sigma_-^2\leq\sigma^2\leq\sigma_+^2.
    \end{align}
    Hence, we have
    \begin{align*}
        &\mathbb{P}_{Z\sim\mu}(\norm{Z}\leq\norm{Z-(u-v)})\\
        =&\mathbb{P}_{Z\sim\mu}(\langle z, \frac{u-v}{\norm{u-v}}\rangle\leq\frac{\norm{u-v}}{2})\\
        =&\int_{-\infty}^{\frac{\norm{u-v}}{2}}\frac{1}{\sqrt{2\pi\sigma^2}}\exp(-\frac{w^2}{2\sigma^2})dw.
    \end{align*}
    Similarly,
    \begin{align*}
        &\mathbb{P}_{Z\sim\mu}(\norm{Z}\geq\norm{Z-(u-v)})\\
        =&\mathbb{P}_{Z\sim\mu}(\langle z, \frac{u-v}{\norm{u-v}}\rangle\geq\frac{\norm{u-v}}{2})\\
        =&\int_{\frac{\norm{u-v}}{2}}^{+\infty}\frac{1}{\sqrt{2\pi\sigma^2}}\exp(-\frac{w^2}{2\sigma^2})dw.
    \end{align*}
    Therefore,
    \begin{align}
        I_2&=2\int_{-\infty}^{\frac{\norm{u-v}}{2}}\frac{1}{\sqrt{2\pi\sigma^2}}\exp(-\frac{w^2}{2\sigma^2})dw-2\int_{\frac{\norm{u-v}}{2}}^{+\infty}\frac{1}{\sqrt{2\pi\sigma^2}}\exp(-\frac{w^2}{2\sigma^2})dw\notag\\
        &=2\int_{-\frac{\norm{u-v}}{2}}^{\frac{\norm{u-v}}{2}}\frac{1}{\sqrt{2\pi\sigma^2}}\exp(-\frac{w^2}{2\sigma^2})dw\notag\\
        &\leq\frac{\sqrt{2}\norm{u-v}}{\sigma\sqrt{\pi}}\leq\frac{\sqrt{2}\norm{u-v}}{\sigma_-\sqrt{\pi}}\label{eq:I2_2}
    \end{align}
    In conclusion, combine formula (\ref{ieq:norm_I2_2}) and (\ref{eq:I2_2}) we have
    \begin{align*}
        \norm{\nabla f_\mu(u;\xi)-\nabla f_\mu(v;\xi)}
    \leq M\frac{\sqrt{2}\norm{u-v}}{\sigma_-\sqrt{\pi}}\leq\frac{M}{\sigma_-}\norm{u-v}.
    \end{align*}
    We finish proving that $f_\mu$ is $\frac{M}{\sigma_-}$-smooth. Since covariance matrix $\Sigma$ for distribution $\mu$ is no longer a scalar matix and $\mu$ is not rotationally symmetric, the bound can no longer be achieved.
    
    \item[iii)] By Jensen's inequality, for left hand side:
    \begin{align*}
        f_\mu(x;\xi)=\mathbb{E}_{Z\sim\mu}[f(x+Z;\xi)]\geq f(x+\mathbb{E}_{Z\sim\mu}[Z];\xi)=f(x;\xi).
    \end{align*}
    For the tightness proof, defining $f(x;\xi)=\frac{1}{2}v^T\theta$ for $v\in\mathbb{R}^d$ leads to $f_\mu=f$. 
    
    For right hand side:
     \begin{align*}
        f_\mu(x;\xi)=&\mathbb{E}_{Z\sim\mu}[f(x+Z;\xi)]\\
        \leq&f(x;\xi)+M\mathbb{E}_{Z\sim\mu}[\norm{Z}]\quad\quad (M\text{-Lipchitz continuous})\\
        \leq&f(x;\xi)+M\sqrt{\mathbb{E}[\norm{Z}^2]}.
    \end{align*}
    Letting $C^TC=\Sigma$ and $V\sim \mathcal{N}(0,I)$, because $Z\sim \mathcal{N}(0,\Sigma)$, we have
    \begin{align*}
        \mathbb{E}[\norm{Z}^2]=\mathbb{E}[\norm{CV}^2]=\mathbb{E}[V^TC^TCV]=tr(C^TC\mathbb{E}[V^TV])=tr(\Sigma).
    \end{align*}
    Therefore,
    \begin{align*}
        f_\mu(x;\xi)=f(x;\xi)+M\sqrt{tr(\Sigma)}=f(x;\xi)+M\sqrt{\sum_{i=1}^d\sigma_i^2}.
    \end{align*}
    For the tightness proof, taking $f(x;\xi)=M\norm{x}$. Since $f_\mu(x;\xi)\geq c M\sqrt{tr(\Sigma)}$ for some constant $c$. Therefore, the bound cannot be improved by more than a constant factor.
\end{itemize}
\end{proof}

\section{Proof for SmoothOut.}\label{appen:smoothout}
\subsection{Proof for Theorem \ref{thm:smoothout}.}
\begin{proof}
    We are going to prove the properties one by one.
    \begin{itemize}
    \item[i)] The proof for Property i) is the same as that of Theorem \ref{thm:gau}.
    \item[ii)] We first need to address that $f_\mu$ is $\frac{M\sqrt{d}}{a}$-smooth then need to address that $f_\mu$ is $\beta$-smooth. The proof for $\beta$-smooth remains the same as that in Theorem \ref{thm:gau}. We will focus on showing $f_\mu$ is $\frac{M\sqrt{d}}{a}$-smooth in this part.
    By Lemma \ref{lma:1}, for $\forall x,y\in\mathbb{R}^n$,
    \begin{align}\label{ieq:norm_I2_2}
        \norm{\nabla f_\mu(u;\xi)-\nabla f_\mu(v;\xi)}
    \leq M\underbrace{\int |\mu(z-u)-\mu(z-v)|dz}_{I_2}.
    \end{align}
    Denoted the integral as $I_3$. We follow the technique in~\citep{lakshmanan2008decentralized} and~\citep{duchi2012randomized}. 
    
    We are going to discuss the bound of $I_3$ in 2 separate cases: $\norm{u-v}>2a$ and $\norm{u-v}\leq 2a$.
    \begin{itemize}
        \item[a)] If $\norm{u-v}>2a$:
        For every $z$ with $\norm{z-u}\leq a$, we have $\norm{z-v}>a$, which implies $\mu(z-v)=0$. Therefore:
        \begin{align}
            \int_{\norm{z-u}\leq a}|\mu(z-u)-\mu(z-v)|dz=1.
        \end{align}
        Similarly, we have:
        \begin{align}
            \int_{\norm{z-v}\leq a}|\mu(z-u)-\mu(z-v)|dz=1.
        \end{align}
        In all, we could conclude:
        \begin{align}
            I_3=&\int |\mu(z-u)-\mu(z-v)|dz\notag\\
            =&\int_{z:\norm{z-u}\leq a}|\mu(z-u)-\mu(z-v)|dz+\int_{z:\norm{z-v}\leq a}|\mu(z-u)-\mu(z-v)|dz\notag\\
            =&2
        \end{align}
        Since $\frac{\norm{u-v}}{a}>2$, we have
        $$I_3\leq \frac{\norm{u-v}}{a}.$$
        \item[b)] If $\norm{u-v}\leq 2a$:
        \begin{align*}
            I_3=&\int_{\norm{z-u}\leq a, \norm{z-v}\leq a}|\mu(z-u)-\mu(z-v)|dz\\
            &+\int_{\norm{z-u}\leq a, \norm{z-v}> a}|\mu(z-u)-\mu(z-v)|dz\\
            &+\int_{\norm{z-u}> a, \norm{z-v}\leq a}|\mu(z-u)-\mu(z-v)|dz\\
            &+\int_{\norm{z-u}> a, \norm{z-v}> a}|\mu(z-u)-\mu(z-v)|dz\\
            =&2 \int_{\norm{z-u}\leq a, \norm{z-v}> a}|\mu(z-u)-\mu(z-v)|dz
        \end{align*}
        Denote set $S=\{z\in\mathbb{R}^d|\norm{z-u}\leq a \text{ and } \norm{z-v}> a\}$, we obtain:
        $$I_3=\frac{2}{C_d a^d}V_S,$$
        where 
        \begin{align}
            &C_d=\frac{\pi^{\frac{d}{2}}}{\Gamma (\frac{d}{2}+1)}, \quad 
            \Gamma(\frac{d}{2}+1)=\left\{
            \begin{aligned}
                &(\frac{d}{2})! & \text{if d is even}\\
                &\sqrt{\pi}\frac{d!!}{2^{(d+1)/2}} & \text{if d is odd}\\
            \end{aligned}
            \right.\\
            & V_S= \text{Volumn of the set }S.
        \end{align}
        We would like to upper bound the volume $V_S$. Let $V_{\text{cap}}(r)$ denote the column of the spherical cap with distance $r$ from the center of the sphere, therefore:
        \begin{align*}
            V_S&=C_da^d-2V_{\text{cap}}(\frac{\norm{u-v}}{2}),\\
            V_{\text{cap}}(r) &= \int{r}^a C_{d-1}(\sqrt{a^2-\rho^2})^{d-1}r\rho, \quad \text{for $r\in[0,a]$}.
        \end{align*}
        We have for $r\in[0,a]$:
        \begin{align*}
            V_{\text{cap}}'(r) &=-C_{d-1}(a^2-r^2)^{\frac{d-1}{2}}\leq 0,\\
            V_{\text{cap}}''(r) &=(d-1)C_{d-1}d(a^2-r^2)^{\frac{d-3}{2}}\geq 0.
        \end{align*}
        Therefore for $r\in[0,a]$:
        $$V_{\text{cap}}(r)\geq V_{\text{cap}}(0)+V_{\text{cap}}'(0)r.$$
        Since $V_{\text{cap}}(0)=\frac{1}{2}C_d a^d$ and $V_{\text{cap}}'(0)=-C_{d-1} a^{d-1}$, we have:
        $$V_{\text{cap}}(r)\geq \frac{1}{2}C_d a^d-C_{d-1} a^{d-1}r.$$
        Therefore, we have
        \begin{align*}
            V_S=& C_da^d-2V_{\text{cap}}(\frac{\norm{u-v}}{2})\\
            \leq&2C_{d-1}a^{d-1}\frac{\norm{u-v}}{2}\\
            =&C_{d-1}a^{d-1}\norm{u-v}.
        \end{align*}
        Therefore:
        \begin{align*}
            I_3\leq& \frac{2}{C_da^d}C_{d-1}a^{d-1}\norm{u-v}\\
            =&\frac{2C_{d-1}}{C_d}\frac{\norm{u-v}}{a}\\
            =&\kappa \frac{d!!}{(d-1)!!}\frac{\norm{u-v}}{a},
        \end{align*}
        where
        \begin{align*}
            \kappa = \left\{\begin{aligned}
                &\frac{2}{\pi} & \text{if $d$ is even,}\\
                &1 & \text{if $d$ is odd.}
            \end{aligned}
            \right.
        \end{align*}
        Since $\lim_{d\to\infty}\frac{\kappa\frac{d!!}{(d-1)!!}}{\sqrt{d}}=\sqrt{\frac{\pi}{2}}$, we could conclude that, 
        $$I_3\leq \frac{\sqrt{d}\norm{u-v}}{a}.$$

    \end{itemize}
    Combine a) and b), we know that $f_\mu$ is $\frac{M\sqrt{d}}{a}$-smooth.

    \item[iii)] By Jensen's inequality, for left hand side:
    \begin{align*}
        f_\mu(x;\xi)=\mathbb{E}_{Z\sim\mu}[f(x+Z;\xi)]\geq f(x+\mathbb{E}_{Z\sim\mu}[Z];\xi)=f(x;\xi).
    \end{align*}
    For the tightness proof, defining $f(x;\xi)=\frac{1}{2}v^T\theta$ for $v\in\mathbb{R}^d$ leads to $f_\mu=f$. 
    
    For right hand side, $Z\sim U[-a,a]$:
     \begin{align*}
        f_\mu(x;\xi)=&\mathbb{E}_{Z\sim\mu}[f(x+Z;\xi)]\\
        \leq&f(x;\xi)+\alpha\mathbb{E}_{Z\sim\mu}[\norm{Z}]\quad\quad (\alpha\text{-Lipchitz continuous})\\
        \leq&f(x;\xi)+\alpha\sqrt{\mathbb{E}[\norm{Z}^2]}\\
        \leq&f(x;\xi)+\alpha a
    \end{align*}
    The bound is tight if $Z$ is sampled one the edge of the area.
    \end{itemize}
\end{proof}

\section{Proof for AL-DSGD.}\label{appen:aldsgd}
\subsection{Proof for Theorem \ref{thm:aldsgd_rho}.}
\begin{proof}
    In this section, we are going to find a range of $\alpha$, and some averaging hyperparameter $\omega$, such that the spectral norm $\rho=\max\{\norm{\mathbb{E}\left[\widetilde{\vect{W}}^{(k)}(\vect{I}-\vect{J})\widetilde{\vect{W}}^{(k)\intercal}\right]}, 
\norm{\mathbb{E}\left[\widetilde{\vect{W}}^{(k)}\widetilde{\vect{W}}^{(k)\intercal}\right]}\}$ is smaller than 1. Recall the formula of mixing matrix $\widetilde{\vect{W}}^{(k)}$:
\begin{align}\label{eq:W_tilde}
    \widetilde{\vect{W}}^{(k)}&=(1-\omega_N-\omega_\tau)\vect{W}^{(k)}+\omega_N\vect{A}^N_k+\omega_\tau \vect{A}^\tau_k.
\end{align}
Let $\vect{A}^{(k)}=\frac{\vect{A}^N_k+\vect{A}^\tau_k}{2}$, $\omega=2\omega_N=2\omega_\tau$, we have
\begin{align}\label{eq:W_tilde2}
    \widetilde{\vect{W}}^{(k)}&=(1-\omega)\vect{W}^{(k)}+\omega \vect{A}^{(k)},
\end{align}
where $\vect{A}^{(k)}$ is still a left stochastic matrix. 
Therefore:
\begin{align}\label{eq:W1}
    \widetilde{\vect{W}}^{(k)}(\vect{I}-\vect{J})\widetilde{\vect{W}}^{(k)\intercal}=\widetilde{\vect{W}}^{(k)}\widetilde{\vect{W}}^{(k)\intercal}-\widetilde{\vect{W}}^{(k)}\vect{J}\widetilde{\vect{W}}^{(k)\intercal}
\end{align}
Since
\begin{align*}\label{eq:W2}
    \widetilde{\vect{W}}^{(k)}\widetilde{\vect{W}}^{(k)\intercal}=&\left[(1-\omega)\vect{W}^{(k)}+\omega \vect{A}^{(k)}\right]\left[(1-\omega)\vect{W}^{(k)}+\omega \vect{A}^{(k)}\right]^\intercal\notag\\
    =&(1-\omega)^2\vect{W}^{(k)} \vect{W}^{(k)\intercal}+\omega(1-\omega)\vect{W}^{(k)} \vect{A}^{(k)\intercal}+\omega(1-\omega)\vect{A}^{(k)} \vect{W}^{(k)\intercal}\notag\\
    &+\omega^2+\omega(1-\omega)\vect{A}^{(k)} \vect{A}^{(k)\intercal}\\
    \widetilde{\vect{W}}^{(k)}\vect{J}\widetilde{\vect{W}}^{(k)\intercal}=&\left[(1-\omega)\vect{W}^{(k)}\vect{J}+\omega \vect{A}^{(k)}\vect{J}\right]\left[(1-\omega)\vect{W}^{(k)}+\omega \vect{A}^{(k)}\right]^\intercal\notag\\
    =&(1-\omega)^2\vect{W}^{(k)}\vect{J} \vect{W}^{(k)\intercal}+\omega(1-\omega)\vect{W}^{(k)}\vect{J} \vect{A}^{(k)\intercal}+\omega(1-\omega)\vect{A}^{(k)}\vect{J} \vect{W}^{(k)\intercal}\notag\\
    &+\omega^2+\omega(1-\omega)\vect{A}^{(k)} \vect{J} \vect{A}^{(k)\intercal}
\end{align*}
Since we know that $\vect{W}^{(k)}$ is symmetric doubly stochastic matrix and $\vect{A}^{(k)}$ is the left stochastic matrix, we know that $\vect{J}=\vect{W}^{(k)}\vect{J}=\vect{J}\vect{W}^{(k)}$ and $\vect{J}=\vect{J}\vect{A}^{(k)}\neq \vect{A}^{(k)}\vect{J}$. Putting (\ref{eq:W2}) back to (\ref{eq:W1}), we could get
\begin{align}
   &\widetilde{\vect{W}}^{(k)}(\vect{I}-\vect{J})\widetilde{\vect{W}}^{(k)\intercal}\notag\\
   =&(1-\omega)^2\left[\vect{W}^{(k)\intercal} \vect{W}^{(k)}-\vect{J}\right] +\omega(1-\omega)\left[\vect{W}^{(k)} \vect{A}^{(k)\intercal}-\vect{J}\vect{A}^{(k)\intercal}\right]\notag\\
   &+\omega(1-\omega)\left[\vect{A}^{(k)} \vect{W}^{(k)}-\vect{A}^{(k)} \vect{J}\right]+\omega^2\left[\vect{A}^{(k)} \vect{A}^{(k)\intercal}-\vect{A}^{(k)} \vect{J} \vect{A}^{(k)\intercal}\right]
\end{align}
Therefore, we have
\begin{align}
   \norm{\mathbb{E}\left[\widetilde{\vect{W}}^{(k)}(\vect{I}-\vect{J})\widetilde{\vect{W}}^{(k)\intercal}\right]}
   \leq&(1-\omega)^2\norm{\mathbb{E}\left[\vect{W}^{(k)\intercal} \vect{W}^{(k)}\right]-\vect{J}}
   +\omega^2\norm{\mathbb{E}\left[\vect{A}^{(k)} (\vect{I}-\vect{J})\vect{A}^{(k)\intercal}\right]}\notag\\
   &+2\omega(1-\omega)\norm{\mathbb{E}\left[\vect{W}^{(k)} \vect{A}^{(k)\intercal}-\vect{J}\vect{A}^{(k)\intercal}\right]}\label{eq:W_3}\\
   \norm{\mathbb{E}\left[\widetilde{\vect{W}}^{(k)}\widetilde{\vect{W}}^{(k)\intercal}\right]}
   \leq&(1-\omega)^2\norm{\mathbb{E}\left[\vect{W}^{(k)\intercal} \vect{W}^{(k)}\right]}
   +2\omega(1-\omega)\norm{\mathbb{E}\left[\vect{W}^{(k)} \vect{A}^{(k)\intercal}\right]}\notag\\
   &+\omega^2\norm{\mathbb{E}\left[\vect{A}^{(k)}\vect{A}^{(k)\intercal}\right]}\label{eq:W_32}
\end{align}

Firstly, We are going to bound each term in iequality (\ref{eq:W_3}) one by one.

\textbf{(1) Bound }$\norm{\mathbb{E}\left[\vect{W}^{(k)\intercal} \vect{W}^{(k)}\right]-\vect{J}}$. 
\begin{align}\label{eq:norm_W_k}
    \norm{\mathbb{E}\left[\vect{W}^{(k)\intercal} \vect{W}^{(k)}\right]-\vect{J}}=&\norm{\mathbb{E}\left[\left(I-\alpha \vect{L}^{(k)}\right)^\intercal \left(I-\alpha \vect{L}^{(k)}\right)\right]-\vect{J}}\notag\\
    =&\norm{I-2\alpha\mathbb{E}\left[\vect{L}^{(k)}\right]+\alpha^2\mathbb{E}\left[\vect{L}^{(k)\intercal} \vect{L}^{(k)}\right]-\vect{J}}.
\end{align}
Recall there are two communication graph and $\vect{L}^{(k)}$ is periodically switched between them:
$$ \vect{L}^{(k)}=\left\{
\begin{aligned}
\sum_{j=1}^mB^{(k)}_{(1),j}\vect{L}_{(1),j} \quad& \text{if $k$ mod $n = 1$}&\\
\sum_{j=1}^mB^{(k)}_{(2),j}\vect{L}_{(2),j} \quad& \text{if $k$ mod $n = 2$}&\\
...&\\
\sum_{j=1}^mB^{(k)}_{(n),j}\vect{L}_{(n),j} \quad& \text{if $k$ mod $n = 0$}&
\end{aligned}
\right.
$$
We analysis the case for $k$ mod $n=i$, where $i=1,2,...,n-1,0$. For notation convenience, we use $k$ mod $n=n$ instead of $k$ mod $n=0$ without loss of generality. Then the condition could be rewritten as $k$ mod $n=i$, where $i=1,2,...,n-1,n$. 

If $k$ mod $n=i$, then from Appendix B in~\citep{wang2019matcha} we have
    \begin{align*} \mathbb{E}\left[\vect{L}^{(k)}\right]&=\sum_{j=1}^mp_{(i),j}\vect{L}_{(i),j}\\
    \mathbb{E}\left[\vect{L}^{(k)\intercal} \vect{L}^{(k)}\right]&=\left(\sum_{j=1}^mp_{(i),j}\vect{L}_{(i),j}\right)^2+2\sum_{j=1}^Mp_{(i),j}(1-p_{(i),j})\vect{L}_{(i),j}.
    \end{align*}
    And
    \begin{align}
    \norm{\mathbb{E}\left[\vect{W}^{(k)\intercal} \vect{W}^{(k)}\right]-J}\leq&\norm{\left(I-\alpha\sum_{j=1}^mp_{(i),j}\vect{L}_{(i),j}\right)^2-J}+2\alpha^2\norm{2\sum_{j=1}^Mp_{(i),j}(1-p_{(i),j})\vect{L}_{(i),j}}\notag\\
        =&\max\{(1-\alpha\lambda_{(i),2})^2,(1-\alpha\lambda_{i1),m})^2\}+2\alpha^2\zeta_{(i)},
    \end{align}
    where $\lambda_{(i),l}$ denote the $l$-th smallest eigenvalue of matrix $\sum_{j=1}^mp_{(i),j}\vect{L}_{(i),j} $ and $\zeta_{(i)}>0$ denote the spectral norm of matrix $\sum_{j=1}^mp_{(i),j}(1-p_{(i),j})\vect{L}_{(i),j}$.
    
In all, generalized all $k$ mod $n=i (i=1,...,n)$ we could conclude
\begin{align}       \norm{\mathbb{E}\left[\vect{W}^{(k)\intercal} \vect{W}^{(k)}\right]-\vect{J}}
        \leq&\max\{(1-\alpha\lambda_{(1),2})^2,(1-\alpha\lambda_{(1),m})^2,...,(1-\alpha\lambda_{(n),2})^2,(1-\alpha\lambda_{(n),m})^2\}\notag\\
        &+2\alpha^2\max\{\zeta_{(1)},\zeta_{(2)},...,\zeta_{(n)}\}.
\end{align}

Assume $\lambda$ represents the eigenvalue such that $|1-\alpha\lambda|=\max\{|1-\alpha\lambda_{(1),2}|,|1-\alpha\lambda_{(1),m}|,|1-\alpha\lambda_{(2),2}|,|1-\alpha\lambda_{(2),m}|,...,|1-\alpha\lambda_{(n),2}|,|1-\alpha\lambda_{(n),m}|\}$, and $\zeta=\max\{\zeta_{(1)},\zeta_{(2)},...,\zeta_{(n)}\}$, we could have

\begin{align}       \norm{\mathbb{E}\left[\vect{W}^{(k)\intercal} \vect{W}^{(k)}\right]-\vect{J}}\leq|1-\alpha\lambda|^2+2\alpha^2\zeta.
\end{align}

\textbf{(2) Bound} $\norm{\mathbb{E}\left[\vect{W}^{(k)} \vect{A}^{(k)\intercal}-\vect{J}\vect{A}^{(k)\intercal}\right]}$.

Because $\vect{A}^{(k)}$ is a left stochastic matrix and $\vect{W}^{(k)}$ is doubly stochastic matrix,  from the property of spectrum nor $\norm{\cdot}\leq\norm{\cdot}_1\norm{\cdot}_\infty$, we could know that for all $k$
$$\norm{A^{k}}\leq\sqrt{m},\quad \norm{\vect{W}^{(k)}}\leq1.$$ Moreover, we could easy check $\norm{\vect{J}}=1$. Therefore,
\begin{align}
    &\norm{\mathbb{E}\left[\vect{W}^{(k)} \vect{A}^{(k)\intercal}-\vect{J}\vect{A}^{(k)\intercal}\right]}\leq\norm{\mathbb{E}\left[(\vect{W}^{(k)}-\vect{J})\vect{A}^{(k)\intercal}\right]}
    \leq \mathbb{E}\left[\norm{(\vect{W}^{(k)}-\vect{J})\vect{A}^{(k)\intercal}}\right]\notag\\
    \leq& \mathbb{E}\left[\norm{(\vect{W}^{(k)}-\vect{J})}\norm{\vect{A}^{(k)\intercal}}\right]
    \leq \mathbb{E}\left[\left(\norm{\vect{W}^{(k)}}+\norm{\vect{J}}\right)\norm{\vect{A}^{(k)\intercal}}\right]\leq2\sqrt{m}
\end{align}
\textbf{(3) Bound} $\norm{\mathbb{E}\left[\vect{A}^{(k)} (\vect{I}-\vect{J})\vect{A}^{(k)\intercal}\right]}$.

\begin{align*}
    \norm{\mathbb{E}\left[\vect{A}^{(k)} (\vect{I}-\vect{J})\vect{A}^{(k)\intercal}\right]}\leq \mathbb{E}\left[\norm{\vect{A}^{(k)} (\vect{I}-\vect{J})\vect{A}^{(k)\intercal}}\right]\leq \mathbb{E}\left[\norm{\vect{A}^{(k)}}^2\norm{(\vect{I}-\vect{J})}\right]\leq m
\end{align*}

Combine \textbf{(1)-(3)} and (\ref{eq:W_3}), we have
\begin{align}\label{eq:W_4}
   \norm{\mathbb{E}\left[\widetilde{\vect{W}}^{(k)}(\vect{I}-\vect{J})\widetilde{\vect{W}}^{(k)\intercal}\right]}
   \leq&(1-\omega)^2(1-\alpha\lambda)^2
   +4\omega(1-\omega)\sqrt{m}+2\omega^2m
\end{align}

Similarly, we are going to bound each term in iequality (\ref{eq:W_32}) one by one as well.

\textbf{(4) Bound }$\norm{\mathbb{E}\left[\vect{W}^{(k)\intercal} \vect{W}^{(k)}\right]}$. 

Similar to proof in \textbf{(1)}, if $k$ mod $n=i$ ($i=1,...,n$), we have
    \begin{align}
    \norm{\mathbb{E}\left[\vect{W}^{(k)\intercal} \vect{W}^{(k)}\right]}\leq&\norm{\left(I-\alpha\sum_{j=1}^mp_{(i),j}\vect{L}_{(i),j}\right)^2}+2\alpha^2\norm{2\sum_{j=1}^Mp_{(i),j}(1-p_{(i),j})\vect{L}_{(i),j}}\notag\\
        =&\max\{(1-\alpha\lambda_{(i),2})^2,(1-\alpha\lambda_{(i),m})^2\}+2\alpha^2\zeta_{(i)},
    \end{align}
    where $\lambda_{(i),l}$ denote the $l$-th smallest eigenvalue of matrix $\sum_{j=1}^mp_{(i),j}\vect{L}_{(i),j} $ and $\zeta_{(1)}>0$ denote the spectral norm of matrix $\sum_{j=1}^mp_{(i),j}(1-p_{(i),j})\vect{L}_{(i),j}$.

    

In all, generalize all $k$ mod $n=i (i=1,...,n)$ we could conclude
\begin{align}       \norm{\mathbb{E}\left[\vect{W}^{(k)\intercal} \vect{W}^{(k)}\right]}
        \leq&\max\{(1-\alpha\lambda_{(1),2})^2,(1-\alpha\lambda_{(1),m})^2,...,(1-\alpha\lambda_{(n),2})^2,(1-\alpha\lambda_{(n),m})^2\}\notag\\
        &+2\alpha^2\max\{\zeta_{(1)},\zeta_{(2)},...,\zeta_{(n)}\}.
\end{align}

Assume $\lambda$ represents the eigenvalue such that $|1-\alpha\lambda|=\max\{|1-\alpha\lambda_{(1),2}|,|1-\alpha\lambda_{(1),m}|,|1-\alpha\lambda_{(2),2}|,|1-\alpha\lambda_{(2),m}|,...,|1-\alpha\lambda_{(n),2}|,|1-\alpha\lambda_{(n),m}|\}$, and $\zeta=\max\{\zeta_{(1)},\zeta_{(2)},...,\zeta_{(n)}\}$, we could have

\begin{align}       \norm{\mathbb{E}\left[\vect{W}^{(k)\intercal} \vect{W}^{(k)}\right]}\leq|1-\alpha\lambda|^2+2\alpha^2\zeta.
\end{align}

\textbf{(2) Bound} $\norm{\mathbb{E}\left[\vect{W}^{(k)} \vect{A}^{(k)\intercal}\right]}$.

Because $\vect{A}^{(k)}$ is a left stochastic matrix and $\vect{W}^{(k)}$ is doubly stochastic matrix,  from the property of spectrum nor $\norm{\cdot}\leq\norm{\cdot}_1\norm{\cdot}_\infty$, we could know that for all $k$
$$\norm{\vect{A}^{k}}\leq\sqrt{m},\quad \norm{\vect{W}^{(k)}}\leq1.$$ Moreover, we could easy check $\norm{J}=1$. Therefore,
\begin{align}
    &\norm{\mathbb{E}\left[\vect{W}^{(k)} \vect{A}^{(k)\intercal}\right]}\leq \mathbb{E}\left[\norm{\vect{W}^{(k)}}\norm{\vect{A}^{(k)\intercal}}\right]\leq\sqrt{m}
\end{align}
\textbf{(3) Bound} $\norm{\mathbb{E}\left[\vect{A}^{(k)}\vect{A}^{(k)\intercal}\right]}$.

\begin{align*}
    \norm{\mathbb{E}\left[\vect{A}^{(k)}\vect{A}^{(k)\intercal}\right]}\leq \mathbb{E}\left[\norm{\vect{A}^{(k)}}^2\right]\leq m
\end{align*}

Combine \textbf{(4)-(5)} and (\ref{eq:W_32}), we have
\begin{align}\label{eq:W_42}
   \norm{\mathbb{E}\left[\widetilde{\vect{W}}^{(k)}\widetilde{\vect{W}}^{(k)\intercal}\right]}
   \leq&(1-\omega)^2(1-\alpha\lambda)^2
   +2\omega(1-\omega)\sqrt{m}+\omega^2m
\end{align}

From the proof in Appendix B of~\citep{wang2019matcha}, we know that $\lambda>0$. We assume $0<\alpha<\frac{1}{\lambda}$, and $\omega\in(0,1)$, combine (\ref{eq:W_4}) and (\ref{eq:W_42}) we have
\begin{align}\label{eq:W5}
   \rho=&\max\{\norm{\mathbb{E}\left[\widetilde{\vect{W}}^{(k)}(\vect{I}-\vect{J})\widetilde{\vect{W}}^{(k)\intercal}\right]}, \norm{\mathbb{E}\left[\widetilde{\vect{W}}^{(k)}\widetilde{\vect{W}}^{(k)\intercal}\right]}\}\notag\\
   \leq&(1-\omega)^2(1-\alpha\lambda)^2
   +4\omega(1-\omega)\sqrt{m}+2\omega^2m+2\alpha^2\zeta\notag\\
   \leq&(1-\omega)^2(1-\alpha\lambda)^2
   +4\omega(1-\omega)\sqrt{m}+4\omega^2m+2\alpha^2\zeta\notag\\
   =&(1-\omega)^2(1-\alpha\lambda)^2+4\omega(1-\omega)\sqrt{m}\left[(1-\alpha\lambda)+\alpha\lambda\right]+\omega^2m+2\alpha^2\zeta\notag\\
   \leq&(1-\omega)^2(1-\alpha\lambda)^2+4\omega(1-\omega)\sqrt{m}(1-\alpha\lambda)+4\omega^2m+4\alpha\lambda\omega(1-\omega)\sqrt{m}+2\alpha^2\zeta\notag\\
   \leq&\left[(1-\omega)(1-\alpha\lambda)+2\sqrt{m}\omega\right]^2+4\alpha\lambda\omega(1-\omega)\sqrt{m}+2\alpha^2\zeta\notag\\
   \leq&\left[(1-\omega)(1-\alpha\lambda)+2\sqrt{m}\omega\right]^2+4\omega\sqrt{m}+2\alpha^2\zeta
\end{align}
Define $f_{\lambda,\alpha}(\omega)=\left[(1-\omega)(1-\alpha\lambda)+2\sqrt{m}\omega\right]^2+4\omega\sqrt{m}+2\alpha^2\zeta$, we have
$$f'_{\lambda,\alpha}(\omega)=2\left[(1-\omega)(1-\alpha\lambda)+2\sqrt{m}\omega\right]\left[2\sqrt{m}\omega-(1-\alpha\lambda)\right]+4\sqrt{m}.$$
$m$ is the number of the worker and it must satisfy $m>1$. Together with $\alpha\lambda\in(0,1)$, we could conclude $f'_{\lambda,\alpha}(\omega)>0$ for all $\omega\in(0,1)$. Then take 
$$\omega_0=\frac{1-\alpha\lambda}{2k\sqrt{m}},$$
where $k>1$. We know that for all $\omega\in(0,\omega_0)$,
\begin{align}
    f_{\lambda,\alpha}(\omega)\leq f_{\lambda,\alpha}(\omega_0)
    =&\left[\left(1-\frac{1-\alpha\lambda}{2k\sqrt{m}}\right)(1-\alpha\lambda)+\frac{1-\alpha\lambda}{k}\right]^2+\frac{2(1-\alpha\lambda)}{k}+2\alpha^2\zeta\notag\\
    \leq& \frac{(k+1)^2}{k^2}(1-\alpha\lambda)^2+\frac{2(1-\alpha\lambda)}{k}+2\alpha^2\zeta
\end{align}
Define $h_\lambda(\alpha)=\frac{(k+1)^2}{k^2}(1-\alpha\lambda)^2+\frac{2(1-\alpha\lambda)}{k}+2\alpha^2\zeta$, then we have
\begin{align*}
    h'_{\lambda}(\alpha)&=-\frac{2(k+1)^2}{k^2}\lambda(1-\alpha\lambda)-\frac{2\lambda}{k}+4\alpha\zeta,\\
    h''_{\lambda}(\alpha)&=\frac{2(k+1)^2}{k^2}\lambda^2+4\zeta.
\end{align*}
Since $h''_\lambda(\alpha)>0$, $h_\lambda(\alpha)$ is convex quadratic fucntion. Let $h'_\lambda(\alpha)=0$, we could get the minimun point is:
$$\alpha^*=\frac{[(k+1)^2+k]\lambda}{(k+1)^2\lambda^2+2k^2\zeta}.$$
We take $\widetilde{\alpha}=\frac{(k+1)^2\lambda}{(k+1)^2\lambda^2+2k^2\zeta},$ it is easy to know $0<\widetilde{\alpha}<\alpha^*$.
\begin{align*}
    h_\lambda(\widetilde{\alpha})=\frac{(k+1)^2}{k^2}(1-\widetilde{\alpha}\lambda)^2+\frac{2}{k}(1-\widetilde{\alpha}\lambda)+2\alpha^2\zeta=\frac{4k\zeta+2(k+1)^2\zeta}{(k+1)^2\lambda^2+2k^2\zeta}.
\end{align*}

It is obvious that $\widetilde{\alpha}\lambda\in(0,1)$. Then, we are going to compute the bound for $k$ to ensure $h_\lambda(\widetilde{\alpha})<1$.When $k>\max\{1,\frac{8\zeta}{\lambda^2}-1\}$, we have:
\begin{align*}
    &\frac{k+1}{8}>\frac{\zeta}{\lambda^2}\Rightarrow \frac{(k+1)^2}{8K+8}>\frac{\zeta}{\lambda^2}\Rightarrow \frac{(k+1)^2}{8K+4}>\frac{\zeta}{\lambda^2}\\
    \Rightarrow&2(k+1)^2\zeta+4k\zeta<(k+1)^2\lambda^2+2k^2\zeta\notag\\
    \Rightarrow&\frac{4k\zeta+2(k+1)^2\zeta}{(k+1)^2\lambda^2+2k^2\zeta}<1
\end{align*}

For any $k>\max\{1,\frac{8\zeta}{\lambda^2}-1\}$, by the convex property of $h_{\lambda}(\alpha)$, we know when $\alpha\in\left(\alpha_{min}, \alpha_{max}\right)$, where:
\begin{align*}
    \alpha_{min} = \frac{(k+1)^2\lambda}{(k+1)^2\lambda^2+2k^2\zeta},\quad
    \alpha_{max} = \min\{\frac{1}{\lambda}, \frac{[(k+1)^2+k]\lambda}{(k+1)^2\lambda^2+2k^2\zeta}\}.
\end{align*}
There exists a range of averaging parameter
$\omega\in(0,\frac{1-\alpha\lambda}{2k\sqrt{m}})$, such that
$$\rho=\max\{\norm{\mathbb{E}\left[\widetilde{\vect{W}}^{(k)}(\vect{I}-\vect{J})\widetilde{\vect{W}}^{(k)\intercal}\right]}, \norm{\mathbb{E}\left[\widetilde{\vect{W}}^{(k)}\widetilde{\vect{W}}^{(k)\intercal}\right]}\}\leq h_\lambda(\alpha)<1$$.

Furthermor, $k>\frac{\lambda^2}{2\zeta}\Rightarrow\frac{1}{\lambda}>\frac{[(k+1)^2+k]\lambda}{(k+1)^2\lambda^2+2k^2\zeta}$. Therefore, for any $k>\max\{1,\frac{8\zeta}{\lambda^2}-1,\frac{\lambda^2}{2\zeta}\}$, when $\alpha\in\left(\alpha_{min}, \alpha_{max}\right)$, where:
\begin{align*}
    \alpha_{min} = \frac{(k+1)^2\lambda}{(k+1)^2\lambda^2+2k^2\zeta},\quad
    \alpha_{max} = \frac{[(k+1)^2+k]\lambda}{(k+1)^2\lambda^2+2k^2\zeta}.
\end{align*}

Going back to the assumption for $\lambda$, when $1-\alpha\lambda\in(0,1)$ always holds for sufficient small $\alpha$, $\lambda$ represents the eigenvalue such that $|1-\alpha\lambda|=\max\{|1-\alpha\lambda_{(1),2}|,|1-\alpha\lambda_{(1),m}|,|1-\alpha\lambda_{(2),2}|,|1-\alpha\lambda_{(2),m}|,...,|1-\alpha\lambda_{(n),2}|,|1-\alpha\lambda_{(n),m}|\}$ should be exactly $\lambda=\min\{\lambda_{(1),2},\lambda_{(2),2},...,\lambda_{(n),2}\}$.

We generalized the above analysis of the construction for $\alpha$ and $\omega$ as the following. Assume $\lambda_{min}=\min\{\lambda_{(i),2}:i=1,...,n\}$, $\lambda_{max}=\min\{\lambda_{(i),m}|i=1,...,n\}$ and $\zeta=\max\{\zeta_{(1)},\zeta_{(2)},...,\zeta_{(n)}\}$. For any $k>\max\{1,\frac{8\zeta}{\lambda_{min}^2}-1,\frac{\lambda_{max}^2}{2\zeta}\}$, there exists a range $\alpha\in(\frac{(k+1)^2\lambda_{min}}{(k+1)^2\lambda_{min}^2+2k^2\zeta}, \frac{[(k+1)^2+k]\lambda_{min}}{(k+1)^2\lambda_{min}^2+2k^2\zeta})$, such that for any $\alpha$ in this range, we could find a range $\omega\in(0,\frac{1-\alpha\lambda}{2k\sqrt{m}})$ such that the spectral norm 
$$\rho=\max\{\norm{\mathbb{E}\left[\widetilde{\vect{W}}^{(k)}(\vect{I}-\vect{J})\widetilde{\vect{W}}^{(k)\intercal}\right]}, \norm{\mathbb{E}\left[\widetilde{\vect{W}}^{(k)}\widetilde{\vect{W}}^{(k)\intercal}\right]}\}<1$$.
\end{proof}

\subsection{Proof for Lemma \ref{lma:aldsgd}.}
\begin{proof}
Define $\vect{A}_{q,n}=\prod_{k=q}^n\widetilde{\vect{W}}^{(k)}$, $\vect{b}_i^\intercal$ denote the $i$-th row vector of $B$. Thus, we have
$$\vect{A}_{1,n}=\vect{A}_{1,n-1}\widetilde{\vect{W}}^{(n)}.$$
Thus, we have
\begin{align*}
    \mathbb{E}_{\widetilde{\vect{W}}^{(n)}}\left[\norm{\vect{BA}_{1,n}(\vect{I}-\vect{J})}_F^2\right]\leq&\sum_{i=1}^d\mathbb{E}_{\widetilde{\vect{W}}^{(n)}}\left[\norm{b_i^\intercal \vect{A}_{1,n}(\vect{I}-\vect{J})}^2\right]\notag\\
    =&\sum_{i=1}^d\vect{b}_i^\intercal \vect{A}_{1,n-1}\mathbb{E}_{\widetilde{\vect{W}}^{(n)}}\left[\widetilde{\vect{W}}^{(n)}(\vect{I}-\vect{J})^2\widetilde{\vect{W}}^{(n)\intercal}\right]\vect{A}_{1,n-1}^\intercal \vect{b}_i\notag\\
    =&\sum_{i=1}^d\vect{b}_i^\intercal \vect{A}_{1,n-1}\mathbb{E}_{\widetilde{\vect{W}}^{(n)}}\left[\widetilde{\vect{W}}^{(n)}(\vect{I}-\vect{J})\widetilde{\vect{W}}^{(n)\intercal}\right]\vect{A}_{1,n-1}^\intercal \vect{b}_i
\end{align*}
Let $\vect{C}=\mathbb{E}_{\widetilde{\vect{W}}^{(n)}}\left[\widetilde{\vect{W}}^{(n)}(\vect{I}-J)\widetilde{\vect{W}}^{(n)\intercal}\right]$, $v_i=\vect{A}_{1,n-1}^\intercal \vect{b}_i$, we have:
\begin{align*}
    \mathbb{E}_{\widetilde{\vect{W}}^{(n)}}\left[\norm{\vect{BA}_{1,n}(\vect{I}-\vect{J})}_F^2\right]\leq&\sum_{i=1}^dv_i^\intercal \vect{C} v_i
    \leq \sigma_{max}(\vect{C})\sum_{i=1}^dv_i^\intercal v_i=\rho\norm{\vect{BA}_{1,n-1}}_F^2
\end{align*}

Similarly, we could have
\begin{align*}
    \mathbb{E}_{\widetilde{\vect{W}}^{(n-1)}}\left[\norm{\vect{BA}_{1,n-1}}_F^2\right]\leq&\sum_{i=1}^d\mathbb{E}_{\widetilde{\vect{W}}^{(n-1)}}\left[\norm{\vect{b}_i^\intercal \vect{A}_{1,n-1}}^2\right]\notag\\
    =&\sum_{i=1}^d\vect{b}_i^\intercal \vect{A}_{1,n-2}\mathbb{E}_{\widetilde{\vect{W}}^{(n-1)}}\left[\widetilde{\vect{W}}^{(n-1)}\widetilde{\vect{W}}^{(n-1)\intercal}\right]\vect{A}_{1,n-2}^\intercal \vect{b}_i\notag\\
    \leq&\rho\norm{\vect{BA}_{1,n-2}}_F^2
\end{align*}

\begin{align*}
        \mathbb{E}_{\widetilde{\vect{W}}^{(1)},...,\widetilde{\vect{W}}^{(n)}}\left[\norm{B\left(\prod_{k=1}^n\widetilde{\vect{W}}^{(k)}\right)(\vect{I}-\vect{J})}_F^2\right]\leq\rho^n\norm{\vect{B}}_F^2.
    \end{align*}
\end{proof}

\end{document}